\newtheorem{definition}{Definition}
\newtheorem{lemma}{Lemma}
\newtheorem{theorem}{Theorem}
\newtheorem{proposition}{Proposition}
\def\E{\mathbb{E}}
\def\P{\mathbb{P}}
\def\R{\mathbb{R}}
\def\1{\bm{1}}
\def\HighProbConst{c_0}
\newcommand{\mc}[1]{\mathcal{#1}}
\newcommand{\Sp}[1]{\left(#1\right)}
\newcommand{\Mp}[1]{\left[#1\right]}
\newcommand{\Bp}[1]{\left\{#1\right\}}
\newcommand{\abs}[1]{\left|#1\right|}
\newcommand{\Norm}[1]{\left\|#1\right\|}
\newcommand{\inner}[1]{\left\langle#1\right\rangle}
\newcommand{\matenv}[1]{\left[\begin{matrix}#1\end{matrix}\right]}
\DeclareMathOperator*{\argmin}{argmin}
\DeclareMathOperator*{\argmax}{argmax}
\title{Selective Sampling for Online Best-arm Identification}
\author{%
Romain Camilleri\thanks{Equal contribution. Alphabetical order.} , 
Zhihan Xiong\footnotemark[\value{footnote}] , Maryam Fazel, Lalit Jain, Kevin Jamieson\\
University of Washington, Seattle, WA\\
\url{{camilr, zhihanx, mfazel, lalitj, jamieson}@uw.edu}
}
\date{}
\begin{document}

\maketitle

\begin{abstract}
  This work considers the problem of \emph{selective-sampling for best-arm identification}. Given a set of potential options $\mathcal{Z}\subset\mathbb{R}^d$, a learner aims to compute with probability greater than $1-\delta$, $\arg\max_{z\in \mathcal{Z}} z^{\top}\theta_{\ast}$ where $\theta_{\ast}$ is unknown. At each time step, a potential measurement $x_t\in \mathcal{X}\subset\mathbb{R}^d$ is drawn IID and the learner can either choose to take the measurement, in which case they observe a noisy measurement of $x^{\top}\theta_{\ast}$, or to abstain from taking the measurement and wait for a potentially more informative point to arrive in the stream. 
  Hence the learner faces a fundamental trade-off between the number of labeled samples they take and when they have collected enough evidence to declare the best arm and stop sampling. The main results of this work precisely characterize this trade-off between labeled samples and stopping time and provide an algorithm that nearly-optimally achieves the minimal label complexity given a desired stopping time. In addition, we show that the optimal decision rule has a simple geometric form based on deciding whether a point is in an ellipse or not. Finally, our framework is general enough to capture binary classification improving upon previous works. 
\end{abstract}

\section{Introduction}
\setcounter{footnote}{0} 
In this work we consider \emph{selective sampling for online best-arm identification}. In this setting, at every time step $t=1,2,\dots$, Nature reveals a potential measurement $x_t\in \mc{X}\subset \mathbb{R}^d$ to the learner. The learner can choose to either \textit{query} $x_t$ ($\xi_t = 1$) or abstain ($\xi_t = 0$) and immediately move on to the next time. If the learner chooses to take a query ($\xi_t=1$), then Nature reveals a noisy linear measurement of an unknown $\theta_{\ast}\in \mathbb{R}^d$, i.e.  $y_t = \langle x_t, \theta_* \rangle + \epsilon_t$ where $\epsilon_t$ is mean zero sub-Gaussian noise. Before the start of the game, the learner has knowledge of a set $\mc{Z}\subset \mathbb{R}^d$. The objective of the learner is to identify $z_* := \arg\max_{z \in \mc{Z}} \langle z, \theta_* \rangle$ with probability at least $1-\delta$ at a learner specified stopping time $\mc{U}$. 
It is desirable to minimize both the stopping time $\mc{U}$ which counts the total number of unlabeled or labeled queries and the number of labeled queries requested $\mc{L} := \sum_{t=1}^{\mc{U}} \1\{ \xi_t = 1 \}$.
In this setting, at each time $t$ the learner must make the decision of whether to accept the available measurement $x_t$, or abstain and wait for an even more informative measurement. 
While abstention may result in a smaller total labeled sample complexity $\mc{L}$, the stopping time  $\mc{U}$ may be very large. 
This paper characterizes the set of feasible pairs $(\mc{U},\mc{L})$ that are necessary and sufficient to identify $z_*$ with probability at least $1-\delta$ when $x_t$ are drawn IID at each time $t$ from a distribution $\nu$.
Moreover, we propose an algorithm that nearly obtains the minimal information theoretic label sample complexity $\mc{L}$ for any desired unlabeled sample complexity $\mc{U}$.

While characterizing the sample complexity of selective sampling for online best arm identification is the primary theoretical goal of this work, the study was initially motivated by fundamental questions about how to optimally trade-off the value of information versus time. 
Even for this idealized linear setting, it is far from obvious a priori what an optimal decision rule $\xi_t$ looks like and if it can even be succinctly described, or if it is simply the solution to an opaque optimization problem. 
Remarkably, we show that for every feasible, optimal operating pair $(\mc{U},\mc{L})$ there exists a matrix $A \in \R^{d \times d}$
such that the optimal decision rule takes on the form $\xi_t = \1\{ x^\top A x \geq 1 \}$ when $x_t \sim \nu$ iid. 
The fact that for any smooth distribution $\nu$ the decision rule is a hard decision equivalent to $x_t$ falling outside a fixed ellipse or not, and not a stochastic rule that varies complementarily with the density of $\nu$ over space is perhaps unexpected.

To motivate the problem description, suppose on each day $t=1,2,\dots$ a food blogger posts the \emph{Cocktail of the Day} with a recipe described by a feature vector $x_t \in \R^d$. 
You have the ingredients (and skills) to make any possible cocktail in the space of all cocktails $\mc{Z}$, but you don't know which one you'd like the most, i.e., $z_* := \arg\max_{z \in \mc{Z}} \langle z, \theta_* \rangle$, where $\theta_{\ast}$ captures your preferences over cocktail recipes. 
You decide to use the \emph{Cocktail of the Day} to inform your search.
That is, each day you are presented with the cocktail recipe $x_t \in \R^d$, and if you choose to make it ($\xi_t = 1$) you observe your preference for the cocktail $y_t$ with $\E[y_t] = \langle x_t,\theta_*\rangle$.  Of course, making cocktails can get costly, so you don't want to make each day's cocktail, but rather you will only make the cocktail if $x_t$ is informative about $\theta_{\ast}$ (e.g., uses a new combination of ingredients). At the same time, waiting too many days before making the next cocktail of the day may mean that you never get to learn (and hence drink) the cocktail $z_{\ast}$ you like best. 
The setting above is not limited to cocktails, but rather naturally generalizes to discovering the efficacy of drugs and other therapeutics where blood and tissue samples come to the clinic in a stream and the researcher has to choose whether to take a potentially costly measurement.

Our results hold for arbitrary $\theta_* \in \R^d$, sets $\mc{X} \subset \R^d$ and $\mc{Z} \subset \R^d$, and measures $\nu \in \triangle_{\mc{X}}$\footnote{We denote the set of probability measures over $\mc{X}$ as $\triangle_{\mc{X}}$.} for which we assume $x_t \sim \nu$ is drawn IID.
The assumption that each $x_t$ is IID allows us to make very strong statements about optimality. 
To summarize, our contributions are as follows:
\begin{itemize}[leftmargin=5pt]
\setlength\itemsep{.1em}
            \item We present fundamental limits on the trade-off between the amount of unlabelled data and labelled data in the form of (the first) information theoretic lower bounds for selective sampling problems that we are aware of. 
            Naturally, they say that there is an absolute minimum amount of unlabelled data that is necessary to solve the problem, but then for any amount of unlabelled data beyond this critical value, the bounds say that the amount of labelled data must exceed some value as a function of the unlabelled data used. 
            \item We propose an algorithm that nearly matches the lower bound at all feasible trade-off points in the sense that given any unlabelled data budget that exceeds the critical threshold, the algorithm takes no more labels than the lower bound suggests. Thus, the upper and lower bounds sketch out a curve of all possible operating points, and the algorithm achieves any point on this curve.
            \item We characterize the optimal decision rule of whether to take a sample or not, based on any critical point is a simple test: Accept $x_t \in \mathbb{R}^d$ if $x_t^{\top} A x_t \geq 1$ for some matrix $A$ that depends on the desired operating point and geometry of the task. Geometrically, this is equivalent to $x_t$ falling inside or outside an ellipsoid. 
            \item Our framework is also general enough to capture binary classification, and consequently, we prove results there that improve upon state of the art. 
        \end{itemize}

\subsection{Related Work}

\textbf{Selective Sampling in the Streaming Setting:} Online prediction, the setting in which the selective sampling framework was introduced, is a closely related problem to the one studied in this paper and enjoys a much more developed literature \cite{cesa2009robust, dekel2012selective, agarwal2013selective, chen2021active}. 
In the linear online prediction setting, for $t=1,2,\dots$ Nature reveals $x_t \in \R^d$, the learner predicts $\widehat{y}_t$ and incurs a loss $\ell(\widehat{y_t},y_t)$, and then the learner decides whether to observe $y_t$ (i.e., $\xi_t=1$)  or not ($\xi_t=0$), where $y_t$ is a label generated by a composition of a known link function with a linear function of $x_t$. 
For example, in the classification setting \cite{agarwal2013selective, cesa2009robust, dekel2012selective}, one setting assumes $y_t \in \{-1,1\}$ with $\E[y_t | x_t] = \langle x_t ,\theta_* \rangle$ for some unknown $\theta_* \in \R^d$, and $\ell(\widehat{y}_t,y_t) = \1\{ \widehat{y}_t \neq y_t \}$.
In the regression setting \cite{chen2021active}, one observes $y_t \in [-1,1]$ with $\E[y_t | x_t] = \langle x_t ,\theta_* \rangle$ again, and $\ell(\widehat{y}_t, y_t) = (\widehat{y}_t - y_t)^2$.
After any amount of time $\mc{U}$, the learner is incentivized to minimize both the amount of requested labels $\sum_{t=1}^{\mc{U}} \1\{ \xi_t = 1 \}$ and the cumulative loss $\sum_{t=1}^{\mc{U}} \ell(y_t, \widehat{y}_t)$ (or some measure of regret which compares to predictions using the unknown $\theta_*$).
If every label $y_t$ is requested then $\mc{L}=\mc{U}$ and this is just the classical online learning setting. 

These works give a guarantee on the regret and labeled points taken in terms of the hardness of the stream relative to a learner which would see the label at every time. 
Most do not give the learner the ability to select an operating point that provides a trade-off between the amount of unlabeled versus labeled data taken.
Those few works that propose algorithms that do provide this functionality do not provide lower bounds that match their given upper bounds, leaving it unclear whether their algorithm optimally negotiates this trade-off. 
In contrast, our work fully characterizes the trade-off between the amount of unlabeled and labeled data through an information-theoretic lower bound and a matching upper bound. 
Specifically, our algorithm includes a tuning parameter, call it $\tau$, that controls the trade-off between the evaluation metric of interest (for us, the quality of the recommended $z \in \mc{Z}$), the label complexity $\mc{L}$, and the amount of unlabelled data $\mc{U}$ that is necessary before the metric of interest can be non-trivial.
We prove that each possible setting of $\tau$ parametrizes \emph{all} possible trade-offs between unlabeled and labeled data. 




Our work is perhaps closest to the streaming setting for agnostic active classification \cite{dasgupta2008agnostic, huang2015efficient} where each $x_s$ is drawn i.i.d. from an underlying distribution $\nu$ on $\mc{X}$, and indeed our results can be specialized to this setting as we discuss in Section~\ref{sec:active_classification}. These papers also evaluate themselves at a single point on the tradeoff curve, namely the number of samples needed in passive supervised learning to obtain a learner with excess risk at most $\epsilon$. They provide minimax guarantees on the amount of labeled data needed in terms of the disagreement coefficient \cite{hanneke2014theory}. In contrast, again, our results characterize the full trade-off between the amount of unlabeled data seen, and the amount of labeled data needed to achieve the target excess risk $\epsilon$. We note that using online-to-batch conversion methods, \cite{dekel2012selective, agarwal2013selective, cesa2009robust} also provide results on the amount of labeled data needed but they assume a very specific parametric form to their label distribution unlike our setting which is agnostic. 
Other works have characterized selective sampling for classification in the realizable setting that assumes there exists a classifer among the set under consideration that perfectly labels every $y_t$ \cite{hanneke2021toward}--our work addresses the agnostic setting where no such assumption is made.  
Finally, our results apply under the more general setting of \textit{domain adaptation under covariate shift} where we are observing data drawn from the stream $\nu$, but we will evaluate the excess risk of our resulting classifier on a different stream $\pi$ \cite{, rai2010domain, saha2011active, xiao2013online}. 

\textbf{Best-Arm Identification and Online Experimental Design.} Our techniques are based on experimental design methods for best-arm identification in linear bandits, see \cite{soare2014best, fiez2019sequential,camilleri2021highdimensional}. 
In the setting of these works, there exists a pool of examples $\mc{X}$ and at each time any $x \in \mc{X}$ can be selected with replacement.
The goal is to identify the best arm using as few total selections (labels) as possible. Their algorithms are based on arm-elimination. Specifically, they select examples with probability proportional to an approximate $G$-optimal design with respect to the current remaining arms. Then, during each round after taking measurements, those arms with high probability of being suboptimal will be eliminated. Remarkably, near-optimal sample complexity has been achieved under this setting. While we apply these techniques of arm-elimination and sampling through $G$-optimal design, the major difference is that we are facing a stream instead of a pool of examples.
Finally, \cite{eghbali2018competitive}
considers a different online experiment design setup where  (adversarially chosen) experiments arrive sequentially and a primal-dual algorithm decides whether to choose each, subject to a total budget. \cite{eghbali2018competitive} studies the competitive ratio of such algorithms (in the manner of online packing algorithms) for problems such as $D$-optimal experiment design. 

\section{Selective Sampling for Best Arm Identification}
\noindent Consider the following game: 
Given known $\mc{X},\mc{Z} \subset \R^d$ and unknown $\theta_* \in \R^d$ at each time $t=1,2,\dots$:
\begin{enumerate}
    \item Nature reveals $x_t \overset{iid}{\sim} \nu$ with $\text{support}(\nu)=\mc{X}$
    \item Player chooses $Q_t \in \{0,1\}$. If $Q_t=1$ then nature reveals $y_t$ with $\E[y_t] = \langle x_t, \theta_* \rangle$
    \item Player optionally decides to stop at time $t$ and output some $\widehat{z} \in \mc{Z}$
\end{enumerate}
If the player stops at time $\mc{U}$ after observing $\mc{L}=\sum_{t=1}^{\mc{U}} Q_t$ labels, the objective is to identify $z_* = \arg\max_{z \in \mc{Z}} \langle z, \theta_* \rangle$ with probability at least $1-\delta$ while minimizing a trade-off of $\mc{U},\mc{L}$.

This paper studies the relationship between $\mc{U}$ and $\mc{L}$ in the context of necessary and sufficient conditions to identify $z_*$ with probability at least $1-\delta$.
Clearly $\mc{U}$ must be ``large enough'' for $z_*$ to be identifiable even if all labels are requested (i.e., $\mc{L}=\mc{U}$). 
But if $\mc{U}$ is very large, the player can start to become more picky with their decision to observe the label or not.
Indeed, one can easily imagine scenarios in which it is advantageous for a player to forgo requesting the label of the current example in favor of waiting for a more informative example to arrive later if they wished to minimize $\mc{L}$ alone.
Intuitively, $\mc{L}$ should decrease as $\mc{U}$ increases, but how?

Any selective sampling algorithm for the above protocol at time $t$ is defined by 1) a selection rule $P_t:\mc{X} \rightarrow [0,1]$ where $Q_t \sim \text{Bernoulli}(P_t(x_t))$, 2) a stopping rule $\mc{U}$, and 3) a recommendation rule $\widehat{z} \in \mc{Z}$.
The algorithm's behavior at time $t$ can use all information collected up to time $t$ 

\begin{definition}
For any $\delta \in (0,1)$ we say a selective sampling algorithm is $\delta$-PAC for $\nu \in \triangle_{\mc{X}}$ if for all $\theta \in \R^d$ the algorithm terminates at time $\mc{U}$ which is finite almost surely and outputs $\arg\max_{z \in \mc{Z}} \langle z, \theta \rangle$ with probability at least $1-\delta$.
\end{definition}

\subsection{Optimal design}
Before introducing our own algorithm, let us consider a seemingly optimal procedure.
For any $\lambda \in \triangle_{\mc{X}} = \{ p : \sum_{x \in \mc{X}} p_x =1, \, p_x \geq 0 \, \, \forall x\in\mc{X} \}$ define 
\begin{align}\label{eq:rhodefn}
    \rho(\lambda) := \max_{z \in \mc{Z} \setminus \{z_*\}} \frac{\|z-z_*\|_{\E_{X \sim \lambda}[X X^\top]^{-1}}^2}{\langle \theta_*, z_* - z \rangle^2}.
\end{align}
Intuitively, $\rho(\lambda)$ captures the number of labeled examples drawn from distribution $\lambda$ to identify $z_*$. 
Specifically, for any $\tau \geq \rho(\lambda) \log(|\mc{Z}|/\delta)$, if $x_1,\dots,x_\tau \sim \lambda$ and $y_i = \langle x_i, \theta_* \rangle + \epsilon_i$ where $\epsilon_i$ is iid $1$ sub-Gaussian noise, then there exists an estimator $\widehat{\theta} := \widehat{\theta}( \{ (x_i,y_i) \}_{i=1}^\tau )$ such that $\langle \widehat{\theta} , z_* \rangle > \max_{z \in \mc{Z} \setminus z_*}\langle  \widehat{\theta} , z \rangle$ with probability at least $1-\delta$ \cite{fiez2019sequential}.
In particular, $\tau \geq  \rho(\lambda) \log(|\mc{Z}|/\delta)$ samples suffice to guarantee that $\arg\max_{z \in \mc{Z}} \langle  \widehat{\theta} , z \rangle= \arg\max_{z \in \mc{Z}} \langle  \theta_* , z \rangle =:z_*$.


Thus, if our $\tau$ samples are coming from $\nu$, we would expect any reasonable algorithm to require at least $\rho(\nu) \log(|\mc{Z}|/\delta)$ examples and labels. However, since we only want to take informative examples, we instead choose to select the $t$th example $x_t=x$ according to a probability $P(x)$ so that our final labeled samples are coming from the distribution $\lambda$ where $\lambda(x) \propto P(x)\nu(x)$. In particular, $P(x)$ should be chosen according to the following optimization problem
\begin{align} \label{eq:optimal_design}
P^* = &\argmin_{P: \mc{X} \rightarrow [0,1]} \tau\E_{X \sim \nu}[P(X)] \quad \text{ subject to } \max_{z \in \mc{\mc{Z}}\setminus \{z_*\}} \frac{\|z_*-z\|_{\E_{X\sim\nu}[ \tau P(X) X X^\top]^{-1}}^2}{\langle z_* - z, \theta_* \rangle^2} \beta_\delta \leq 1
\end{align}
for $\beta_\delta = \log(|\mc{Z}|/\delta)$ where the objective captures the number of samples we select using $P^*$, and the constraint captures the fact that we have solved the problem. 
Remarkably, we can reparametrize this result in terms of an optimization problem over $\lambda\in \Delta_{\mc{X}}$ instead of $P^*: \mc{X} \rightarrow [0,1]$ as
\begin{align*}
    \tau \E_{X \sim \nu}[ P^*(X) ] =  \min_{\lambda \in \triangle_{\mc{X}}} \rho(\lambda) \beta_\delta \quad \text{ subject to }\quad \tau \geq \|\lambda/\nu\|_{\infty} \rho(\lambda) \beta_\delta
\end{align*}
where $\|\lambda/\nu\|_{\infty} = \max_{x \in \mc{X}} \lambda(x)/\nu(x)$, as shown in Proposition~\ref{prop:reparameterization}.
Note that as $\tau \rightarrow \infty$ the constraint becomes inconsequential.
Also notice that $\rho(\nu) \beta_\delta$ appears to be a necessary amount of labels to solve the problem even if $P(x)\equiv 1$ (albeit, by arguing about minimizing the upperbound of above). 

\subsection{Main results}

In this section we formally justify the sketched argument of the previous section, showing nearly matching upper and lower bounds.

\begin{theorem}[Lower bound]\label{thm:lower_bound_tau_input}
Fix any $\delta \in (0,1)$, $\mc{X},\mc{Z} \subset \R^d$, and $\theta_* \in \R^d$.
Any selective sampling algorithm that is $\delta$-PAC for $\nu \in \triangle_{\mc{X}}$ and terminates after drawing $\mc{U}$ unlabelled examples from $\nu$ and requests the labels of just $\mc{L}$ of them satisfies
\begin{itemize}
    \item $\E[\mc{U} ] \geq \rho(\nu) \log(1/\delta)$, and
        \item $\displaystyle \E[\mc{L}] \geq \min_{\lambda \in \triangle_{\mc{X}}} \rho(\lambda) \log(1/\delta) \quad \text{ subject to }\quad \E[\mc{U}] \geq \|\lambda/\nu\|_{\infty} \rho(\lambda) \log(1/\delta)$.
    \end{itemize}
\end{theorem}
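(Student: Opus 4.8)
The plan is a change-of-measure argument specialized to the best-arm geometry. Fix a $\delta$-PAC algorithm (we may assume $\E[\mc U]<\infty$, else both bounds are vacuous), let $\mc F_t$ be the filtration generated by the interaction through round $t$, and write $N := \E_{\theta_*}\!\big[\sum_{t=1}^{\mc U} Q_t\, x_t x_t^\top\big]$ for the expected information matrix accumulated over the \emph{labelled} rounds. Take the noise to be standard Gaussian, a $1$-sub-Gaussian instance for which the algorithm is still PAC. For any alternative $\theta'$ whose best arm in $\mc Z$ differs from $z_*$, the transcript likelihoods under $\theta_*$ and $\theta'$ agree on the $x_t$'s and on the internal selection/stopping randomness, so the log-likelihood ratio equals $\sum_{t=1}^{\mc U} Q_t \log\frac{p_{\theta_*}(y_t\mid x_t)}{p_{\theta'}(y_t\mid x_t)}$; since $\sum_{t\le n} Q_t\big(\log\frac{p_{\theta_*}(y_t\mid x_t)}{p_{\theta'}(y_t\mid x_t)} - \tfrac12\langle x_t,\theta_*-\theta'\rangle^2\big)$ is a $\P_{\theta_*}$-martingale, optional stopping yields $\mathrm{KL}\big(\P_{\theta_*}^{\mc F_{\mc U}},\P_{\theta'}^{\mc F_{\mc U}}\big) = \tfrac12\|\theta_*-\theta'\|_N^2$. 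The transportation inequality applied to the event $\{\widehat z = z_*\}$ (probability $\ge 1-\delta$ under $\theta_*$, $\le \delta$ under $\theta'$) lower-bounds this by $\mathrm{kl}(1-\delta,\delta)$, which is of order $\log(1/\delta)$; we absorb the universal constant and write $\tfrac12\|\theta_*-\theta'\|_N^2 \ge \log(1/\delta)$.

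Next I minimize over alternatives. For a fixed competitor $z\in\mc Z\setminus\{z_*\}$, the cheapest $\theta'$ (in the $N$-norm) that makes $z$ at least as good as $z_*$ solves a one-dimensional least-squares problem with optimal value $\langle \theta_*, z_*-z\rangle^2/\|z-z_*\|_{N^{-1}}^2$ (if $z-z_*\notin\mathrm{range}(N)$ one can flip the arm at zero KL, contradicting $\delta$-PAC, so $N$ is invertible in the relevant directions). Combining with the previous paragraph, $\delta$-PAC forces
\[
\frac{\|z-z_*\|_{N^{-1}}^2}{\langle \theta_*, z_*-z\rangle^2}\ \le\ \frac{1}{2\log(1/\delta)}\qquad\text{for every } z\in\mc Z\setminus\{z_*\}.
\]

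For the first bullet, $Q_t\le 1$ gives $N \preceq \E_{\theta_*}\!\big[\sum_{t=1}^{\mc U} x_t x_t^\top\big] = \E[\mc U]\,\E_{X\sim\nu}[XX^\top]$ by Wald's identity ($x_t$ i.i.d.\ independent of the past, $\mc U$ a stopping time), so $\|z-z_*\|_{N^{-1}}^2 \ge \E[\mc U]^{-1}\|z-z_*\|_{\E_\nu[XX^\top]^{-1}}^2$; plugging this into the display and maximizing over $z$ gives $\E[\mc U]\ge 2\rho(\nu)\log(1/\delta)$. For the second bullet, set $M := N/\E[\mc L]$: as a convex combination of the rank-one matrices $\{xx^\top : x\in\mc X\}$, $M = \E_{X\sim\lambda^\dagger}[XX^\top]$ for the probability measure $\lambda^\dagger\in\triangle_{\mc X}$ with $\lambda^\dagger(dx) = \E[\mc L]^{-1}\,\E_{\theta_*}\!\big[\sum_{t=1}^{\mc U} Q_t\,\1\{x_t\in dx\}\big]$, and dividing the display by $\E[\mc L]$ gives $\rho(\lambda^\dagger)\le \E[\mc L]/(2\log(1/\delta))$. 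Finally, conditioning each round on $\mc F_{t-1}$ and using $x_t\sim\nu$ yields $\frac{d\lambda^\dagger}{d\nu}(x) = \E[\mc L]^{-1}\,\E_{\theta_*}\!\big[\sum_{t=1}^{\mc U} P_t(x)\big] \le \E[\mc U]/\E[\mc L]$ (each $P_t(x)\le1$, and there are $\mc U$ terms), whence $\|\lambda^\dagger/\nu\|_\infty\,\rho(\lambda^\dagger)\log(1/\delta) \le \tfrac{\E[\mc U]}{\E[\mc L]}\cdot\tfrac{\E[\mc L]}{2}\le\E[\mc U]$; so $\lambda^\dagger$ is feasible for the minimization, and $\E[\mc L]\ge 2\rho(\lambda^\dagger)\log(1/\delta)\ge \min\{\rho(\lambda)\log(1/\delta):\ \E[\mc U]\ge\|\lambda/\nu\|_\infty\rho(\lambda)\log(1/\delta)\}$.

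The main obstacle is making the two ``Wald-type'' steps rigorous given that $P_t$ is an arbitrary history-dependent function and $\mc U$ is a data-dependent stopping time: one must justify the optional-stopping identity $\mathrm{KL} = \tfrac12\|\theta_*-\theta'\|_N^2$ (integrability of the stopped martingale, $\mc U<\infty$ a.s.) and the Loewner bound $N\preceq \E[\mc U]\,\E_\nu[XX^\top]$, and verify that $N/\E[\mc L]$ is genuinely of the form $\E_{X\sim\lambda}[XX^\top]$ for an honest $\lambda\in\triangle_{\mc X}$ with $\lambda\ll\nu$ and the stated density. Everything afterwards — the one-dimensional optimization over $\theta'$, the Loewner comparisons, and the inequality chain giving feasibility of $\lambda^\dagger$ — is routine bookkeeping, modulo tracking the universal constant in $\mathrm{kl}(1-\delta,\delta)$.
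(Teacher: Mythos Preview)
Your proposal is correct and follows essentially the same change-of-measure route as the paper: apply the transportation lemma of Kaufmann et al.\ to the labelled information matrix $N=\E\big[\sum_{t\le\mc U}Q_t x_tx_t^\top\big]$, minimize over alternatives to isolate $\|z-z_*\|_{N^{-1}}^2/\langle\theta_*,z_*-z\rangle^2$, then use Wald/optional-stopping to convert $N$ into the desired form. The only organizational difference is that for the second bullet the paper first parameterizes via $\alpha(x)=\E\big[\sum_{s\le\mc U}P_s(x)\big]/\E[\mc U]$ and then invokes its Proposition~\ref{prop:reparameterization} to pass to $\lambda$, whereas you construct the feasible $\lambda^\dagger$ directly; your $\lambda^\dagger$ coincides with the paper's reparameterized $\lambda(x)\propto\alpha(x)\nu(x)$, so the two arguments are the same up to bookkeeping.
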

The first part of the theorem quantifies the number of rounds or unlabelled draws $\mc{U}$ that \emph{any} algorithm must observe before it could hope to stop and output $z_*$ correctly. 
The second part describes a trade-off between $\mc{U}$ and $\mc{L}$.
One extreme is if $\E[\mc{U}] \rightarrow \infty$, which effectively removes the constraint so that the number of observed labels must scale like $\min_{\lambda \in \triangle_{\mc{X}}} \rho(\lambda) \log(1/\delta)$.
Note that this is precisely the number of labels required in the pool-based setting where the agent can choose \emph{any} $x \in \mc{X}$ that she desires at each time $t$ (e.g. \cite{fiez2019sequential}).
In the other extreme, $\E[\mc{U}] = \rho(\nu) \log(1/\delta)$ so that the constraint in the label complexity $\E[\mc{L}]$ is equivalent to $\rho(\nu) \geq \|\lambda/\nu\|_{\infty} \rho(\lambda)$. This implies that the minimizing $\lambda$ must either stay very close to $\nu$, or must obtain a substantially smaller value of $\rho(\lambda)$ relative to $\rho(\nu)$ to account for the inflation factor $\|\lambda / \nu\|_\infty$. 
In some sense, this latter extreme is the most interesting point on the trade-off curve because its asking the algorithm to stop as quickly as the algorithm that observes all labels, but after requesting a minimal number of labels.
Note that this lower bound holds even for algorithms that known $\nu$ exactly. The proof of Theorem~\ref{thm:lower_bound_tau_input} relies on standard techniques from best arm identification lower bounds (see e.g. \cite{kaufmann2016complexity, fiez2019sequential}).

Remarkably, every point on the trade-off suggested by the lower bound is nearly achievable.
\begin{theorem}[Upper bound]\label{thm:upper_bound_tau_input}
Fix any $\delta \in (0,1)$, $\mc{X},\mc{Z} \subset \R^d$, and $\theta_* \in \R^d$. Let $\Delta = \min_{z \in \mc{Z} \setminus \Bp{z_*}} \langle z_* - z, \theta_* \rangle$ and $\beta_\delta \propto \log(\log(\tfrac{1}{\Delta})|\mc{Z}|/\delta)$ where the precise constant is given in the appendix.
For any $\tau \geq \rho(\nu) \beta_\delta$ there exists a $\delta$-PAC selective sampling algorithm that observes $\mc{U}$ unlabeled examples and requests just $\mc{L}$ labels that satisfies with probability at least $1-\delta$
\begin{itemize}
    \item $\mc{U} \leq  \log_2(\tfrac{4}{\Delta}) \, \tau$, and 
    \item $\displaystyle \mc{L} \leq 3 \log_2(\tfrac{4}{\Delta}) \, \min_{\lambda \in \triangle_{\mc{X}}}\rho(\lambda) \, \beta_\delta \quad\text{ subject to }\quad \tau \geq \|\lambda / \nu \|_\infty \rho(\lambda) \, \beta_\delta$.
\end{itemize}
\end{theorem}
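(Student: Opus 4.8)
The plan is to run a phased arm-elimination scheme in the style of the transductive ($\mc{Y}$-optimal) design algorithm of \cite{fiez2019sequential}, but with each labeled draw realized by \emph{rejection sampling} against the stream, so that the accepted points follow a design distribution of our choosing. Maintain a surviving set $\mc{Z}_1=\mc{Z}$ and, in round $k=1,2,\dots$ with target resolution $\epsilon_k=2^{-k}$, proceed in three steps. First, compute
\begin{align*}
\lambda_k = \argmin_{\lambda\in\triangle_{\mc{X}}} f_k(\lambda) \;\text{ subject to }\; \|\lambda/\nu\|_\infty \, f_k(\lambda)\,\epsilon_k^{-2}\,\beta_\delta \le \tau, \qquad f_k(\lambda):=\max_{z,z'\in\mc{Z}_k}\|z-z'\|_{\E_{X\sim\lambda}[XX^\top]^{-1}}^2
\end{align*}
(the exact constant in the budget constraint is fixed in the appendix). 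Second, observe at most $\tau$ fresh stream points $x_t\sim\nu$, querying $x_t$ with probability $P_k(x_t)=\lambda_k(x_t)/(\|\lambda_k/\nu\|_\infty\,\nu(x_t))\le1$, and end the round once $N_k:=\lceil 3 f_k(\lambda_k)\epsilon_k^{-2}\beta_\delta\rceil$ queries have been made. Third, form the least-squares estimator $\widehat{\theta}_k$ from these (at most $N_k$) i.i.d.\ $\lambda_k$-distributed labeled points, set $\mc{Z}_{k+1}=\mc{Z}_k\setminus\{z:\max_{z'\in\mc{Z}_k}\langle z'-z,\widehat{\theta}_k\rangle>2\epsilon_k\}$, and terminate with the unique survivor once $|\mc{Z}_{k+1}|=1$. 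Because a point arriving from $\nu$ is queried with probability proportional to $\lambda_k(x)/\nu(x)$, every queried point is exactly an i.i.d.\ draw from $\lambda_k$ and the marginal query probability equals $1/\|\lambda_k/\nu\|_\infty$; when $\nu$ is non-atomic the optimizing $P_k$ collapses to a hard threshold $P_k(x)=\1\{x^\top A_k x\ge1\}$ for some matrix $A_k$ (the reparameterization of Proposition~\ref{prop:reparameterization}), recovering the ellipsoidal decision rule promised in the introduction. The hypothesis $\tau\ge\rho(\nu)\beta_\delta$ is precisely what keeps the round-$k$ program feasible for every $k$ (e.g.\ $\lambda=\nu$ works at $k=1$).

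Next I would establish correctness and bound the number of rounds by the standard $\mc{Y}$-design elimination argument. With $\beta_\delta$ as in the statement---the $\log\log(1/\Delta)$ term paying an \emph{anytime} union bound over rounds (the number of rounds is not known in advance since $\Delta$ depends on $\theta_*$, so round $k$ is charged failure probability $\propto\delta/k^2$), and the $\log|\mc{Z}|$ term a union bound over the $\le|\mc{Z}|^2$ pairs per round---collecting $\gtrsim f_k(\lambda_k)\epsilon_k^{-2}\beta_\delta$ queries in round $k$ yields, on an event $\mc{E}$ with $\P(\mc{E})\ge1-\delta$, the concentration $|\langle z-z',\widehat{\theta}_k-\theta_*\rangle|\le\epsilon_k$ for all $z,z'\in\mc{Z}_k$ and all $k$; and a Chernoff bound on the $\mathrm{Binomial}(\tau,1/\|\lambda_k/\nu\|_\infty)$ number of queries (whose mean is $\ge f_k(\lambda_k)\epsilon_k^{-2}\beta_\delta$ by the budget constraint) shows each round does collect enough queries inside its $\tau$-draw budget. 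On $\mc{E}$: $z_*$ is never eliminated; every arm with $\langle z_*-z,\theta_*\rangle>3\epsilon_k$ is gone by the end of round $k$; hence the algorithm is $\delta$-PAC, it halts after at most $K:=\lceil\log_2(\tfrac{4}{\Delta})\rceil$ rounds (as soon as $3\epsilon_k<\Delta$), the survivors entering round $k$ obey $\max_{z\in\mc{Z}_k}\langle z_*-z,\theta_*\rangle\lesssim\epsilon_k$, and $\mc{U}\le K\tau=\lceil\log_2(\tfrac{4}{\Delta})\rceil\tau$ since each round consumes at most $\tau$ fresh stream points. (Off $\mc{E}$, letting a round keep drawing until it has its queries preserves almost-sure termination.)

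The label bound is the heart of the argument, via a competitive comparison with the optimum $\lambda^{*}$ of the theorem's right-hand side, which satisfies $\|\lambda^{*}/\nu\|_\infty\,\rho(\lambda^{*})\,\beta_\delta\le\tau$. The key claim is that $\lambda^{*}$ is feasible for the round-$k$ program (up to the constants deferred to the appendix): using the survivor-gap bound $\langle z_*-z,\theta_*\rangle\lesssim\epsilon_k$ for $z\in\mc{Z}_k$ together with the triangle inequality in the $\E_{X\sim\lambda^{*}}[XX^\top]^{-1}$ norm,
\begin{align*}
f_k(\lambda^{*}) \le 4\max_{z\in\mc{Z}_k}\|z-z_*\|_{\E_{X\sim\lambda^{*}}[XX^\top]^{-1}}^2 \le 4\,\rho(\lambda^{*})\max_{z\in\mc{Z}_k}\langle z_*-z,\theta_*\rangle^2 \lesssim \rho(\lambda^{*})\,\epsilon_k^2 ,
\end{align*}
so that $\|\lambda^{*}/\nu\|_\infty f_k(\lambda^{*})\epsilon_k^{-2}\beta_\delta\lesssim\|\lambda^{*}/\nu\|_\infty\rho(\lambda^{*})\beta_\delta\le\tau$ (the stray constant is absorbed by sharpening the elimination threshold and, if needed, by directing the design only at the differences $z_*-z$). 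Feasibility then forces $f_k(\lambda_k)\le f_k(\lambda^{*})\lesssim\rho(\lambda^{*})\epsilon_k^2$, hence $N_k\lesssim\rho(\lambda^{*})\beta_\delta$, and summing over the $\le K$ rounds gives $\mc{L}\le\sum_{k\le K}N_k\le 3\log_2(\tfrac{4}{\Delta})\,\rho(\lambda^{*})\,\beta_\delta = 3\log_2(\tfrac{4}{\Delta})\min_{\lambda}\rho(\lambda)\beta_\delta$ subject to $\tau\ge\|\lambda/\nu\|_\infty\rho(\lambda)\beta_\delta$.

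I expect the main obstacle to be precisely this competitive step: relating the \emph{gap-free} transductive value $f_k$ on the \emph{pruned} set $\mc{Z}_k$ at scale $\epsilon_k$ back to the \emph{gap-dependent} $\rho(\lambda^{*})$ on the full $\mc{Z}$ (defined through the unknown $\theta_*$) hinges on the elimination rule delivering exactly the right survivor-gap scaling, and every multiplicative constant has to be tracked to land on the advertised coefficients $1$ for $\mc{U}$ and $3$ for $\mc{L}$. A secondary nuisance is that the number of stream draws consumed in a round is random and data-dependent, so one must use the i.i.d.\ structure of the stream to ensure each new round sees fresh independent samples and that $z_*$-survival (hence the design guarantees) persists across rounds.
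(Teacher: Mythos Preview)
Your plan is essentially the paper's argument: phased elimination, per-round design optimization over the surviving set at resolution $\epsilon_k$, and the ``key claim'' (your competitive comparison of $f_k(\lambda^*)$ against $\rho(\lambda^*)\epsilon_k^2$ via the survivor-gap bound) is exactly Lemma~\ref{lmm:signal_to_noise_bound}, after which Proposition~\ref{prop:reparameterization} gives the stated form. The paper runs precisely $\tau$ stream draws each round and controls $\mc{L}$ by Freedman's inequality on $\sum_t Q_t$, whereas you stop once $N_k$ acceptances are in hand and sum the $N_k$ directly; both routes reach the same bound up to constants.

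There is one genuine gap. You invoke ordinary least squares on the $N_k$ rejection-sampled points and assert that $N_k\gtrsim f_k(\lambda_k)\epsilon_k^{-2}\beta_\delta$ yields $|\langle z-z',\widehat{\theta}_k-\theta_*\rangle|\le\epsilon_k$. But the LS deviation is controlled in the \emph{empirical} design norm $\|\cdot\|_{(\sum_i x_ix_i^\top)^{-1}}$, and passing to the \emph{population} norm $\|\cdot\|_{(N_k\,\E_{X\sim\lambda_k}[XX^\top])^{-1}}$ (which is what $f_k(\lambda_k)$ governs) requires a matrix-concentration step you never take; doing so would typically cost an extra factor of $d$ or the condition number of $\E_{X\sim\lambda_k}[XX^\top]$, neither of which appears in the theorem. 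The paper avoids this entirely by using the RIPS estimator (Proposition~\ref{prop:rips_bound}): the Catoni-based estimate is formed against the known population matrix, so the deviation bound comes out directly in the population norm with only a $\log|\mc{Z}|$ union bound. In your rejection-sampling variant the same fix applies (no IPS weight is needed since acceptances are already $\lambda_k$-distributed), so the hole is patchable---but as written, least squares does not deliver the concentration you claim. A minor side remark: your $P_k(x)=\lambda_k(x)/(\|\lambda_k/\nu\|_\infty\,\nu(x))$ is not generally a hard ellipsoidal threshold; that form arises from the Lagrangian dual of the $P$-problem (Section~5), not from Proposition~\ref{prop:reparameterization}.
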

\noindent Aside from the $\log(\tfrac{1}{\Delta})$ factor and the $\log(|\mc{Z}|)$ that appears in the $\beta_\delta$ term, this nearly matches the lower bound.
Note that the parameter $\tau$ parameterizes the algorithm and makes the trade-off between $\mc{U}$ and $\mc{L}$ explicit. 
The next section describes the algorithm that achieves this theorem.

\subsection{Selective Sampling Algorithm}

Algorithm~\ref{algo:best_arm_meta} contains the pseudo-code of our selective sampling algorithm for best-arm identification.
Note that it takes a confidence level $\delta \in (0,1)$ and a parameter $\tau$ that controls the unlabeled-labeled budget trade-off as input.
The algorithm is effectively an elimination style algorithm and closely mirrors the RAGE algorithm for the pool-based setting of best-arm identification problem \cite{fiez2019sequential}. 
The key difference, of course, is that instead of being able to plan over the pool of measurements, this algorithm must plan over the $x$'s that the algorithm may \emph{potentially} see and account for the case that it might not see the $x$'s it wants.

\begin{algorithm}[ht]
\caption{Selective Sampling for Best-arm Identification}
\label{algo:best_arm_meta}
\begin{algorithmic}[1]
\STATE{\textbf{Input} $\mc{Z} \subset \R^d$, $\delta \in (0,1), \tau$}
\WHILE{$|\mc{Z}_{\ell}|\geq 1$}
\STATE{Let $\widehat{P}_\ell, \widehat{\Sigma}_{\widehat{P}_\ell} \leftarrow $\textsc{OptimizeDesign}$(\mc{\mc{Z}_\ell},2^{-\ell},\tau)$ \hfill {\color{blue}//\texttt{ $\widehat{\Sigma}_{\widehat{P}_\ell}$ approximates $\E_{X \sim \nu}[ \widehat{P}_\ell(X) X X^\top ]$}}} \label{line:optimizedesign}
\FOR{$t=(\ell-1)\tau+1,\dots,\ell \tau$}
\STATE{Nature reveals $x_t$ drawn iid from $\nu$ (with support $\R^d$)}
\STATE{Sample $Q_t(x_t) \sim \text{Bernoulli}(\widehat{P}_\ell(x_t))$. If $Q_t=1$ then observe $y_t$ \hfill {\color{blue}//\texttt{ $\E[y_t | x_t] = \langle \theta_*, x_t \rangle$} }}
\ENDFOR
\STATE{Let $\widehat{\theta}_\ell\leftarrow$\textsc{RIPS}($\{ \widehat{\Sigma}_{\widehat{P}_\ell}^{-1} Q_s(x_s) x_s y_s \}_{s=(\ell-1)\tau+1}^{\ell \tau}$, $\mc{Z} \times \mc{Z}$) \hfill \color{blue}{// \texttt{$\widehat{\theta}_\ell$ approximates $\theta_*$}} } \label{line:rips}
\STATE{$\displaystyle\mc{Z}_{\ell+1} = \mc{Z}_{\ell} \setminus \{ z \in \mc{Z}_{\ell} : \max_{z' \in \mc{Z}_\ell} \langle z' - z , \widehat{\theta}_\ell \rangle \geq 2^{-\ell} \}$}
\ENDWHILE
\end{algorithmic}
\end{algorithm}

In round $\ell$, the algorithm maintains an active set $\mc{Z}_\ell \subseteq \mc{Z}$ with the guarantee that each remaining $z\in \mc{Z}_{\ell}$ satisfies, $\langle z_{\ast}-z, \theta_{\ast}\rangle \leq 8\cdot 2^{-\ell}$. In each round, on Line~\ref{line:optimizedesign} of the algorithm, it calls out to a sub-routine \textsc{OptimizeDesign}$(\mc{Z},\epsilon,\tau)$ that is trying to approximate the ideal optimal design of ~\eqref{eq:optimal_design}. 
In particular, the ideal response to \textsc{OptimizeDesign}$(\mc{Z},\epsilon,\tau)$ would return a $P^*_\epsilon$ and $\Sigma_{P^*_\epsilon} = \E_{X \sim \nu}[ P_{\epsilon}^*(X) X X^\top ]$ where $P^*_\epsilon$ is the solution to Equation~\ref{eq:optimal_design} with the one exception that the denominator of the constraint is replaced with $\max\{ \epsilon^2, \langle \theta_*, z_*-z\rangle^2 \}$.
Of course, $\theta_*$ is unknown so we cannot solve Equation~\ref{eq:optimal_design} (as well as other outstanding issues that we will address shortly).
Consequently, our implementation will aim to \emph{approximate} the optimization problem of Equation~\ref{eq:optimal_design}. 
But assuming our sample complexity is not too far off from this ideal, each round should not request more labels than the number of labels requested by the ideal program with $\epsilon=0$. Thus, the total number of samples should be bounded by the ideal sample complexity times the number of rounds, which is $O(\log(\Delta^{-1}))$.
We will return to implementation issues in the next section.

Assuming we are returned $(\widehat{P}_\ell,\widehat{\Sigma}_{\widehat{P}_\ell})$ that approximate their ideals as just described, the algorithm then proceeds to process the incoming stream of $x_t \sim \nu$. 
As described above, the decision to request the label of $x_t$ is determined by a coin flip coming up heads with probability $\widehat{P}_\ell(x_t)$--otherwise we do not request the label.
Given the collected dataset $\{(x_t,y_t,Q_t,\widehat{P}_\ell(x_t))\}_t$, line~\ref{line:rips} then computes an estimate $\widehat{\theta}_\ell$ of $\theta_*$ using the RIPS estimator of \cite{camilleri2021highdimensional} which will satisfy
$$|\langle z_{\ast}-z , \widehat{\theta}_\ell - \theta_{\ast} \rangle| \leq O\left(\|z_{\ast} -z\|_{\E_{X \sim \nu}[ \tau \widehat{P}_\ell(X) X X^\top]^{-1}}\sqrt{\log(2\ell^2|\mc{Z}|^2/\delta)}\right) \leq 2^{-\ell}$$ 
for all $z\in \mc{Z}_{\ell}$ simultaneously with probability at least $1-\delta$. 
Thus, the final line of the algorithm eliminates any $z \in \mc{Z}_\ell$ such that there exists another $z' \in \mc{Z}_\ell$ (think $z_*$) that satisfies $\langle \widehat{\theta}_\ell, z' - z \rangle > 2^{-\ell}$.
The process continues until $\mc{Z}_\ell = \{z_*\}$.

\subsection{Implementation of \textsc{OptimizeDesign}}
For the subroutine \textsc{OptimizeDesign} passed $(\mc{Z}_\ell,\epsilon,\tau)$ the next best thing to computing Equation~\ref{eq:optimal_design} with the denominator of the constraint replaced with $\max\{ \epsilon^2, \langle \theta_*, z_*-z\rangle^2 \}$, is to compute
\begin{align}
    \displaystyle 
    P_{\epsilon} = \argmin_{P: \mc{X} \rightarrow [0,1]} \E_{X \sim \nu}[P(X)]\text{ subject to }
    \max_{z,z' \in \mc{\mc{Z}_\ell}} \frac{\|z-z'\|^2_{\E_{X\sim\nu}[ \tau P(X) X X^\top]^{-1}}}{\epsilon^2} \beta_\delta \leq  1 \label{eqn:ideal_opt_problem}
\end{align} 
and $\Sigma_{P_{\epsilon}} = \E_{X\sim \nu}[P_{\epsilon}(X)XX^{\top}]$ for an appropriate choice of $\beta_\delta = \Theta( \log(|\mc{Z}|/\delta) )$. 
To see this, firstly, any $z \in \mc{Z}$ with gap $\langle \theta_*, z_* - z \rangle$  that we could accurately estimate would not be included in $\mc{Z}_\ell$, thus we don't need it in the $\max$ of the denominator.
Secondly, to get rid of $z_*$ in the numerator (which is unknown, of course), we note that for any norm $\max_{z,z'} \|z-z'\| \leq \max_{z} 2 \|z - z_*\| \leq  \max_{z,z'} 2 \|z-z'\|$.
Assuming we could solve this directly and compute $\Sigma_{P_{\epsilon}} = \E_{X\sim \nu}[P_{\epsilon}(X)XX^{\top}]$, we can obtain the result of Theorem 2 (proven in the Appendix). 

However, even if we knew $\nu$ exactly, the optimization problem of Equation~\ref{eqn:ideal_opt_problem} is quite daunting as it is a potentially infinite dimensional optimization problem over $\mc{X}$.
Fortunately, after forming the Lagrangian with dual variables for each $z-z' \in \mc{Z}\times \mc{Z}$, optimizing the dual amounts to a finite dimensional optimization problem over the finite number of dual variables. 
Moreover, this optimization problem is maximizing a simple expectation with respect to $\nu$ and thus we can apply standard stochastic gradient ascent and results from stochastic approximation \cite{nemirovskistochastic}.
Given the connection to stochastic approximation, instead of sampling a fresh $\widetilde{x} \sim \nu$ each iteration, it suffices to ``replay'' a sequence of $\widetilde{x}$'s from historical data.
Summing up, this construction allows us to compute a satisfactory $P_\epsilon$ and avoid both an infinite-dimensional optimization problem and requiring knowledge of $\nu$ (as long as historical data is available).

Meanwhile, with historical data, we can also empirically compute $\E_{X \sim \nu}[ P_\epsilon(X) X X^\top ]$. Historical data could mean offline samples from $\nu$ or just samples from previous rounds. In this setting, Theorem 2 still holds albeit with larger constants. Theorem~\ref{thm:upper_bound_tau_input_unknown_nu} in the appendix characterizes the necessary amount of historical data needed. Unfortunately (in full disclosure) the theoretical guarantees on the amount of historical data needed is absurdly large, though we suspect this arises from a looseness in our analysis. Similar assumptions and approaches to historical or offline data have been used in other works in the streaming setting e.g. \cite{huang2015efficient}. 

\section{Selective Sampling for Binary Classification}\label{sec:active_classification}
We now review streaming Binary Classification in the agnostic setting \cite{dasgupta2008agnostic,hanneke2014theory,huang2015efficient} and show that our approach can be adapted to this setting. Consider a binary classification problem where $\mc{X}$ is the example space and $\mc{Y} = \{-1,1\}$ is the label space.
Fix a hypothesis class $\mc{H}$ such that each $h \in \mc{H}$ is a classifier $h: \mc{X} \rightarrow \mc{Y}$.
Assume there exists a fixed regression function $\eta : \mc{X} \rightarrow [0,1]$ such that the label of $x$ is Bernoulli with probability $\eta(x) = \P(Y = 1 | X=x)$.
Being in the agnostic setting, we make no assumption on the relationship between $\mc{H}$ and $\eta$.
Finally, fix any $\nu \in \triangle_{\mc{X}}$ and $\pi \in \triangle_{\mc{X}}$. Given known $\mc{X},\mc{H}$ and unknown regression function $\eta$, at each time $t=1,2,\dots$:
\begin{enumerate}[leftmargin=8pt]
    \item Nature reveals $x_t \sim \nu$ 
    \item Player chooses $Q_t \in \{0,1\}$. If $Q_t=1$ then nature reveals $y_t \sim \text{Bernoulli}(\eta(x_t)) \in \{-1,1\}$
    \item Player optionally decides to stop at time $t$ and output some $\widehat{h} \in \mc{H}$.
\end{enumerate}
Define the \emph{risk} of any $h \in \mc{H}$ as $R_\pi(h):=\P_{X \sim \pi,Y \sim \eta(X)}( Y \neq h(X) )$.
If the player stops at time $\mc{U}$ after observing $\mc{L}=\sum_{t=1}^{\mc{U}} Q_t$ labels, the objective is to identify $h_* = \arg\min_{h \in \mc{H}} R_{\pi}(h)$ with probability at least $1-\delta$ while minimizing a trade-off of $\mc{U},\mc{L}$.
Note that $h_*$ is the true risk minimizer with respect to distribution $\pi$ but we observe samples $x_t \sim \nu$; $\pi$ is not necessarily equal to $\nu$.
While we have posed the problem as identifying the potentially unique $h^*$, our setting naturally generalizes to identifying an $\epsilon$-good $h$ such that $R_\pi(h)-R_\pi(h_*) \leq \epsilon$.

We will now reduce selective sampling for binary classification problem to selective sampling for best arm identification, and thus immediately obtain a result on the sample complexity. 
For simplicity, assume that $\mc{X}$ and $\mc{H}$ are finite. Enumerate $\mc{X}$ and for each $h \in \mc{H}$ define a vector $z^{(h)} \in [0,1]^{|\mc{X}|}$ such that $z_x^{(h)} := \pi(x) \1\{h(x)=1\}$ for $z^{(h)} = [z_x^{(h)}]_{x \in \mc{X}}$.
Moreover, define $\theta^* := [ \theta^*_x ]_{x \in \mc{X}}$ where $\theta_x^* := 2 \eta(x) - 1$.
Then
\begin{align*}
    R_\pi(h) 
    &= \E_{X \sim \pi, Y\sim \eta(X)}[\1\{Y\neq h(X)\}]\! =\! \sum_{x \in \mc{X}} \pi(x) (\eta(x) \1\{h(x)\neq 1\} \!+\! (1 - \eta(x))\1\{h(x)\neq 0\})\\
    &= \sum_{x \in \mc{X}} \pi(x) \eta(x) + \sum_{x \in \mc{X}} \pi(x) (1 - 2\eta(x))\1\{h(x)=1\} = c - \langle z^{(h)} , \theta^* \rangle
\end{align*}
where $c = \sum_{x \in \mc{X}} \pi(x) \eta(x)$ does not depend on $h$. Thus, if $\mc{Z} :=\{ z^{(h)}\}_{h \in \mc{H}}$ then identifying $h_* = \arg\min_{h\in \mc{H}}R_\pi(h)$ is equivalent to identifying $z_* = \arg\max_{z\in \mc{Z}} \langle z, \theta^* \rangle$. 
We can now apply Theorem~\ref{thm:upper_bound_tau_input} to obtain a result describing the sample complexity trade-off. 
First define, 
\begin{align*}
    \rho_{\pi}(\lambda, \varepsilon) := \max_{z \in \mc{Z} \setminus \{z_*\}} \frac{\|z-z_*\|_{\E_{X \sim \lambda}[X X^\top]^{-1}}^2}{\max\{\langle \theta_*, z_* - z \rangle^2,\varepsilon^2\}}
    =
    \max_{h \in \mc{H} \setminus \{h_*\}} \frac{\E_{X\sim\pi}\left[\bm{1}\{h(X) \neq h'(X)\}\frac{\pi(X)}{\lambda(X)}\right]}{\max\{(R_\pi(h)-R_\pi(h^*))^2, \varepsilon^2\}} 
\end{align*}

An important case of the above setting is when $X\sim \nu$ and $\pi = \nu$, i.e. we are evaluating the performance of a classifier relative to the same distribution our samples are drawn from. This is the setting of \cite{dasgupta2008agnostic, huang2015efficient, hanneke2014theory}. The following theorem shows that the sample complexity obtained by our algorithm is at least as good as the results they present.

\begin{theorem}\label{thm:upper_bound_tau_input_classification}
Fix any $\delta\in (0,1)$, domain $\mc{X}$ with distribution $\nu$, finite hypothesis class $\mc{H}$, regression function $\eta:\mc{X}\rightarrow [0,1]$. Set $\epsilon \geq 0$ and $\beta_{\delta} = 2048\log(4\log_2^2(4/\epsilon)|\mc{H}|/\delta)$. Then for $\tau \geq \rho_{\pi}(\nu, \epsilon)\beta_{\delta}$ there exists a selective sampling algorithm that returns $h\in \mc{H}$ satisfying $R_{\pi}(h) - R_{\pi}(h^{\ast})\leq \epsilon$ by observing $\mc{U}$ unlabeled examples and requesting just $\mc{L}$ labels such that
\begin{itemize}
    \item $\mc{U} \leq \log_2(4/\epsilon)\tau$
        \item $\displaystyle \mc{L} \leq 3 \log_2(\tfrac{4}{\varepsilon}) \, \min_{\lambda \in \triangle_{\mc{X}}}\rho_{\pi}(\lambda, \varepsilon) \beta_\delta \quad\text{ s.t. }\quad \tau \geq \|\lambda / \nu \|_\infty \rho_{\pi}(\lambda, \varepsilon)\beta_{\delta} $
\end{itemize}
with probability at least $1-\delta$. Furthermore when $\nu=\pi$ and if $\tau\geq 16\rho(\nu, \epsilon)\beta_{\delta}$ we have that 
\[\mc{L} \leq 36\log_2(4/\epsilon) \left(\tfrac{R_{\nu}(h^{\ast})^2}{\epsilon^2}+4\right)\sup_{\xi\geq \epsilon }\theta^{\ast}(2R_{\nu}(h^{\ast})+\xi,\nu)\beta_{\delta}\]
where $\theta^{\ast}(u,\nu)$ is the disagreement coefficient, defined in Appendix~\ref{sec:classification}.
\end{theorem}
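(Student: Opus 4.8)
The plan has two halves: first derive the two covariate-shift bullets from Theorem~\ref{thm:upper_bound_tau_input} via the reduction already set up above, and then sharpen the $\nu=\pi$ bound by a round-by-round analysis of Algorithm~\ref{algo:best_arm_meta} that exploits the geometry of the disagreement region. For the first half, take dimension $|\mc{X}|$, measurement vectors $\{e_x:x\in\mc{X}\}$ (so the stream is $e_{x_t}$ with $x_t\sim\nu$), arms $\mc{Z}=\{z^{(h)}\}_{h\in\mc{H}}$ and $\theta^*$ as defined before the theorem. Since $\E[y_t\mid x_t]=2\eta(x_t)-1=\inner{e_{x_t},\theta^*}$ and (shown above) $R_\pi(h)=c-\inner{z^{(h)},\theta^*}$, identifying an $\epsilon$-good $h$ is exactly identifying an $\epsilon$-good arm. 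For the $\rho$-identity, $\E_{X\sim\lambda}[XX^\top]=\mathrm{diag}(\lambda)$, so $\Norm{z^{(h)}-z^{(h^*)}}_{\E_{X\sim\lambda}[XX^\top]^{-1}}^2=\sum_x \pi(x)^2\1\{h(x)\neq h^*(x)\}/\lambda(x)=\E_{X\sim\pi}[\1\{h(X)\neq h^*(X)\}\pi(X)/\lambda(X)]$ and $\inner{z^{(h^*)}-z^{(h)},\theta^*}=R_\pi(h)-R_\pi(h^*)$, which is precisely the reduction of $\rho(\lambda)$ with the $\epsilon$-truncated denominator and $|\mc{Z}|=|\mc{H}|$. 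Applying the $\epsilon$-relaxed form of Theorem~\ref{thm:upper_bound_tau_input} (with $\Delta$ replaced by $\epsilon$, hence $\log_2(4/\Delta)\to\log_2(4/\epsilon)$ and the stated $\beta_\delta$ absorbing the extra union bound over rounds) yields the two bullets verbatim.

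For the $\nu=\pi$ refinement, write $\Delta_h:=R_\nu(h)-R_\nu(h^*)$, $\rho_h:=\P_{X\sim\nu}(h(X)\neq h^*(X))$, $R:=R_\nu(h^*)$, and with the notation of Appendix~\ref{sec:classification} let $B(h^*,r)=\{h:\rho_h\le r\}$, $\mathrm{DIS}(V)=\{x:\exists h,h'\in V,\ h(x)\neq h'(x)\}$, $\theta^*(u,\nu)=\sup_{r\ge u}\P_\nu(\mathrm{DIS}(B(h^*,r)))/r$, and $\bar\theta:=\sup_{\xi\ge\epsilon}\theta^*(2R+\xi,\nu)$. The algorithm runs $\ell_{\max}=\log_2(4/\epsilon)$ rounds, and in round $\ell$ its active set $\mc{Z}_\ell$ contains only $h$ with $\Delta_h\le 8\cdot 2^{-\ell}$. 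From the per-round form of the analysis behind Theorem~\ref{thm:upper_bound_tau_input}, the labels requested in round $\ell$ number at most $c\,\rho_{\mc{Z}_\ell}(\lambda_\ell,2^{-\ell})\,\beta_\delta$ for any sampling distribution $\lambda_\ell$ meeting the feasibility constraint $\tau\ge\Norm{\lambda_\ell/\nu}_\infty\,\rho_{\mc{Z}_\ell}(\lambda_\ell,2^{-\ell})\,\beta_\delta$, where $\rho_{\mc{Z}_\ell}$ is $\rho_\nu$ restricted to $\{z^{(h)}:h\in\mc{Z}_\ell\}$. The key step is to take $\lambda_\ell$ to be $\nu$ restricted and renormalized to the disagreement region $D_\ell:=\mathrm{DIS}(\mc{Z}_\ell)$: since $h(x)\neq h'(x)$ forces $x\in D_\ell$, one gets $\rho_{\mc{Z}_\ell}(\lambda_\ell,2^{-\ell})\le 4^\ell\,\P_\nu(D_\ell)\,\mathrm{diam}(\mc{Z}_\ell)$ with $\mathrm{diam}(\mc{Z}_\ell):=\sup_{h,h'\in\mc{Z}_\ell}\P_\nu(h\neq h')$, while $\Norm{\lambda_\ell/\nu}_\infty=1/\P_\nu(D_\ell)$, so feasibility reduces to $\tau\ge 4^\ell\,\mathrm{diam}(\mc{Z}_\ell)\,\beta_\delta$, which (using $\Delta_h\le 8\cdot2^{-\ell}$ on $\mc{Z}_\ell$ and $2^{-\ell}\ge\epsilon/4$) is implied by $\tau\ge 16\rho(\nu,\epsilon)\beta_\delta$. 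This disagreement-region choice is exactly the ``only label where hypotheses disagree'' device, and it injects the factor $\P_\nu(D_\ell)$ that, multiplied against $\mathrm{diam}(\mc{Z}_\ell)$, produces the $R^2$.

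The estimates then close the argument. The standard $0/1$-loss inequality $\rho_h\le R_\nu(h)+R_\nu(h^*)=\Delta_h+2R$ (from $\{h(X)\neq h^*(X)\}\subseteq\{h(X)\neq Y\}\cup\{h^*(X)\neq Y\}$) gives $\rho_h\le 8\cdot2^{-\ell}+2R$ for $h\in\mc{Z}_\ell$, hence $\mc{Z}_\ell\subseteq B(h^*,8\cdot2^{-\ell}+2R)$ and $D_\ell\subseteq\mathrm{DIS}(B(h^*,8\cdot2^{-\ell}+2R))$; since $8\cdot2^{-\ell}\ge2\epsilon\ge\epsilon$, the definition of $\theta^*$ yields $\P_\nu(D_\ell)\le\theta^*(8\cdot2^{-\ell}+2R,\nu)(8\cdot2^{-\ell}+2R)\le\bar\theta\,(8\cdot2^{-\ell}+2R)$, and likewise $\mathrm{diam}(\mc{Z}_\ell)\le 2\sup_{h\in\mc{Z}_\ell}\rho_h\le 2(8\cdot2^{-\ell}+2R)$. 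Substituting, the labels in round $\ell$ are $O(1)\cdot 4^\ell\,\bar\theta\,(2^{-\ell}+R)^2\,\beta_\delta = \bar\theta\,\big(O(1)+O(1)2^\ell R+O(1)4^\ell R^2\big)\beta_\delta$. Summing over $\ell=1,\dots,\ell_{\max}$ with $\sum_\ell 1=\log_2(4/\epsilon)$, $\sum_\ell 2^\ell=O(1/\epsilon)$, $\sum_\ell 4^\ell=O(1/\epsilon^2)$, and collapsing the cross term via $R/\epsilon\le\tfrac12(R^2/\epsilon^2+1)\le\log_2(4/\epsilon)(R^2/\epsilon^2+1)$, gives $\mc{L}\le O(1)\log_2(4/\epsilon)(R^2/\epsilon^2+1)\bar\theta\beta_\delta$; carrying the explicit constants through the $3\log_2(4/\epsilon)$ prefactor of Theorem~\ref{thm:upper_bound_tau_input} and the bookkeeping above lands on the stated $36\log_2(4/\epsilon)(R_\nu(h^*)^2/\epsilon^2+4)\bar\theta\beta_\delta$.

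The main obstacle is Part~2: extracting from the proof of Theorem~\ref{thm:upper_bound_tau_input} a clean per-round label bound as a function of a freely chosen feasible sampling distribution, then verifying that the disagreement-region choice is feasible under $\tau\ge 16\rho(\nu,\epsilon)\beta_\delta$, and above all recognizing that bounding $\P_\nu(D_\ell)$ (by the disagreement coefficient) and $\mathrm{diam}(\mc{Z}_\ell)$ (by the version-space radius) \emph{separately} and taking their product is what converts the naive $R/\epsilon^2$ rate into the correct $R^2/\epsilon^2+1$. Once the $4^\ell(2^{-\ell}+R)^2$ form is in hand, the summation and constant-chasing in Part~3 are routine.
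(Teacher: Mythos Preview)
Your proposal is correct and follows essentially the same approach as the paper: reduce to Theorem~\ref{thm:upper_bound_tau_input} for the first two bullets, and for the $\nu=\pi$ refinement do a per-round analysis of Algorithm~\ref{algo:best_arm_meta}, choosing $\lambda_\ell$ to be $\nu$ conditioned on the disagreement region of the surviving hypotheses, verifying feasibility via $\tau\ge 16\rho(\nu,\epsilon)\beta_\delta$, and invoking the $0/1$-loss inequality $\rho_h\le 2R+\Delta_h$ together with the disagreement-coefficient definition.

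The only notable difference is in the final accounting: the paper bounds the round-$\ell$ cost by $(9R^2/\epsilon_\ell^2+144)\cdot[\text{disagreement term}]\cdot\beta_\delta$ via a two-case split on whether $4\epsilon_\ell\lessgtr R$, then takes a supremum over $\ell$ and multiplies by the number of rounds $\log_2(4/\epsilon)$; you instead expand $4^\ell(2^{-\ell}+R)^2$ into $O(1)+O(2^\ell R)+O(4^\ell R^2)$ and sum each term over $\ell$. Both routes land on the stated bound, though your termwise sum reveals that the $R^2/\epsilon^2$ contribution does not intrinsically carry the extra $\log_2(4/\epsilon)$ factor (you reinsert it only to match the theorem's form). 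One small caution: your feasibility claim ``$\tau\ge 4^\ell\,\mathrm{diam}(\mc{Z}_\ell)\,\beta_\delta$ is implied by $\tau\ge 16\rho(\nu,\epsilon)\beta_\delta$'' needs the same chain of inequalities the paper uses (replacing the denominator $\epsilon_\ell^2$ by $\max\{(4\epsilon_\ell)^2,\Delta_h^2\}\ge\max\{\epsilon^2,\Delta_h^2\}$), and with your $8\cdot 2^{-\ell}$ radius in place of the paper's $4\epsilon_\ell$ the constant is not exactly $16$; be explicit there when you write it out.
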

Note that if $\tau$ is sufficiently large then the labeled sample complexity we obtain $\min_{\lambda\in \Delta_{\mc{X}}} \rho(\lambda, \epsilon)$ could be significantly smaller than previous results in the streaming setting, e.g. see \cite{katzsamuels2021improved}. The proof of Theorem~\ref{thm:upper_bound_tau_input_classification} can be found in Appendix~\ref{sec:classification}.

\section{Solving the Optimization Problem}

Recall that in Algorithm \ref{algo:best_arm_meta}, during round $\ell$, we need to solve optimization problem \eqref{eqn:ideal_opt_problem}.
Solving this optimization problem is not trivial because the number of variables can potentially be infinite if $\mc{X}$ is an infinite set. In this section, we will demonstrate how to reduce it to a finite-dimensional problem by considering its dual problem. To simplify the notation, let $\mc{Y}_\ell=\Bp{z-z':z, z'\in\mc{Z}_\ell, z\neq z'}$, and  rewrite the problem as follows, where $c_\ell>0$ is a constant that may depend on round $\ell$.
\begin{equation}
    \label{equ:opt_original}
    \begin{array}{rl}
        \min_{P} & \E_{X\sim\nu}\Mp{P(X)} \\
        \text{subject to} & y^\top \E_{X\sim\nu}\Mp{P(X)XX^\top}^{-1}y\leq c_\ell^2,\quad\forall y\in\mc{Y}_\ell,\\
        & 0\leq P(x)\leq 1,\quad\forall x\in\mc{X}.
    \end{array}
\end{equation}
Using the Schur complement technique, we show in Lemma \ref{lmm:schur_property} (Appendix \ref{sec:analysis_opt}) the following equivalence:
$y^\top \E_{X\sim\nu}\Mp{P(X)XX^\top}^{-1}y\leq c_\ell^2\Longleftrightarrow \E_{X\sim\nu}\Mp{P(X)XX^\top}\succeq\frac{1}{c_\ell^2}yy^\top$.
This transforms a constraint involving matrix inversion into one with ordering between PSD matrices. 
Then, we remove the bound constraints $0\leq P(x)\leq 1$, $\forall x\in\mc{X}$ by introducing the barrier function $-\log(1-x)-\log(x)$. That is, instead of working with the objective $\E_{X\sim\nu}\Mp{P(X)}$ directly, we consider the following problem.
\begin{equation}
    \label{equ:opt_barrier}
    \begin{array}{rl}
		\min_{P} & \E_{X\sim\nu}[P(X)-\mu_b(\log(1-P(X))+\log(P(X)))] \\
		\text{subject to} &  \E_{X\sim\nu}\Mp{P(X)XX^\top}\succeq\frac{1}{c_\ell^2}yy^\top, \quad\forall y\in\mathcal{Y}_\ell.
	\end{array}
\end{equation}
Here, $\mu_b\in(0, 1)$ is some small constant that controls how strong the barrier is. Intuitively, a smaller $\mu_b$ will make problem \eqref{equ:opt_barrier} closer to the original problem. 
We now show that unlike the primal, the dual problem is indeed finite-dimensional. 
For each constraint of $y\in\mc{Y}_\ell$, let the matrix $\Lambda_y\succeq\bm{0}$ be its dual variable. Further, let $\Lambda=\sum_{y\in\mc{Y}_\ell}\Lambda_y$ and $\bm{\Lambda}=\Sp{\Lambda_y}_{y\in\mc{Y}_\ell}$. The corresponding Lagrangian is
\begin{align*}
    \mc{L}&\Sp{\bm{\Lambda},P}=\E_{X\sim\nu}\Mp{P(X)\!-\!\mu_b\Sp{\log(1\!-\!P(X))\!+\!\log(P(X))}\!-\!P(X)X^\top\Lambda X}+\frac{1}{c_\ell^2}\sum_{y\in\mathcal{Y}_\ell}y^\top\Lambda_y y.
\end{align*}
The dual problem is $\max_{\Lambda_{y}\succeq\bm{0},\forall y\in \mc{Y}_\ell}\min_{P}\mc{L}\Sp{\bm{\Lambda}, P}$. Notice that minimization over $P:\mc{X}\mapsto[0, 1]$ can be done via minimizing $P(x)$ point-wise for each  $x\in\mc{X}$. To do this, we take the gradient with respect to each $P(x)$ and set it to zero to get
\begin{equation}
    \label{equ:P_equ}
    1+\frac{\mu_b}{1-P(x)}-\frac{\mu_b}{P(x)}-x^\top\Lambda x=0.
\end{equation}
Solving this equation and defining $q_{\Lambda}(x)=x^\top\Lambda x-1$, we get 
\begin{equation}
    \label{equ:P_sol}
    P_{\Lambda}(x)=\frac{1}{2}-\frac{\mu_b}{q_{\Lambda}(x)}+\frac{\sqrt{\Sp{2\mu_b-q_{\Lambda}(x)}^2+4\mu_b q_{\Lambda}(x)}}{2q_{\Lambda}(x)}.
\end{equation}


Note that if $\mu_b=0$ (no barrier), the above reduces to the ``threshold'' decision rule $P_{\Lambda}(x)=\frac{1}{2}+\frac{|q_{\Lambda}(x)|}{2q_{\Lambda}(x)}$, which gives $0$ when $q_\Lambda (x)<0$ and $1$ when $q_\Lambda (x)>0$.\footnote{When $q_{\Lambda}(x)=0$, $P_\Lambda(x)$ is undetermined from the dual.} This is exactly the hard elliptical threshold rule mentioned before, in which whether to query the label for $x$ depends on whether it falls inside ($x^\top\Lambda x<1$) or outside ($x^\top\Lambda x>1$) of the ellipsoid defined by the positive semidefinite matrix $\Lambda$. A visualization of the decision rule $P_\Lambda$ is given in Figure \ref{fig:visual_P} in the Appendix.

Now, by plugging in $P_{\Lambda}(x)$, our dual problem becomes $\max_{\Lambda_y\succeq\bm{0},\forall y}D(\bm{\Lambda}):=\mathcal{L}\Sp{\bm{\Lambda}, P_\Lambda}$.
%
This is a finite-dimensional optimization problem, and can be solved by projected gradient ascent (or projected stochastic gradient ascent when we have only samples from $\nu$). The gradient of $D(\bm{\Lambda})$ is
\begin{align*}
    \nabla_{\Lambda_y}D(\bm{\Lambda})&=\E_{X\sim\nu}\Mp{\Sp{1\!+\!\frac{\mu_b}{1-P_{\Lambda}(x)}\!-\!\frac{\mu_b}{P_{\Lambda}(X)}\!-\!X^\top\Lambda X}\nabla_{\Lambda_y}P_{\Lambda}(X)\!-\!P_{\Lambda}(X)XX^\top}\!+\!\frac{yy^\top}{c_\ell^2}\\
    &=\frac{yy^\top}{c_\ell^2}-\E_{X\sim\nu}\Mp{P_{\Lambda}(X)XX^\top}.\tag{Since $P_{\Lambda}(X)$ solves Eq. \eqref{equ:P_equ}}
\end{align*}
The algorithm to solve the problem has been summarized in Algorithm \ref{algo:solve_sgd}, in which the gradient during $k$th iteration is replaced by its unbiased estimator $\frac{yy^\top}{c_\ell^2}-P_{\hat{\Lambda}^{(k)}}(x_k)x_kx_k^\top$. The adaptive learning rate is chosen by following the discussion in chapter 4 of \cite{orabona2019modern}. 
Optimizing the assignment of $\hat{\Lambda}_y$ to each y in line \ref{line:assign_Lambda_y} ensures that the re-scaling step in line \ref{line:re-scaling} increases the function value in an optimized way. Finally, the re-scaling step is used to ensure that the output primal objective value $\E_{X\sim\nu}\Mp{P_{\title{\Lambda}}(X)}$ is bounded well, which will be explained in more details in Appendix \ref{sec:analysis_opt}.

\begin{algorithm}[ht]
\caption{Projected Stochastic Gradient Ascent to Solve \textsc{OptimizeDesign}}
\label{algo:solve_sgd}
\begin{algorithmic}[1]
\STATE{\textbf{Input:} Number of iterations $K$; number of samples $u$; barrier weight $\mu_b \in(0, 1)$}
\STATE{Initialize $\hat{\Lambda}^{(0)}_y=\mathbf{0}$ for each $y\in\mc{Y}_\ell$}
\FOR{$k=0, 1, 2, \dots, K-1$}
\STATE{Sample $x_k\sim\nu$}
\STATE{Set $g_{k, y}=\frac{yy^\top}{c_\ell^2}-P_{\hat{\Lambda}^{(k)}}(x_k)x_kx_k^\top$, where $P_{\Lambda}$ is defined in Eq. \eqref{equ:P_sol}}
\STATE{Set $\hat{\Lambda}^{(k+1)}_y\leftarrow \hat{\Lambda}^{(k)}_y+\eta_kg_{k, y}$ for each $y\in\mc{Y}_\ell$, where $\eta_k=\frac{1}{\sqrt{2\sum_{s=1}^k\sum_{y\in\mc{Y}_\ell}\Norm{g_{s, y}}_2^2}}$}
\STATE{Update $\hat{\Lambda}^{(k+1)}_y\leftarrow\Pi_{\mathbb{S}^d_{+}}(\hat{\Lambda}^{(k+1)}_y)$ for each $y\in\mc{Y}_\ell$, a projection to the set of $d\times d$ PSD matrices}
\ENDFOR
\STATE{Let $\hat{\Lambda}_y=\frac{1}{K}\sum_{k=1}^{K}\hat{\Lambda}^{(k)}_y$ for each $y\in\mc{Y}_\ell$ and $\hat{\Lambda}=\sum_{y\in \mc{Y}_{\ell}} \hat{\Lambda}_y$}
\STATE{Update $(\hat{\Lambda}_y)_{y\in\mc{Y}_\ell}\leftarrow\argmax_{\bm{\Lambda}} \sum_{y\in\mc{Y}_\ell}y^\top\Lambda_y y$, $\text{subject to }\sum_{y\in\mc{Y}_\ell}\Lambda_y=\hat{\Lambda}, \Lambda_y\succeq\bm{0}, \forall y\in\mc{Y}_\ell.$}\label{line:assign_Lambda_y}
\STATE{Find $s^*\leftarrow\argmax_{s\in[0, 1]}D_{E}(s\cdot\hat{\bm{\Lambda}})$, where $D_{E}$ empirically evaluates $D$ using $u$ i.i.d. samples}\label{line:re-scaling}
\STATE{\textbf{return} $\widetilde{\Lambda}=s^*\cdot\sum_{y\in\mc{Y}_\ell}\hat{\Lambda}_y$}
\end{algorithmic}
\end{algorithm}

Let $\bm{\Lambda}^*$ be an optimal solution for $D(\bm{\Lambda})$. Intuitively, as long as we run this algorithm with sufficiently large number of iterations $K$ and number of samples $u$, we can guarantee that $D(\widetilde{\bm{\Lambda}})$ and $D(\bm{\Lambda}^*)$ are close enough with high probability, which in turn guarantees that the primal constraints are violated by only a tiny amount and $\E_{X\sim\nu}\Mp{P_{\widetilde{\Lambda}}(X)}$ is close enough to the optimal value. 
Specifically, we can prove the following theorem.
\begin{theorem}
\label{theo:opt}
Suppose $\Norm{x}_2\leq M$ for any $x\in\mathrm{supp}(\nu)$ and $\Sigma=\E_{X\sim\nu}\Mp{XX^\top}$ is invertible. Let $\bm{\Lambda}^*\in\argmax_{\Lambda_y\succeq\bm{0}, \forall y\in\mc{Y}_\ell}D(\bm{\Lambda})$ and $\kappa(\Sigma)=\frac{\lambda_{\max}(\Sigma)}{\lambda_{\min}(\Sigma)}$ be its condition number. Assume $\Norm{\Lambda^*}_F>0$ and define $\omega=\min_{\Gamma\in\mathbb{S}^d:\Norm{\Gamma}_F=1}\E_{X\sim\nu}\Mp{\Sp{X^\top\Gamma X}^2}$, where $\mathbb{S}^d$ is the set of $d\times d$ symmetric matrices.

Then, $\Lambda^*=\sum_{y\in\mc{Y}_\ell}\Lambda^*_y$ is unique. Further, for any $\epsilon>0$ and $\delta>0$, if it holds that $\mu_b\leq O\Sp{ \sqrt{\Norm{\Lambda^*}_F\kappa(\Sigma)}M}\cdot\sqrt{(1+\epsilon)/\epsilon}$ and
$$K\!\geq\! O\!\!\Sp{\!\frac{\!\abs{\mc{Y}_\ell}^3\!\!\kappa(\Sigma)^2\!\Norm{\Lambda^*}_F^8\!M^{16}\!\log(1/\delta)\!}{\omega^2\mu_b^6}\!}\!\cdot\!\Sp{\!\frac{1+\epsilon}{\epsilon}\!}^2\!\!,u\!\geq\! O\!\!\Sp{\!\!\frac{\kappa(\Sigma)^2\!\Norm{\Lambda^*}_F^6\!M^{16}\!\log(1/\delta)}{\omega^2\mu_b^6}\!}\!\cdot\!\Sp{\!\frac{1+\epsilon}{\epsilon}\!}^2,$$
then, with probability at least $1-\delta$, Algorithm \ref{algo:solve_sgd} will output $\widetilde{\Lambda}$ that satisfies
\begin{itemize}
    \item $y^\top \E_{X\sim\nu}\Mp{P_{\widetilde{\Lambda}}(X)XX^\top}^{-1}y\leq (1+\epsilon)c_\ell^2,\quad\forall y\in\mc{Y}_\ell$.
    
    \item $\E_{X\sim\nu}\Mp{P_{\widetilde{\Lambda}}(X)}\leq \E_{X\sim\nu}\Mp{\widetilde{P}(X)}+4\sqrt{\mu_b}$, where $\widetilde{P}$ is the optimal solution to problem \eqref{equ:opt_original} with barrier constraint repaced by $0\leq P(x)\leq 1-\mu_b, \forall x\in\mc{X}$.
\end{itemize}
\end{theorem}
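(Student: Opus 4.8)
\emph{Concavity, uniqueness of $\Lambda^*$, and coercivity.} Write $D(\bm{\Lambda})=\min_{P:\mc{X}\to[0,1]}\mc{L}(\bm{\Lambda},P)$; since $\mc{L}(\cdot,P)$ is affine in $\bm{\Lambda}$ for every fixed $P$, $D$ is concave. Its only nonlinear dependence on $\bm{\Lambda}$ is through $\Lambda=\sum_{y\in\mc{Y}_\ell}\Lambda_y$, via $g(\Lambda):=\E_{X\sim\nu}[h(X^\top\Lambda X)]$ where $h(r):=\min_{p\in[0,1]}\{p-\mu_b(\log(1-p)+\log p)-pr\}$; the inner minimizer reproduces the formula \eqref{equ:P_sol}, and the envelope theorem gives $h'(r)=-P_{\Lambda}$ and $h''(r)=-\partial_rP_{\Lambda}<0$, with $|h''|$ bounded below by a quantity of order $\mu_b/\mathrm{poly}(M,\|\Lambda\|_F)$ on the range of $X^\top\Lambda X$ (contained in $[0,M^2\|\Lambda\|_F]$ since $\Lambda\succeq\bm{0}$, $\|X\|_2\le M$). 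Using $\E_{X\sim\nu}[(X^\top\Gamma X)^2]\ge\omega\|\Gamma\|_F^2$ for all symmetric $\Gamma$, the Hessian identity $\langle\Gamma,\nabla^2g(\Lambda)\Gamma\rangle=\E[h''(X^\top\Lambda X)(X^\top\Gamma X)^2]$ shows $g$ is strongly concave in $\Lambda$ on any bounded ball containing $\Lambda^*$, with modulus of order $\omega\mu_b/\mathrm{poly}(M,\|\Lambda^*\|_F)$. As the remainder of $D$ is linear, two maximizers with distinct $\sum_y\Lambda_y$ would make their midpoint strictly larger, a contradiction, so $\Lambda^*$ is unique. The same computation, together with the stationarity identity $c_\ell^{-2}\sum_yy^\top\Lambda^*_yy=\E[P_{\Lambda^*}(X)X^\top\Lambda^*X]<\mathrm{tr}(\Lambda^*\Sigma)$ (using $\Sigma\succ0$ and $P_{\Lambda^*}<1$), shows $D(s\bm{\Lambda})\to-\infty$ along every ray of the cone, which I use to confine the relevant iterates to a ball of radius $\mathrm{poly}(\|\Lambda^*\|_F,\kappa(\Sigma),M)$.

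\emph{Stochastic ascent to a near-optimal dual point.} By the envelope computation already in the text, $g_{k,y}=yy^\top/c_\ell^2-P_{\hat{\Lambda}^{(k)}}(x_k)x_kx_k^\top$ is unbiased for $\nabla_{\Lambda_y}D(\hat{\bm{\Lambda}}^{(k)})$ and $\|g_{k,y}\|_2\le\|y\|_2^2/c_\ell^2+M^2=:G$. Combining the adaptive (AdaGrad-type) regret bound of \cite{orabona2019modern} for projected ascent on $(\mathbb{S}^d_{+})^{|\mc{Y}_\ell|}$ against $\bm{\Lambda}^*$, an online-to-batch conversion, and a Freedman-type bound for the martingale $\sum_k\langle\nabla D(\hat{\bm{\Lambda}}^{(k)})-g_k,\bm{\Lambda}^*-\hat{\bm{\Lambda}}^{(k)}\rangle$ (its increments equal $\langle\E[P_{\hat{\Lambda}^{(k)}}XX^\top]-P_{\hat{\Lambda}^{(k)}}(x_k)x_kx_k^\top,\Lambda^*-\hat{\Lambda}^{(k)}\rangle$ and are bounded once the iterates are confined), I expect with probability $1-\delta/2$ that
\[
D(\bm{\Lambda}^*)-D(\hat{\bm{\Lambda}})\ \lesssim\ \frac{\|\Lambda^*\|_F\,G\,\sqrt{|\mc{Y}_\ell|\log(1/\delta)}}{\sqrt{K}}
\]
for $\hat{\bm{\Lambda}}=\tfrac{1}{K}\sum_k\hat{\bm{\Lambda}}^{(k)}$. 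The re-assignment in line \ref{line:assign_Lambda_y} does not decrease $\sum_yy^\top\Lambda_yy$, hence not $D$; and once the $u$ fresh samples make $D_E$ uniformly close to $D$ over $\{s\hat{\bm{\Lambda}}:s\in[0,1]\}$ (a covering argument, $u=\mathrm{poly}(\cdot)\log(1/\delta)$), the re-scaling in line \ref{line:re-scaling} only raises $D(\widetilde{\bm{\Lambda}})$ up to a comparable error. Altogether $D(\bm{\Lambda}^*)-D(\widetilde{\bm{\Lambda}})\le\epsilon'$ with $\epsilon'$ decaying polynomially in $1/K$ and $1/u$.

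\emph{From a small dual gap to the two bullets.} Strong concavity yields $\|\widetilde{\Lambda}-\Lambda^*\|_F^2\lesssim\epsilon'\cdot\mathrm{poly}(M,\|\Lambda^*\|_F)/(\omega\mu_b)$. For the first bullet, \eqref{equ:P_sol} makes $x\mapsto P_{\Lambda}(x)$ Lipschitz in $\Lambda$ with constant $O(M^2/\mu_b)$, so $\|\E[P_{\widetilde{\Lambda}}XX^\top]-\E[P_{\Lambda^*}XX^\top]\|_{\mathrm{op}}$ is small; since $\E[P_{\Lambda^*}XX^\top]\succeq c_\ell^{-2}yy^\top$ for all $y\in\mc{Y}_\ell$ (Lemma \ref{lmm:schur_property}) and $\lambda_{\min}(\E[P_{\Lambda^*}XX^\top])$ is controlled via $\kappa(\Sigma)$, an $(A+E)^{-1}$ perturbation bound upgrades this to $y^\top\E[P_{\widetilde{\Lambda}}XX^\top]^{-1}y\le(1+\epsilon)c_\ell^2$ once $K,u$ are as large as stated --- this is exactly where the dependence on $\kappa(\Sigma),\omega,\|\Lambda^*\|_F,M,\mu_b$ and the factors $((1+\epsilon)/\epsilon)^2$ enter. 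For the second bullet, $D(\widetilde{\bm{\Lambda}})=\E[P_{\widetilde{\Lambda}}-\mu_b(\log(1-P_{\widetilde{\Lambda}})+\log P_{\widetilde{\Lambda}})-P_{\widetilde{\Lambda}}X^\top\widetilde{\Lambda}X]+c_\ell^{-2}\sum_yy^\top\widetilde{\Lambda}_yy$, so $\E[P_{\widetilde{\Lambda}}]=D(\widetilde{\bm{\Lambda}})+\mu_b\E[\log(1-P_{\widetilde{\Lambda}})+\log P_{\widetilde{\Lambda}}]-\langle\nabla D(\widetilde{\bm{\Lambda}}),\widetilde{\bm{\Lambda}}\rangle$ with the last term $\ge-O(\epsilon')$ since $s^*$ (approximately) maximizes $s\mapsto D(s\hat{\bm{\Lambda}})$ over $[0,1]$; combined with $D(\widetilde{\bm{\Lambda}})\le D(\bm{\Lambda}^*)$ and (by strong duality for the convex barrier program \eqref{equ:opt_barrier}, Slater holding since $\Sigma\succ0$) $D(\bm{\Lambda}^*)\le\E[\widetilde{P}]+O(\mu_b\log(1/\mu_b))$ --- the latter obtained by feeding $\max(\widetilde{P},\mu_b)$ into \eqref{equ:opt_barrier} --- and choosing $K,u$ so $\epsilon'$ is negligible next to $\sqrt{\mu_b}$, all remaining $O(\mu_b\log(1/\mu_b))$ terms collapse into the stated $4\sqrt{\mu_b}$.

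\emph{Main obstacle.} The crux is the \emph{unbounded} PSD-cone domain: the AdaGrad regret bound tolerates it, but the martingale concentration needs the iterates $\hat{\bm{\Lambda}}^{(k)}$ to remain in a bounded ball, so one must first establish such confinement --- via a Lyapunov argument exploiting the strong concavity of $D$ near $\Lambda^*$, or by adding a projection onto a ball of radius $O(\|\Lambda^*\|_F)$. The second delicate step is pushing an \emph{additive} dual-suboptimality through a matrix inverse to a \emph{multiplicative} $(1+\epsilon)$ relaxation of the constraints while tracking the explicit constants, and checking that the $O(\sqrt{\mu_b})$-scale barrier error survives the re-scaling step without inflating the objective bound.
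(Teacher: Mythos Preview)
Your proposal is correct and follows essentially the same architecture as the paper's proof: strong concavity of $D$ in $\Lambda$ with modulus $\propto\omega\mu_b/\mathrm{poly}(M,\|\Lambda^*\|_F)$, a high-probability dual suboptimality bound from SGD, strong concavity $\Rightarrow\|\widetilde{\Lambda}-\Lambda^*\|_F$ small, Lipschitzness of $P_\Lambda$ in $\Lambda$ (constant $O(M^2/\mu_b)$) plus a matrix perturbation for the first bullet, and the directional-derivative condition from the re-scaling step for the second bullet.

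The one substantive difference is what you flag as the ``main obstacle.'' You worry about confining the iterates $\hat{\bm{\Lambda}}^{(k)}$ during SGD so that a Freedman-type bound applies, and you propose a Lyapunov argument or an explicit projection onto a ball. The paper sidesteps this entirely: it invokes black-box high-probability regret/online-to-batch results (\cite{orabona2019modern}, Thms.\ 3.13 and 4.14) that depend only on $\|\bm{\Lambda}^*\|_F$ and the stochastic-gradient norms, not on the iterate norms, to get $\overline{D}(\Lambda^*)-\overline{D}(\hat{\Lambda})\le(C_{\mathrm{Reg}}+2\sqrt{2\log(6/\delta)})/\sqrt{K}$. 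Confinement is then established only for the \emph{averaged} iterate $\hat{\Lambda}$, ex post, via a one-line concavity fact (their Lemma~\ref{lmm:region_concave}): if a concave function is $G$-strongly concave on the ball $\{\|\Lambda-\Lambda^*\|_F\le\|\Lambda^*\|_F\}$ and the dual gap at $\hat{\Lambda}$ is at most $G\|\Lambda^*\|_F/2$, then $\hat{\Lambda}$ already lies in that ball. Only at that point is strong concavity used to convert the dual gap into $\|\widetilde{\Lambda}-\Lambda^*\|_F$. This is cleaner than tracking the individual $\hat{\bm{\Lambda}}^{(k)}$.

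Two minor remarks on the second bullet. First, you write $-\langle\nabla D(\widetilde{\bm{\Lambda}}),\widetilde{\bm{\Lambda}}\rangle\ge-O(\epsilon')$, but what you need (and what the re-scaling actually gives) is the upper bound $-\langle\nabla D(\widetilde{\bm{\Lambda}}),\widetilde{\bm{\Lambda}}\rangle\le O(\epsilon')$: from $g'(1)\ge 0$ for $g(s)=D_E(s\hat{\bm{\Lambda}})$ one gets $\langle\nabla D_E(\widetilde{\bm{\Lambda}}),\widetilde{\bm{\Lambda}}\rangle\ge 0$, and concentration over the ray $\{s\hat{\Lambda}:s\in[0,1]\}$ transfers this to $D$ with error $O(\epsilon')$; the paper carries out exactly this computation. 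Second, your $O(\mu_b\log(1/\mu_b))$ barrier slack is looser than needed: the paper obtains $\sqrt{\mu_b}$ directly via the elementary inequality $-a\log a\le\sqrt{a}$ for $a\in(0,1)$, which is where the explicit constant $4$ comes from.
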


The proof is in Appendix \ref{sec:analysis_opt}. Although $\widetilde{P}$ is not exactly the same as the optimal solution of the original problem \eqref{equ:opt_original}, when $\mu_b$ is sufficiently small, they will be very close. Meanwhile, it should be noted that Theorem \ref{theo:opt} mainly reveals that with sufficiently large number of iterations and number of samples, Algorithm \ref{algo:solve_sgd} can output sufficiently good solution. 
In future work, we plan to examine how much this bound can be improved via a tighter analysis.

Finally, notice that Algorithm \ref{algo:solve_sgd} needs to maintain $\abs{\mc{Y}_\ell}d^2=O(\abs{\mc{Z}_\ell}^2d^2)$ variables, which can be large when we have a large set $\mc{Z}_\ell$. Therefore, as an alternative, we also propose Algorithm \ref{algo:solve_sgd2} that only needs to maintain $d^2$ variables but requires more computational power in each iteration. The details are given in Appendix \ref{sec:analysis_opt}.


\section{Empirical results}

In this section we present a benchmark experiment validating the fundamental trade-offs that are theoretically characterized in Theorem~\ref{thm:lower_bound_tau_input} and Theorem~\ref{thm:upper_bound_tau_input}. We take inspiration from \cite{soare2014best} to define our experimental protocol:
\begin{itemize}[leftmargin=5pt]
\setlength\itemsep{.1em}
    \item $d=2$, a two-dimensional problem.
    
    \item $\mathcal{Z} = [\mathbf{e}_1, \mathbf{e}_2, (\cos(\omega), \sin(\omega))]$ for $\omega = 0.3$, where $\mathbf{e}_1, \mathbf{e}_2$ are canonical vectors.
    
    \item $\theta_* = 2\mathbf{e}_1$ and $y = x^\top \theta_* + \eta$, where $\eta\sim\mathcal{N}(0, 1)$.
    
    \item The distribution $\nu$ for streaming measurements $x_t\overset{i.i.d.}{\sim}\nu$ is such that $x_t = (\cos(2 I_t \pi/N), \sin(2 I_t \pi/N))$ where $I_t \in \{0,\dots,N-1\}$, $\mathbb{P}(I_t = i)\propto \cos(2i\pi/N)^2$, and $N=30$.
\end{itemize}
In this problem, the angle $\omega$ is small enough that the item $(\cos(\omega), \sin(\omega))$ is hard to discriminate from the best item $\mathbf{e}_1$. As argued in \cite{soare2014best}, an efficient sampling strategy for this problem instance would be to pull arms in the direction of $\pm \mathbf{e}_2$ in order
to reduce the uncertainty in the direction of interest, $\mathbf{e}_1 - (\cos(\omega), \sin(\omega))$.
However, the distribution $\nu$ is defined such that it is more likely to receive a vector $x_t$ in the direction of $\pm \mathbf{e}_1$ rather than $\pm \mathbf{e}_2$.
Thus, if one seeks a small label complexity, then $P$ should be taken to reject measurements in the direction of $\pm e_1$.

In the benchmark experiment, we compare the following three algorithms which all use Algorithm~\ref{algo:best_arm_meta} as a meta-algorithm and just swap out the definition of $\widehat{P}_\ell$.
\texttt{Naive Algorithm} uses no selective sampling so that $\widehat{P}_\ell(x)=1$ for all $x$; the \texttt{Oracle Algorithm} uses $\widehat{P}_\ell = P_*$ where $P_*$ is the ideal solution to \eqref{eq:optimal_design}, and \texttt{Our Algorithm} uses the solution to \eqref{equ:opt_barrier} for $\widehat{P}_\ell$, where we take $\mu_b=2\times 10^{-5}$.
We swept over the values of $\tau$ and plotted on the y-axis the amount of labeled data needed before termination, as shown in Figure \ref{fig:L_vs_tau}.

\begin{figure}[ht]
    \centering
    \includegraphics[width=\textwidth, height=120pt]{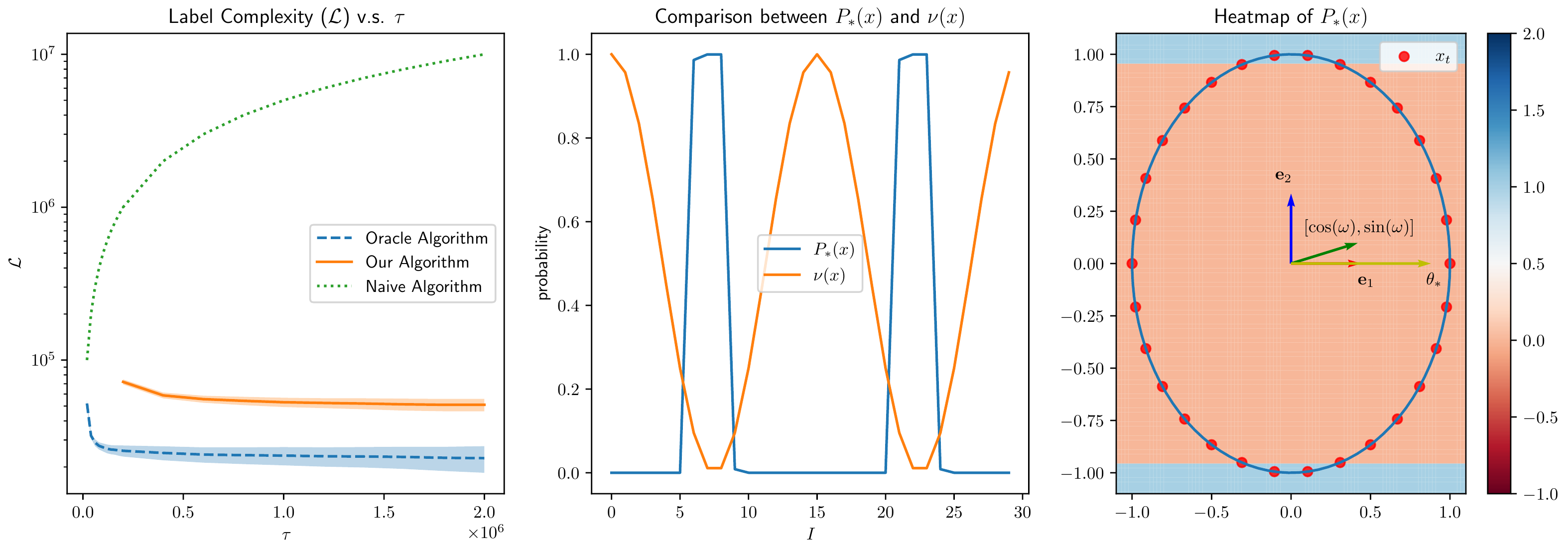}
    \caption{(left) For each value of $\tau$, we plot the average label complexity over 50 repeated trials. (middle) Visualization of $P_*(x)$ and $\nu(x)$ v.s. $x$, where $x$ is indexed by $I$ such that $x_I=(\cos(2 I \pi/N), \sin(2 I \pi/N))$. Here, $P_*$ is solved with $\tau=4\times 10^5$ and distribution $\nu$ is not normalized. (right) A heat map of $P_*(x)$ along with the setting of experimental protocol.}
    \label{fig:L_vs_tau}
\end{figure}

We observe in Figure~\ref{fig:L_vs_tau} that the algorithms using non-naive selection rules require far less label complexity than the naive algorithm for all $\tau$. This reflects the intuition that selection strategies that focus on requesting the more informative streaming measurements are much more efficient than naively observing every streaming measurement. 
Meanwhile, the trade-off between label complexity $\mathcal{L}$ and sample complexity $\mathcal{U}$ characterized in Theorem~\ref{thm:lower_bound_tau_input} and Theorem~\ref{thm:upper_bound_tau_input} is precisely illustrated in Figure~\ref{fig:L_vs_tau}. Indeed, we see the number of labels queried by the two selective sampling algorithms decrease as the number of unlabeled data seen in each round increases. 

\section{Conclusion}

In this paper, we proposed a new approach for the important problem of \textit{selective sampling for best arm identification}. We provide a lower bound that quantifies the trade-off between labeled samples and stopping time and also presented an algorithm that nearly achieves the minimal label complexity given a desired stopping time.

One of the main limitations of this work is that our approach depends on a well-specified model following stationary stochastic assumptions. In practice, dependencies over time and model mismatch are common. 
Utilizing the proposed algorithm outside of our assumptions may lead to poor performance and  unexpected behavior with adverse consequences.
While negative results justify some of the most critical assumptions we make (e.g., allowing the stream $x_t$ to be arbitrary, rather than iid, can lead to trivial algorithms, see Theorem 7 of \cite{chen2021active}), exploring what theoretical guarantees are possible under relaxed assumptions is an important topic of future work. 
\section*{Acknowledgements}
We sincerely thank Chunlin Sun for the insightful discussion on the alternative approach to the optimal design. This work was supported in part by the NSF TRIPODS II grant DMS 2023166, NSF TRIPODS CCF 1740551, NSF CCF 2007036 and NSF TRIPODS+X DMS 1839371. 

\bibliography{refs}
\bibliographystyle{plain}

 \newpage
\appendix
\tableofcontents
\newpage

\section{Selective Sampling Lower Bound}
First, we review the standard argument for best-arm identification lower bounds applied to linear bandits.
Fix $\theta_* \in \R^d$ and let $z_* = \arg\max_{z \in \mc{Z}} \langle z, \theta_* \rangle$.
Define the set $\mc{C} = \{ \theta \in \R^d : \exists z \in \mc{Z} \text{ s.t. } \langle \theta, z - z_* \rangle \geq 0 \}$ as those $\theta$ in which $z_*$ is note the best arm under $\theta$.
We now recall the transportation lemma of \cite{kaufmann2016complexity}. Under a $\delta$-PAC strategy for finding the best arm for the bandit instance $(\mc{X} , \mc{Z}, \theta_*)$, let $T_x$ denote the random variable which is the number of times arm $x$ is pulled. In addition let $\mc{N}_{\theta, x}$ denote the reward distribution of the arm $x$ of $\mc{X}$, i.e. $\mc{N}_{\theta, x} = \mc{N}(x^\top\theta, 1)$. Then for any $\delta$-PAC algorithm
\begin{align*}
    \log(1/2.4\delta) &\leq \min_{\theta \in \mc{C}} \sum_{x\in \mc{X}} \E[T_x]\text{KL}(\mc{N}_{\theta_*, x}, \mc{N}_{\theta, x}) \\
    &= \min_{\theta \in \mc{C}} \sum_{x\in \mc{X}} \E[T_x] \, \tfrac{1}{2}\| \theta_* - \theta \|_{x x^\top}^2 \\
    &= \min_{\theta \in \mc{C}} \tfrac{1}{2} \| \theta_* - \theta \|_{( \sum_{x\in \mc{X}} \E[T_x] \, x x^\top)}^2 \\
    &\leq \min_{z \in \mc{Z} \setminus z_*}  \tfrac{1}{2} \| \theta_* - \theta_z(\epsilon) \|_{( \sum_{x\in \mc{X}} \E[T_x] \, x x^\top)}^2 
\end{align*}
where 
\begin{align*}
    \theta_z(\varepsilon) = \theta_* - \frac{((z_*-z)^\top \theta_*+\varepsilon) ( \sum_{x\in \mc{X}} \E[T_x] \, x x^\top)^{-1}(z_*-z)^\top}{(z_*-z)^\top ( \sum_{x\in \mc{X}} \E[T_x] \, x x^\top)^{-1}(z_*-z)}
\end{align*}
for some small $\epsilon$. This is a valid choice since for all $z \in \mc{Z} \setminus z_*$ we have  $(z_*-z)^\top \theta_z(\varepsilon) = -\varepsilon < 0$ and thus $\theta_z(\varepsilon) \in \mc{C}$. 
A straightforward calculation shows that
\begin{align*}
    \| \theta_* - \theta_z(\epsilon) \|_{( \sum_{x\in \mc{X}} \E[T_x] \, x x^\top)}^2 = \frac{(\langle z_*-z, \theta_* \rangle+\varepsilon)^2}{\|z_*-z\|^2_{(\sum_{x\in \mc{X}} \E[T_x] \, x x^\top)^{-1}}}
\end{align*}
so that after rearranging and lettering $\epsilon \rightarrow 0$ we have that any $\delta$-PAC algorithm satisfies
\begin{align}
     \max_{z \in \mc{Z} \setminus z_*}  \frac{2 {\|z_*-z\|^2_{(\sum_{x\in \mc{X}} \E[T_x] \, x x^\top)^{-1}}}}{\langle z_*-z, \theta_* \rangle^2} \log(1/2.4\delta) \leq 1. \label{eqn:rho_lowerbound_identity}
\end{align}
This series of steps will be applied for each bullet point of the theorem.

\subsection{Proof of Theorem~\ref{thm:lower_bound_tau_input}, part I}
We use the consequence of Lemma~19 of \cite{kaufmann2016complexity}. 
Consider a $\delta$-PAC algorithm that sets $P(x) =1$ for all $x \in \mc{X}$ for all time until it exits at time $\mc{U}$ after this many unlabelled examples have been observed. 
If $T_x$ denotes the number of times $x \in \mc{X}$ was observed before stopping time $\mc{U}$, then by Wald's identity we have that
\begin{align*}
    \E[T_x] = \E\left[ \sum_{t=1}^{\mc{U}} \1\{ x_t = x \} \right] = \nu(x) \E[ \mc{U} ].
\end{align*}
Plugging this into Equation~\ref{eqn:rho_lowerbound_identity} and rearranging we conclude that
\begin{align*}
    \E[\mc{U} ] \geq \max_{z \in \mc{Z} \setminus z_*}  \frac{2 {\|z_*-z\|^2_{(\sum_{x\in \mc{X}} \nu(x) \, x x^\top)^{-1}}}}{\langle z_*-z, \theta_* \rangle^2} \log(1/2.4\delta) =: \rho(\nu) \log(1/2.4\delta)
\end{align*}
which concludes the proof of the first bullet. 

\subsection{Proof of Theorem~\ref{thm:lower_bound_tau_input}, part II}
By definition, the (random) number of times we measure $x$ is
\begin{align*}
    \mc{L}_x = \sum_{s=1}^{\mc{U}}\textbf{1}\{x_s = x, Q_s(x) = 1\}
\end{align*} 
and we want to show that $\E[\mc{L}_x] = \nu(x)\E\left[\sum_{\ell=1}^{\mc{U}}P_\ell(x)\right]$. To do so, we define 
\begin{align*}
    M_t = \sum_{s=1}^{t}\left(\textbf{1}\{x_s = x, Q_s(x) = 1\} - \nu(x)P_s(x)\right)
\end{align*}
It is easy to check that $P_{t+1} \in \mc{F}_t :=\{(x_s, y_s, Q_s)\}_{s=1}^t$ and that 
\begin{align*}
    \E[M_{t+1} | \mc{F}_t ] = M_t + \E[ \textbf{1}\{x_s = x, Q_s(x) = 1\} - \nu(x)P_s(x)| \mc{F}_t ] = M_t
\end{align*}
Applying Doob's equality $\E[M_{\mc{U}}] = \E[M_0] = 0$. Consequence: 
\begin{align*}
    \E[\mc{L}_x] = \E\left[\sum_{s=1}^{\mc{U}}\textbf{1}\{x_s = x, Q_s(x) = 1\}\right] = \nu(v)\E\left[\sum_{s=1}^{\mc{U}}P_s(x)\right]
\end{align*}
Define $\alpha(x) := \frac{\E\left[\sum_{s=1}^{\mc{U}}P_s(x)\right]}{\E[\mc{U}]}$ and note that each $\alpha_x \in [0,1]$. Then $\E[ \mc{L}_x ]= \E[\mc{U}] \alpha(x) \nu(x)$ so
applying equation (18) of \cite{kaufmann2016complexity} again, we have
\begin{align*}
    \log(1/2.4\delta) \leq& \min_{\theta \in \mc{C}} \sum_{x\in\mc{X}}\E[\mc{L}_x]\text{KL}(\mc{N}_{\theta_*, x}, \mc{N}_{\theta, x})  \\
    =& \min_{\theta \in \mc{C}} \sum_{x\in\mc{X}}\E[\mc{L}_x] \, \|\theta - \theta_*\|_{x x^\top}^2 / 2 \\
    =& \min_{z \in \mc{Z} \setminus z_*}  \frac{\langle \theta_*, z_* - z\rangle^2}{2 \|z-z_*\|_{ ( \sum_{x \in \mc{X}} \E[\mc{L}_x] x x^\top )^{-1} }^2 }  \\
    =& \min_{z \in \mc{Z} \setminus z_*}  \frac{\langle \theta_*, z_* - z\rangle^2}{2 \|z-z_*\|_{ ( \sum_{x \in \mc{X}} \nu(x) \alpha(x) x x^\top )^{-1} }^2 }  \E[ \mc{U} ] .
\end{align*}
Rearranging, and applying the identity $\E_{X \sim \nu}[ \alpha(X) X X^\top ] = \sum_{x \in \mc{X}} \nu(x) \alpha(x) x x^\top$, the above implies that
\begin{align*}
    \E[ \mc{U} ] \geq \max_{z \in \mc{Z} \setminus z_*}  \frac{2 \|z-z_*\|_{ \E_{X \sim \nu}[ \alpha(X) X X^\top ]^{-1} }^2 }{\langle \theta_*, z_* - z\rangle^2} \log(1/2.4\delta) .
\end{align*}
Noting that the total expected number of labels is equal to 
\begin{align*}
    \E[\mc{L}] = \sum_{x \in \mc{X}} \E[ \mc{L}_x ] = \sum_{x \in \mc{X}} \E[\mc{U}] \alpha(x) \nu(x) = \E[\mc{U}] \, \E_{X \sim \nu}[ \alpha(X) ]
\end{align*}
we conclude that
\begin{align*}
    \E[\mc{L}] \geq \min_{\alpha: \mc{X} \rightarrow [0,1]} \quad & \E[\mc{U}] \ \E_{X\sim\nu}[\alpha(X)] \\ \text{ subject to }&\quad \E[\mc{U}] \geq  \max_{z \in \mc{Z} \setminus \{z_*\}} \frac{2 \|z-z_*\|_{\E_{X \sim \nu}[\alpha(X) X X^\top]^{-1}}^2}{\langle \theta_*, z_* - z \rangle^2}\log(1/2.4\delta).
\end{align*}
The second bullet point result follows by denoting $\alpha$ as $P$ and applying  Proposition~\ref{prop:reparameterization}.

\section{Selective Sampling Algorithm for Known Distribution $\nu$}

\subsection{Proof of Theorem~\ref{thm:upper_bound_tau_input}, upper bound}\label{sec:proof_upperbound}

At each round $\ell$ we assume an implementation such that $\widehat{P}_\ell, \widehat{\Sigma}_{\widehat{P}_\ell} \leftarrow $\textsc{OptimizeDesign}$(\mc{\mc{Z}_\ell},2^{-\ell},\tau)$ returns the solution of Equation~\ref{eqn:ideal_opt_problem} with $\epsilon = 2^{-\ell}$, essentially.
More explicitly, let $\epsilon_\ell := 2^{-\ell}$, $B< \infty$ such that $\max_{x \in \mc{X}} |\langle x,\theta_* \rangle| \leq B$, and $\sigma < \infty$ such that $\E[(y_s - \langle \theta_*, x_s \rangle)^2|x_s] \leq \sigma^2$.
If
$$\beta_{\delta,\ell} := 16(B^2 + \sigma^2) \log(2 \ell^2 |\mc{Z}|^2/\delta)$$ 
then $\widehat{P}_\ell = P_\ell$ where
\begin{align*}
    \displaystyle 
    P_{\ell} := \argmin_{P: \mc{X} \rightarrow [0,1]} \E_{X \sim \nu}[P(X)]\text{ subject to }
    \max_{z,z' \in \mc{\mc{Z}_\ell}} \frac{\|z-z'\|^2_{\E_{X\sim\nu}[ \tau P(X) X X^\top]^{-1}}}{\epsilon_\ell^2} \beta_{\delta,\ell} \leq  1 
\end{align*} 
and $\widehat{\Sigma}_{\widehat{P}_\ell} := \E_{X \sim \nu}[ P_\ell(X) X X^\top ]$

We first provide an intermediate lemma on the correctness of Algorithm~\ref{algo:best_arm_meta} that relies on the feasibility of $P_\ell$ which we will show shortly.
\begin{lemma}
With probability at least $1-\delta$ we have for all stages $\ell \in \mathbb{N}$ such that $P_\ell$ is feasible, that $z_* \in \mc{Z}_\ell$ and $\max_{z \in \mc{Z}_\ell} \langle z_* - z, \theta_* \rangle \leq 4\epsilon_\ell$. 
\end{lemma}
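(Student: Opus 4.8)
The plan is to prove this by induction on the stage $\ell$, establishing two facts simultaneously: that $z_* \in \mc{Z}_\ell$, and that every surviving $z \in \mc{Z}_\ell$ has gap $\langle z_* - z, \theta_* \rangle \leq 4\epsilon_\ell$ (so in particular $\leq 8\epsilon_\ell$ as stated in the body text; I would keep track of the sharper constant the lemma claims). The base case $\ell=1$ is immediate since $\mc{Z}_1 = \mc{Z}$ contains $z_*$ and $\max_{z} \langle z_* - z, \theta_*\rangle \leq 2\max_z\|z-z_*\|\cdot\|\theta_*\|$ is finite; one needs to check it is at most $4\epsilon_1 = 2$, which I would handle by the convention that the elimination indexing starts at a suitable $\ell_0$ (or absorb it into constants). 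The inductive engine is the RIPS estimator guarantee quoted just before the lemma: on the good event, for every stage $\ell$ where $P_\ell$ is feasible and for all $z,z' \in \mc{Z}_\ell$ simultaneously,
\[
|\langle z' - z, \widehat\theta_\ell - \theta_*\rangle| \;\leq\; \|z'-z\|_{\E_{X\sim\nu}[\tau P_\ell(X)XX^\top]^{-1}}\sqrt{\beta_{\delta,\ell}} \;\leq\; \epsilon_\ell,
\]
where the last inequality is exactly the feasibility constraint defining $P_\ell$. A union bound over $\ell$ using the $2\ell^2$ factor inside $\beta_{\delta,\ell}$ gives the overall probability $1-\delta$; I would state this as the "good event" up front and condition on it for the rest.

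Given the good event, the inductive step has two parts. First, $z_*$ is never eliminated: for any $z' \in \mc{Z}_\ell$, $\langle z' - z_*, \widehat\theta_\ell\rangle = \langle z' - z_*, \theta_*\rangle + \langle z'-z_*, \widehat\theta_\ell - \theta_*\rangle \leq 0 + \epsilon_\ell < 2^{-\ell} \cdot(\text{something})$ — I need to be careful that the elimination threshold in the algorithm is $2^{-\ell} = \epsilon_\ell$, so I actually want the estimation error strictly below the threshold; since $\langle z'-z_*,\theta_*\rangle \leq 0$ and the error is $\leq \epsilon_\ell$, we get $\langle z'-z_*,\widehat\theta_\ell\rangle \leq \epsilon_\ell$, which does not by itself clear the strict threshold, so I would instead show $z_*$ survives because for $z_*$ to be eliminated we would need $\max_{z'}\langle z' - z_*,\widehat\theta_\ell\rangle \geq \epsilon_\ell$, i.e. some $z'$ with $\langle z'-z_*,\theta_*\rangle \geq \epsilon_\ell - \epsilon_\ell = 0$ up to the error — here the cleanest fix is to use that the RIPS bound gives error $\leq \epsilon_\ell$ and the true gap is $\geq 0$, so $\langle z'-z_*,\widehat\theta_\ell\rangle \le \epsilon_\ell$ but this is not $<$; I expect the resolution is that the constant in $\beta_{\delta,\ell}$ is chosen so the RIPS error is $\leq \epsilon_\ell/2$ (I would track this factor of $2$ carefully, as it is exactly what creates the $4\epsilon_\ell$ vs $2\epsilon_\ell$ slack). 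Second, any surviving $z \in \mc{Z}_{\ell+1}$ has a controlled gap: surviving means $\langle z_* - z, \widehat\theta_\ell\rangle < \epsilon_\ell$ (taking $z' = z_* \in \mc{Z}_\ell$ by the first part), hence $\langle z_*-z,\theta_*\rangle < \epsilon_\ell + \epsilon_\ell = 2\epsilon_\ell = 4\epsilon_{\ell+1}$, closing the induction.

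The step I expect to be the main obstacle — or at least the one requiring the most care — is the bookkeeping of constants between the elimination threshold $2^{-\ell}$, the RIPS confidence radius, and the claimed $4\epsilon_\ell$ slack, together with verifying that the specific $\beta_{\delta,\ell} = 16(B^2+\sigma^2)\log(2\ell^2|\mc{Z}|^2/\delta)$ is exactly what the RIPS theorem of \cite{camilleri2021highdimensional} needs to deliver error $\leq \epsilon_\ell$ (or $\epsilon_\ell/2$) uniformly over $z,z' \in \mc{Z}_\ell$ and uniformly over $\ell$ after the union bound. A secondary subtlety is that the lemma is explicitly hedged with "for all stages $\ell$ such that $P_\ell$ is feasible" — so I do not need to argue feasibility here (that is deferred to a separate lemma), but I do need the induction to be vacuously fine at the first infeasible stage and not to rely on feasibility of later stages; I would phrase the induction so that the conclusion at stage $\ell+1$ only uses feasibility at stage $\ell$, which matches how the algorithm actually uses $P_\ell$. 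Everything else — the martingale/union-bound structure, the linear-algebra identity turning the feasibility constraint into an error bound — is routine given the quoted RIPS guarantee.
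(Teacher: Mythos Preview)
Your proposal is correct and follows essentially the same inductive argument as the paper: define the good event that the RIPS error is at most $\epsilon_\ell$ for all $\ell$ and all $z,z'\in\mc{Z}_\ell$, then show $z_*$ survives each round and that any surviving $z$ has gap at most $2\epsilon_\ell = 4\epsilon_{\ell+1}$. Your worry about strict versus non-strict inequality at the elimination threshold is unnecessary---the paper simply uses $\langle z'-z_*,\widehat\theta_\ell\rangle \leq \epsilon_\ell$ against the elimination criterion $\geq \epsilon_\ell$ without addressing the (measure-zero) equality case, and there is no hidden factor of $1/2$ in the RIPS bound.
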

\begin{proof}
Define the event $\mathcal{E}$ as
\begin{align*}
    \mc{E} := \bigcap_{\ell=1}^\infty \bigcap_{z,z' \in \mc{Z}_\ell} \left\{  |\langle z-z', \widehat{\theta}_\ell - \theta_* \rangle| \leq  \epsilon_\ell \right\}
\end{align*}

By Lemma \ref{lmm:intersection_good_event}, we know that $\P(\mc{E})\geq 1-\delta$. Then, the rest of the proof is the same as the one in \cite{fiez2019sequential}, but we include it here for completeness. Assume that $\mc{E}$ holds. Then for any $z'\in \mc{Z}_\ell$
\begin{align*}
    \langle z' - z^*, \widehat{\theta}_\ell \rangle 
    &= \langle z' - z^*, \widehat{\theta}_\ell - \theta^* \rangle + \langle z' - z^*, \theta^* \rangle\\
    &= \langle z' - z^*, \widehat{\theta}_\ell - \theta^* \rangle \\
    &\leq \epsilon_\ell
\end{align*}
so that $z^*$ would survive to round $\mc{Z}_{\ell+1}$. And for any $z\in\mc{Z}_\ell$ such that $\langle z^*-z, \theta^* \rangle > 2\epsilon_\ell$, we have 
\begin{align*}
    \max_{z'\in\mc{Z}_\ell}\langle z' - z, \widehat{\theta}_\ell \rangle
    &\geq \langle z^* - z, \widehat{\theta}_\ell \rangle\\
    &= \langle z^* - z, \widehat{\theta}_\ell - \theta^* \rangle + \langle z^* - z, \theta^*\rangle\\
    &> -\epsilon_\ell + 2\epsilon_\ell\\
    &= \epsilon_\ell
\end{align*}
which implies this $z$ would be kicked out. Note that this implies that $\max_{z\in\mc{Z}_{\ell+1}}\langle z^*-z, \theta^* \rangle \leq 2\epsilon_\ell = 4\epsilon_{\ell+1}$.
\end{proof}
We can now prove Theorem~\ref{thm:upper_bound_tau_input}.
After $L := \lceil \log_2(\frac{4}{\Delta}) \rceil$ rounds $\mc{Z}_\ell = \{z_*\}$ by the above lemma.
Thus, the total number of labels requested after $L$ rounds is equal to $\mc{L} := \sum_{\ell=1}^L \sum_{t=(\ell-1)\tau+1}^{\ell \tau} Q_\ell(x_t)$. 
By Freedman's inequality (c.f., Theorem 1 of \cite{beygelzimer2011contextual}) we have that
\begin{align*}
    \sum_{\ell=1}^L \sum_{t=(\ell-1)\tau+1}^{\ell \tau} Q_\ell(x_t) \leq 2 \sum_{\ell=1}^L \tau \E_{X \sim \nu}[ P_\ell(X) | \mc{Z}_{\ell} ] + \log(1/\delta)
\end{align*}

We can now bound the expected sample complexity of this algorithm.
\begin{align*}
    &\sum_{\ell=1}^{L} \tau \E_{X \sim \nu}[ P_\ell(X) | \mc{Z}_{\ell} ]  \\
    &= \sum_{\ell=1}^{L} \left[  \min_{P: \mc{X} \rightarrow [0,1]} \tau \E_{X \sim \nu}[P(X)]\quad\text{ subject to }\quad\max_{z,z' \in \mc{\mc{Z}_\ell}} \frac{\|z-z'\|_{\E_{X\sim\nu}[ \tau P(X) X X^\top]^{-1}}^2  }{\epsilon_\ell^2}\beta_{\delta,\ell} \leq  1 \right] .
\end{align*}
Using Lemma~\ref{lmm:signal_to_noise_bound}, we have
\begin{align*}
    \max_{z,z' \in \mc{\mc{Z}_\ell}} \frac{\|z-z'\|_{\E_{X\sim\nu}[ \tau P(X) X X^\top]^{-1}}^2  }{\epsilon_\ell^2}\beta_{\delta,\ell} 
    &\leq \beta_{\delta,L} \max_{z,z' \in \mc{\mc{Z}_\ell}} \frac{\|z-z'\|_{\E_{X\sim\nu}[ \tau P(X) X X^\top]^{-1}}^2  }{\epsilon_\ell^2} \\
    &\leq 64\beta_{\delta,L}\max_{z \in \mc{Z} \setminus z_*} \frac{\|z-z_*\|_{\E_{X\sim\nu}[ \tau P(X) X X^\top]^{-1}}^2  }{\langle z - z_*,\theta_* \rangle^2 } \\
    &=: \max_{z \in \mc{Z} \setminus z_*} \frac{\|z-z_*\|_{\E_{X\sim\nu}[ \tau P(X) X X^\top]^{-1}}^2  }{\langle z - z_*,\theta_* \rangle^2 } \beta_\delta
\end{align*}
Note that the last line also describes a condition for which a $P_\ell$ is feasible.  
Indeed, at round $\ell$, a sufficient condition for a feasible $P_\ell$ (i.e., the RHS $\leq 1$) is if $\tau$ exceeds $\rho(\nu) \beta_\delta$ with $\beta_\delta := 1024(B^2 + \sigma^2) \log(2 L^2 |\mc{Z}|^2/\delta)$ and $\rho(\nu) = \max_{z \in \mc{Z} \setminus z_*} \frac{\|z-z_*\|_{\E_{X\sim\nu}[  X X^\top]^{-1}}^2  }{\langle z - z_*,\theta_* \rangle^2 }$, which holds by assumption in the theorem.

Plugging this constraint back into above we have
\begin{align*}
    &\sum_{\ell=1}^{L} \tau \E_{X \sim \nu}[ P_\ell(X) | \mc{Z}_{\ell} ]  \\
    &\leq \sum_{\ell=1}^{L} \left[  \min_{P: \mc{X} \rightarrow [0,1]} \tau \E_{X \sim \nu}[P(X)]\quad\text{ subject to }\quad\max_{z \in \mc{Z} \setminus z_*} \frac{\|z-z_*\|_{\E_{X\sim\nu}[ \tau P(X) X X^\top]^{-1}}^2  }{\langle z - z_*,\theta_* \rangle^2 } \beta_{\delta} \leq  1 \right] \\
    &\leq L \min_{\lambda \in \triangle_{\mc{X}}} \rho(\lambda) \beta_\delta \quad \text{ subject to }\quad \|\lambda/\nu\|_{\infty} \rho(\lambda) \beta_\delta \leq \tau
\end{align*}
where the last line follows by applying the reparameterization of Proposition~\ref{prop:reparameterization}.

\subsubsection{High-probability Events}
\label{sec:appendix_correctness}

\begin{lemma}\label{lmm:intersection_good_event}
We have $\P( \mc{E} ) \geq 1-\delta$.
\end{lemma}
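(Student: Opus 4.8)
The plan is to union-bound the failure event $\mc{E}^c$ over rounds $\ell$, exploiting the block structure of Algorithm~\ref{algo:best_arm_meta}: the active set $\mc{Z}_\ell$, the design $\widehat{P}_\ell$, and the matrix $\widehat{\Sigma}_{\widehat{P}_\ell}$ are all measurable with respect to the history $\mc{F}_{\ell-1}$ collected strictly before round $\ell$, whereas the $\tau$ observations $(x_s,Q_s(x_s),y_s)$ with $s\in\{(\ell-1)\tau+1,\dots,\ell\tau\}$ that feed \textsc{RIPS} on Line~\ref{line:rips} are drawn fresh and i.i.d. So I would first condition on $\mc{F}_{\ell-1}$, apply the concentration guarantee of the \textsc{RIPS} estimator \cite{camilleri2021highdimensional} to the round-$\ell$ data alone, and then take expectations.

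For a fixed round $\ell$, conditioned on $\mc{F}_{\ell-1}$, \textsc{RIPS} run on the re-weighted observations $\{\widehat{\Sigma}_{\widehat{P}_\ell}^{-1}Q_s(x_s)x_sy_s\}$ with query set $\mc{Z}\times\mc{Z}$ (the $|\mc{Z}|^2$ difference vectors $z-z'$) returns $\widehat{\theta}_\ell$ such that, with conditional probability at least $1-\delta_\ell$,
\[
|\langle z-z',\widehat{\theta}_\ell-\theta_*\rangle|\ \le\ C_0\sqrt{B^2+\sigma^2}\ \|z-z'\|_{(\tau\widehat{\Sigma}_{\widehat{P}_\ell})^{-1}}\ \sqrt{\log\big(|\mc{Z}|^2/\delta_\ell\big)}\qquad\text{simultaneously over all }z,z'\in\mc{Z},
\]
for an absolute constant $C_0$, using that $\widehat{\Sigma}_{\widehat{P}_\ell}=\E_{X\sim\nu}[\widehat{P}_\ell(X)XX^\top]$ is exactly the per-sample design covariance, so the effective covariance of the $\tau$ round-$\ell$ observations is $\tau\widehat{\Sigma}_{\widehat{P}_\ell}$, and that $|\langle x,\theta_*\rangle|\le B$ together with the sub-Gaussian noise of variance proxy $\sigma^2$ controls the deviations. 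Choosing $\delta_\ell:=\delta/(2\ell^2)$ makes $\log(|\mc{Z}|^2/\delta_\ell)=\log(2\ell^2|\mc{Z}|^2/\delta)$, which is precisely the log-term inside $\beta_{\delta,\ell}$.

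It then remains to convert the $\|z-z'\|_{(\tau\widehat{\Sigma}_{\widehat{P}_\ell})^{-1}}$ factor into $\epsilon_\ell$, which is where I would invoke feasibility of $\widehat{P}_\ell=P_\ell$. Under the theorem's standing hypothesis $\tau\ge\rho(\nu)\beta_\delta$, every round reached by the algorithm has a feasible $P_\ell$ (this is exactly the sufficient-feasibility computation carried out in Section~\ref{sec:proof_upperbound}). Feasibility states $\max_{z,z'\in\mc{Z}_\ell}\|z-z'\|_{(\tau\widehat{\Sigma}_{\widehat{P}_\ell})^{-1}}^2\,\beta_{\delta,\ell}\le\epsilon_\ell^2$, i.e. $\|z-z'\|_{(\tau\widehat{\Sigma}_{\widehat{P}_\ell})^{-1}}\le\epsilon_\ell/\sqrt{\beta_{\delta,\ell}}$ for all $z,z'\in\mc{Z}_\ell\subseteq\mc{Z}$. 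Substituting this into the displayed bound and using $\beta_{\delta,\ell}=16(B^2+\sigma^2)\log(2\ell^2|\mc{Z}|^2/\delta)$ (whose numerical constant is chosen at least $C_0^2$), the right-hand side is at most $\epsilon_\ell$, so the round-$\ell$ event $\mc{E}_\ell:=\bigcap_{z,z'\in\mc{Z}_\ell}\{|\langle z-z',\widehat{\theta}_\ell-\theta_*\rangle|\le\epsilon_\ell\}$ holds with conditional probability at least $1-\delta_\ell$. Finally, by the tower rule and a union bound, $\P(\mc{E}^c)=\P\big(\bigcup_\ell\mc{E}_\ell^c\big)\le\sum_{\ell\ge1}\E[\P(\mc{E}_\ell^c\mid\mc{F}_{\ell-1})]\le\sum_{\ell\ge1}\delta/(2\ell^2)=\tfrac{\pi^2}{12}\delta<\delta$, which yields $\P(\mc{E})\ge1-\delta$.

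The main obstacle is quoting the \textsc{RIPS} guarantee in exactly the form needed: a simultaneous-over-$\mc{Z}\times\mc{Z}$, design-matrix-weighted deviation bound with explicit constants, correctly accounting for the re-weighting by $\widehat{\Sigma}_{\widehat{P}_\ell}^{-1}$ and the Bernoulli masking $Q_s$; and making the $\mc{F}_{\ell-1}$-conditioning airtight, i.e. arguing that because $\widehat{P}_\ell$ is computed only from data of rounds $<\ell$, the round-$\ell$ estimator is an honest i.i.d. estimator given that past (so $\mc{Z}_\ell$, though random, is fixed under the conditioning). Once those two points are nailed down, the remainder is bookkeeping that matches the deviation bound against the chosen value of $\beta_{\delta,\ell}$.
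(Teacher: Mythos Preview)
Your proposal is correct and follows essentially the same approach as the paper. The only cosmetic difference is how the randomness of $\mc{Z}_\ell$ is handled: the paper decouples it from the fresh round-$\ell$ data by enumerating over all possible values $\mc{V}\subseteq\mc{Z}$ and factoring $\P(\mc{E}^c_{z,z',\ell}(\mc{V}),\,\mc{Z}_\ell=\mc{V})=\P(\mc{E}^c_{z,z',\ell}(\mc{V}))\P(\mc{Z}_\ell=\mc{V})$, whereas you condition on $\mc{F}_{\ell-1}$ and take expectations; both then invoke the RIPS bound (Proposition~\ref{prop:rips_bound}), use feasibility of $P_\ell$ to collapse the weighted norm into $\epsilon_\ell$, and finish with $\sum_{\ell\ge1}\delta/(2\ell^2)<\delta$.
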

\begin{proof}
For any $\mc{V} \subseteq \mc{Z}$ and $z,z' \in \mc{V}$ define
\begin{align*}
\mc{E}_{z,z',\ell}( \mc{V} ) = \{ |\langle z-z', \widehat{\theta}_\ell(\mc{V}) - \theta_* \rangle|  \leq \epsilon_\ell \}
\end{align*}
where $\widehat{\theta}_\ell( \mc{V} )$ is the estimator that would be constructed by the algorithm at stage $\ell$ with $\mc{Z}_\ell = \mc{V}$.
For fixed $\mc{V} \subset \mc{Z}$ and $\ell \in \mathbb{N}$ we apply Proposition~\ref{prop:rips_bound} so that with probability at least $1-\frac{\delta}{ \ell^2 |\mc{Z}|^2}$ we have that for any $z, z' \in \mc{V}$
\begin{align*}
    |\langle z-z', \widehat{\theta}_\ell(\mc{V}) - \theta_* \rangle| &\leq \|z-z'\|_{\E_{X\sim\nu}[ \tau P_\ell(X) X X^\top]^{-1}} \sqrt{16(B^2 + \sigma^2) \log(2 \ell^2 |\mc{Z}|^2/\delta)}  \\
    &\leq \epsilon_\ell
\end{align*}
Noting that $ \mc{E} := \bigcap_{\ell=1}^\infty \bigcap_{z,z' \in \mc{Z}_\ell} \mc{E}_{z,z',\ell}( \mc{Z}_\ell )  $ we have
\begin{align*}
\P\left( \bigcup_{\ell=1}^\infty \bigcup_{z,z' \in \mc{Z}_\ell} \{ \mc{E}^c_{z,z',\ell}( \mc{Z}_\ell ) \} \right) &\leq \sum_{\ell=1}^\infty \P\left( \bigcup_{z,z' \in \mc{Z}_\ell} \{ \mc{E}^c_{z,z',\ell}( \mc{Z}_\ell) \} \right) \\
&= \sum_{\ell=1}^\infty \sum_{\mc{V} \subseteq \mc{Z}} \P\left( \bigcup_{z,z' \in \mc{V}} \{ \mc{E}^c_{z,z',\ell}( \mc{V} ) \} , {\mc{Z}}_\ell = \mc{V}\right) \\
&= \sum_{\ell=1}^\infty \sum_{\mc{V} \subseteq \mc{Z}} \P\left(\bigcup_{z,z' \in \mc{V}} \{ \mc{E}^c_{z,z',\ell}( \mc{V} ) \} \right) \P( {\mc{Z}}_\ell = \mc{V}) \\
&\leq \sum_{\ell=1}^\infty  \sum_{\mc{V} \subseteq \mc{Z}} \tfrac{ \delta}{\ell^2 |\mc{Z}|^2} \binom{|\mc{V}|}{2} \P( {\mc{Z}}_\ell = \mc{V} ) \\
&\leq \sum_{\ell=1}^\infty  \sum_{\mc{V} \subseteq \mc{Z}} \tfrac{ \delta}{2 \ell^2}  \P( \mc{Z}_\ell= \mc{V})  \leq \delta
\end{align*}
\end{proof}

\subsection{Technical Lemmas}

The following definition characterizes the RIPS estimator we used in Algorithm \ref{algo:best_arm_meta}.

\begin{definition}\label{def:robust_estimator}
 Let $X_1,\ldots, X_n$ be i.i.d. random variables with mean $\bar{x}$ and variance $\nu^2$. Let $\delta \in (0,1)$. We say that $\widehat{\mu}(X_1,\ldots,X_n)$ is a \emph{$\delta$-robust estimator} if there exist universal constants $c_1,\HighProbConst > 0$ such that if $n \geq c_1 \log(1/\delta)$, then with probability at least $1-\delta$
\begin{align*}
|\widehat{\mu}( \{X_t\}_{t=1}^n )- \bar{x}| \leq \HighProbConst \sqrt{\frac{\nu^2 \log(1/\delta)}{n}}.
\end{align*}
\end{definition}
\noindent Examples of $\delta$-robust estimators include the median-of-means estimator and Catoni's estimator \cite{lugosi2019mean}. 
This work employs the use of the Catoni estimator which satisfies $|\widehat{\mu}( \{X_t\}_{t=1}^n )- \bar{x}| \leq \sqrt{ \frac{2 \nu^2 \log(1/\delta)}{n - 2 \log(1/\delta)}}$ for $n > 2 \log(1/\delta)$ which leads to an optimal leading constant as $n \rightarrow \infty$. See \cite{camilleri2021highdimensional} or \cite{lugosi2019mean} for more details.
\begin{proposition}\label{prop:rips_bound}

Let $x_1,\dots,x_n$ be drawn IID from a distribution $\nu$. 
Assume that $|\langle \theta,x_s \rangle| \leq B$ and $\E[ |\langle \theta,x_s \rangle-y_s|^2 ] \leq \sigma^2$. Let $P : \mc{X} \rightarrow [0,1]$ be arbitrary. 
Let $Q(x_s) \sim \text{Bernoulli}(P(x_s))$ independently for all $s \in [n]$.
For a given finite set $\mc{V} \subset \R^d$ define for any $v \in \mc{V}$
$$w_v = \mathrm{Catoni}( \{ \langle v, \E_{X \sim \nu}[P(X) X X^\top]^{-1} Q(x_s) x_s y_s \rangle \}_{s=1}^n ).$$
If $\widehat{\theta} = \arg\min_\theta \max_v \frac{|w_v - \langle \theta, v \rangle|}{\| v \|_{\E_{X \sim \nu}[P(X) X X^\top]^{-1}}}$ and $n\geq 4\log(2|\mc{V}|/\delta)$, then with probability at least $1-\delta$, it holds that
\begin{align*}
    |\langle v, \widehat{\theta}-\theta \rangle| \leq \| v \|_{\E_{X \sim \nu}[n P(X) X X^\top]^{-1}} \sqrt{16 (B^2+\sigma^2)\log(2|\mc{V}|/\delta)}
\end{align*}
\end{proposition}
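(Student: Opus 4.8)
Abbreviate $\Sigma := \E_{X\sim\nu}[P(X)XX^\top]$ (which I take to be invertible, since its inverse appears in the statement), and for each $v\in\mc{V}$ and $s\in[n]$ set
\[
u_{v,s}\ :=\ \langle v,\ \Sigma^{-1}Q(x_s)x_sy_s\rangle\ =\ Q(x_s)\,y_s\,\langle \Sigma^{-1}v,\ x_s\rangle .
\]
Because the triples $(x_s,Q(x_s),y_s)$ are i.i.d.\ over $s$, for each fixed $v$ the scalars $u_{v,1},\dots,u_{v,n}$ are i.i.d., so $w_v=\mathrm{Catoni}(\{u_{v,s}\}_{s=1}^n)$ is precisely the robust mean estimator of Definition~\ref{def:robust_estimator} applied to this sample. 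The plan is: (a) show $\E[u_{v,s}]=\langle v,\theta\rangle$ and bound $\var(u_{v,s})\le (B^2+\sigma^2)\|v\|_{\Sigma^{-1}}^2$; (b) invoke the Catoni tail bound plus a union bound over $\mc{V}$ to control $|w_v-\langle v,\theta\rangle|$ for all $v$ simultaneously; (c) convert this into the stated bound on $\langle v,\widehat\theta-\theta\rangle$ using that $\widehat\theta$ minimizes the $\|\cdot\|_{\Sigma^{-1}}$-normalized residual.

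For (a), conditioning on $x_s$ and using $\E[Q(x_s)\mid x_s]=P(x_s)$ and $\E[y_s\mid x_s]=\langle x_s,\theta\rangle$ gives $\E[Q(x_s)x_sy_s]=\E[P(X)XX^\top]\theta=\Sigma\theta$, so $\E[u_{v,s}]=\langle v,\Sigma^{-1}\Sigma\theta\rangle=\langle v,\theta\rangle$: the inverse-design weighting makes the estimator exactly unbiased. For the variance, $u_{v,s}^2=Q(x_s)y_s^2\langle\Sigma^{-1}v,x_s\rangle^2$ (using $Q(x_s)^2=Q(x_s)$), so conditioning on $x_s$ and using $|\langle x_s,\theta\rangle|\le B$ together with $\E[y_s^2\mid x_s]\le B^2+\sigma^2$ (which holds when the noise is conditionally mean-zero with conditional variance at most $\sigma^2$, e.g.\ conditionally $\sigma$-sub-Gaussian),
\[
\E[u_{v,s}^2\mid x_s]\ =\ P(x_s)\,\langle\Sigma^{-1}v,x_s\rangle^2\,\E[y_s^2\mid x_s]\ \le\ (B^2+\sigma^2)\,P(x_s)\,\langle\Sigma^{-1}v,x_s\rangle^2 .
\]
Taking expectations and using $\E[P(X)\langle\Sigma^{-1}v,X\rangle^2]=(\Sigma^{-1}v)^\top\Sigma(\Sigma^{-1}v)=\|v\|_{\Sigma^{-1}}^2$ yields $\var(u_{v,s})\le\E[u_{v,s}^2]\le(B^2+\sigma^2)\|v\|_{\Sigma^{-1}}^2$.

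For (b), apply the explicit Catoni bound recalled after Definition~\ref{def:robust_estimator} with failure probability $\delta/(2|\mc{V}|)$ to each $w_v$; since $n\ge 4\log(2|\mc{V}|/\delta)$ forces $n-2\log(2|\mc{V}|/\delta)\ge n/2$, with probability at least $1-\delta/(2|\mc{V}|)$ we get $|w_v-\langle v,\theta\rangle|\le\|v\|_{\Sigma^{-1}}\rho$ with $\rho:=\sqrt{4(B^2+\sigma^2)\log(2|\mc{V}|/\delta)/n}$, and a union bound makes this hold simultaneously for all $v\in\mc{V}$ with probability at least $1-\delta$. On that event $\theta$ is feasible for the minimization defining $\widehat\theta$ with objective value $\le\rho$, hence $\widehat\theta$ also satisfies $|w_v-\langle\widehat\theta,v\rangle|\le\rho\|v\|_{\Sigma^{-1}}$ for all $v$. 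For (c), the triangle inequality then gives, for each $v\in\mc{V}$,
\[
|\langle v,\widehat\theta-\theta\rangle|\ \le\ |w_v-\langle\widehat\theta,v\rangle|+|w_v-\langle v,\theta\rangle|\ \le\ 2\rho\|v\|_{\Sigma^{-1}}\ =\ \|v\|_{\Sigma^{-1}}\sqrt{\tfrac{16(B^2+\sigma^2)\log(2|\mc{V}|/\delta)}{n}},
\]
and since $\|v\|_{\Sigma^{-1}}/\sqrt n=\|v\|_{(n\Sigma)^{-1}}=\|v\|_{\E_{X\sim\nu}[nP(X)XX^\top]^{-1}}$ this is exactly the claim.

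The step I expect to be the real substance, rather than bookkeeping, is recognizing why a robust estimator is needed at all and what its variance proxy should be: the weighted summand $u_{v,s}=Q(x_s)y_s\langle\Sigma^{-1}v,x_s\rangle$ is potentially heavy-tailed, since $\langle\Sigma^{-1}v,x_s\rangle$ can be large precisely in directions where $\Sigma$ is nearly singular, so an empirical average combined with Bernstein's inequality would incur an extra additive $O(\log(1/\delta)/n)$ term; the Catoni estimator removes this at the price of needing only a variance bound, and the computation in (a) is what certifies that the right variance proxy is $\|v\|_{\Sigma^{-1}}^2$ because the $\Sigma^{-1}$-weighting cancels the design matrix in expectation. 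The only other thing to watch is aligning the union-bound and sample-size constants ($\delta/(2|\mc{V}|)$ per coordinate, $n\ge 4\log(2|\mc{V}|/\delta)$ so that $n-2\log(2|\mc{V}|/\delta)\ge n/2$) so that the final absolute constant is exactly $16$.
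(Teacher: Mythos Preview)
Your proposal is correct and follows essentially the same approach as the paper: compute the mean and bound the variance of $u_{v,s}$ by $(B^2+\sigma^2)\|v\|_{\Sigma^{-1}}^2$ using $Q(x_s)^2=Q(x_s)$ and the cancellation $\Sigma^{-1}\E[P(X)XX^\top]\Sigma^{-1}=\Sigma^{-1}$, apply the Catoni bound with a union bound over $\mc{V}$, and then use the argmin definition of $\widehat\theta$ together with the triangle inequality to get the factor $2$. The only cosmetic difference is that the paper performs the triangle-inequality reduction to $2\max_v |w_v-\langle\theta,v\rangle|/\|v\|_{\Sigma^{-1}}$ first and the variance/Catoni step second, whereas you do them in the opposite order.
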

\begin{proof}
Inspired by \cite{camilleri2021highdimensional}, we note that 
\begin{align*}
     \max_{v \in \mc{V}} \frac{| \langle \widehat{\theta}, v \rangle - \langle \theta, v \rangle|}{\| v \|_{\E_{X \sim \nu}[n P(X) X X^\top]^{-1}}} &= \max_{v \in \mc{V}} \frac{| \langle \widehat{\theta}, v \rangle - w_v + w_v - \langle \theta, v \rangle|}{\| v \|_{\E_{X \sim \nu}[n P(X) X X^\top]^{-1}}} \\
     &\leq \max_{v \in \mc{V}} \frac{| \langle \widehat{\theta}, v \rangle - w_v|}{\| v \|_{\E_{X \sim \nu}[n P(X) X X^\top]^{-1}}} + \max_{v \in \mc{V}} \frac{| w_v - \langle \theta, v \rangle|}{\| v \|_{\E_{X \sim \nu}[n P(X) X X^\top]^{-1}}} \\
     &= \min_{\theta} \max_{v \in \mc{V}} \frac{ | \langle \theta, v\rangle - w_v| }{\| v \|_{\E_{X \sim \nu}[n P(X) X X^\top]^{-1}}} + \max_{v \in \mc{V}} \frac{| w_v - \langle \theta, v \rangle|}{\| v \|_{\E_{X \sim \nu}[n P(X) X X^\top]^{-1}}} \\
     &\leq 2 \max_{v \in \mc{V}} \frac{ | \langle \theta, v \rangle - w_v| }{\| v \|_{\E_{X \sim \nu}[n P(X) X X^\top]^{-1}}}
\end{align*}
So it suffices to show that each $| \langle \theta, v \rangle - w_v|$ is small. We begin by fixing some $v\in\mc{V}$ and bounding the variance of $v^\top \E_{X \sim \nu}[P(X) X X^\top]^{-1} Q(x_s)x_s y_s$ for any $s \leq n$ which is necessary to use the robust estimator. For readability purposes, we shorten $\E_{x_s \sim \nu, Q(x_s)\sim P(x_s)}$ as $\E_{x_s, Q}$ in the rest of this proof. Note that 
\begin{align*}
&\mathbb{V}\text{ar}_{x_s \sim \nu, Q(x_s)\sim P(x_s)}( v^\top \E_{X \sim \nu}[P(X) X X^\top]^{-1} Q(x_s)x_s y_s ) \\
= &\E_{x_s, Q}[ (v^\top \E_{X \sim \nu}[P(X) X X^\top]^{-1} Q(x_s)x_s y_s)^2 ]\\
&\qquad-\E_{x_s, Q}[ v^\top \E_{X \sim \nu}[P(X) X X^\top]^{-1} Q(x_s)x_s y_s ]^2 
\end{align*}
which means we can drop the second term to bound the variance by 
\begin{align*}
    &\E_{x_s, Q}[ \left( (v^\top \E_{X \sim \nu}[P(X) X X^\top]^{-1} Q(x_s)x_s y_s \right)^2 ]\\
    &= \E_{x_s, Q}[ \left( v^\top \E_{X \sim \nu}[P(X) X X^\top]^{-1} Q(x_s)x_s (x_s^\top \theta + \xi_s ) \right)^2 ] \\
    &= \E_{x_s, Q}[ \left( v^\top \E_{X \sim \nu}[P(X) X X^\top]^{-1} Q(x_s)x_s (  x_s^\top \theta ) \right)^2 ] \\
    &\qquad+ \E_{x_s, Q}[ \left( v^\top \E_{X \sim \nu}[P(X) X X^\top]^{-1} Q(x_s)x_s \right)^2 \xi_t^2 ] \\
    &\leq B^2 \E_{x_s, Q}[ \left( v^\top \E_{X \sim \nu}[P(X) X X^\top]^{-1} Q(x_s)x_s \right)^2 ] \\
    &\qquad+ \sigma^2 \E_{x_s, Q}[ \left( v^\top \E_{X \sim \nu}[P(X) X X^\top]^{-1} Q(x_s)x_s \right)^2 ] \\
    &= \E_{x_s \sim \nu}\left[(B^2 + \sigma^2) \E_{Q(x_s)\sim P(x_s)}[ v^\top \E_{X \sim \nu}[P(X) X X^\top]^{-1} Q(x_s)x_s x_s^\top Q(x_s)\E_{X \sim \nu}[P(X) X X^\top]^{-1} v]\right] \\
    &\overset{\text{(i)}}{=} \E_{x_s \sim \nu}\left[(B^2 + \sigma^2) \E_{Q(x_s)\sim P(x_s)}[ v^\top \E_{X \sim \nu}[P(X) X X^\top]^{-1} Q(x_s)x_s x_s^\top \E_{X \sim \nu}[P(X) X X^\top]^{-1} v]\right] \\
    &\leq \E_{x_s \sim \nu}\left[(B^2 + \sigma^2) v^\top \E_{X \sim \nu}[P(X) X X^\top]^{-1} P(x_s) x_s x_s^\top \E_{X \sim \nu}[P(X) X X^\top]^{-1} v]\right], 
\end{align*}
where we used that $Q(x_s)^2 = Q(x_s)$ in equality (i) above. Thus, we have
\begin{align*}
    &\mathbb{V}\text{ar}( v^\top \E_{X \sim \nu}[P(X) X X^\top]^{-1} Q(x_s)x_s y_s )\\
    \leq& (B^2 + \sigma^2) v^\top (\E_{X \sim \nu}[P(X) X X^\top]^{-1} \E_{x_s \sim \nu} [P(x_s) x_s x_s^\top] (\E_{X \sim \nu}[P(X) X X^\top]^{-1}) v \\ 
    =& (B^2 + \sigma^2) \|v\|^2_{(\E_{X \sim \nu}[P(X) X X^\top]^{-1}}
\end{align*}
By using the property of Catoni estimator stated in Definition~\ref{def:robust_estimator}, we have $c_0=\sqrt{2}$ and 
\begin{align*}
&\abs{\inner{\theta_*, v}-w_v} \\
=&|\mathrm{Catoni}( \{ \langle v, \E_{X \sim \nu}[P(X) X X^\top]^{-1} Q(x_s) x_s y_s \rangle \}_{s=1}^n ) - \E[ \langle v, \E_{X \sim \nu}[P(X) X X^\top]^{-1} Q(x_s) x_s y_s \rangle ]|\\
\leq & \sqrt{2} \sqrt{\Sp{\mathbb{V}\text{ar}(\langle v, \E_{X \sim \nu}[P(X) X X^\top]^{-1} Q(x_s) x_s y_s \rangle)}\frac{\log(\tfrac{2}{\delta})}{n/2}}\tag{with probability at least $1-\delta$ if $n\geq 4\log(2/\delta)$}\\
\leq&\|v\|_{(\E_{X \sim \nu}[P(X) X X^\top]^{-1}}\sqrt{\frac{4 (B^2+\sigma^2)\log(\tfrac{2}{\delta})}{n}}\\
=&\| v \|_{\E_{X \sim \nu}[n P(X) X X^\top]^{-1}}\sqrt{4 (B^2+\sigma^2)\log(2/\delta)}.
\end{align*}
Finally, the proof is complete by taking union bounding over all $v\in \mc{V}$.
\end{proof}

\begin{lemma}\label{lmm:signal_to_noise_bound}
Holds
\begin{align*}
    \max_{z,z' \in \mc{\mc{Z}_\ell}} &\frac{\|z-z'\|_{\E_{X\sim\nu}[ \tau P(X) X X^\top]^{-1}}^2  }{\epsilon_\ell^2}\leq 64\max_{z \in \mc{Z} \setminus z_*} \frac{\|z-z_*\|_{\E_{X\sim\nu}[ \tau P(X) X X^\top]^{-1}}^2  }{\langle z - z_*,\theta_* \rangle^2 }
\end{align*}
\end{lemma}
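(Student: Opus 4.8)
The plan is to reduce the pairwise distances over the active set $\mc{Z}_\ell$ to distances from $z_*$ via the triangle inequality, and then to trade the denominator $\epsilon_\ell^2$ for the squared gap $\langle z_*-z,\theta_*\rangle^2$ using the active-set invariant already in hand: the correctness lemma guarantees that on the good event, at every round in which the design is feasible, $z_*\in\mc{Z}_\ell$ and $\max_{z\in\mc{Z}_\ell}\langle z_*-z,\theta_*\rangle\le 4\epsilon_\ell$. Write $A:=\E_{X\sim\nu}[\tau P(X)XX^\top]$ so the relevant norm is $\|\cdot\|_{A^{-1}}$, and recall $\epsilon_\ell=2^{-\ell}$. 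If $|\mc{Z}_\ell|\le 1$ the left-hand side is an empty maximum, hence zero, and there is nothing to prove; so assume $|\mc{Z}_\ell|\ge 2$.

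First I would fix arbitrary $z,z'\in\mc{Z}_\ell$ and apply the triangle inequality through $z_*$ followed by $(a+b)^2\le 2a^2+2b^2$:
\[
\|z-z'\|_{A^{-1}}^2\;\le\;\bigl(\|z-z_*\|_{A^{-1}}+\|z'-z_*\|_{A^{-1}}\bigr)^2\;\le\;2\|z-z_*\|_{A^{-1}}^2+2\|z'-z_*\|_{A^{-1}}^2 .
\]
This step is deterministic (it holds for any norm and does not even need $z_*\in\mc{Z}_\ell$). It remains to bound each term $\|w-z_*\|_{A^{-1}}^2/\epsilon_\ell^2$ for $w\in\{z,z'\}\subseteq\mc{Z}_\ell$. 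If $w=z_*$ the term is zero. Otherwise $w\in\mc{Z}_\ell\setminus\{z_*\}\subseteq\mc{Z}\setminus\{z_*\}$, so $\langle z_*-w,\theta_*\rangle>0$ by uniqueness of the best arm ($\Delta>0$), and the invariant $\langle z_*-w,\theta_*\rangle\le 4\epsilon_\ell$ rearranges to $\epsilon_\ell^{-2}\le 16\,\langle z_*-w,\theta_*\rangle^{-2}$. In both cases,
\[
\frac{\|w-z_*\|_{A^{-1}}^2}{\epsilon_\ell^2}\;\le\;16\,\frac{\|w-z_*\|_{A^{-1}}^2}{\langle z_*-w,\theta_*\rangle^2}\;\le\;16\max_{z\in\mc{Z}\setminus\{z_*\}}\frac{\|z-z_*\|_{A^{-1}}^2}{\langle z-z_*,\theta_*\rangle^2}=:16R .
\]
Dividing the previous display by $\epsilon_\ell^2$ and inserting this bound for both $w=z$ and $w=z'$ gives $\|z-z'\|_{A^{-1}}^2/\epsilon_\ell^2\le 2(16R)+2(16R)=64R$; maximizing the left side over $z,z'\in\mc{Z}_\ell$ is exactly the claim.

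The argument is short, and the only point worth flagging is that it is not unconditional: it uses the active-set invariant $\max_{z\in\mc{Z}_\ell}\langle z_*-z,\theta_*\rangle\le 4\epsilon_\ell$, which holds only on the good event $\mc{E}$ of the correctness lemma. This is precisely the regime in which the lemma is invoked inside the proof of Theorem~\ref{thm:upper_bound_tau_input}, so nothing is lost. There is no genuine obstacle here; the only thing to be careful about is the bookkeeping of constants, namely the factor $2$ from the triangle inequality and the factor $16=4^2$ from the invariant, whose product is the stated $64$.
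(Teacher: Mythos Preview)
Your proof is correct and follows essentially the same route as the paper: the paper introduces the deterministic superset $\mc{S}_\ell=\{z\in\mc{Z}:\langle z_*-z,\theta_*\rangle\le 4\epsilon_\ell\}\supseteq\mc{Z}_\ell$ and then applies the same two ingredients---the triangle-inequality bound $\max_{z,z'}\|z-z'\|\le 2\max_z\|z-z_*\|$ and the replacement of $(4\epsilon_\ell)^2$ by $\langle z_*-z,\theta_*\rangle^2$ via the containment---yielding the identical constant $4\cdot 16=64$. Your remark that the inequality is conditional on the good event is exactly right and matches how the paper uses the lemma.
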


\begin{proof}
Let $\mc{S}_\ell = \{ z \in \mc{Z} : \langle z_* -z ,\theta_* \rangle \leq 4 \epsilon_\ell \}$. We have 
\begin{align*}
    \max_{z,z' \in \mc{\mc{Z}_\ell}} \frac{\|z-z'\|_{\E_{X\sim\nu}[ \tau P(X) X X^\top]^{-1}}^2  }{\epsilon_\ell^2}
    &\leq \max_{z,z' \in \mc{S_\ell}} \frac{\|z-z'\|_{\E_{X\sim\nu}[ \tau P(X) X X^\top]^{-1}}^2  }{\epsilon_\ell^2} \\
    &= 16\max_{z,z' \in \mc{S_\ell}} \frac{\|z-z'\|_{\E_{X\sim\nu}[ \tau P(X) X X^\top]^{-1}}^2  }{(4\epsilon_\ell)^2} \\
    &\leq 64\max_{z \in \mc{S_\ell}} \frac{\|z-z_*\|_{\E_{X\sim\nu}[ \tau P(X) X X^\top]^{-1}}^2  }{(4\epsilon_\ell)^2}\\
    &= 64\max_{z \in \mc{S_\ell} \setminus z_*} \frac{\|z-z_*\|_{\E_{X\sim\nu}[ \tau P(X) X X^\top]^{-1}}^2  }{\max\{ (4\epsilon_\ell)^2, \langle z - z_*,\theta_* \rangle^2 \}} \\
    &\leq 64\max_{z \in \mc{Z} \setminus z_*} \frac{\|z-z_*\|_{\E_{X\sim\nu}[ \tau P(X) X X^\top]^{-1}}^2  }{\langle z - z_*,\theta_* \rangle^2 } .
\end{align*}
\end{proof}

\subsubsection{Reparameterization}

\begin{proposition}\label{prop:reparameterization}
Fix $\nu \in \triangle_{\mc{X}}$  and any $\lambda \in \triangle_{\mc{X}}$. Define $\| \lambda/\nu \|_\infty = \sup_{x \in \mc{X}} \lambda(x)/\nu(x)$ and 
$\rho(\lambda) = \max_{z \neq z_*} \frac{\|z-z_*\|_{\E_{X\sim\lambda}[X X^\top]^{-1}}^2}{\langle z_* - z, \theta_* \rangle^2 }$. For any $t,\beta \in \R_+$ the following optimization problems achieve the same value
\begin{itemize}
    \item $\displaystyle\min_{P: \mc{X} \rightarrow [0,1]} t\, \E_{X \sim \nu}[ P(X) ]$ subject to $\max_{z \neq z_*} \frac{\|z-z_*\|_{\E_{X\sim\nu}[ P(X) X X^\top]^{-1}}^2}{\langle z_* - z, \theta_* \rangle^2 } \beta \leq t$
    \item $\displaystyle\min_{\lambda \in \triangle_{\mc{X}}} \rho(\lambda) \beta \quad \text{ subject to }\quad \|\lambda/\nu\|_{\infty} \rho(\lambda) \beta \leq t$
\end{itemize}
\end{proposition}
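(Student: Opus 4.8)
The plan is to pass between the two programs via the explicit change of variables $\lambda(x)\propto P(x)\nu(x)$; write $\mathrm{OPT}_1$ and $\mathrm{OPT}_2$ for the two optimal values. To any $P:\mc{X}\to[0,1]$ with $S_P:=\E_{X\sim\nu}[P(X)]>0$ I would associate the probability measure $\lambda_P:=P\nu/S_P$ (automatically supported inside $\mathrm{supp}(\nu)$), and conversely to any $\lambda\in\triangle_{\mc{X}}$ with $\mathrm{supp}(\lambda)\subseteq\mathrm{supp}(\nu)$ and any scalar $c\in(0,\,\Norm{\lambda/\nu}_\infty^{-1}]$ I would associate $P^{(c)}_\lambda(x):=c\,\lambda(x)/\nu(x)$, which then takes values in $[0,1]$ and satisfies $\E_{X\sim\nu}[P^{(c)}_\lambda(X)]=c$. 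The one computation that makes everything work is
\[
    \E_{X\sim\nu}[P(X)XX^\top]=S_P\,\E_{X\sim\lambda_P}[XX^\top],
\]
so $\|v\|_{\E_{X\sim\nu}[P(X)XX^\top]^{-1}}^2=S_P^{-1}\|v\|_{\E_{X\sim\lambda_P}[XX^\top]^{-1}}^2$ for every $v$; consequently the first program's constraint is literally $\rho(\lambda_P)\,\beta\le S_P\,t$ and its objective value at $P$ is $S_P\,t$.

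First I would prove $\mathrm{OPT}_2\le\mathrm{OPT}_1$. Take any feasible $P$; necessarily $S_P>0$ and $\rho(\lambda_P)<\infty$, since otherwise the design matrix is singular and the constraint is vacuously violated (assuming $\mc{Z}\setminus\{z_*\}\neq\emptyset$, else the statement is trivial). The rewritten constraint gives $\rho(\lambda_P)\,\beta\le S_P\,t$, while $\Norm{\lambda_P/\nu}_\infty=\sup_x P(x)/S_P\le 1/S_P$; multiplying these, $\Norm{\lambda_P/\nu}_\infty\,\rho(\lambda_P)\,\beta\le t$, so $\lambda_P$ is feasible for the second program with value $\rho(\lambda_P)\,\beta\le S_P\,t$, i.e.\ at most the objective value of $P$. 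Taking infima over feasible $P$ gives the inequality. For the reverse direction $\mathrm{OPT}_1\le\mathrm{OPT}_2$, take a feasible $\lambda$ (feasibility forces $\mathrm{supp}(\lambda)\subseteq\mathrm{supp}(\nu)$ and $\rho(\lambda)<\infty$), assume $t>0$, set $c:=\rho(\lambda)\,\beta/t$, and let $P:=P^{(c)}_\lambda$. The key observation is that the second program's constraint $\Norm{\lambda/\nu}_\infty\,\rho(\lambda)\,\beta\le t$ is \emph{exactly} the inequality $c\,\Norm{\lambda/\nu}_\infty\le 1$ needed for $P$ to be $[0,1]$-valued; then $\E_{X\sim\nu}[P(X)]=c$, the first program's constraint $\rho(\lambda)\,\beta\le ct$ holds with equality, and its objective equals $tc=\rho(\lambda)\,\beta$, the second program's value at $\lambda$. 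Infimizing over feasible $\lambda$ closes the loop, giving $\mathrm{OPT}_1=\mathrm{OPT}_2$.

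Since the core argument is a one-line substitution, the only real work — and the part I would be most careful about — is matching the degenerate cases on the two sides so that neither infimum is disturbed: (i) when $\mathrm{supp}(\lambda)\not\subseteq\mathrm{supp}(\nu)$ so that $\Norm{\lambda/\nu}_\infty=\infty$, or when $\E_{X\sim\nu}[P(X)XX^\top]$ is singular, in which cases the relevant feasible set is empty (again using $\mc{Z}\setminus\{z_*\}\neq\emptyset$ and uniqueness of $z_*$, so that all gaps $\langle z_*-z,\theta_*\rangle$ are strictly positive and $\rho$ is real-valued); (ii) the boundaries $t=0$ and $\beta=0$, where both optimal values collapse to $0$ via degenerate minimizers; and (iii) keeping the $\sup$/$\max$ conventions in $\Norm{\lambda/\nu}_\infty$ and in $\rho(\cdot)$ consistent through the substitution. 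None of these is deep, but they must be written out for the ``same value'' claim to be literally correct; once they are dispatched, the displayed identity together with the fact that ``$c=\rho(\lambda)\,\beta/t$ is admissible iff $\lambda$ is feasible'' yields the matching pair of inequalities immediately.
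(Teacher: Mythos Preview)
Your proposal is correct and follows essentially the same route as the paper: both arguments rest on the one-line identity $\E_{X\sim\nu}[P(X)XX^\top]=S_P\,\E_{X\sim\lambda_P}[XX^\top]$ under the reparametrization $\lambda\propto P\nu$, with the scalar $c$ (the paper calls it $\kappa$) carrying the normalization. The paper presents it as a chain of equivalent optimization problems over $(\kappa,\lambda)$ and then eliminates $\kappa$ by observing that the optimal choice is $\kappa=\rho(\lambda)\beta/t$, whereas you split the argument into two explicit inequalities $\mathrm{OPT}_2\le\mathrm{OPT}_1$ and $\mathrm{OPT}_1\le\mathrm{OPT}_2$ and pick $c=\rho(\lambda)\beta/t$ directly; these are the same move, and your version is arguably cleaner about why that particular $c$ is admissible precisely when $\lambda$ is feasible.
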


Let us first prove a simple lemma.
\begin{lemma}
Let $\mc{P}$ denote the set of all functions $P: \mc{X} \rightarrow [0,1]$.
And for any $\nu \in \triangle_{\mc{X}}$ with support $\mc{X}$ let $\mc{P}' = \{ \kappa \lambda_x/\nu_x  : \lambda \in \triangle_{\mc{X}}, \kappa \geq 0 : \kappa \lambda_x /\nu_x \in [0,1] \}$.
Then $\mc{P} = \mc{P}'$.
\end{lemma}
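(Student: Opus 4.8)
The plan is to establish the set equality $\mc{P} = \mc{P}'$ by proving the two inclusions separately. First I would dispatch $\mc{P}' \subseteq \mc{P}$, which should be essentially immediate from the definition: every member of $\mc{P}'$ is a map $x \mapsto \kappa \lambda_x/\nu_x$ that is explicitly constrained to satisfy $\kappa \lambda_x/\nu_x \in [0,1]$ for all $x$, so it is a function from $\mc{X}$ to $[0,1]$ and hence lies in $\mc{P}$.

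The substance is the reverse inclusion $\mc{P} \subseteq \mc{P}'$, which I would prove by an explicit change of variables between a function $P$ and a pair $(\kappa,\lambda)$. Given $P \in \mc{P}$, set $\kappa := \E_{X \sim \nu}[P(X)] = \sum_{x \in \mc{X}} \nu_x P(x) \geq 0$, and when $\kappa > 0$ define $\lambda_x := \nu_x P(x)/\kappa$ for every $x \in \mc{X}$. Then I would verify that $\lambda \in \triangle_{\mc{X}}$ (nonnegativity is clear, and $\sum_x \lambda_x = \kappa^{-1}\sum_x \nu_x P(x) = 1$) and that $\kappa \lambda_x/\nu_x = P(x) \in [0,1]$, which exhibits $P$ as an element of $\mc{P}'$.

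The only point requiring care is the degenerate case $\kappa = 0$, and this is exactly where the hypothesis that $\nu$ has support $\mc{X}$ enters: since $\nu_x > 0$ for all $x$, the equation $\sum_x \nu_x P(x) = 0$ together with $P \geq 0$ forces $P \equiv 0$, and then taking $\kappa = 0$ with any $\lambda \in \triangle_{\mc{X}}$ gives $\kappa \lambda_x/\nu_x = 0 = P(x)$, so $P \in \mc{P}'$ again. The full-support assumption is also what makes the ratios $\lambda_x/\nu_x$ well defined throughout. I do not anticipate any real obstacle here: the argument is a one-line reparametrization, and the only bookkeeping is isolating the $P \equiv 0$ case.
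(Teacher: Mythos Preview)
Your proof is correct and follows essentially the same approach as the paper: both set $\kappa = \sum_x \nu_x P(x)$ (the paper writes this as $\|P \circ \nu\|_1$) and $\lambda_x = \nu_x P(x)/\kappa$ to exhibit $P$ in $\mc{P}'$, and both note the reverse inclusion is immediate. In fact you are slightly more careful than the paper, which does not explicitly address the degenerate case $\kappa = 0$.
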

\begin{proof}
Fix any $P \in \mc{P}$. If $\lambda_x = P_x \nu_x / \|P \circ \nu\|_1$ and $\kappa = \|P \circ \nu\|_1$ then $\kappa \lambda / \nu \in \mc{P}'$ and is equal to $P$.
This implies $\mc{P} \subseteq \mc{P}'$.

For the other direction, fix any $\lambda \in \triangle_{\mc{X}}$ and $\kappa \geq 0$ such that $\kappa \lambda_x / \nu_x \in [0,1]$ for all $x$. If $P =  \kappa \lambda / \nu$ then $P \in \mc{P}$ which implies $\mc{P}' \subseteq \mc{P}$ and concludes the proof.
\end{proof}

\begin{proof}[Proof of Proposition~\ref{prop:reparameterization}]
Using the above lemma we have that
\begin{align*}
\min_{P: \mc{X} \rightarrow [0,1]} t\, \E_{X \sim \nu}[ P(X) ]\quad\text{ subject to }\quad\max_{z \neq z_*} \frac{\|z-z_*\|_{\E_{X\sim\nu}[ P(X) X X^\top]^{-1}}^2}{\langle z_* - z, \theta_* \rangle^2 } \beta \leq t
\end{align*}
is equivalent to
\begin{align*}
    \min_{\kappa \geq 0, \lambda \in \triangle_{\mc{X}}} t\, \E_{X \sim \nu}[ \kappa \lambda(X) / \nu(X) ]\quad\text{ subject to }\quad&\max_{z \neq z_*} \frac{\|z-z_*\|_{\E_{X\sim\nu}[ \kappa \lambda(X)/\nu(X) X X^\top]^{-1}}^2}{\langle z_* - z, \theta_* \rangle^2 } \beta \leq t\\
    &\kappa \lambda(x) /\nu(x) \leq 1 \quad \forall x \in \mc{X}
\end{align*}
which is equal to, after simplifying,
\begin{align*}
    \min_{\kappa \geq 0, \lambda \in \triangle_{\mc{X}}} t\, \kappa \quad\text{ subject to }\quad&\max_{z \neq z_*} \frac{\|z-z_*\|_{\E_{X\sim\lambda}[  X X^\top]^{-1}}^2}{\langle z_* - z, \theta_* \rangle^2 } \beta \leq t \kappa\\
    &\kappa \lambda(x) /\nu(x) \leq 1 \quad \forall x \in \mc{X}
\end{align*}
which is equal to 
\begin{align*}
    \min_{u \geq 0, \lambda \in \triangle_{\mc{X}}} u \quad\text{ subject to }\quad& \rho(\lambda) \beta \leq u\\
    &\| \lambda/\nu\|_\infty \leq \frac{t}{u}.
\end{align*}
Note, there exists a feasible $(\lambda,u)$ precisely when there exists a $\lambda \in \triangle_{\mc{X}}$ such that $\| \lambda/\nu\|_\infty \rho(\lambda) \leq t$, in which case the optimization problem is equal to
\begin{align*}
    \min_{\lambda \in \triangle_{\mc{X}}} \rho(\lambda) \beta \quad\text{ subject to }\quad&  \| \lambda/\nu\|_\infty \rho(\lambda) \beta \leq t
\end{align*}
\end{proof}

\section{Analysis of the Optimization Problem}
\label{sec:analysis_opt}
\subsection{Proof of Theorem \ref{theo:opt}}
\label{sec:opt_proof}
For simplicity, we will use $\mu$ instead of $\mu_b$ to denote the number that controls the intensity of barrier function.

The proof relies on analyzing another function $\overline{D}:\R^{d\times d}_{\succeq\bm{0}}\mapsto\R$. For simplicity, first, we define
\begin{equation}
    \label{equ:h_func}
    h_{\Lambda}(x)=P_{\Lambda}(x)-\mu\Sp{\log(1-P_{\Lambda}(x))+\log(P_{\Lambda}(x))}-P_{\Lambda}(x)x^\top\Lambda x.
\end{equation}
Recall that our dual objective is $D(\bm{\Lambda})=\E_{X\sim\nu}\Mp{h_\Lambda(X)}+\frac{1}{c_\ell^2}\sum_{y\in\mathcal{Y}_\ell}y^\top\Lambda_y y$. Since the first term in $\E_{X\sim\nu}\Mp{h_\Lambda(X)}$ only depends on $\Lambda=\sum_{y\in\mathcal{Y}_\ell}\Lambda_y$, we can consider the following optimization problem.
\begin{equation}
    \label{equ:opt_semi}
\begin{array}{rl}
    f(\Lambda)=\max_{\Lambda_y} & \sum_{y\in\mc{Y}_\ell}y^\top\Lambda_{y}y \\
    \text{subject to} & \sum_{y\in\mc{Y}_\ell}\Lambda_{y}=\Lambda\\
     & \Lambda_{y}\succeq\mathbf{0}, \quad\forall y\in\mc{Y}_\ell.
\end{array}
\end{equation}
Then, the alternative dual objective $\overline{D}(\Lambda)$ is defined as $\overline{D}(\Lambda)=\E_{X\sim\nu}\Mp{h_\Lambda(X)}+\frac{1}{c_\ell^2}f(\Lambda)$. We can immediately see that maximizing $\overline{D}(\cdot)$ is equivalent to maximizing $D(\cdot)$. In particular, let $\Lambda^*\in \argmax_{\Lambda\succeq\bm{0}}\overline{D}(\Lambda)$ and $\Sp{\Lambda^*_y}_{y\in\mc{Y}_\ell}$ be the set of PSD matrices that solve problem \eqref{equ:opt_semi} and evaluate $f(\Lambda^*)$. We can see that $\Sp{\Lambda^*_y}_{y\in\mc{Y}_\ell}$ also maximizes $D(\cdot)$. Conversely, for $\bm{\Lambda}^*=\Sp{\Lambda^*_y}_{y\in\mc{Y}_\ell}\in\argmax_{\Lambda_y\succeq\bm{0}, \forall y} D(\bm{\Lambda})$, we also have $\sum_{y\in\mc{Y}_\ell}\Lambda^*_y\in\argmax_{\Lambda\succeq\bm{0}}\overline{D}(\Lambda)$.


Further, we also define their empirical version $D_E$ and $\overline{D}_E$ with extra i.i.d. samples $x_1, \dots, x_u$ as
\begin{equation}
    \label{equ:D_empirical}
    D_E(\bm{\Lambda})=\frac{1}{u}\sum_{i=1}^{u}h_{\Lambda}(x_i)+\frac{1}{c_\ell^2}\sum_{y\in\mathcal{Y}_\ell}y^\top\Lambda_y y\quad\text{and}\quad\overline{D}_E(\Lambda)=\frac{1}{u}\sum_{i=1}^{u}h_{\Lambda}(x_i)+\frac{1}{c_\ell^2}f(\Lambda).
\end{equation}
Recall that the problem Algorithm \ref{algo:solve_sgd} tries to solve is
\begin{equation}
    \label{equ:opt_barrier_2}
    \begin{array}{rl}
		\min_{P} & \E_{X\sim\nu}[P(X)-\mu(\log(1-P(X))+\log(P(X)))] \\
		\text{subject to} &  \E_{X\sim\nu}\Mp{P(X)XX^\top}\succeq\frac{1}{c_\ell^2}yy^\top, \quad\forall y\in\mathcal{Y}_\ell.
	\end{array}
\end{equation}

We will restate a more precise version of Theorem \ref{theo:opt} and then prove it.
\begin{theorem}
\label{theo:opt_detail}
Suppose $\Norm{x}_2\leq M$ for any $x\in\mathrm{supp}(\nu)$ and $\Sigma=\E_{X\sim\nu}\Mp{XX^\top}$ is invertible. Let $\bm{\Lambda}^*\in\argmax_{\Lambda_y\succeq\bm{0}, \forall y}D(\bm{\Lambda})$ and $\kappa(\Sigma)=\frac{\lambda_{\max}(\Sigma)}{\lambda_{\min}(\Sigma)}$ be condition number. Assume $\Norm{\Lambda^*}_F>0$ and define $\omega=\min_{\Gamma\in\mathbb{S}^d:\Norm{\Gamma}_F=1}\E_{X\sim\nu}\Mp{\Sp{X^\top\Gamma X}^2}$, where $\mathbb{S}^d$ is the set of $d\times d$ symmetric matrices. Let $\abs{\mc{Y}_\ell}C_\ell^2=\frac{1}{c_\ell^2}\sum_{y\in\mc{Y}_\ell}\Norm{y}_2^4$.

Then, $\Lambda^*=\sum_{y\in\mc{Y}_\ell}\Lambda^*_y$ is unique. Further, for any $\epsilon>0$ and $\delta>0$, suppose it holds that
\begin{align*}
    \mu&\leq \min\Bp{ \sqrt{\frac{3\kappa(\Sigma)\Norm{\Lambda^*}_F M^2}{8}\cdot\frac{1+\epsilon}{\epsilon}}, \frac{4}{9}\Norm{\Lambda^*}_F^2M^4, \frac{1}{2\sqrt{3}}}\\
    K&\geq\frac{288\kappa(\Sigma)^2\abs{\mc{Y}_\ell}^3\Norm{\Lambda^*}_F^4M^4(M^4+C_\ell^2)\cdot\Sp{2\Norm{\Lambda^*}_FM^2+1}^4\log(6/\delta)}{\omega^2\mu^6}\cdot\Sp{\frac{1+\epsilon}{\epsilon}}^2\\
    u&\geq \frac{576\kappa(\Sigma)^2\Norm{\Lambda^*}_F^2M^8\cdot\Sp{2\Norm{\Lambda^*}_FM^2+1}^4\log(6/\delta)}{\omega^2\mu^6}\cdot\Sp{\frac{1+\epsilon}{\epsilon}}^2.
\end{align*}

Then, with probability at least $1-\delta$, Algorithm \ref{algo:solve_sgd} will output  $\widetilde{\Lambda}$ that satisfies
\begin{itemize}
    \item $y^\top \E_{X\sim\nu}\Mp{P_{\widetilde{\Lambda}}(X)XX^\top}^{-1}y\leq (1+\epsilon)c_\ell^2,\quad\forall y\in\mc{Y}_\ell$.
    
    \item $\E_{X\sim\nu}\Mp{P_{\widetilde{\Lambda}}(X)}\leq \E_{X\sim\nu}\Mp{\widetilde{P}(X)}+4\sqrt{\mu}$, where $\widetilde{P}$ is the optimal solution to problem \eqref{equ:opt_mu_bound}.
\end{itemize}
\end{theorem}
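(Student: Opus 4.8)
} The plan is to work with the merged dual $\overline{D}(\Lambda)=\E_{X\sim\nu}[h_\Lambda(X)]+\tfrac{1}{c_\ell^2}f(\Lambda)$ introduced above rather than with $D$ directly, show that $\overline{D}$ is strongly concave on a Frobenius ball around its (unique) maximizer $\Lambda^*$, control the distance from Algorithm~\ref{algo:solve_sgd}'s output $\widetilde{\Lambda}$ to $\Lambda^*$, and then convert $\Norm{\widetilde{\Lambda}-\Lambda^*}_F$ into the two primal claims. The starting point is a handful of elementary estimates on the closed form~\eqref{equ:P_sol}: writing $q=q_\Lambda(x)=x^\top\Lambda x-1$ (so $q\in[-1,\Norm{\Lambda}_FM^2-1]$ on $\mathrm{supp}(\nu)$), one computes $P_\Lambda(x)=\tfrac12-\tfrac\mu q+\tfrac{\sqrt{4\mu^2+q^2}}{2q}$ and $\tfrac{dP_\Lambda}{dq}=\tfrac{\mu}{q^2}\big(1-\tfrac{2\mu}{\sqrt{4\mu^2+q^2}}\big)\in(0,\tfrac{1}{8\mu}]$ with a matching lower bound $\gtrsim \mu/q^2$ away from $q=0$; these give $P_\Lambda(x)\gtrsim\mu$, $1-P_\Lambda(x)\gtrsim \mu/(\Norm{\Lambda}_FM^2)$ and $\E_{X\sim\nu}[P_\Lambda(X)XX^\top]\succeq\tfrac{\mu}{2}\Sigma$ for $\Lambda$ of bounded norm. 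The first-order optimality conditions for the concave maximization of $D$ over the PSD cone force $\tfrac{yy^\top}{c_\ell^2}-\E_{X\sim\nu}[P_{\Lambda^*}(X)XX^\top]\preceq\bm{0}$ for each $y\in\mc{Y}_\ell$, i.e.\ $P_{\Lambda^*}$ is feasible for~\eqref{equ:opt_barrier_2} (equivalently $y^\top\E_{X\sim\nu}[P_{\Lambda^*}(X)XX^\top]^{-1}y\le c_\ell^2$ by Lemma~\ref{lmm:schur_property}), and strong duality (the problem is convex and strictly feasible in the operating regime, using $\Sigma\succ\bm{0}$) additionally makes $P_{\Lambda^*}$ its optimal primal point.

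Next I would establish strong concavity. The map $\Lambda\mapsto h_\Lambda(x)$ in~\eqref{equ:h_func} is a pointwise infimum of affine functions of $\Lambda$, hence concave, and $f(\Lambda)$ is concave as the value of the linear program~\eqref{equ:opt_semi}, so $\overline{D}$ is concave. On $\mc{B}:=\{\Lambda\succeq\bm{0}:\Norm{\Lambda}_F\le 2\Norm{\Lambda^*}_F\}$ the curvature of $-\E_{X\sim\nu}[h_\Lambda(X)]$ along a symmetric direction $\Gamma$ equals $\E_{X\sim\nu}\big[\tfrac{dP_\Lambda}{dq}(q_\Lambda(X))\,(X^\top\Gamma X)^2\big]\gtrsim \tfrac{\mu}{(\Norm{\Lambda^*}_FM^2)^2}\,\omega\,\Norm{\Gamma}_F^2$, by the slope lower bound above and the definition $\omega=\min_{\Norm{\Gamma}_F=1}\E_{X\sim\nu}[(X^\top\Gamma X)^2]$; thus $\overline{D}$ is $\alpha$-strongly concave on $\mc{B}$ with $\alpha=\Theta\!\big(\tfrac{\omega\mu}{\Norm{\Lambda^*}_F^2M^4}\big)$, yielding both uniqueness of $\Lambda^*$ and the quadratic growth $\overline{D}(\Lambda^*)-\overline{D}(\Lambda)\ge\tfrac{\alpha}{2}\Norm{\Lambda-\Lambda^*}_F^2$ on $\mc{B}$. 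For the optimization itself, Algorithm~\ref{algo:solve_sgd} is projected stochastic gradient ascent on $D$ with unbiased gradient estimates of norm $\lesssim\Norm{y}^2/c_\ell^2+M^2$ and the adaptive step size analyzed in~\cite{orabona2019modern}; combining the corresponding regret bound with concavity of $D$ (to pass to the averaged iterate) and a Freedman bound on the martingale of gradient-estimation errors gives, with probability $\ge 1-\delta/3$, $D(\bm{\Lambda}^*)-D(\widehat{\bm{\Lambda}})\lesssim \mathrm{poly}(\Norm{\Lambda^*}_F,M,|\mc{Y}_\ell|,C_\ell)\sqrt{\log(1/\delta)}/\sqrt{K}$. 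The re-assignment on line~\ref{line:assign_Lambda_y} lifts this to the same bound on $\overline{D}(\Lambda^*)-\overline{D}(\widehat{\Lambda})$, a uniform deviation bound of $D_E$ from $D$ along the segment $\{s\widehat{\bm{\Lambda}}\}_{s\in[0,1]}$ adds an $O(1/\sqrt{u})$ term coming from the line search, and the rescaling keeps $\widetilde{\Lambda}\in\mc{B}$, so $\overline{D}(\Lambda^*)-\overline{D}(\widetilde{\Lambda})$ is controlled by the sum of these two errors.

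It then remains to convert dual closeness into the primal guarantees. Strong concavity gives $\Norm{\widetilde{\Lambda}-\Lambda^*}_F^2\le\tfrac{2}{\alpha}\big(\overline{D}(\Lambda^*)-\overline{D}(\widetilde{\Lambda})\big)$, and the Lipschitz estimate $|P_\Lambda(x)-P_{\Lambda'}(x)|\le\tfrac{M^2}{8\mu}\Norm{\Lambda-\Lambda'}_F$ propagates this to $\Norm{\E_{X\sim\nu}[P_{\widetilde{\Lambda}}XX^\top]-\E_{X\sim\nu}[P_{\Lambda^*}XX^\top]}_F\le\tfrac{M^4}{8\mu}\Norm{\widetilde{\Lambda}-\Lambda^*}_F$ and to $|\E_{X\sim\nu}[P_{\widetilde{\Lambda}}(X)]-\E_{X\sim\nu}[P_{\Lambda^*}(X)]|\le\tfrac{M^2}{8\mu}\Norm{\widetilde{\Lambda}-\Lambda^*}_F$. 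For the first bullet, write $A^*:=\E_{X\sim\nu}[P_{\Lambda^*}XX^\top]$ and $E:=\E_{X\sim\nu}[P_{\widetilde{\Lambda}}XX^\top]-A^*$; combining $A^*\succeq\tfrac1{c_\ell^2}yy^\top$ with $A^*\succeq\tfrac\mu2\lambda_{\min}(\Sigma)I$ and the elementary bound $y^\top(A^*+E)^{-1}y\le\big(1-\tfrac{2\Norm{E}_F}{\mu\lambda_{\min}(\Sigma)}\big)^{-1}y^\top(A^*)^{-1}y$ (valid once $\Norm{E}_F<\tfrac\mu2\lambda_{\min}(\Sigma)$) shows that forcing $\Norm{\widetilde{\Lambda}-\Lambda^*}_F\lesssim\tfrac{\epsilon}{1+\epsilon}\cdot\tfrac{\mu^2\lambda_{\min}(\Sigma)}{M^4}$ gives $y^\top\E_{X\sim\nu}[P_{\widetilde{\Lambda}}XX^\top]^{-1}y\le(1+\epsilon)c_\ell^2$; unwinding this requirement through the quadratic growth, the value of $\alpha$, and the $1/\sqrt{K}+1/\sqrt{u}$ error from the previous step produces exactly the stated lower bounds on $K$, $u$, and $\mu$, with $\kappa(\Sigma)$ entering because the floor $\lambda_{\min}(\Sigma)$ must be compared against $\lambda_{\max}(\Sigma)$-sized quantities. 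For the second bullet, bound $\E_{X\sim\nu}[P_{\Lambda^*}(X)]$ by the barrier objective of $P_{\Lambda^*}$ plus $-\mu\E[\log(1-P_{\Lambda^*})+\log P_{\Lambda^*}]=O(\mu\log(1/\mu))=o(\sqrt{\mu})$, invoke optimality of $P_{\Lambda^*}$ for~\eqref{equ:opt_barrier_2} against a feasible competitor built from $\widetilde{P}$ (the optimum of~\eqref{equ:opt_mu_bound}), and absorb the uniform $\Theta(\mu)$ separation of $P_{\Lambda^*}$ from the box endpoints, reaching $\E_{X\sim\nu}[P_{\Lambda^*}(X)]\le\E_{X\sim\nu}[\widetilde{P}(X)]+3\sqrt{\mu}$; the remaining $\le\sqrt{\mu}$ Lipschitz/SGD term yields the claimed $4\sqrt{\mu}$.

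The hard part will be reconciling the strong-concavity step with the conversion step quantitatively: the slope of $P_\Lambda$ is as large as $1/\mu$, so $\Lambda\mapsto\E_{X\sim\nu}[P_\Lambda XX^\top]$ is only $O(M^4/\mu)$-Lipschitz, whereas $\overline{D}$ is only $\Theta(\mu)$-strongly concave, so turning a dual optimality gap into a sufficiently tiny bound on $\Norm{\widetilde{\Lambda}-\Lambda^*}_F$ loses several powers of $\mu$ — this opposing $\mu$-dependence is the source of the $\mu^{-6}$ appearing in the iteration and sample complexities, and getting the arithmetic of these powers right (together with the $\kappa(\Sigma)$ and $M$ exponents) is the part of the argument that needs genuine care. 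A secondary technical nuisance is that Algorithm~\ref{algo:solve_sgd} projects only onto the PSD cone, so one must argue separately — using the bounded gradients, the adaptive step size, and the final rescaling — that $\widetilde{\Lambda}$ lands inside the ball $\mc{B}$ on which strong concavity is available; and the objective comparison is delicate precisely because the barrier term blows up where $\widetilde{P}$ may touch $0$, which is why the comparison has to be routed through $P_{\Lambda^*}$ and its uniform $\Theta(\mu)$ separation from $\{0,1\}$.
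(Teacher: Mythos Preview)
Your overall architecture matches the paper's: reduce to the merged dual $\overline{D}$, establish $G$-strong concavity on the Frobenius ball $\mc{S}=\{\Lambda\succeq\bm{0}:\Norm{\Lambda}_F\le 2\Norm{\Lambda^*}_F\}$ with $G=\Theta\!\big(\omega\mu/(2\Norm{\Lambda^*}_FM^2+1)^2\big)$ (your $\alpha$), control $\overline{D}(\Lambda^*)-\overline{D}(\widetilde{\Lambda})$ via the high-probability SGD regret bound from \cite{orabona2019modern} plus a concentration argument for the empirical line search, and convert $\Norm{\widetilde{\Lambda}-\Lambda^*}_F$ into the constraint-violation bound through the $M^2/(8\mu)$-Lipschitzness of $\Lambda\mapsto P_\Lambda(x)$ and the floor $\E[P_{\Lambda^*}XX^\top]\succeq\tfrac{\mu}{3}\Sigma$. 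For the first bullet the paper does exactly this (Lemmas~\ref{lmm:D_strong_concave}, \ref{lmm:opt_concentration}, \ref{lmm:region_concave}, \ref{lmm:opt_guarantees}).

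The real divergence is in the second bullet, and you have the role of the rescaling step (line~\ref{line:re-scaling}) backward. You treat it as a device to keep $\widetilde{\Lambda}\in\mc{B}$ and then route $\E[P_{\widetilde{\Lambda}}]$ to $\E[P_{\Lambda^*}]$ via Lipschitz, invoking primal optimality of $P_{\Lambda^*}$ for the barrier problem. The paper instead uses the rescaling \emph{precisely} to avoid any dependence on $\Norm{\widetilde{\Lambda}-\Lambda^*}_F$ in this part: since $s^*$ maximizes $s\mapsto D_E(s\hat{\bm{\Lambda}})$ on $[0,1]$, the first-order condition at $s=1$ for $g(s)=D_E(s\widetilde{\bm{\Lambda}})$ gives
\[
\frac{1}{c_\ell^2}\sum_{y}y^\top\widetilde{\Lambda}_y y-\frac{1}{u}\sum_i P_{\widetilde{\Lambda}}(x_i)x_i^\top\widetilde{\Lambda}x_i\;\ge\;0,
\]
and after a $\sqrt{\mu}$ concentration correction (the second inequality of Lemma~\ref{lmm:opt_concentration}) this lets one chain $\E[P_{\widetilde{\Lambda}}]\le\inf_P\mc{L}(P,\widetilde{\bm{\Lambda}})+\sqrt{\mu}\le D(\bm{\Lambda}^*)+\sqrt{\mu}$ by weak duality, and then compare against the feasible point $\hat{P}$ (the optimum with box constraint $\mu\le P\le 1-\mu$) using $-a\log a\le\sqrt{a}$. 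Keeping $\widetilde{\Lambda}$ in $\mc{B}$ is automatic once $\hat{\Lambda}\in\mc{B}$ (Lemma~\ref{lmm:region_concave}) since $s^*\le 1$; that is not what rescaling is for. Your Lipschitz route can be made to close numerically under the stated $K,u$ budgets, but it couples the objective bound to the same $\mu^{-1}$-amplified distance used for the constraint bound and makes the $4\sqrt{\mu}$ constant harder to recover cleanly; the paper's $g'(1)\ge 0$ trick decouples the two bullets entirely.
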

\begin{proof}
\textbf{First Bullet Point.} Fix some $\epsilon>0$. Let $\hat{\bm{\Lambda}}$ and corresponding $\hat{\Lambda}=\sum_{y\in\mc{Y}_\ell}\hat{\Lambda}_y$ be the parameters obtained by Algorithm \ref{algo:solve_sgd} just before the re-scaling step, which means that at line \ref{line:assign_Lambda_y} of Algorithm \ref{algo:solve_sgd}, the assignment of $\hat{\Lambda}_y$ to each $y\in\mc{Y}_\ell$ has been optimized by solving problem \eqref{equ:opt_semi}. That is, we have $D(\hat{\bm{\Lambda}})=\overline{D}(\hat{\Lambda})$ and $D_E(\hat{\bm{\Lambda}})=\overline{D}_E(\hat{\Lambda})$. Let $\widetilde{\bm{\Lambda}}$ and $\widetilde{\Lambda}$ be the ones after the re-scaling step. Then, by Theorem 3.13 of \cite{orabona2019modern}, with probability at least $1-\frac{\delta}{3}$, it holds that
$$\overline{D}(\Lambda^*)-\overline{D}(\hat{\Lambda})= D(\bm{\Lambda}^*)-D(\hat{\bm{\Lambda}})\leq\frac{\mathrm{Reg}(K)+2\sqrt{2K\log(6/\delta)}}{K},$$
where $\mathrm{Reg}(K)$ is the regret of running projected stochastic gradient ascent for $K$ steps with $\eta_k$ specified in Algorithm \ref{algo:solve_sgd}. Meanwhile, by Theorem 4.14 of \cite{orabona2019modern} also, we have $\mathrm{Reg}(K)=\sqrt{2}B^2\sqrt{\sum_{k=1}^{K}\sum_{y\in\mc{Y}_\ell}\Norm{g_{k, y}}_2^2}$, where $B=\sqrt{\abs{\mc{Y}_\ell}}\Norm{\Lambda^*}_F$ bound the norm of $\bm{\Lambda}^*=\Sp{\Lambda^*_y}_{y\in\mc{Y}_\ell}$. Since $g_{k, y}=\frac{yy^\top}{c_\ell^2}-P_{\hat{\Lambda}^{(k)}}(x_k)x_kx_k^\top$, we can easily get $\sum_{y\in\mc{Y}_\ell}\Norm{g_{k, y}}_2^2\leq 2\abs{\mc{Y}_\ell}M^4+\frac{2}{c_\ell^2}\sum_{y\in\mc{Y}_\ell}\Norm{y}_2^4=2\abs{\mc{Y}_\ell}M^4+2\abs{\mc{Y}_\ell}C_\ell^2$. Thus, we have

\begin{equation}
    \label{equ:regret}
    \mathrm{Reg}(K)\leq 2\abs{\mc{Y}_\ell}\Norm{\Lambda^*}_F^2\sqrt{\abs{\mc{Y}_\ell}M^4+\abs{\mc{Y}_\ell}C_\ell^2}\cdot\sqrt{K}:=C_{\mathrm{Reg}}\sqrt{K}
\end{equation}
\begin{equation}
    \label{equ:D_diff}
    \implies \overline{D}(\Lambda^*)-\overline{D}(\hat{\Lambda})\leq\frac{C_\mathrm{Reg}+2\sqrt{2\log(6/\delta)}}{\sqrt{K}},
\end{equation}

We now consider the effect of using $u$ i.i.d. samples in the re-scaling step. First, since re-scaling always increases the function value, we must have $D_E(\hat{\bm{\Lambda}})\leq D_E(\widetilde{\bm{\Lambda}})$. Meanwhile, since $D_E(\hat{\bm{\Lambda}})=\overline{D}_E(\hat{\Lambda})$, by Lemma \ref{lmm:rescaling}, we have $D_E(\hat{\bm{\Lambda}})=\overline{D}_E(\hat{\Lambda})$, which together implies $\overline{D}_E(\hat{\Lambda})\leq \overline{D}_E(\widetilde{\Lambda})$.

By Lemma \ref{lmm:D_strong_concave}, we know that $\Lambda^*$ is unique and as long as $\mu\leq\frac{1}{2\sqrt{3}}$, $\overline{D}(\Lambda)$ is $G$-strongly concave with respect to $\ell_2$ norm over $\mc{S}=\Bp{\Lambda\succeq\bm{0}:\Norm{\Lambda}_F\leq 2\Norm{\Lambda^*}_F}$, where $G$ is defined in Eq. \eqref{equ:G_coefficient}. Thus, by Lemma \ref{lmm:region_concave}, if $K$ is large enough such that
$$\overline{D}(\Lambda^*)-\overline{D}(\hat{\Lambda})\leq\frac{C_\mathrm{Reg}+2\sqrt{2\log(6/\delta)}}{\sqrt{K}}\leq\frac{G\Norm{\Lambda^*}_F}{2},$$
then $\Norm{\hat{\Lambda}-\Lambda^*}_F\leq\Norm{\Lambda^*}_F$, which implies $\Norm{\hat{\Lambda}}_F\leq 2\Norm{\Lambda^*}_F$. That is, $\hat{\Lambda}\in\mc{S}$. Then, under this condition, by using Lemma \ref{lmm:opt_concentration}, when $\mu\leq\frac{4}{9}\Norm{\Lambda^*}_FM^4$ and
\begin{equation}
    \label{equ:u_cond_1}
    u\geq\Sp{\frac{6\kappa(\Sigma)\Norm{\Lambda^*}_FM^4\Sp{2+\sqrt{2\log(6/\delta)}}}{G\mu^2}\cdot\frac{1+\epsilon}{\epsilon}}^2,
\end{equation}
for $\widetilde{\Lambda}$ after re-scaling, with probability at least $1-\frac{\delta}{3}$, it holds simultaneously that
\begin{equation}
    \label{equ:D_concen}
    \abs{\overline{D}_E(\hat{\Lambda})-\overline{D}(\hat{\Lambda})}\leq \frac{G\mu^2}{3M^2\kappa(\Sigma)}\cdot\frac{\epsilon}{1+\epsilon}\quad\text{and}\quad \abs{\overline{D}_E(\widetilde{\Lambda})-\overline{D}(\widetilde{\Lambda})}\leq \frac{G\mu^2}{3M^2\kappa(\Sigma)}\cdot\frac{\epsilon}{1+\epsilon}
\end{equation}
\begin{align*}
    \implies\overline{D}(\Lambda^*)-\overline{D}(\widetilde{\Lambda})&\leq \overline{D}(\Lambda^*)-\overline{D}(\hat{\Lambda})+\overline{D}(\hat{\Lambda})-\overline{D}(\widetilde{\Lambda})\\
    &\leq \overline{D}(\Lambda^*)-\overline{D}(\hat{\Lambda})+\overline{D}(\hat{\Lambda})-\overline{D}_E(\hat{\Lambda})+\overline{D}_E(\widetilde{\Lambda})-\overline{D}(\widetilde{\Lambda})\tag{Since $\overline{D}_E(\hat{\Lambda})\leq \overline{D}_E(\widetilde{\Lambda})$}\\
    &\leq \frac{C_\mathrm{Reg}+2\sqrt{2\log(6/\delta)}}{\sqrt{K}}+\frac{2G\mu^2}{3M^2\kappa(\Sigma)}\cdot\frac{\epsilon}{1+\epsilon}.\tag{By Eq. \eqref{equ:D_diff} and \eqref{equ:D_concen}}
\end{align*}

Since $\widetilde{\Lambda}$ is a smaller re-scaling of $\hat{\Lambda}$, we have $\widetilde{\Lambda}\in\mc{S}$, which implies $\frac{G}{2}\Norm{\Lambda^*-\widetilde{\Lambda}}_F\leq \overline{D}(\Lambda^*)-\overline{D}(\widetilde{\Lambda})$ by property of strongly concave function \cite{bertsekas2009convex}. Therefore, by Lemma \ref{lmm:opt_guarantees}, to guarantee an at most $\epsilon$ multiplicative constraint violation, it is sufficient to choose $K$ such that
\begin{align*}
    \frac{G}{2}\Norm{\Lambda^*-\widetilde{\Lambda}}_F&\leq\overline{D}(\Lambda^*)-\overline{D}(\widetilde{\Lambda})\\
    &\leq\frac{C_\mathrm{Reg}+2\sqrt{2\log(6/\delta)}}{\sqrt{K}}+\frac{2G\mu^2}{3M^2\kappa(\Sigma)}\cdot\frac{\epsilon}{1+\epsilon}\\
    &\leq \min\Bp{\frac{4G\mu^2}{3M^2\kappa(\Sigma)}\cdot\frac{\epsilon}{1+\epsilon}, \frac{G\Norm{\Lambda^*}_F}{2}}\\
    &= \frac{4G\mu^2}{3M^2\kappa(\Sigma)}\cdot\frac{\epsilon}{1+\epsilon}.\tag{If $\mu\leq \sqrt{\frac{3\kappa(\Sigma)\Norm{\Lambda^*}_FM^2}{8}\cdot\frac{1+\epsilon}{\epsilon}}$}
\end{align*}
An algebraic rearrangement gives us
\begin{equation}
    \label{equ:K_bound}
    K\geq\Sp{\frac{3\kappa(\Sigma)M^2\Sp{C_\mathrm{Reg}+2\sqrt{2\log(6/\delta)}}}{2G\mu^2}\cdot\frac{1+\epsilon}{\epsilon}}^2.
\end{equation}

\textbf{Second Bullet Point.} We then prove the upper bound for primal objective value $\E_{X\sim\nu}\Mp{P_{\widetilde{\Lambda}}(X)}$, which explains the reason why an extra re-scaling step is needed. Define $g(s)=D_E(s\cdot\widetilde{\bm{\Lambda}})$. By construction, we know that $g(s)$ is maximized at $s=1$ because $\widetilde{\bm{\Lambda}}=s^*\cdot\hat{\bm{\Lambda}}$, where $s^*=\argmax_{s\in[0, 1]}D_E(s\cdot\hat{\bm{\Lambda}})$. Therefore, we have $g'(1)\geq 0$, which in turn gives us
$$g'(1)=\frac{1}{c_\ell^2}\sum_{y\in\mc{Y}_\ell}y^\top\widetilde{\Lambda}_yy-\frac{1}{u}\sum_{i=1}^{u}P_{\widetilde{\Lambda}}(x_i)x_i^\top\widetilde{\Lambda} x_i\geq 0.$$

By the concentration inequality in Lemma \ref{lmm:opt_concentration}, we know that when
\begin{equation}
    \label{equ:u_cond_2}
    u\geq \Sp{\frac{2\Norm{\Lambda^*}_FM^2\Sp{\Norm{\Lambda^*}_FM^2+\mu\sqrt{2\log(6/\delta)}}}{\mu^{3/2}}}^2,
\end{equation}
with probability at least $1-\frac{\delta}{3}$, it holds that
\begin{align}
    \abs{\E_{X\sim\nu}\Mp{P_{\Lambda}(X)X^\top\Lambda X}-\frac{1}{u}\sum_{i=1}^{u}P_{\Lambda}(x_i)x_i^\top\Lambda x_i}&\leq\sqrt{\mu}\nonumber\\
    \implies\frac{1}{c_\ell^2}\sum_{y\in\mc{Y}_\ell}y^\top\widetilde{\Lambda}_yy-\E_{X\sim\nu}\Mp{P_{\widetilde{\Lambda}}(X)X^\top\widetilde{\Lambda} X}+\sqrt{\mu}&\geq 0.\label{equ:g_prime}
\end{align}

Now, let $\widetilde{P}$ be the optimal solution of problem \eqref{equ:opt_mu_bound} and $\hat{P}$ be the optimal solution of the same problem with bound constraint $\mu\leq P(x)\leq 1-\mu$. 
\begin{equation}
    \label{equ:opt_mu_bound}
    \begin{array}{rl}
        \min_{P} & \E_{X\sim\nu}\Mp{P(X)} \\
        \text{subject to} & y^\top \E_{X\sim\nu}\Mp{P(X)XX^\top}^{-1}y\leq c_\ell^2,\quad\forall y\in\mc{Y}_\ell,\\
        & 0\leq P(x)\leq 1-\mu,\quad\forall x\in\mc{X}.
    \end{array}
\end{equation}

Then, we can notice that
\begin{align*}
    &\E_{X\sim\nu}\Mp{P_{\widetilde{\Lambda}}(X)}\\
    \leq & \E_{X\sim\nu}\Mp{P_{\widetilde{\Lambda}}(X)-\mu(\log(1-P_{\widetilde{\Lambda}}(X))+\log(P_{\widetilde{\Lambda}}(X)))}\\
    \leq& \E_{X\sim\nu}\Mp{P_{\widetilde{\Lambda}}(X)-\mu(\log(1-P_{\widetilde{\Lambda}}(X))+\log(P_{\widetilde{\Lambda}}(X)))}\\
    &\qquad\qquad+\frac{1}{c_\ell^2}\sum_{y\in\mc{Y}_\ell}y^\top\widetilde{\Lambda}_yy-\E_{X\sim\nu}\Mp{P_{\widetilde{\Lambda}}(X)X^\top\widetilde{\Lambda} X}+\sqrt{\mu}\tag{By Eq. \eqref{equ:g_prime}}\\
    =&\inf_{P}\mathcal{L}(P, \widetilde{\bm{\Lambda}})+\sqrt{\mu}\tag{By definition of Lagrangian function and how we solve for $P_\Lambda$}\\
    \leq& \max_{\Lambda_y\succeq\bm{0}, \forall y\in\mc{Y}_\ell}\inf_P\mc{L}\Sp{P, \bm{\Lambda}}+\sqrt{\mu}\\
    =&\E_{X\sim\nu}\Mp{P_{\Lambda^*}(X)-\mu(\log(1-P_{\Lambda^*}(X))+\log(P_{\Lambda^*}(X)))}+\sqrt{\mu}\\
    \leq& \E_{X\sim\nu}\Mp{\hat{P}(X)-\mu\log(1-\hat{P}(X))}-\mu\log\Sp{\hat{P}(X)}+\sqrt{\mu}\tag{Since $\hat{P}$ is feasible to problem \eqref{equ:opt_barrier_2}}\\
    \leq &\E_{X\sim\nu}\Mp{\hat{P}(X)}+3\sqrt{\mu},\tag{Since $-a\log(a)\leq\sqrt{a}$ for $a\in(0, 1)$}\\
    \leq &\E_{X\sim\nu}\Mp{\widetilde{P}(X)}+4\sqrt{\mu}.\tag{Since $\hat{P}(x)$ can have at most $\mu$ more contribution than $\widetilde{P}$}
\end{align*}

Therefore, in summary, Suppose $K$ and $u$ satisfy conditions specified in Eq. \eqref{equ:K_bound}, \eqref{equ:u_cond_1} and \eqref{equ:u_cond_2} and $\mu\leq \min\Bp{ \sqrt{\frac{3\kappa(\Sigma)\Norm{\Lambda^*}_FM^2}{8}\cdot\frac{1+\epsilon}{\epsilon}}, \frac{4}{9}\Norm{\Lambda^*}_F^2M^4, \frac{1}{2\sqrt{3}}}$, where $C_\mathrm{Reg}$ and $G$ are defined in Eq. \eqref{equ:regret} and \eqref{equ:G_coefficient}, respectively. Then. by applying a simple union bound, with probability at least $1-\delta$, the output of Algorithm \ref{algo:solve_sgd} $\widetilde{\Lambda}$ satisfies $y^\top \E_{X\sim\nu}\Mp{P(X)XX^\top}^{-1}y\leq (1+\epsilon)c_\ell^2,\forall y\in\mc{Y}_\ell$ and $\E_{X\sim\nu}\Mp{P_{\widetilde{\Lambda}}(X)}\leq \E_{X\sim\nu}\Mp{\widetilde{P}(X)}+4\sqrt{\mu}$.
\end{proof}

\subsection{Relevant Lemmas}
\subsubsection{Strong Concavity of $\overline{D}(\Lambda)$}
\begin{lemma}
\label{lmm:D_strong_concave}
As long as $\mu\leq\frac{1}{2\sqrt{3}}$, $\overline{D}(\Lambda)$ is $G$-strongly concave with respect to $\ell_2$-norm on the bounded region $\mathcal{S}=\Bp{\Lambda\succeq\bm{0}:\Norm{\Lambda}_F\leq 2\Norm{\Lambda^*}_F}$ with coefficient
\begin{equation}
    \label{equ:G_coefficient}
    G=\frac{\mu}{2\Sp{2\Norm{\Lambda^*}_FM^2+1}^2}\cdot\min_{\Gamma\in\mathbb{S}^d:\Norm{\Gamma}_F=1}\E_{X\sim\nu}\Mp{\Sp{X^\top\Gamma X}^2}.
\end{equation}
Because of this, as a corollary, $\Lambda^*$ will be unique.
\end{lemma}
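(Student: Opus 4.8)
The plan is to isolate where the curvature of $\overline{D}$ comes from. Write $\overline{D}(\Lambda)=\E_{X\sim\nu}\Mp{h_\Lambda(X)}+\tfrac1{c_\ell^2}f(\Lambda)$ with $h_\Lambda$ as in \eqref{equ:h_func} and $f(\Lambda)$ the optimal value of \eqref{equ:opt_semi}. The map $f$ is concave: given feasible splittings $(\Lambda^{(1)}_y)_y$ of $\Lambda_1$ and $(\Lambda^{(2)}_y)_y$ of $\Lambda_2$, the matrices $(\alpha\Lambda^{(1)}_y+(1-\alpha)\Lambda^{(2)}_y)_y$ form a feasible splitting of $\alpha\Lambda_1+(1-\alpha)\Lambda_2$ with objective value $\alpha\sum_y y^\top\Lambda^{(1)}_y y+(1-\alpha)\sum_y y^\top\Lambda^{(2)}_y y$, so maximizing over splittings gives $f(\alpha\Lambda_1+(1-\alpha)\Lambda_2)\geq\alpha f(\Lambda_1)+(1-\alpha)f(\Lambda_2)$. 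Since $c_\ell>0$ and adding a concave function preserves $G$-strong concavity, it suffices to prove that $\Lambda\mapsto\E_{X\sim\nu}\Mp{h_\Lambda(X)}$ is $G$-strongly concave on $\mc{S}$.

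Recall from \eqref{equ:P_equ}--\eqref{equ:P_sol} that $P_\Lambda(x)$ is the minimizer over $P\in(0,1)$ of $\psi(P,q):=-Pq-\mu\Sp{\log(1-P)+\log P}$ at $q=q_\Lambda(x):=x^\top\Lambda x-1$, the log-barrier forcing an interior minimizer for every finite $q$. Hence $h_\Lambda(x)=g\bigl(q_\Lambda(x)\bigr)$ where $g(q):=\min_{P\in(0,1)}\psi(P,q)$ is a concave function of the scalar $q$, and since $q\mapsto q_\Lambda(x)$ is affine in $\Lambda$, the curvature of $\E_{X\sim\nu}\Mp{h_\Lambda(X)}$ is governed by $g''$. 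Differentiating the first-order condition $q=\mu\Sp{\tfrac1{1-P}-\tfrac1P}$ implicitly gives $g'(q)=-P^\star(q)$ and $g''(q)=-\frac{P^\star(q)^2\Sp{1-P^\star(q)}^2}{\mu\Sp{P^\star(q)^2+(1-P^\star(q))^2}}<0$, while solving the same condition in closed form yields the convenient identity $P^\star(q)\Sp{1-P^\star(q)}=\frac{\mu}{\sqrt{q^2+4\mu^2}+2\mu}$.

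For $\Lambda\in\mc{S}$ and $x\in\mathrm{supp}(\nu)$ we have $0\leq x^\top\Lambda x\leq\Norm{\Lambda}_F\Norm{x}_2^2\leq 2\Norm{\Lambda^*}_FM^2$, hence $|q_\Lambda(x)|\leq Q:=2\Norm{\Lambda^*}_FM^2+1$. Feeding $|q|\leq Q$ into the identity above and using $\sqrt{q^2+4\mu^2}\leq Q+2\mu$ gives $P^\star(q)\Sp{1-P^\star(q)}\geq\mu/(Q+4\mu)$; since also $P^\star(q)^2+(1-P^\star(q))^2\leq 1$, we obtain $|g''(q)|\geq\mu/(Q+4\mu)^2$, and with $\mu\leq\tfrac1{2\sqrt3}$ (this is exactly where that hypothesis is used, so the additive $\mu$-terms are absorbed into $Q\geq1$) this gives $|g''(q)|\geq\frac{\mu}{2Q^2}$ for every $|q|\leq Q$. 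Because $g''$ is bounded and $\Norm{X}_2\leq M$, dominated convergence makes $\Lambda\mapsto\E_{X\sim\nu}\Mp{h_\Lambda(X)}$ twice differentiable with Hessian quadratic form $\Gamma\mapsto\E_{X\sim\nu}\Mp{g''(q_\Lambda(X))\Sp{X^\top\Gamma X}^2}$; for $\Lambda\in\mc{S}$ and $\Norm{\Gamma}_F=1$ this is at most $-\frac{\mu}{2Q^2}\E_{X\sim\nu}\Mp{\Sp{X^\top\Gamma X}^2}\leq-\frac{\mu}{2Q^2}\,\omega=-G$. Restricting to line segments inside the convex set $\mc{S}$ turns this Hessian bound into $G$-strong concavity of $\E_{X\sim\nu}\Mp{h_\Lambda(X)}$, hence of $\overline{D}$, on $\mc{S}$. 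Finally, uniqueness: $\overline{D}$ is concave over all PSD matrices, so two distinct maximizers would force their whole connecting segment to be maximal; since $\Norm{\Lambda^*}_F>0$, an initial sub-segment of that segment lies in $\mc{S}$, where $\overline{D}$ is strictly concave -- a contradiction.

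The crux is the third paragraph: promoting the crude bound $|q_\Lambda(x)|\leq Q$ to a uniform, strictly positive lower bound on $P^\star(1-P^\star)$ (equivalently on $|g''|$), since this is exactly where the interior separation of $P^\star$ and the hypothesis $\mu\leq\tfrac1{2\sqrt3}$ enter and where the constant in \eqref{equ:G_coefficient} is pinned down. Everything else -- concavity of $f$, differentiating under the integral sign, and the convexity-based uniqueness argument -- is routine.
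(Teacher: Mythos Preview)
Your approach is essentially the paper's: decompose $\overline{D}=\E_\nu[h_\Lambda]+\tfrac1{c_\ell^2}f$, note $f$ is concave, and extract strong concavity from the Hessian of the smooth part via a uniform lower bound on $\frac{dP_\Lambda}{dq_\Lambda}=-g''$ together with the subspace restriction to symmetric matrices. The structure, the envelope computation $g'(q)=-P^\star(q)$, and the identity $P^\star(1-P^\star)=\mu/(\sqrt{q^2+4\mu^2}+2\mu)$ are all correct.

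There is, however, a quantitative slip at precisely the step you flag as the crux. From $P^\star(1-P^\star)\geq\mu/(Q+4\mu)$ and the crude bound $P^{\star 2}+(1-P^\star)^2\leq 1$ you get $|g''(q)|\geq\mu/(Q+4\mu)^2$, and then assert that $\mu\leq\tfrac1{2\sqrt3}$ turns this into $\mu/(2Q^2)$. That would require $(Q+4\mu)^2\leq 2Q^2$, i.e.\ $4\mu\leq(\sqrt2-1)Q$; but with $Q=1$ and $\mu=\tfrac1{2\sqrt3}$ one has $(Q+4\mu)^2\approx 4.64>2$, so the inequality fails and the stated constant $G$ is not reached. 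The bound $P^{\star 2}+(1-P^\star)^2\leq 1$ throws away too much.

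The easy repair is to keep the exact denominator: from your own identity, $P^{\star 2}+(1-P^\star)^2=1-2P^\star(1-P^\star)=\sqrt{q^2+4\mu^2}/(\sqrt{q^2+4\mu^2}+2\mu)$, whence
\[
|g''(q)|=\frac{\mu}{\sqrt{q^2+4\mu^2}\,\bigl(\sqrt{q^2+4\mu^2}+2\mu\bigr)},
\]
which is visibly decreasing in $q^2$. Evaluating at $q^2=Q^2$ and using the hypothesis in the form $Q^2\geq 12\mu^2$ (so $\sqrt{Q^2+4\mu^2}\geq 4\mu$) gives $|g''(q)|\geq \mu/(2Q^2)$ for all $|q|\leq Q$, matching \eqref{equ:G_coefficient}. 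This is exactly how the paper proceeds: it records the monotonicity of $\frac{dP_\Lambda}{dq_\Lambda}$ in $q_\Lambda^2$ (Lemma~\ref{lmm:P_property}) and bounds only at the endpoint $q_\Lambda^2=Q^2$.
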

\begin{proof}
By Lemma \ref{lmm:f_lambda}, since $f(\Lambda)$ is concave in $\Lambda$, it is sufficient to prove that $\E_{X\sim\nu}\Mp{h_\Lambda(X)}$ is $G$-strongly concave on $\mc{S}$, where $h_\Lambda(x)$ is defined in Eq. \eqref{equ:h_func}. Then, we have
$$-\nabla^2_{\Lambda}\E_{X\sim\nu}\Mp{h_\Lambda(X)}=\E_{X\sim\nu}\Mp{\frac{\mathrm{d}P_\Lambda}{\mathrm{d}q_\Lambda}(X)\mathrm{vec}\Sp{XX^\top}\mathrm{vec}\Sp{XX^\top}^\top}.$$

Since $\Norm{x}_2\leq M$, for any $\Lambda\in\mc{S}$, we have $q_{\Lambda}(x)=x^\top\Lambda x-1\leq 2\Norm{\Lambda^*}_FM^2+1$. By Lemma, \ref{lmm:P_property}, we know that if $12\mu^2\leq \Sp{2\Norm{\Lambda^*}_FM^2+1}^2$, which can be done by choosing $\mu\leq\frac{1}{2\sqrt{3}}$, we have $\frac{\mathrm{d}P_\Lambda}{\mathrm{d}q_\Lambda}(x)\geq \frac{\mu}{2\Sp{2\Norm{\Lambda^*}_FM^2+1}^2}$ for any $x\in\mc{X}$ and $\Lambda\in\mc{S}$. Therefore, we have
$$-\nabla^2_{\Lambda}\E_{X\sim\nu}\Mp{h_\Lambda(X)}\succeq \gamma\cdot\E_{X\sim\nu}\Mp{\mathrm{vec}\Sp{XX^\top}\mathrm{vec}\Sp{XX^\top}^\top}$$

Now, let $\mathbb{S}$ be the set of all $d\times d$ symmetric matrices. It is obvious that $\mathbb{S}$ is a subspace of the vector space of all $d\times d$ matrices and $\mc{S}\subseteq\mathbb{S}$. Thus, by applying Lemma \ref{lmm:strong_convex}, we can conclude that $\E_{X\sim\nu}\Mp{h_\Lambda(X)}$ is $G$-strongly concave on $\mc{S}$ with respect to $\ell_2$ norm and
\begin{align*}
    G&=\frac{\mu}{2\Sp{2\Norm{\Lambda^*}_FM^2+1}^2}\cdot\min_{\Gamma\in\mathbb{S}^d:\Norm{\Gamma}_F=1}\mathrm{vec}(\Gamma)^\top\E_{X\sim\nu}\Mp{\mathrm{vec}\Sp{XX^\top}\mathrm{vec}\Sp{XX^\top}^\top}\mathrm{vec}(\Gamma)\\
    &=\frac{\mu}{2\Sp{2\Norm{\Lambda^*}_FM^2+1}^2}\cdot\min_{\Gamma\in\mathbb{S}^d:\Norm{\Gamma}_F=1}\E_{X\sim\nu}\Mp{\Sp{X^\top\Gamma X}^2}.
\end{align*}
Thus the proof is complete.
\end{proof}

\begin{lemma}
\label{lmm:f_lambda}
$f(\Lambda)$ defined in Eq. \eqref{equ:opt_semi} is concave in $\Lambda$.
\end{lemma}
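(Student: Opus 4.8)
The plan is to prove concavity of $f$ directly from its definition via a convex-combination argument, exploiting that $f$ is the partial maximization, over the block variables $\bm{\Lambda}=(\Lambda_y)_{y\in\mc{Y}_\ell}$, of an objective that is \emph{linear} in $\bm{\Lambda}$ over a feasible set that depends \emph{affinely} on $\Lambda$. First I would pin down the domain: the inner feasible set $\Bp{\bm{\Lambda}:\Lambda_y\succeq\bm{0}\ \forall y,\ \sum_{y\in\mc{Y}_\ell}\Lambda_y=\Lambda}$ is nonempty exactly when $\Lambda\succeq\bm{0}$ (if $\Lambda\succeq\bm{0}$, put one block equal to $\Lambda$ and the rest zero; conversely a sum of PSD matrices is PSD), so $\mathrm{dom}(f)=\Bp{\Lambda\succeq\bm{0}}$, which is convex, and speaking of concavity on this set is meaningful.

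Next, fix $\Lambda_0,\Lambda_1\succeq\bm{0}$ and $t\in[0,1]$, and set $\Lambda_t=(1-t)\Lambda_0+t\Lambda_1$. Let $\bm{\Lambda}^{(0)}=(\Lambda_y^{(0)})_y$ and $\bm{\Lambda}^{(1)}=(\Lambda_y^{(1)})_y$ attain the maxima defining $f(\Lambda_0)$ and $f(\Lambda_1)$ (the maxima exist since for fixed $\Lambda$ the feasible set is compact and the objective continuous; alternatively one works with suprema throughout). The candidate $\Lambda_y^{(t)}:=(1-t)\Lambda_y^{(0)}+t\Lambda_y^{(1)}$ is feasible for the problem defining $f(\Lambda_t)$: each block is a convex combination of PSD matrices, hence PSD, and $\sum_{y}\Lambda_y^{(t)}=(1-t)\Lambda_0+t\Lambda_1=\Lambda_t$. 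Since the objective $\bm{\Lambda}\mapsto\sum_{y}y^\top\Lambda_y y$ is linear, plugging in $\bm{\Lambda}^{(t)}$ yields
$$f(\Lambda_t)\ \geq\ \sum_{y\in\mc{Y}_\ell}y^\top\Lambda_y^{(t)}y\ =\ (1-t)\sum_{y\in\mc{Y}_\ell}y^\top\Lambda_y^{(0)}y+t\sum_{y\in\mc{Y}_\ell}y^\top\Lambda_y^{(1)}y\ =\ (1-t)f(\Lambda_0)+t f(\Lambda_1),$$
which is precisely the concavity inequality.

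The argument is entirely routine and there is no genuine obstacle; the only point requiring a line of care is that both the affine constraint $\sum_y\Lambda_y=\Lambda$ and the cone constraints $\Lambda_y\succeq\bm{0}$ are preserved under convex combinations of feasible tuples, which is exactly what makes the partial maximization concave. Equivalently, one could simply invoke the standard fact that partial maximization of a jointly concave function over a convex set produces a concave function of the remaining variables; I would still include the short self-contained derivation above for completeness.
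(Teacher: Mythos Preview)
Your proof is correct and follows essentially the same approach as the paper: both take optimal block decompositions at two endpoints, form their convex combination to obtain a feasible point for the interpolated $\Lambda$, and use linearity of the objective to conclude. Your version adds a couple of extra remarks (identifying the domain and noting existence of maximizers), but the argument is otherwise identical.
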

\begin{proof}
To show its concavity, consider $\Lambda^{(1)}\succeq\bm{0}$, $\Lambda^{(2)}\succeq\bm{0}$ and some $\gamma\in (0, 1)$. Let $(\Lambda_y^{(i)})_{y\in\mc{Y}_\ell}$ be the optimal solution obtained by evaluating $f(\Lambda^{(i)})$ for $i\in\Bp{1, 2}$. Then, we can notice that
\begin{align*}
    \gamma f(\Lambda^{(1)})+(1-\gamma)f(\Lambda^{(2)})&=\gamma\sum_{y\in\mc{Y}_\ell}y^\top\Lambda_y^{(1)}y+(1-\gamma)\sum_{y\in\mc{Y}_\ell}y^\top\Lambda_y^{(2)}y\\
    &=\sum_{y\in\mc{Y}_\ell}y^\top(\gamma\Lambda^{(1)}_y+(1-\gamma)\Lambda^{(2)}_y)y\\
    &\leq f(\gamma\Lambda^{(1)}+(1-\gamma)\Lambda^{(2)}).
\end{align*}
The last inequality above holds because $\sum_{y\in\mc{Y}_\ell}\Lambda_y^{(i)}=\Lambda^{(i)}$ for $i\in\Bp{1, 2}$ and thus $\sum_{y\in\mc{Y}_\ell}\Sp{\gamma\Lambda^{(1)}_y+(1-\gamma)\Lambda^{(2)}_y}=\gamma\Lambda^{(1)}+(1-\gamma)\Lambda^{(2)}$, which means that $(\gamma\Lambda^{(1)}_y+(1-\gamma)\Lambda^{(2)}_y)_{y\in\mc{Y}_\ell}$ is a feasible solution for problem \eqref{equ:opt_semi} with parameter $\gamma\Lambda^{(1)}+(1-\gamma)\Lambda^{(2)}$. Therefore, we can conclude that $f(\Lambda)$ is concave in $\Lambda$.
\end{proof}

\begin{lemma}
\label{lmm:strong_convex}
Let $f:\R^d\mapsto\R$ be a convex and twice differentiable function in $\R^d$. If for some subspace $S\subseteq\R^d$, we have $\min_{w\in S:\Norm{w}_2=1}w^\top\nabla^2f(x)w\geq\sigma>0$, $\forall x\in S$, then $f$ is $\sigma$-strongly convex with respect to $\ell_2$-norm on $S$.
\end{lemma}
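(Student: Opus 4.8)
The plan is to reduce the claim to the integral form of Taylor's theorem along line segments that are contained in $S$. Fix arbitrary $x,y\in S$ and set $v=y-x$, which again lies in $S$ because $S$ is a subspace. Since $S$ is a subspace it is in particular convex, so the whole segment $x_t:=x+tv$ with $t\in[0,1]$ stays in $S$; consequently the hypothesis $\min_{w\in S:\Norm{w}_2=1}w^\top\nabla^2 f(x_t)w\geq\sigma$ is available at every point $x_t$ along the segment.

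Next I would invoke the second-order Taylor expansion with integral remainder, valid since $f$ is twice differentiable:
\[
f(y)=f(x)+\inner{\nabla f(x),\,v}+\int_0^1 (1-t)\,v^\top\nabla^2 f(x_t)\,v\;dt.
\]
For each $t\in[0,1]$ we have $x_t\in S$ and $v\in S$, so applying the hypothesis to the normalized direction $v/\Norm{v}_2$ (the case $v=0$ being trivial) gives $v^\top\nabla^2 f(x_t)\,v\geq\sigma\Norm{v}_2^2$. Substituting this lower bound into the remainder and using $\int_0^1(1-t)\,dt=\tfrac12$ yields
\[
f(y)\geq f(x)+\inner{\nabla f(x),\,y-x}+\frac{\sigma}{2}\Norm{y-x}_2^2 ,
\]
which is precisely the defining inequality of $\sigma$-strong convexity of $f$ with respect to the $\ell_2$-norm on $S$.

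There is essentially no serious obstacle here; the only point requiring care is the one already flagged in the statement — the hypothesis controls the Hessian only at points of $S$ and only along directions lying in $S$ — so it is essential that (i) the segment joining any two points of $S$ remains in $S$ and (ii) the displacement $y-x$ belongs to $S$, both of which are immediate from $S$ being a linear subspace. An equivalent route, if one prefers to avoid the integral remainder, is to define $\phi(t)=f(x_t)-\tfrac{\sigma}{2}\Norm{x_t}_2^2$ on the segment and observe $\phi''(t)=v^\top\nabla^2 f(x_t)\,v-\sigma\Norm{v}_2^2\geq 0$, so $\phi$ is convex on $[0,1]$; unwinding the definition of $\phi$ at $t=0,1$ gives the same conclusion.
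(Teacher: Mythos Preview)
Your argument is correct. You work directly on the segment $x_t=x+t(y-x)$ inside $S$, apply the Hessian lower bound along the $S$-direction $v=y-x$, and integrate (or equivalently use the auxiliary $\phi$) to obtain the first-order characterization $f(y)\geq f(x)+\inner{\nabla f(x),y-x}+\tfrac{\sigma}{2}\Norm{y-x}_2^2$ for all $x,y\in S$, which is equivalent to $\sigma$-strong convexity on $S$.

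The paper takes a different route: it chooses an orthonormal basis $V=[v_1\ \cdots\ v_m]$ for $S$, defines $g(z)=f(Vz)$ on $\R^m$, observes $\nabla^2 g(z)=V^\top\nabla^2 f(Vz)V$ and hence $\lambda_{\min}(\nabla^2 g)\geq\sigma$, invokes the standard full-space fact that this implies $\sigma$-strong convexity of $g$ on $\R^m$, and then pulls the convex-combination inequality back to $S$ using that $V$ is an isometry. Your approach is more self-contained and avoids the change of coordinates; the paper's approach cleanly isolates the ``subspace'' issue from the ``Hessian $\Rightarrow$ strong convexity'' issue by reducing to a textbook $\R^m$ statement. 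Either way the crux is the same calculus fact, and the only delicate point---that both the base point and the direction must lie in $S$---is handled correctly in your argument.
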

\begin{proof}
Suppose $S$ has dimension $m$ and let $v_1, \dots, v_m$ be a set of orthonormal basis that span $S$. Then, for each $x\in S$, there exists unique $z\in\R^m$ such that $x=Vz$, where $V=\matenv{v_1 & \dots & v_m}$. That is, there is one-to-one correspondence between $S$ and $\R^m$.

Now, we define $g:\R^m\mapsto\R$ as $g(z)=f(Vz)$. It is easy to compute $\nabla^2 g(z)=V^\top\nabla^2f(Vz)V$. Then, notice that for any $w'\in\R^m$ such that $\Norm{w'}_2=1$, we have $Vw'\in S$ and $\Norm{Vw'}_2=\sqrt{w'^\top V^\top Vw'}=\sqrt{w'^\top w'}=1$. Thus, we have
\begin{align*}
    \min_{w'\in\R^m:\Norm{w'}_2=1}w'^\top\nabla^2g(z)w'&=\min_{w'\in\R^m:\Norm{w'}_2=1}w'^\top V^\top\nabla^2 f(Vz)Vw'\\
    &=\min_{w\in S:\Norm{w}_2=1}w^\top\nabla^2 f(Vz)w\geq\sigma.
\end{align*}
Therefore, $g$ is $\sigma$-strongly convex with respect to $\ell_2$ norm. Then, for any $x_1, x_2\in S$, there exists unique $z_1, z_2\in\R^m$ such that $x_1=Vz_1$ and $x_2=Vz_2$. Notice that $\Norm{z_1-z_2}_2=\Norm{x_1-x_2}_2$ since $V$ preserves the norm. Further, by definition of strong convexity, for any $\alpha\in[0, 1]$, we have
\begin{align*}
    g(\alpha z_1+(1-\alpha)z_2)+\frac{\sigma}{2}\alpha(1-\alpha)\Norm{z_1-z_2}_2^2&\leq\alpha g(z_1)+(1-\alpha)g(z_2)\\
    \implies f(\alpha Vz_1+(1-\alpha)Vz_2)+\frac{\sigma}{2}\alpha(1-\alpha)\Norm{x_1-x_2}_2^2&\leq\alpha f(Vz_1)+(1-\alpha)f(Vz_2)\\
    \implies f(\alpha x_1+(1-\alpha)x_2)+\frac{\sigma}{2}\alpha(1-\alpha)\Norm{x_1-x_2}_2^2&\leq\alpha f(x_1)+(1-\alpha)f(x_2).
\end{align*}
Thus, $f$ is also $\sigma$-strongly convex with respect to $\ell_2$ norm on $S$.
\end{proof}

\subsubsection{Concentration Inequalities}
\begin{lemma}
\label{lmm:opt_concentration}
Let $x_1, \dots, x_u\sim\nu$ be i.i.d. samples. If $\Norm{\hat{\Lambda}}_F\leq 2\Norm{\Lambda^*}_F$, $\Norm{x}_2\leq M$ for any $x\in\mc{X}$ and $\mu\leq\frac{4}{9}\Norm{\Lambda^*}_F^2M^4$, then with probability at least $1-\frac{2\delta}{3}$, it holds for any $\Lambda\in\Theta=\Bp{s\cdot\hat{\Lambda}: s\in[0, 1]}$ simultaneously that
\begin{align*}
    \abs{\E_{X\sim\nu}\Mp{h_{\Lambda}(X)}-\frac{1}{u}\sum_{i=1}^{u}h_{\Lambda}(x_i)}&\leq\frac{2\Norm{\Lambda^*}_FM^2\Sp{2+\sqrt{2\log(6/\delta)}}}{\sqrt{u}}\\
    \abs{\E_{X\sim\nu}\Mp{P_{\Lambda}(X)X^\top\Lambda X}-\frac{1}{u}\sum_{i=1}^{u}P_{\Lambda}(x_i)x_i^\top\Lambda x_i}&\leq\frac{2\Norm{\Lambda^*}_FM^2\Sp{\Norm{\Lambda^*}_FM^2+\mu\sqrt{2\log(6/\delta)}}}{\mu\sqrt{u}}.
\end{align*}
\end{lemma}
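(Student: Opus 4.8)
The plan is to treat both inequalities as uniform-in-$s$ bounds on the two empirical processes $g_1(s):=\E_{X\sim\nu}[h_{s\hat\Lambda}(X)]-\tfrac1u\sum_{i=1}^u h_{s\hat\Lambda}(x_i)$ and $g_2(s):=\E_{X\sim\nu}[P_{s\hat\Lambda}(X)X^\top(s\hat\Lambda)X]-\tfrac1u\sum_{i=1}^u P_{s\hat\Lambda}(x_i)x_i^\top(s\hat\Lambda)x_i$, each indexed by the single scalar $s$ that sweeps out the segment $\Theta=\{s\hat\Lambda:s\in[0,1]\}$; the two bullets ask for high-probability bounds on $\sup_{s\in[0,1]}|g_1(s)|$ and $\sup_{s\in[0,1]}|g_2(s)|$. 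The crucial preliminary remark is that the segment passes through $\Lambda=\bm{0}$ at $s=0$, where $q_{\bm{0}}(x)\equiv-1$ forces $h_{\bm{0}}$ to be a constant function of $x$ and forces the $s=0$ integrand of $g_2$ to vanish identically; consequently $g_1(0)=g_2(0)=0$, so $g_j(s)=\int_0^s g_j'(t)\,\mathrm dt$ and everything reduces to controlling the derivative processes $g_j'$.

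The deterministic input is uniform bounds, over $s\in[0,1]$ and $x\in\mathrm{supp}(\nu)$, on the two integrands and on their $s$-derivatives. From $\|x\|_2\le M$ and $\|\hat\Lambda\|_F\le 2\|\Lambda^*\|_F$ (the regime in which the lemma is applied) we get $|x^\top\hat\Lambda x|\le\|\hat\Lambda\|_F\|x\|_2^2\le 2\|\Lambda^*\|_F M^2=:\rho$. Since $h_\Lambda(x)=\min_{p\in(0,1)}\bigl(p-\mu(\log(1-p)+\log p)-p\,x^\top\Lambda x\bigr)$ with minimiser $P_\Lambda(x)$ obeying \eqref{equ:P_equ}, the same cancellation used for $\nabla D$ gives $\partial_s h_{s\hat\Lambda}(x)=-P_{s\hat\Lambda}(x)\,x^\top\hat\Lambda x$, which is at most $\rho$ in absolute value because $P_{s\hat\Lambda}(x)\in[0,1]$; and $|h_{s\hat\Lambda}(x)-h_{\bm{0}}(x)|\le\rho$ as well. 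For $g_2$ the integrand $s\,P_{s\hat\Lambda}(x)\,x^\top\hat\Lambda x$ is also bounded by $\rho$, while its $s$-derivative contains the extra term $s\,(x^\top\hat\Lambda x)\,\partial_s P_{s\hat\Lambda}(x)$ with $\partial_s P_{s\hat\Lambda}(x)=\tfrac{\mathrm dP_\Lambda}{\mathrm dq_\Lambda}(x)\cdot x^\top\hat\Lambda x$; differentiating the defining relation $\mu\bigl(\tfrac1{1-P}-\tfrac1P\bigr)=q$ shows $\tfrac{\mathrm dP_\Lambda}{\mathrm dq_\Lambda}\le\tfrac1{8\mu}$ everywhere (the content of Lemma~\ref{lmm:P_property}), so the $s$-derivative of the $g_2$-integrand is $O\bigl(\|\Lambda^*\|_F^2 M^4/\mu\bigr)$. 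The hypothesis $\mu\le\tfrac49\|\Lambda^*\|_F^2M^4$ is exactly what forces this $1/\mu$ term to dominate the remaining pieces so that the constant collapses to the form displayed in the second bound. Write $L_1=\rho$ and $L_2=O(\|\Lambda^*\|_F^2M^4/\mu)$ for these uniform derivative bounds.

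The probabilistic step is then applied to each $j$ and the two events are union-bounded. For the mean, $g_j(s)=\int_0^s g_j'(t)\,\mathrm dt$ gives $\sup_s|g_j(s)|\le\int_0^1|g_j'(t)|\,\mathrm dt$ pointwise, and since $g_j'(t)$ is a centred i.i.d.\ average of a quantity bounded by $L_j$, Jensen and Cauchy--Schwarz give $\E\bigl[\sup_s|g_j(s)|\bigr]\le\int_0^1\sqrt{\tfrac1u\var\bigl(\partial_t h_{t\hat\Lambda}(X)\bigr)}\,\mathrm dt\le L_j/\sqrt u$. For the deviation, $\sup_s|g_j(s)|$ has bounded differences as a function of $(x_1,\dots,x_u)$: replacing one sample moves it by at most $2\rho/u$, where $\rho$ bounds the $g_j$-integrand minus its $s=0$ value (for $j=2$ this $\rho$ is far smaller than $L_2$, which is why the deviation term of the second bound carries no $1/\mu$), so McDiarmid gives $\sup_s|g_j(s)|\le\E[\sup_s|g_j(s)|]+\rho\sqrt{2\log(1/\delta')/u}$ with probability at least $1-\delta'$. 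Plugging in $L_1=\rho=2\|\Lambda^*\|_F M^2$, $L_2=O(\|\Lambda^*\|_F^2M^4/\mu)$ and choosing $\delta'$ so that the two claims together hold with probability $1-\tfrac{2\delta}3$ reproduces the two displayed inequalities up to the numerical constants.

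The main obstacle is the deterministic bookkeeping behind the second bullet: one needs a clean \emph{upper} bound on $\mathrm dP_\Lambda/\mathrm dq_\Lambda$ over the entire range $q_{s\hat\Lambda}(x)\in[-1,\,x^\top\hat\Lambda x-1]$ — the log-barrier makes $P_\Lambda$ rise from near $0$ to near $1$ over a $q$-window of width $\Theta(\mu)$, which is the source of the $1/\mu$ — and one must then verify that under $\mu\le\tfrac49\|\Lambda^*\|_F^2M^4$ the several error terms combine into exactly the stated constant; Lemma~\ref{lmm:P_property} is the tool for this. The probabilistic half is routine, the only mild care being to recentre at $s=0$ so that the mean bound carries no spurious $\log u$ factor (a covering argument over $s\in[0,1]$ at scale $\sim 1/u$ would also work but would introduce one).
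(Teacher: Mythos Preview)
Your approach is correct but genuinely different from the paper's. The paper does \emph{not} use the anchor at $s=0$ and the integral representation $g_j(s)=\int_0^s g_j'(t)\,\mathrm dt$. Instead it observes that both $h_\Lambda(x)$ and $\tilde h_\Lambda(x)=P_\Lambda(x)\,x^\top\Lambda x$ are Lipschitz functions of the scalar $q_\Lambda(x)=x^\top\Lambda x-1$ (with Lipschitz constants $L_1=1$ and $L_2=1+\|\Lambda^*\|_F M^2/(4\mu)$, obtained from $\mathrm dh_\Lambda/\mathrm dq_\Lambda=-P_\Lambda$ and the bound $\mathrm dP_\Lambda/\mathrm dq_\Lambda\le 1/(8\mu)$), and then invokes the standard Rademacher contraction bound $2L_j\,\mathcal R_u(\mathcal F)+R_j\sqrt{2\log(6/\delta)/u}$ for the one-parameter linear class $\mathcal F=\{x\mapsto x^\top(s\hat\Lambda)x:s\in[0,1]\}$, computing $\mathcal R_u(\mathcal F)\le 2\|\Lambda^*\|_F M^2/\sqrt u$ directly (Lemma~\ref{lmm:concen_quantity}). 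Your route is more elementary---no Rademacher or contraction machinery---and exploits the specific fact that the segment $\Theta$ passes through $\Lambda=\bm 0$, which the paper does not use. One small correction: the hypothesis $\mu\le\tfrac49\|\Lambda^*\|_F^2M^4$ is not, in the paper, used to ``make the $1/\mu$ term dominate''; it is used to ensure $|h_\Lambda(x)|\le R_1=2\|\Lambda^*\|_F M^2$, since the upper bound on $h_\Lambda$ carries a $+3\sqrt\mu$ term (second bullet of Lemma~\ref{lmm:P_property}). In your argument the range bound on $h_{s\hat\Lambda}-h_{\bm 0}$ comes directly from integrating $\partial_s h$, so you do not actually need this hypothesis for the first inequality, which is a mild simplification.
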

\begin{proof}
    To prove the first inequality, first, notice that we have $h_{\Lambda}(x)=-P_{\Lambda}(x)q_{\Lambda}(x)-\mu\Sp{\log(1-P_{\Lambda}(x))+\log(P_{\Lambda}(x))}$, where $q_{\Lambda}(x)=x^\top\Lambda x-1$. Since $P_{\Lambda}(x)$, defined in Eq. \eqref{equ:P_sol}, explicitly only depends on $q_{\Lambda}(x)$ instead of $x$ directly, we can treat $h_\Lambda$ as a function of $q_\Lambda$ and define a function class $\mathcal{F}=\Bp{x\mapsto x^\top(s\cdot\hat{\Lambda})x: s\in[0, 1]}$. It is well-known that if $h_{\Lambda}$ is $L_1$-Lipschitz in $q_{\Lambda}$ and $\abs{h_{\Lambda}(x)}\leq R_1$ for any $\Lambda\in\Theta$ and $x\sim\nu$, then, with probability at least $1-\frac{\delta}{3}$, it holds simultaneously for all $\Lambda\in\Theta$ that \cite{bartlett2002rademacher, mohri2018foundations}
    \begin{equation}
        \label{equ:general_concen}
        \abs{\E_{X\sim\nu}\Mp{h_{\Lambda}(X)}-\frac{1}{u}\sum_{i=1}^{u}h_{\Lambda}(x_i)}\leq 2L_1\cdot\mathcal{R}_u(\mc{F})+R_1\sqrt{\frac{2\log(6/\delta)}{u}},
    \end{equation}
    where $\mathcal{R}_u(\mc{F})$ is the Rademacher complexity of $\mc{F}$.
    
    To find $L_1$, we can compute
    \begin{align*}
        \frac{\mathrm{d}h_\Lambda}{\mathrm{d}q_\Lambda}&=-\frac{\mathrm{d}P_\Lambda}{\mathrm{d}q_\Lambda}q_\Lambda-P_\Lambda+\frac{\mathrm{d}P_\Lambda}{\mathrm{d}q_\Lambda}\Sp{\frac{\mu}{1-P_\Lambda}-\frac{\mu}{P_\Lambda}}\\
        &=-\frac{\mathrm{d}P_\Lambda}{\mathrm{d}\cdot q_\Lambda}q_\Lambda-P_\Lambda+\frac{\mathrm{d}P_\Lambda}{\mathrm{d}q_\Lambda}\cdot q_\Lambda\tag{Since $P_\Lambda$ satisfies Eq. \eqref{equ:P_equ}}\\
        &=-P_\Lambda
    \end{align*}
    Therefore, we have $\frac{\mathrm{d}h_\Lambda}{\mathrm{d}q_\Lambda}\in\Mp{-1, -\frac{\mu}{3}}$ by Lemma \ref{lmm:P_property}. Therefore, we can set $L_1=1$.
    
    Let $h_0$ be the value of $h_\Lambda$ when $q_\Lambda=-1$, which means $x^\top\Lambda x=0$. To find $R_1$, notice that since $\frac{\mathrm{d}h_\Lambda}{\mathrm{d}q_\Lambda}\in\Mp{-1, -\frac{\mu}{3}}$, we must have $-q_\Lambda+h_0\leq h_\Lambda\leq-\frac{\mu}{3}q_\Lambda+h_0$. By Lemma \ref{lmm:P_property}, we know that $h_0\in\Mp{0, 2\sqrt{\mu}}$. Therefore, we have $-x^\top\Lambda x\leq h_\Lambda(x)\leq -\frac{\mu}{3}x^\top\Lambda x+3\sqrt{\mu}$ for any $x\in\mc{X}$ and $\Lambda\in\Theta$. Since $\Norm{\Lambda}_F\leq\Norm{\hat{\Lambda}}_F\leq 2\Norm{\Lambda^*}_F$, we have $\abs{h_\Lambda(x)}\leq 2\Norm{\Lambda^*}_FM^2:=R_1$, which holds when $\mu\leq\frac{4}{9}\Norm{\Lambda^*}_F^2M^4$. Then, by Lemma \ref{lmm:concen_quantity}, we know that $\mathcal{R}_u(\mc{F})\leq\frac{2\Norm{\Lambda^*}_FM^2}{\sqrt{u}}$. Thus, plugging in values of $L_1$, $R_1$ and $\mc{R}_u(\mc{F})$ into Eq. \eqref{equ:general_concen} gives our first concentration inequality.
    
    We can basically follow exactly the same strategy to prove the second concentration inequality. In particular, define $\tilde{h}_\Lambda(x)=P_\Lambda(x)x^\top\Lambda x=P_\Lambda(x)q_\Lambda(x)+P_\Lambda(x)$. Then, with probability at least $1-\frac{\delta}{3}$, it holds simultaneously for any $\Lambda\in\Theta$ that
    \begin{equation}
        \label{equ:general_concen_2}
        \abs{\E_{X\sim\nu}\Mp{\tilde{h}_{\Lambda}(X)}-\frac{1}{u}\sum_{i=1}^{u}\tilde{h}_{\Lambda}(x_i)}\leq 2L_2\cdot\mathcal{R}_u(\mc{F})+R_2\sqrt{\frac{2\log(6/\delta)}{u}},
    \end{equation}
    where $\abs{\tilde{h}_\Lambda(x)}\leq R_2$ for any $x\in\mc{X}$, $\Lambda\in\Theta$ and $\tilde{h}_\Lambda$ is $L_2$-Lipschitz in $q_\Lambda$.
    
    To find $L_2$, we can compute
    $$\frac{\mathrm{d}\tilde{h}_\Lambda}{\mathrm{d}q_\Lambda}=P_\Lambda+\frac{\mathrm{d}P_\Lambda}{\mathrm{d}q_\Lambda}\cdot x^\top\Lambda x.$$
    By Lemma \ref{lmm:P_property}, we know that $\frac{\mathrm{d}P_\Lambda}{\mathrm{d}q_\Lambda}\in\Mp{0, \frac{1}{8\mu}}$. Thus, we have $\abs{\frac{\mathrm{d}\tilde{h}_\Lambda}{\mathrm{d}q_\Lambda}}\leq 1+\frac{\Norm{\Lambda^*}_FM^2}{4\mu}:=L_2$. It is obvious that $\tilde{h}_\Lambda(x)\leq 2\Norm{\Lambda^*}_FM^2:=R_2$. Thus, by plugging the values of $L_2$, $R_2$ and $\mc{R}_u(\mc{F})$ into Eq. \eqref{equ:general_concen_2}, we can obtain the second concentration inequality.
    
    Finally, both concentration inequalities hold simultaneously with probability at least $1-\frac{2\delta}{3}$ by a simple union bound.
\end{proof}

\begin{lemma}
\label{lmm:concen_quantity}
If $\Norm{\hat{\Lambda}}_F\leq 2\Norm{\Lambda^*}_F$, then, we have $\mathcal{R}_u(\mc{F})\leq\sqrt{\frac{\E_{X\sim\nu}\Mp{(X^\top\hat{\Lambda}X)^2}}{u}}\leq\frac{2\Norm{\Lambda^*}_FM^2}{\sqrt{u}}$, where $\mathcal{F}=\Bp{x\mapsto x^\top(s\cdot\hat{\Lambda})x: s\in[0, 1]}$.
\end{lemma}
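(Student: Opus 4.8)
The plan is to exploit that $\mc{F}$ is a one-parameter family that is \emph{linear} in its parameter. Write $g(x) := x^\top\hat{\Lambda}x$, so that every element of $\mc{F}$ has the form $f_s(x) = s\,g(x)$ with $s\in[0,1]$. First I would unfold the definition of the empirical Rademacher complexity and pull the scalar $s$ out of the inner sum:
\begin{align*}
\mc{R}_u(\mc{F}) = \E_{x_1,\dots,x_u,\sigma}\Mp{\sup_{s\in[0,1]} \frac{1}{u}\sum_{i=1}^u \sigma_i\, s\, g(x_i)} = \E\Mp{\frac{1}{u}\max\Bp{0,\ \textstyle\sum_{i=1}^u \sigma_i g(x_i)}},
\end{align*}
using $\sup_{s\in[0,1]} s\,a = \max\{0,a\}$ for $a\in\R$. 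Bounding $\max\{0,a\}\le|a|$ and then applying Cauchy--Schwarz in the form $\E|Z|\le\sqrt{\E Z^2}$ gives
\begin{align*}
\mc{R}_u(\mc{F}) \le \frac{1}{u}\,\E\Mp{\abs{\textstyle\sum_{i=1}^u \sigma_i g(x_i)}} \le \frac{1}{u}\sqrt{\E\Mp{\Sp{\textstyle\sum_{i=1}^u \sigma_i g(x_i)}^2}}.
\end{align*}

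Next I would expand the square. Since the $\sigma_i$ are i.i.d.\ Rademacher and independent of the $x_i$, every cross term vanishes in expectation while $\E[\sigma_i^2]=1$, so $\E\bigl[(\sum_i \sigma_i g(x_i))^2\bigr] = \sum_{i=1}^u \E[g(x_i)^2] = u\,\E_{X\sim\nu}\Mp{(X^\top\hat{\Lambda}X)^2}$. Plugging this back in yields the first claimed inequality $\mc{R}_u(\mc{F}) \le \sqrt{\E_{X\sim\nu}[(X^\top\hat{\Lambda}X)^2]/u}$.

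For the second inequality I would bound the integrand pointwise: for any $x\in\mathrm{supp}(\nu)$ we have $\abs{x^\top\hat{\Lambda}x} \le \Norm{\hat{\Lambda}}_{\mathrm{op}}\Norm{x}_2^2 \le \Norm{\hat{\Lambda}}_F\Norm{x}_2^2 \le 2\Norm{\Lambda^*}_F M^2$, where the last step uses the hypothesis $\Norm{\hat{\Lambda}}_F\le 2\Norm{\Lambda^*}_F$ together with $\Norm{x}_2\le M$. Hence $\E_{X\sim\nu}\Mp{(X^\top\hat{\Lambda}X)^2}\le(2\Norm{\Lambda^*}_F M^2)^2$, and taking the square root closes the argument. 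There is no genuinely hard step here; the only points that need a little care are (i) collapsing the supremum over $s\in[0,1]$ first to $\max\{0,\cdot\}$ and then to an absolute value, and (ii) confirming that the notion of Rademacher complexity invoked in Lemma~\ref{lmm:opt_concentration} matches the one used above — the bound is insensitive to whether one takes the fully expected or the sample-conditional version, since the steps above work verbatim after conditioning on $x_1,\dots,x_u$.
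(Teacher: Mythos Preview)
Your proof is correct and follows essentially the same approach as the paper: collapse the supremum over $s\in[0,1]$ to $\max\{0,\cdot\}$, bound by the absolute value, apply $\E|Z|\le\sqrt{\E Z^2}$, expand using independence of the Rademacher variables, and then bound the integrand pointwise via $\abs{x^\top\hat{\Lambda}x}\le\Norm{\hat{\Lambda}}_F M^2\le 2\Norm{\Lambda^*}_F M^2$. If anything, your write-up is slightly more explicit about the final pointwise bound than the paper's.
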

\begin{proof}
Let $\sigma_1, \dots, \sigma_u$ be i.i.d. Rademacher random variables, which are uniform over $\Bp{-1, +1}$. Let $x_1, \dots, x_u\sim\nu$ be i.i.d. samples. Then, by definition of Rademacher complexity, we have
\begin{align*}
    \mathcal{R}_u(\mc{F})&=\E\Mp{\sup_{q\in\mc{F}}\frac{1}{u}\sum_{i=1}^{u}\sigma_iq(x_i)}\\
    &=\E\Mp{\sup_{s\in[0, 1]}\frac{1}{u}\sum_{i=1}^{u}\sigma_ix_i^\top(s\hat{\Lambda})x_i}\tag{By definition of $\mc{F}$}\\
    &\overset{\text{(i)}}{=}\frac{1}{u}\E\Mp{\mathds{1}\Bp{\sum_{i=1}^{n}\sigma_ix_i^\top\hat{\Lambda}x_i\geq 0}\sum_{i=1}^{n}\sigma_ix_i^\top\hat{\Lambda}x_i}.\\
    &\leq \frac{1}{u}\E\Mp{\abs{\sum_{i=1}^{u}\sigma_ix_i^\top\hat{\Lambda}x_i}}\\
    &\leq\frac{1}{u}\sqrt{\E\Mp{\Sp{\sum_{i=1}^{u}\sigma_ix_i^\top\hat{\Lambda}x_i}^2}}\tag{By Jensen's inequality}\\
    &=\frac{1}{u}\sqrt{\E\Mp{\sum_{i=1}^{u}\Sp{x_i^\top\hat{\Lambda}x_i}^2}}\tag{Since $\sigma_i$'s are i.i.d. and $\E\Mp{\sigma_i}=0$}\\
    &=\sqrt{\frac{\E_{X\sim\nu}\Mp{\Sp{X^\top\hat{\Lambda}X}^2}}{u}}\leq \frac{2\Norm{\Lambda^*}_FM^2}{\sqrt{u}}.
\end{align*}
Here, the equality (i) holds because when $\sum_{i=1}^{n}\sigma_ix_i^\top\hat{\Lambda}x_i<0$, the supremum over $s\in[0, 1]$ will be obtained by taking $s=0$; otherwise, it will be obtained by taking $s=1$.
\end{proof}

\subsubsection{Other Lemmas}
The following lemma basically shows that $f(\Lambda)$ is linear in scalar multiplication.
\begin{lemma}
	\label{lmm:rescaling}
	If $D_E(\hat{\bm{\Lambda}})=\overline{D}_E(\hat{\Lambda})$, with $\hat{\Lambda}=\sum_{y\in\mc{Y}_\ell}\hat{\Lambda}_y$, then, for any $s\geq 0$, it holds that $D_E(s\cdot\hat{\bm{\Lambda}})=\overline{D}_E(s\cdot\hat{\Lambda})$, where $D_E$ and $\overline{D}_E$ are defined in Eq. \eqref{equ:D_empirical}.
\end{lemma}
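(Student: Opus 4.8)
The plan is to notice that the empirical barrier term $\frac{1}{u}\sum_{i=1}^u h_\Lambda(x_i)$ appearing in both $D_E$ and $\overline{D}_E$ depends only on $\Lambda=\sum_{y\in\mc{Y}_\ell}\Lambda_y$, so it is identical in $D_E(\hat{\bm{\Lambda}})$ and $\overline{D}_E(\hat{\Lambda})$ (and likewise at any rescaling). Consequently, the hypothesis $D_E(\hat{\bm{\Lambda}})=\overline{D}_E(\hat{\Lambda})$ is equivalent to the single scalar identity $\sum_{y\in\mc{Y}_\ell} y^\top\hat{\Lambda}_y y = f(\hat{\Lambda})$, i.e. the assignment $(\hat{\Lambda}_y)_{y\in\mc{Y}_\ell}$ attains the maximum that defines $f(\hat{\Lambda})$ in problem~\eqref{equ:opt_semi}. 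So the lemma reduces to a statement purely about the function $f$.

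The key structural fact I would establish is that $f$ is positively homogeneous of degree one: $f(s\Lambda)=s\,f(\Lambda)$ for every $\Lambda\succeq\bm{0}$ and every $s\geq 0$, and moreover any optimal assignment for $f(\Lambda)$ is carried by scaling to an optimal assignment for $f(s\Lambda)$. For $s=0$ this is immediate since $f(\bm{0})=0$ and $\sum_y y^\top(0\cdot\hat{\Lambda}_y)y=0$. For $s>0$, the map $(\Lambda_y)_{y\in\mc{Y}_\ell}\mapsto(s\Lambda_y)_{y\in\mc{Y}_\ell}$ is a bijection between the feasible set of~\eqref{equ:opt_semi} with parameter $\Lambda$ and the feasible set with parameter $s\Lambda$ — the PSD constraints $\Lambda_y\succeq\bm{0}$ and the linear equality $\sum_y\Lambda_y=\Lambda$ are both preserved under positive scaling, in both directions — and it multiplies the objective $\sum_y y^\top\Lambda_y y$ by exactly $s$. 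Taking suprema on both sides yields $f(s\Lambda)=s\,f(\Lambda)$.

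Finally I would expand both sides at $s\hat{\bm{\Lambda}}$, using $\sum_y y^\top(s\hat{\Lambda}_y)y=s\sum_y y^\top\hat{\Lambda}_y y$ together with $f(s\hat{\Lambda})=s\,f(\hat{\Lambda})$ and the identity $\sum_y y^\top\hat{\Lambda}_y y=f(\hat{\Lambda})$ extracted in the first step:
\begin{align*}
D_E(s\hat{\bm{\Lambda}}) &= \frac{1}{u}\sum_{i=1}^u h_{s\hat{\Lambda}}(x_i) + \frac{s}{c_\ell^2}\sum_{y\in\mc{Y}_\ell} y^\top\hat{\Lambda}_y y \\
&= \frac{1}{u}\sum_{i=1}^u h_{s\hat{\Lambda}}(x_i) + \frac{s}{c_\ell^2}f(\hat{\Lambda}) = \overline{D}_E(s\hat{\Lambda}),
\end{align*}
which is exactly the claim. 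There is no genuinely hard step here; the only point requiring care is verifying that the rescaling map preserves feasibility of~\eqref{equ:opt_semi} in \emph{both} directions, so that homogeneity of $f$ is an equality — the one-sided "$\le$" argument used in Lemma~\ref{lmm:f_lambda} for concavity would not by itself suffice.
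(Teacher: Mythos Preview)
Your proposal is correct and takes essentially the same approach as the paper: both reduce the claim to the positive homogeneity $f(s\hat{\Lambda})=s\,f(\hat{\Lambda})$ via the change of variables $\Lambda_y\mapsto s\Lambda_y$ in problem~\eqref{equ:opt_semi}. Your version is slightly more explicit in justifying why the hypothesis collapses to $\sum_y y^\top\hat{\Lambda}_y y=f(\hat{\Lambda})$ and in handling $s=0$, but the core argument is identical.
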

\begin{proof}
	It suffices to show that if $\sum_{y\in\mc{Y}_\ell}y^\top\hat{\Lambda}_y y=f(\hat{\Lambda})$, then $\sum_{y\in\mc{Y}_\ell}y^\top(s\cdot\hat{\Lambda}_y) y=f(s\cdot\hat{\Lambda})$ for any $s>0$. By definition, we have
	$$\begin{array}{rl}
		f(s\cdot\hat{\Lambda})=\max_{\Lambda_y} & \sum_{y\in\mc{Y}_\ell}y^\top\Lambda_{y}y \\
		\text{subject to} & \sum_{y\in\mc{Y}_\ell}\Lambda_{y}=s\cdot\hat{\Lambda}\\
		& \Lambda_{y}\succeq\mathbf{0}, \quad\forall y\in\mc{Y}_\ell.
	\end{array}$$
	For the above optimization problem, we can do a change of variable by setting $\Lambda_{y}'=\frac{1}{s}\cdot\Lambda_{y}\implies\Lambda_{y}=s\cdot\Lambda_{y}'$. Then, we have
	$$\begin{array}{rl}
		f(s\cdot\hat{\Lambda})=\max_{\Lambda_y} & \sum_{y\in\mc{Y}_\ell}y^\top(s\cdot\Lambda_{y}')y \\
		\text{subject to} & \sum_{y\in\mc{Y}_\ell}s\cdot\Lambda'_{y}=s\cdot\hat{\Lambda}\\
		& s\cdot\Lambda_{y}'\succeq\mathbf{0}, \quad\forall y\in\mc{Y}_\ell.
	\end{array}$$
	$$\begin{array}{rl}
		\implies f(s\cdot\hat{\Lambda})=\max_{\Lambda_y} & s\sum_{y\in\mc{Y}_\ell}y^\top\Lambda_{y}'y \\
		\text{subject to} & \sum_{y\in\mc{Y}_\ell}\Lambda'_{y}=\hat{\Lambda}\\
		& \Lambda_{y}'\succeq\mathbf{0}, \quad\forall y\in\mc{Y}_\ell.
	\end{array}$$
	$$\implies f(s\cdot\hat{\Lambda})=s\cdot f(\hat{\Lambda})=s\cdot\sum_{y\in\mc{Y}_\ell}y^\top\Lambda_y y=\sum_{y\in\mc{Y}_\ell}y^\top(s\cdot\hat{\Lambda}_y) y.$$
	Thus, the proof is complete.
\end{proof}

\begin{lemma}
	\label{lmm:region_concave}
	Let $f:\R^d\mapsto\R$ be a concave function with maximizer $x^*$ over the convex set $\mc{C}$. Further, assume that $f$ is $G$-strongly concave with respect to $\ell_2$ norm in region $\mc{S}\cap\mc{C}$, where $\mathcal{S}=\Bp{x:\Norm{x-x^*}_2\leq A}$. If $f(x^*)-f(x)\leq\frac{AG}{2}$ and $c\in\mc{C}$, then $x\in\mathcal{S}$.
\end{lemma}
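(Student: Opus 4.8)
The plan is a proof by contradiction. Suppose $x\in\mc{C}$ satisfies $f(x^*)-f(x)\le\frac{AG}{2}$ but $x\notin\mc{S}$, i.e.\ $\Norm{x-x^*}_2>A$. The idea is to exhibit a point on the segment from $x^*$ to $x$ that, on the one hand, must lose a quadratic-in-$A$ amount of objective value (by strong concavity near the maximizer) and, on the other hand, loses only a small amount (by plain concavity together with the hypothesis), and to extract a contradiction from the two estimates.

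Concretely, I would introduce $\tilde{x}:=x^*+\lambda\,(x-x^*)$ with $\lambda:=A/\Norm{x-x^*}_2\in(0,1)$, so that $\Norm{\tilde{x}-x^*}_2=A$ and $\tilde{x}=(1-\lambda)x^*+\lambda x$. Since $x^*,x\in\mc{C}$ and $\mc{C}$ is convex, $\tilde{x}\in\mc{C}$; and $\Norm{\tilde{x}-x^*}_2=A$ puts $\tilde{x}\in\mc{S}$. The structural point that makes the strong-concavity hypothesis usable is that the entire subsegment $[x^*,\tilde{x}]$ lies in $\mc{S}\cap\mc{C}$: it is contained in the ball $\mc{S}$ because $\mc{S}$ is centered at $x^*$, and in $\mc{C}$ by convexity. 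Now combine two facts about $\tilde{x}$. First, global concavity of $f$ along $[x^*,x]$ gives $f(\tilde{x})\ge(1-\lambda)f(x^*)+\lambda f(x)$, hence $f(x^*)-f(\tilde{x})\le\lambda\big(f(x^*)-f(x)\big)\le\lambda\cdot\tfrac{AG}{2}$ by the hypothesis. Second, $x^*$ maximizes $f$ over $\mc{C}$, hence over the smaller convex set $\mc{S}\cap\mc{C}\ni x^*$; applying the first-order optimality condition $\langle\nabla f(x^*),y-x^*\rangle\le0$ for all $y\in\mc{S}\cap\mc{C}$ together with the $G$-strong-concavity expansion of $f$ around $x^*$ on $\mc{S}\cap\mc{C}$ yields the quadratic-growth bound $f(x^*)-f(\tilde{x})\ge\tfrac{G}{2}\Norm{\tilde{x}-x^*}_2^2=\tfrac{G}{2}A^2$. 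Chaining the two, $\tfrac{G}{2}A^2\le\lambda\cdot\tfrac{AG}{2}$, which together with $\lambda=A/\Norm{x-x^*}_2<1$ forces $\Norm{x-x^*}_2$ below the threshold and contradicts $x\notin\mc{S}$.

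The \textbf{main obstacle} — really the only delicate point — is respecting the locality of the strong-concavity hypothesis: it is available only on $\mc{S}\cap\mc{C}$, so one cannot invoke quadratic growth directly at $x$, which lies outside $\mc{S}$. Retracting along the segment to the boundary point $\tilde{x}$, and using convexity of both $\mc{S}$ and $\mc{C}$ to certify that $\tilde{x}$ and the whole subsegment remain inside the region of strong concavity, is exactly the device that circumvents this; then the strong-concavity estimate is applied only at $\tilde{x}$, while the cheap linear-interpolation estimate carries the information about the far point $x$. The remaining ingredients — the first-order characterization of a maximizer over a convex set, and the standard implication ``$G$-strong concavity plus optimality of $x^*$ $\Rightarrow$ $f(x^*)-f(y)\ge\tfrac{G}{2}\Norm{y-x^*}_2^2$ for $y$ in the region'' — are routine, and I would cite them (e.g.\ \cite{bertsekas2009convex,orabona2019modern}) rather than reprove them.
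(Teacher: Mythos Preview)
Your overall strategy --- argue by contradiction, retract along the segment $[x^*,x]$ to the boundary point $\tilde{x}$ of $\mc{S}$, and compare a concavity upper bound on $f(x^*)-f(\tilde{x})$ with a strong-concavity lower bound --- is exactly the paper's approach. The paper writes $z=\gamma x'+(1-\gamma)x^*$ on $\partial\mc{S}$ and uses the cruder estimate $f(z)>f(x')$ (from concavity and $f(x^*)>f(x')$) where you use the sharper $f(x^*)-f(\tilde{x})\le\lambda\big(f(x^*)-f(x)\big)$, but structurally the arguments are the same.

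There is, however, a genuine gap in your final step. From $\tfrac{G}{2}A^2\le \lambda\cdot\tfrac{AG}{2}$ you obtain $A\le\lambda=A/\Norm{x-x^*}_2$, i.e.\ $\Norm{x-x^*}_2\le 1$. This is \emph{not} a contradiction to $\Norm{x-x^*}_2>A$ unless $A\ge 1$, so the argument as written does not close. The source of the mismatch is that you invoke the standard \emph{quadratic} growth bound $f(x^*)-f(y)\ge\tfrac{G}{2}\Norm{y-x^*}_2^2$, while the lemma's threshold $\tfrac{AG}{2}$ is linear in $A$. The paper's proof instead uses the \emph{linear} inequality $f(x^*)-f(z)\ge\tfrac{G}{2}\Norm{z-x^*}_2$ (stated there as ``property of strong concavity''), which at the boundary point gives $f(x^*)-f(z)\ge\tfrac{AG}{2}$ and, combined with $f(z)>f(x')$, yields the contradiction $\tfrac{AG}{2}\le f(x^*)-f(z)<f(x^*)-f(x')\le\tfrac{AG}{2}$ directly. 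In short: your plan is the right one, but to make the contradiction go through with the threshold as stated you must use the linear lower bound the paper relies on, not the quadratic one; with the quadratic bound the natural threshold would be $\tfrac{GA^2}{2}$ rather than $\tfrac{GA}{2}$.
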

\begin{proof}
	By property of strong concavity, we know that, $f(x^*)-f(x)\geq\frac{G}{2}\Norm{x-x^*}_2$ for any $x\in\mathcal{S}\cap\mc{C}$. Now, suppose $x'$ satisfies $f(x^*)-f(x')\leq\frac{AG}{2}$, $x'\in\mc{C}$ and $x'\notin\mathcal{S}$. Then, we must have $\Norm{x'-x^*}_2>A$.
	
	Let $\gamma\in(0, 1)$ be some number such that $z=\gamma x'+(1-\gamma)x^*$ lies on the boundary of $\mathcal{S}$. By convexity, we also have $z\in\mc{C}$. Then, since $f$ is concave, we have $f(z)\geq\gamma f(x')+(1-\gamma)f(x^*)> f(x')$, where the second inequality is strict because $f$ is strongly concave in a region around $x^*$. Since $f(x^*)-f(x')\leq\frac{AG}{2}$, $f$ is $G$-strongly concave on $\mathcal{S}$ and $z$ lies on the boundary of $\mathcal{S}$, we have
	$$\frac{AG}{2}=\frac{G}{2}\Norm{z-x^*}_2\leq f(x^*)-f(z)< f(x^*)-f(x')\leq\frac{AG}{2}.$$
	This is a contradiction and thus we must have $x'\in\mathcal{S}$.
\end{proof}

The following lemma quantitatively describes how close $\widetilde{\Lambda}$ and $\Lambda^*$ needs to be to ensure an at most $\epsilon$ multiplicative constraint violation.
\begin{lemma}
\label{lmm:opt_guarantees}
Assume $\Norm{x}_2\leq M$ for any $x\in\mathcal{X}$. Let $\Sigma=\E_{X\sim\nu}\Mp{XX^\top}\succ\mathbf{0}$ and $\Lambda^*=\argmax_{\Lambda\succeq\bm{0}}\overline{D}(\Lambda)$. Then, for any $\epsilon>0$, if we have
$$\Norm{\widetilde{\Lambda}-\Lambda^*}_F\leq\frac{8\mu^2\lambda_{\min}(\Sigma)}{3M^2\lambda_{\max}(\Sigma)}\cdot\frac{\epsilon}{1+\epsilon},$$
then it holds that $y^\top\E_{X\sim\nu}\Mp{P_{\widetilde{\Lambda}}(X)XX^\top}^{-1}y\leq (1+\epsilon)c_\ell^2$ for any $y\in\mathcal{Y}_\ell$.
\end{lemma}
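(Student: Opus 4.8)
The plan is to show that replacing the dual optimum $\Lambda^*$ by a nearby $\widetilde{\Lambda}$ shrinks the information matrix $\Sigma_\Lambda := \E_{X\sim\nu}[P_\Lambda(X)XX^\top]$ by at most a factor $(1+\epsilon)^{-1}$ in the positive semidefinite order, and then to invoke primal feasibility of $P_{\Lambda^*}$ to close the argument.

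First I would establish a Lipschitz bound on the selection rule. Since $P_\Lambda(x)$ depends on $\Lambda$ only through $q_\Lambda(x)=x^\top\Lambda x-1$, and Lemma~\ref{lmm:P_property} gives $\tfrac{\mathrm{d}P_\Lambda}{\mathrm{d}q_\Lambda}\in[0,\tfrac{1}{8\mu}]$, the chain rule together with $|x^\top(\widetilde{\Lambda}-\Lambda^*)x|\le\|x\|_2^2\|\widetilde{\Lambda}-\Lambda^*\|_F\le M^2\|\widetilde{\Lambda}-\Lambda^*\|_F$ yields $|P_{\widetilde{\Lambda}}(x)-P_{\Lambda^*}(x)|\le\eta:=\tfrac{M^2}{8\mu}\|\widetilde{\Lambda}-\Lambda^*\|_F$ for every $x\in\mathrm{supp}(\nu)$. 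Testing against an arbitrary vector $v$ then gives $v^\top(\Sigma_{\widetilde{\Lambda}}-\Sigma_{\Lambda^*})v=\E_{X\sim\nu}[(P_{\widetilde{\Lambda}}(X)-P_{\Lambda^*}(X))(v^\top X)^2]\ge-\eta\,v^\top\Sigma v$, i.e. $\Sigma_{\widetilde{\Lambda}}\succeq\Sigma_{\Lambda^*}-\eta\Sigma$.

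Next I would lower-bound $\Sigma_{\Lambda^*}$. Lemma~\ref{lmm:P_property} also gives $P_{\Lambda^*}(x)\ge\mu/3$, hence $\Sigma_{\Lambda^*}\succeq\tfrac{\mu}{3}\Sigma\succeq\tfrac{\mu}{3}\lambda_{\min}(\Sigma)I\succ0$ (using $\Sigma\succ0$). Combined with $\Sigma\preceq\lambda_{\max}(\Sigma)I$, the hypothesis on $\|\widetilde{\Lambda}-\Lambda^*\|_F$ is exactly what is needed to make $\eta\lambda_{\max}(\Sigma)\le\tfrac{\epsilon}{1+\epsilon}\cdot\tfrac{\mu}{3}\lambda_{\min}(\Sigma)\le\tfrac{\epsilon}{1+\epsilon}\lambda_{\min}(\Sigma_{\Lambda^*})$, so $\eta\Sigma\preceq\tfrac{\epsilon}{1+\epsilon}\Sigma_{\Lambda^*}$ and therefore $\Sigma_{\widetilde{\Lambda}}\succeq(1-\tfrac{\epsilon}{1+\epsilon})\Sigma_{\Lambda^*}=\tfrac{1}{1+\epsilon}\Sigma_{\Lambda^*}\succ0$. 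Antitonicity of the matrix inverse then gives $\Sigma_{\widetilde{\Lambda}}^{-1}\preceq(1+\epsilon)\Sigma_{\Lambda^*}^{-1}$, so $y^\top\Sigma_{\widetilde{\Lambda}}^{-1}y\le(1+\epsilon)\,y^\top\Sigma_{\Lambda^*}^{-1}y$ for every $y\in\mc{Y}_\ell$.

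It then remains to check $y^\top\Sigma_{\Lambda^*}^{-1}y\le c_\ell^2$, which is just primal feasibility of $P_{\Lambda^*}$: the barrier program \eqref{equ:opt_barrier_2} is convex and (assuming the problem is strictly feasible, as elsewhere in the paper) satisfies Slater's condition, so strong duality holds and the pointwise minimizer $P_{\Lambda^*}$ of the Lagrangian $\mc{L}(\bm{\Lambda}^*,\cdot)$ --- unique by strict convexity of the barrier --- is primal optimal, hence feasible: $\Sigma_{\Lambda^*}\succeq\tfrac{1}{c_\ell^2}yy^\top$; since $\Sigma_{\Lambda^*}\succ0$, the Schur-complement equivalence of Lemma~\ref{lmm:schur_property} converts this into $y^\top\Sigma_{\Lambda^*}^{-1}y\le c_\ell^2$. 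Chaining the two estimates proves the claim. I expect the only real obstacle to be this last step --- cleanly arguing that the $P$ induced by the dual optimum is genuinely primal feasible, so that $\Sigma_{\Lambda^*}$ is invertible and the Schur equivalence applies; the remaining ingredients are a routine Lipschitz estimate plus one PSD-ordering manipulation, and the constants $\tfrac{1}{8\mu}$ and $\mu/3$ from Lemma~\ref{lmm:P_property} are precisely what produces the stated threshold $\tfrac{8\mu^2\lambda_{\min}(\Sigma)}{3M^2\lambda_{\max}(\Sigma)}$.
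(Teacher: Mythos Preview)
Your proposal is correct and uses the same ingredients as the paper: the Lipschitz bound $\tfrac{\mathrm{d}P_\Lambda}{\mathrm{d}q_\Lambda}\le\tfrac{1}{8\mu}$ and the floor $P_{\Lambda^*}\ge\mu/3$ from Lemma~\ref{lmm:P_property}, together with primal feasibility of $P_{\Lambda^*}$. The only difference is in how the pieces are stitched together. The paper converts the target inequality into the Schur form $\Sigma_{\widetilde\Lambda}\succeq\tfrac{1}{(1+\epsilon)c_\ell^2}yy^\top$ first, then lower-bounds $\lambda_{\min}\bigl(\Sigma_{\Lambda^*}-\tfrac{1}{(1+\epsilon)c_\ell^2}yy^\top\bigr)$ by a two-case argument using both $\Sigma_{\Lambda^*}\succeq\tfrac{1}{c_\ell^2}yy^\top$ and $\Sigma_{\Lambda^*}\succeq\tfrac{\mu}{3}\Sigma$. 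You instead prove the multiplicative PSD sandwich $\Sigma_{\widetilde\Lambda}\succeq(1+\epsilon)^{-1}\Sigma_{\Lambda^*}$ directly (using only the $\tfrac{\mu}{3}\lambda_{\min}(\Sigma)$ floor), invert, and apply primal feasibility at the end. Your route is a little more streamlined and avoids the case split, but both give exactly the stated threshold; on the feasibility of $P_{\Lambda^*}$ you actually say more than the paper does, which simply asserts it.
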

\begin{proof}
    Fix some $\epsilon>0$. First, notice that if we regard $P_\Lambda$ as a function of $q_\Lambda(x)=x^\top\Lambda x-1$, it then holds that
    $$\Norm{\nabla_{\Lambda}P_{\Lambda}(x)}_2=\Norm{\frac{\mathrm{d}P_{\Lambda}}{\mathrm{d}q_{\Lambda}}\nabla_{\Lambda}q_{\Lambda}(x)}_2\leq\abs{\frac{\mathrm{d}P_{\Lambda}}{\mathrm{d}q_{\Lambda}}}\Norm{xx^\top}_2\leq \abs{\frac{\mathrm{d}P_{\Lambda}}{\mathrm{d}q_{\Lambda}}}M^2\leq\frac{M^2}{8\mu},$$
    where we obtain the last inequality by using Lemma \ref{lmm:P_property}. Therefore, for any $x\in\mc{X}$ and $\widetilde{\Lambda}\succeq\bm{0}$, we have $\abs{P_{\widetilde{\Lambda}}(x)-P_{\Lambda^*}(x)}\leq\frac{M^2}{8\mu}\cdot\Norm{\widetilde{\Lambda}-\Lambda^*}_F$ by mean value theorem and Cauchy-Schwartz. inequality.
    
    Therefore, if we have $\Norm{\widetilde{\Lambda}-\Lambda^*}_F\leq\delta$, then 
    $$\abs{P_{\widetilde{\Lambda}}(x)-P_{\Lambda^*}(x)}\leq\frac{M^2\delta}{8\mu}\implies P_{\widetilde{\Lambda}}(x)\geq P_{\Lambda^*}(x)-\frac{M^2\delta}{8\mu}$$
    $$\implies\E_{X\sim\nu}\Mp{P_{\widetilde{\Lambda}}(X)XX^\top}\succeq\E_{X\sim\nu}\Mp{P_{\Lambda^*}(X)XX^\top}-\frac{M^2\delta}{8\mu}\E_{X\sim\nu}\Mp{XX^\top}.$$
    By Lemma \ref{lmm:schur_property}, we know that
    \begin{equation}
        \label{equ:ep_constraint}
        y^\top\E_{X\sim\nu}\Mp{P_{\widetilde{\Lambda}}(X)XX^\top}^{-1}y\leq c_\ell^2(1+\epsilon)\Longleftrightarrow\E_{X\sim\nu}\Mp{P_{\widetilde{\Lambda}}(X)XX^\top}\succeq\frac{yy^\top}{(1+\epsilon)c_\ell^2}.
    \end{equation}
    Let $\Sigma^*=\E_{X\sim\nu}\Mp{P_{\Lambda^*}(X)XX^\top}$. Therefore, to guarantee the condition in Eq. \eqref{equ:ep_constraint}, it is sufficient to guarantee that $\Sigma^*-\frac{M^2\delta}{8\mu}\Sigma\succeq\frac{yy^\top}{(1+\epsilon)c_\ell^2}$, which is equivalent to
    $$w^\top\Sigma^* w-\frac{M^2\delta}{8\mu}w^\top\Sigma w\geq\frac{(w^\top y)^2}{c_\ell^2(1+\epsilon)},\quad\forall\text{unit vector }w\in\R^d$$
    $$\Longleftrightarrow \frac{1}{w^\top\Sigma w}\cdot w^\top\Sp{\Sigma^*-\frac{yy^\top}{(1+\epsilon)c_\ell^2}}w\geq\frac{M^2\delta}{8\mu},\quad\forall\text{unit vector }w\in\R^d.$$
    
    Therefore, it is sufficient to choose $\delta$ such that
    $$\frac{M^2\delta}{8\mu}\leq \frac{1}{\lambda_{\max}(\Sigma)}\cdot \lambda_{\min}\Sp{\Sigma^*-\frac{yy^\top}{c_\ell^2(1+\epsilon)}}\leq\min_{w:\Norm{w}_2=1}\frac{1}{w^\top\Sigma w}\cdot w^\top\Sp{\Sigma^*-\frac{yy^\top}{(1+\epsilon)c_\ell^2}}w.$$
    Since $P_{\Lambda^*}$ satisfies the constraint defined in problem \eqref{equ:opt_barrier_2}, we have $\Sigma^*\succeq\frac{yy^\top}{c_\ell^2}$. Meanwhile, by Lemma \ref{lmm:P_property}, we know that $P_{\Lambda^*}(x)\geq\frac{\mu}{3}$ for any $x\in\mc{X}$, which means that $\Sigma^*\succeq\frac{\mu}{3}\cdot\Sigma$. That is, for any unit vector $w\in\R^d$, we have
    $$w^\top\Sigma^*w\geq\frac{\Sp{w^\top y}^2}{c_\ell^2}\quad\text{and}\quad w^\top\Sigma^*w\geq\frac{\mu}{3}\lambda_{\min}\Sp{\Sigma},$$
    which together implies $w^\top\Sigma^*w\geq\max\Bp{\frac{\mu}{3}\cdot\lambda_{\min}(\Sigma), \frac{\Sp{w^\top y}^2}{c_\ell^2}}$. Therefore, it holds that
    \begin{align*}
        w^\top\Sigma w-\frac{\Sp{w^\top y}^2}{(1+\epsilon)c_\ell^2}&\geq \max\Bp{\frac{\mu}{3}\cdot\lambda_{\min}(\Sigma), \frac{\Sp{w^\top y}^2}{c_\ell^2}}-\frac{\Sp{w^\top y}^2}{(1+\epsilon)c_\ell^2}\\
        &=\max\Bp{\frac{\mu}{3}\cdot\lambda_{\min}(\Sigma)-\frac{\Sp{w^\top y}^2}{(1+\epsilon)c_\ell^2}, \frac{\epsilon\Sp{w^\top y}^2}{(1+\epsilon)c_\ell^2}}\\
        &\geq\frac{\epsilon\mu}{3(1+\epsilon)}\cdot\lambda_{\min}\Sp{\Sigma}\\
        \implies\lambda_{\min}\Sp{\Sigma^*-\frac{yy^\top}{c_\ell^2(1+\epsilon)}}&\geq\frac{\epsilon\mu}{3(1+\epsilon)}\cdot\lambda_{\min}\Sp{\Sigma}.
    \end{align*}
    
    Therefore, to guarantee the condition in Eq. \eqref{equ:ep_constraint}, it is sufficient to have
    $$\frac{M^2\delta}{8\mu}=\frac{\epsilon\mu\lambda_{\min}(\Sigma)}{3(1+\epsilon)\lambda_{\max}(\Sigma)}\implies\mu=\frac{8\mu^2\lambda_{\min}(\Sigma)}{3M^2\lambda_{\max}(\Sigma)}\cdot\frac{\epsilon}{1+\epsilon},$$
    Thus, the proof is complete.
\end{proof}

The following lemma is a result of standard Schur complement technique.
\begin{lemma}
\label{lmm:schur_property}
If $\E_{X\sim\nu}\Mp{P(X)XX^\top}$ is invertible and $c_\ell>0$, then 
$$y^\top \E_{X\sim\nu}\Mp{P(X)XX^\top}^{-1}y\leq c_\ell^2\Longleftrightarrow \E_{X\sim\nu}\Mp{P(X)XX^\top}\succeq\frac{yy^\top}{c_\ell^2}.$$
\end{lemma}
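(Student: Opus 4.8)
The plan is to give a self-contained elementary proof via the Cauchy--Schwarz inequality in the inner product induced by $\Sigma := \E_{X\sim\nu}\Mp{P(X)XX^\top}$, which is positive definite since it is positive semidefinite and, by hypothesis, invertible. (Equivalently, one could invoke the standard two-sided Schur complement characterization of positive semidefiniteness of the bordered matrix $\begin{bmatrix}\Sigma & y\\ y^\top & c_\ell^2\end{bmatrix}$, whose name the lemma bears, but the direct argument avoids any external machinery.) Throughout, note that $\Sigma \succeq \tfrac{1}{c_\ell^2}yy^\top$ means, by definition, that $w^\top \Sigma w \geq \tfrac{1}{c_\ell^2}(w^\top y)^2$ for every $w \in \R^d$.

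For the forward direction I would assume $y^\top \Sigma^{-1} y \le c_\ell^2$ and fix an arbitrary $w \in \R^d$. Writing $w^\top y = w^\top \Sigma (\Sigma^{-1} y) = \langle w, \Sigma^{-1}y\rangle_\Sigma$, where $\langle a,b\rangle_\Sigma := a^\top \Sigma b$ is a genuine inner product because $\Sigma \succ 0$, Cauchy--Schwarz gives $(w^\top y)^2 \le (w^\top \Sigma w)\,\big((\Sigma^{-1}y)^\top \Sigma (\Sigma^{-1}y)\big) = (w^\top \Sigma w)(y^\top \Sigma^{-1} y) \le c_\ell^2\,(w^\top \Sigma w)$. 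Rearranging yields $w^\top \Sigma w \ge \tfrac{1}{c_\ell^2}(w^\top y)^2$, and since $w$ was arbitrary this is precisely $\Sigma \succeq \tfrac{1}{c_\ell^2}yy^\top$.

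For the reverse direction I would assume $\Sigma \succeq \tfrac{1}{c_\ell^2}yy^\top$ and substitute the specific vector $w = \Sigma^{-1}y$ into the inequality $w^\top \Sigma w \ge \tfrac{1}{c_\ell^2}(w^\top y)^2$, which gives $y^\top \Sigma^{-1}y \ge \tfrac{1}{c_\ell^2}(y^\top \Sigma^{-1}y)^2$. If $y^\top\Sigma^{-1}y = 0$ (equivalently $y=0$) the claimed bound $y^\top\Sigma^{-1}y \le c_\ell^2$ holds trivially; otherwise divide through by the positive quantity $y^\top \Sigma^{-1}y$ to conclude $1 \ge \tfrac{1}{c_\ell^2}y^\top\Sigma^{-1}y$, i.e. $y^\top \Sigma^{-1}y \le c_\ell^2$.

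There is essentially no hard step: the argument is a two-line use of Cauchy--Schwarz once one observes that invertibility of $\Sigma$ promotes the semidefinite form $a^\top\Sigma b$ to a bona fide inner product. The only points requiring a word of care are the degenerate case $y=0$ in the reverse direction, handled above, and the use of $c_\ell>0$ both to divide and to ensure $\tfrac{1}{c_\ell^2}yy^\top$ is well defined.
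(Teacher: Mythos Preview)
Your proof is correct and complete. It differs in presentation from the paper's argument, which proceeds via the classical Schur complement route: the paper forms the bordered block matrix $\begin{bmatrix}\Sigma & y\\ y^\top & c_\ell^2\end{bmatrix}$, writes its quadratic form $r(u,a)=u^\top\Sigma u+2au^\top y+c_\ell^2 a^2$, and alternately minimizes over $u$ (giving $u^*=-a\Sigma^{-1}y$ and residual $a^2(c_\ell^2-y^\top\Sigma^{-1}y)$) and over $a$ (giving residual $u^\top\Sigma u-(u^\top y)^2/c_\ell^2$) to read off both directions from nonnegativity of the form. Your Cauchy--Schwarz argument is the same computation stripped of the block-matrix scaffolding: your test vector $w=\Sigma^{-1}y$ in the reverse direction is exactly the paper's minimizer $u^*$, and the forward Cauchy--Schwarz step is what the partial minimization over $a$ accomplishes. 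Your version is shorter and avoids introducing the auxiliary $(d{+}1)\times(d{+}1)$ object; the paper's version makes explicit why the lemma bears the Schur-complement name and generalizes more transparently to higher-rank perturbations.
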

\begin{proof}
For simplicity, let $A=E_{X\sim\nu}\Mp{P(X)XX^\top}\succ\bm{0}$. Then, we consider the block matrix $\matenv{A & y\\ y^\top & c_\ell^2}\in\R^{(d+1)\times(d+1)}$. Let $\matenv{u & a}^\top\in\R^{d+1}$ with $u\in\R^d$ be some vector.

Now, for one direction, suppose $y^\top A^{-1}y\leq c_\ell^2$ holds. Consider
$$\matenv{u & a}\matenv{A & y\\ y^\top & c_\ell^2}\matenv{u \\ a}=u^\top Au+2au^\top y+2c_\ell^2a^2:=r(u, a).$$
If we minimize $r(u, a)$ over $u$, which means to treat $a$ as fixed, we can get (by taking gradient and setting it to zero)
$$u^*=-aA^{-1}y\implies r(u^*, a)=a^2(c_\ell^2-y^\top A^{-1}y).$$
Since $y^\top A^{-1}y\leq c_\ell^2$, we know that $r(u^*, a)\geq 0$, which means $r(u, a)\geq 0$ for any $\matenv{u & a}^\top\in\R^{d+1}$.

Then, if we minimize $r(u, a)$ over $a$, we can get
$$a^*=-\frac{u^\top y}{c_\ell^2}\implies r(u, a^*)=u^\top Au-\frac{\Sp{u^\top y}^2}{c_\ell^2}.$$
Since $r(u, a)\geq 0$ for any $\matenv{u & a}^\top\in\R^{d+1}$, we know that $u^\top Au-\frac{\Sp{u^\top y}^2}{c_\ell^2}\geq 0$ for any $u\in\R^d$. That is, we have $A\succeq\frac{yy^\top}{c_\ell^2}$.

The other direction simply takes the above calculation in a reversed way and thus the proof is complete.
\end{proof}

\subsubsection{Properties of $P_\Lambda$}
A visualization of $P_\Lambda$ is given in Figure \ref{fig:visual_P}.
\begin{figure}[ht]
    \centering
    \includegraphics[width=\linewidth]{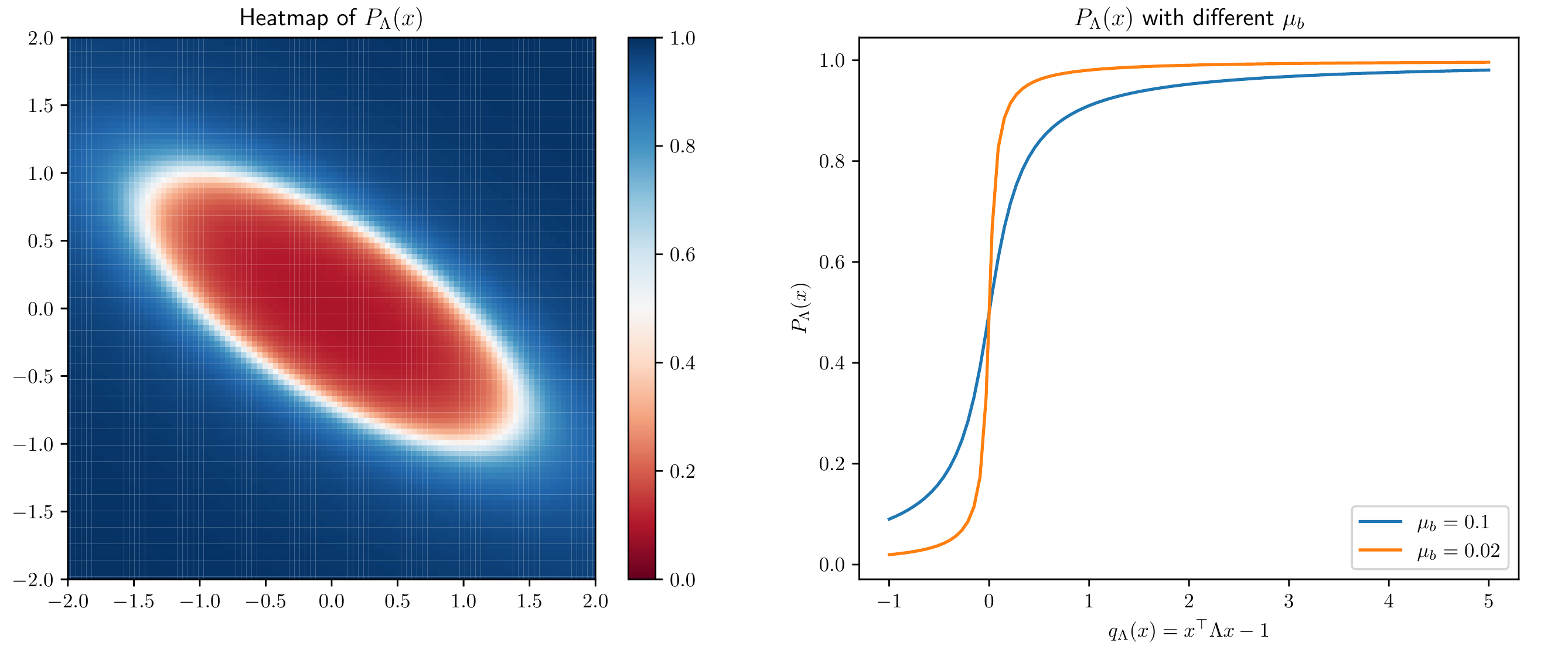}
    \caption{(left) A heatmap of some $P_{\Lambda}$ when problem dimension is $d=2$, which shows that $P_{\Lambda}$ is approximately an 0-1 threshold rule characterized by an ellipsoid. (right) A plot of $P_\Lambda$ as a function of $q_\Lambda(x)=x^\top\Lambda x-1$, which shows that the change of $P_\Lambda$ near the boundary of ellipsoid is sharper when the barrier weight $\mu$ is smaller.}
    \label{fig:visual_P}
\end{figure}

\begin{lemma}
\label{lmm:P_property}
The function $P_{\Lambda}(x)$ defined in \eqref{equ:P_sol}, if regarding as a function of $q_{\Lambda}(x)=x^\top\Lambda x-1\geq -1$, satisfies
\begin{itemize}
    \item $\lim_{q_\Lambda\rightarrow 0}P_\Lambda=\frac{1}{2}$ for any $\mu\in(0, 1)$
    
    \item When $q_\Lambda=-1$, $P_\Lambda=\frac{1}{2}+\mu-\frac{\sqrt{1+4\mu^2}}{2}\geq\frac{\mu}{3}$ and $P_\Lambda-\mu(\log(1-P_\Lambda)+\log(P_\Lambda))\leq 2\sqrt{\mu}$ for any $\mu\in(0, 1)$.

    \item $\frac{\mathrm{d}P_\Lambda}{\mathrm{d}q_\Lambda}=\frac{\mu\sqrt{q_\Lambda^2+4\mu^2}-2\mu^2}{q_\Lambda^2\sqrt{q_\Lambda^2+4\mu^2}}$ decreases as $q_\Lambda^2$ increases. Further, $\frac{\mathrm{d}P_\Lambda}{\mathrm{d}q_\Lambda}\in[0, \frac{1}{8\mu}]$. Thus, $P_\Lambda$ increases monotonically as $q_\Lambda$ increases and $P_\Lambda(x)\geq \frac{\mu}{3}$ for any $x\in\mc{X}$ and $\Lambda\succeq\bm{0}$.
    
    \item $\frac{\mathrm{d}P_\Lambda}{\mathrm{d}q_\Lambda}\vert_{q_\Lambda=\pm 1}\geq\frac{\mu}{10}$ and $\frac{\mathrm{d}P_\Lambda}{\mathrm{d}q_\Lambda}\geq\frac{\mu}{2q_\Lambda^2}$ when $q_\Lambda^2\geq 12\mu^2$.
\end{itemize}
\end{lemma}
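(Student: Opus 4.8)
Write $\mu=\mu_b$ and, following the lemma, treat everything as a function of $q=q_\Lambda(x)=x^\top\Lambda x-1\ge-1$. The plan begins with a purely algebraic simplification of~\eqref{equ:P_sol}: since $(2\mu-q)^2+4\mu q=q^2+4\mu^2$, we have $P_\Lambda=\tfrac12-\tfrac{\mu}{q}+\tfrac{\sqrt{q^2+4\mu^2}}{2q}=\tfrac12+\tfrac{\sqrt{q^2+4\mu^2}-2\mu}{2q}$, and rationalizing the numerator via $\sqrt{q^2+4\mu^2}-2\mu=q^2/(\sqrt{q^2+4\mu^2}+2\mu)$ gives the transparent form
\[
P_\Lambda=\frac12+\frac{q}{2\bigl(\sqrt{q^2+4\mu^2}+2\mu\bigr)} .
\]
From this the first bullet ($q\to0\Rightarrow P_\Lambda\to\tfrac12$) is immediate. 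Differentiating and simplifying (again rationalizing) yields the closed form
\[
\frac{dP_\Lambda}{dq}=\frac{\mu}{q^2}-\frac{2\mu^2}{q^2\sqrt{q^2+4\mu^2}}=\frac{\mu}{\sqrt{q^2+4\mu^2}\,\bigl(\sqrt{q^2+4\mu^2}+2\mu\bigr)} ,
\]
which I would use to read off the third bullet: written this way the derivative is manifestly nonnegative and decreasing in $q^2$, equals $\tfrac1{8\mu}$ at $q=0$, and tends to $0$ as $q^2\to\infty$, so its range lies in $[0,\tfrac1{8\mu}]$ and $P_\Lambda$ is monotone increasing in $q$; since $q\ge-1$, $P_\Lambda(x)\ge P_\Lambda|_{q=-1}$, so the bound $P_\Lambda\ge\mu/3$ reduces to the value at $q=-1$.

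Plugging in $q=-1$ gives $P_\Lambda|_{q=-1}=\tfrac12+\mu-\tfrac12\sqrt{1+4\mu^2}$, and squaring the equivalent inequality $1+\tfrac{4\mu}{3}\ge\sqrt{1+4\mu^2}$ reduces $P_\Lambda|_{q=-1}\ge\mu/3$ to $\mu\le\tfrac65$, which holds on $(0,1)$. The fourth bullet is likewise a direct consequence of the derivative formula: at $q^2=1$ the denominator equals $(1+4\mu^2)+2\mu\sqrt{1+4\mu^2}\le5+2\sqrt5<10$ for $\mu\in(0,1)$, so $\tfrac{dP_\Lambda}{dq}\big|_{q=\pm1}\ge\mu/10$; and $\tfrac{dP_\Lambda}{dq}\ge\tfrac{\mu}{2q^2}$ is equivalent to $(q^2+4\mu^2)+2\mu\sqrt{q^2+4\mu^2}\le2q^2$, which for $q^2\ge12\mu^2$ follows from $q^2+4\mu^2\le\tfrac43q^2$ together with $2\mu\sqrt{q^2+4\mu^2}\le2\mu\cdot\tfrac{2|q|}{\sqrt3}\le\tfrac23q^2$ (the last step using $\mu\le|q|/(2\sqrt3)$), and $\tfrac43q^2+\tfrac23q^2=2q^2$.

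The one delicate piece — and the main obstacle — is the remaining estimate in the second bullet, $h_0:=P_\Lambda|_{q=-1}-\mu\bigl(\log(1-P_\Lambda|_{q=-1})+\log P_\Lambda|_{q=-1}\bigr)\le2\sqrt\mu$. The plan is to use the exact factorization $P_\Lambda(1-P_\Lambda)\big|_{q=-1}=\mu\bigl(\sqrt{1+4\mu^2}-2\mu\bigr)=\mu/\bigl(\sqrt{1+4\mu^2}+2\mu\bigr)$, which rewrites $h_0=P_\Lambda|_{q=-1}+\mu\log(1/\mu)+\mu\log\bigl(\sqrt{1+4\mu^2}+2\mu\bigr)$. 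One checks that $\mu\mapsto\log(\sqrt{1+4\mu^2}+2\mu)$ is concave with derivative $2$ at $\mu=0$, hence is $\le2\mu$, so $h_0\le P_\Lambda|_{q=-1}+2\mu^2+\mu\log(1/\mu)$. For $\mu\le\tfrac12$ this finishes using $P_\Lambda|_{q=-1}\le\mu$ and the elementary fact that $t\mapsto t(1-\log t)$ is increasing on $(0,1)$, evaluated at $t=\sqrt\mu$; for $\mu\in(\tfrac12,1)$ one first sharpens the upper bound on $P_\Lambda|_{q=-1}$ (using $\sqrt{1+4\mu^2}=2\mu\sqrt{1+1/(4\mu^2)}\ge 2\mu+\tfrac1{4\mu}-\tfrac1{64\mu^3}$) and bounds $\log(\sqrt{1+4\mu^2}+2\mu)\le\log(\sqrt5+2)$, after which the claim again reduces to a scalar inequality on $(\tfrac12,1)$ that is checked directly. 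All of this is elementary, but the constant $2$ is close to tight near $\mu=1$ (the left-hand side is $\approx1.83$ there), so the logarithmic estimate genuinely requires the case split; everything else is routine algebra once the simplified forms above are in hand.
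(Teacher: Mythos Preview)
Your proof is correct, and for most of the lemma it is actually cleaner than the paper's. The key move you make that the paper does not is to rationalize $P_\Lambda$ into the form $P_\Lambda=\tfrac12+\tfrac{q}{2(\sqrt{q^2+4\mu^2}+2\mu)}$ and correspondingly $\tfrac{dP_\Lambda}{dq}=\tfrac{\mu}{\sqrt{q^2+4\mu^2}(\sqrt{q^2+4\mu^2}+2\mu)}$. From this second expression, the monotonicity of the derivative in $q^2$, its range $[0,\tfrac1{8\mu}]$, and both estimates in the fourth bullet fall out by inspection; the paper instead keeps the form $\tfrac{\mu\sqrt{q^2+4\mu^2}-2\mu^2}{q^2\sqrt{q^2+4\mu^2}}$ and has to differentiate once more in $z=q^2$ and argue sign to get monotonicity. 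Likewise, you dispatch $P_0\ge\mu/3$ by squaring to the equivalent $\mu\le6/5$, whereas the paper studies $\ell(\mu)=P_0-\mu/3$ via a sign-change argument on $\ell'$.

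The one place where the routes really diverge is the bound $h_0\le2\sqrt\mu$. You exploit the exact factorization $P_0(1-P_0)=\mu/(\sqrt{1+4\mu^2}+2\mu)$ to rewrite $h_0=P_0+\mu\log(1/\mu)+\mu\log(\sqrt{1+4\mu^2}+2\mu)$, bound the last logarithm by $2\mu$ via concavity, and then finish with a case split at $\mu=\tfrac12$ (the $\mu\le\tfrac12$ case is slick via $t(1-\log t)\le1$ at $t=\sqrt\mu$; the $\mu>\tfrac12$ case needs a sharpened bound on $P_0$ and a numerical check). The paper instead sets $\tilde\ell(\mu)=2\sqrt\mu-h_0$ and uses that $P_0$ satisfies the implicit equation~\eqref{equ:P_equ} at $q=-1$ to get the clean derivative $\tilde\ell'(\mu)=\tfrac1{\sqrt\mu}+\log(P_0(1-P_0))$; it then shows $\tilde\ell''\le0$ on $(0,1)$ so that $\tilde\ell$ is first increasing then decreasing, and concludes from $\tilde\ell(0^+)=0$, $\tilde\ell(1)>0$. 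The paper's route is more uniform (no case split), while yours avoids the somewhat involved second-derivative computation. Both are valid; your ``checked directly'' on $(\tfrac12,1)$ is the only place where more detail would be warranted in a final write-up.
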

\begin{proof}
For simplicity, we will drop the subscript $\Lambda$ and just treat $P$ as a function of $q$. That is, we have
$$P(q)=\frac{1}{2}-\frac{\mu}{q}+\frac{\sqrt{\Sp{2\mu-q}^2+4\mu q}}{2q}.$$
We prove each bullet point separately.
\begin{itemize}
    \item Since $P(q)$ also satisfies Eq. \eqref{equ:P_equ}, which in simpler form is $\frac{\mu}{1-P(q)}-\frac{\mu}{P(q)}=q$, we can easily see that $P(q)=\frac{1}{2}$ satisfies this equation when $q=0$.
    
    \item By direction computation, we can get $P(-1)=\frac{1}{2}+\mu-\frac{\sqrt{1+4\mu^2}}{2}$. To show this is greater than $\frac{\mu}{3}$ for any $\mu\in[0, 1]$, consider $\ell(\mu)=P(-1)-\frac{\mu}{3}$. It is easy to check that $\ell(0)=0$ and $\ell(1)>0$. Then, since $\ell'(\mu)=\frac{2}{3}-\frac{2\mu}{\sqrt{1+4\mu^4}}$ is initially greater than 0 and then smaller than 0, we know $\ell(\mu)$ first increases and then decreases on $[0, 1]$. Thus, $\ell(\mu)\geq 0$ on $[0, 1]$ and thus $P(-1)\geq\frac{\mu}{3}$ for any $\mu\in[0, 1]$.
    
    For the second part, define $\tilde{\ell}(\mu)=2\sqrt{\mu}-P(-1)+\mu\Sp{\log(1-P(-1))+\log(P(-1))}$. Then, by utilizing the fact that $P$ satisfies Eq. \eqref{equ:P_equ}, we can compute its derivative and get $\frac{\mathrm{d}\tilde{\ell}}{\mathrm{d}\mu}=\frac{1}{\sqrt{\mu}}+\log(1-P(-1))+\log(P(-1))$. We can check that on the domain $(0, 1)$, we have $\frac{\mathrm{d}^2\tilde{\ell}}{\mathrm{d}\mu^2}=-\frac{1}{2\mu^{3/2}}+\frac{1}{\mu}-\frac{2}{\sqrt{1+4\mu^2}}\cdot\frac{2\sqrt{\mu\Sp{1+4\mu^2}}-4\mu^{3/2}-\sqrt{1+4\mu^2}}{2\mu^{3/2}\sqrt{1+4\mu^2}}\leq 0$ on $(0, 1)$, which means that $\frac{\mathrm{d}\tilde{\ell}}{\mathrm{d}\mu}$ is monotonically decreasing. To see why the second derivative is smaller than 0, we can compute
    $$\Sp{4\mu^{3/2}+\sqrt{1+4\mu^2}}-4\mu\Sp{1+4\mu^2}=\Sp{1-2\mu}^2+8\mu^{3/2}\sqrt{1+4\mu^2}\geq 0.$$
    Thus, $\frac{\mathrm{d}\tilde{\ell}}{\mathrm{d}\mu}$ is initially greater than 0 and then smaller than 0 on $(0, 1)$. It is easy to verify that $\lim_{\mu\rightarrow 0}\tilde{\ell}=0$ and $\tilde{\ell}(1)>0$. Therefore, we have $\tilde{\ell}(\mu)\geq 0$ for any $\mu\in(0, 1)$.
    
    \item We can get $\frac{\mathrm{d}P}{\mathrm{d}q}=\frac{\mu\sqrt{q^2+4\mu^2}-2\mu^2}{q^2\sqrt{q^2+4\mu^2}}$ by direct computation. To show it is decreasing as $q^2$ increasing, we consider $\tilde{f}(z)=\frac{\mu\sqrt{z+4\mu^2}-2\mu^2}{z\sqrt{z+4\mu^2}}$ and it is sufficient to show that $\frac{\mathrm{d}\tilde{f}}{\mathrm{d}z}<0$ for any $z>0$. Again by direct computation, we have
    $$\frac{\mathrm{d}\tilde{f}}{\mathrm{d}z}=\frac{\mu\Sp{8\mu^3+3\mu z-\Sp{z+4\mu^2}^{3/2}}}{z^2\Sp{z+4\mu^2}^{3/2}}.$$
    By direct computation, We can show that $\Sp{z+4\mu^2}^3-\Sp{8\mu^3+3\mu z}^2=z^3+3z^2\mu^2>0$ for any $z>0$ and $\mu\in[0, 1]$. Thus, $\frac{\mathrm{d}\tilde{f}}{\mathrm{d}z}<0$ and thus $\frac{\mathrm{d}P}{\mathrm{d}q}$ is decreasing as $q^2$ increases.
    
    It is obvious that $\frac{\mathrm{d}P}{\mathrm{d}q}\geq 0$ for any $q^2\geq 0$ and $\mu\in[0, 1]$ since we always have $\mu\sqrt{q^2+4\mu^2}\geq 2\mu^2$. Thus, the maximum value could potentially happen is when $q^2\rightarrow 0$, which can be evaluated by using L'Hospital's rule. A direct computation gives us $\lim_{q^2\rightarrow 0}\frac{\mathrm{d}P}{\mathrm{d}q}=\frac{1}{8\mu}$. Thus, we can conclude that $\frac{\mathrm{d}P}{\mathrm{d}q}\in\Mp{0, \frac{1}{8\mu}}$. Therefore, $P$ increases monotonically as $q$ increases, which implies that $P_\Lambda(x)\geq\frac{\mu}{3}$ for any $x\in\mc{X}$ and $\Lambda$.
    
    \item By direct computation, we have $\frac{\mathrm{d}P_\Lambda}{\mathrm{d}q_\Lambda}\vert_{q_\Lambda=\pm 1}=\mu\Sp{1-\frac{2\mu}{\sqrt{1+4\mu^2}}}\geq\mu\Sp{1-\frac{2}{\sqrt{5}}}\geq\frac{\mu}{10}$ for any $\mu\in[0, 1]$. The reason is that we can easily see $\frac{2\mu}{\sqrt{1+4\mu^2}}$ is increasing in $\mu$.
    
    Finally, notice that when $2\mu\leq\frac{1}{2}\sqrt{q^2+4\mu^2}$, which is equivalent to $q^2\geq 12\mu^2$, we have
    $$\frac{\mathrm{d}P}{\mathrm{d}q}=\frac{\mu\sqrt{q^2+4\mu^2}-2\mu^2}{q^2\sqrt{q^2+4\mu^2}}\geq\frac{\mu\sqrt{q^2+4\mu^2}-\frac{\mu}{2}\sqrt{q^2+4\mu^2}}{q^2\sqrt{q^2+4\mu^2}}=\frac{\mu}{2q^2}.$$
\end{itemize}
Thus, the proof is complete.
\end{proof}

\subsection{An Alternative Approach to \textsc{OptimizeDesign}}
Based on the analysis in Section \ref{sec:opt_proof}, we know that maximizing $\overline{D}(\cdot)$ is equivalent to maximizing $D(\cdot)$. Therefore, as an alternative to Algorithm \ref{algo:solve_sgd}, which maximizes $D(\cdot)$ through stochastic gradient ascent, it is natural to have an algorithm that directly maximizes $\overline{D}(\cdot)$. Here, we will consider subgradient ascent.

Recall that $\overline{D}:\mathbb{S}^{d}_+\mapsto\R$ is defined as
$$\overline{D}(\Lambda)=\E_{X\sim\nu}\Mp{P_{\Lambda}(X)-\mu\Sp{\log(1-P_{\Lambda}(X))+\log(P_{\Lambda}(X))}-P_{\Lambda}(X)X^\top\Lambda X}+\frac{1}{c_\ell^2}\cdot f(\Lambda),$$
where $f(\Lambda)$ is defined in problem \eqref{equ:opt_semi}. The subgradient of $\overline{D}(\Lambda)$ is
\begin{align*}
    \partial\overline{D}(\Lambda)&=\E_{X\sim\nu}\Mp{\Sp{1\!+\!\frac{\mu}{1-P_{\Lambda}(x)}\!-\!\frac{\mu}{P_{\Lambda}(X)}\!-\!X^\top\Lambda X}\nabla P_{\Lambda}(X)\!-\!P_{\Lambda}(X)XX^\top}\!+\!\frac{\partial f(\Lambda)}{c_\ell^2}\tag{The first term is differentiable}\\
    &=\frac{\partial f(\Lambda)}{c_\ell^2}-\E_{X\sim\nu}\Mp{P_{\Lambda}(X)XX^\top}.\tag{Since $P_{\Lambda}(X)$ solves Eq. \eqref{equ:P_equ}}
\end{align*}

Therefore, to run subgradient ascent, we only need to find an element in $\partial f(\Lambda)$, which can be obtained by solving the following optimization problem as claimed by Lemma \ref{lemma:partial_f}.
\begin{equation}
    \label{equ:opt_semi2}
    \begin{array}{rl}
        \min_{\Gamma} & \inner{\Gamma, \Lambda} \\
        \text{subject to} & \Gamma\succeq yy^\top,\quad\forall y\in\mc{Y}_\ell,\\
        & \Gamma\preceq 2\sum_{y\in\mc{Y}_\ell}yy^\top.
    \end{array}
\end{equation}

As a result, we have Algorithm \ref{algo:solve_sgd2} as an alternative to solve \textsc{OptimizeDesign}. Compared to Algorithm \ref{algo:solve_sgd}, which needs to maintain $\abs{\mc{Y}_\ell}d^2$ number of objective variables, Algorithm \ref{algo:solve_sgd2} only has $d^2$ variables. However, each iteration of Algorithm \ref{algo:solve_sgd2} is computationally more intensive since finding a subgradient needs to solve the problem \eqref{equ:opt_semi2}.

\begin{algorithm}[ht]
\caption{Projected Stochastic Subgradient Ascent to Solve \textsc{OptimizeDesign}}
\label{algo:solve_sgd2}
\begin{algorithmic}[1]
\STATE{\textbf{Input:} Number of iterations $K$; number of samples $u$; barrier weight $\mu_b \in(0, 1)$}
\STATE{Initialize $\hat{\Lambda}^{(0)}=\mathbf{0}$}
\FOR{$k=0, 1, 2, \dots, K-1$}
\STATE{Sample $x_k\sim\nu$}
\STATE{Solve problem \eqref{equ:opt_semi2} with current $\hat{\Lambda}^{(k)}$ to get $\Gamma^{(k)}$}
\STATE{Set $g_{k}=\frac{\Gamma^{(k)}}{c_\ell^2}-P_{\hat{\Lambda}^{(k)}}(x_k)x_kx_k^\top$}
\STATE{Set $\hat{\Lambda}^{(k+1)}\leftarrow \hat{\Lambda}^{(k)}+\eta_kg_{k}$, where $\eta_k=\frac{1}{\sqrt{2\sum_{s=1}^k\Norm{g_{s}}_2^2}}$}
\STATE{Update $\hat{\Lambda}^{(k+1)}\leftarrow\Pi_{\mathbb{S}^d_{+}}(\hat{\Lambda}^{(k+1)})$, a projection to the set of $d\times d$ PSD matrices}
\ENDFOR
\STATE{Let $\hat{\Lambda}=\frac{1}{K}\sum_{k=1}^{K}\hat{\Lambda}^{(k)}$}
\STATE{Find $s^*\leftarrow\argmax_{s\in[0, 1]}\overline{D}_{E}(s\cdot\hat{\Lambda})$, where $\overline{D}_{E}$ is the empirical version of $\overline{D}$, evaluated using $u$ i.i.d. samples}\label{line:re-scaling_alt}
\STATE{\textbf{return} $\widetilde{\Lambda}=s^*\cdot\hat{\Lambda}$}
\end{algorithmic}
\end{algorithm}

A result similar to Theorem \ref{theo:opt_detail} can also be obtained for Algorithm \ref{algo:solve_sgd2}, which is given in Theorem \ref{theo:opt_alt}. The bounds are almost identical except that the old lower bound for $K$ depends on $\abs{\mc{Y}_\ell}^3$ while the new one depends on $\abs{\mc{Y}_\ell}$. Steps identical to the proof of Theorem \ref{theo:opt_detail} will be skipped in the proof of Theorem \ref{theo:opt_alt}.

\begin{theorem}
\label{theo:opt_alt}
Let $\Lambda^*\in\argmax_{\Lambda\succeq\bm{0}}\overline{D}(\Lambda)$ and take other settings the same as that in Theorem \ref{theo:opt_detail}.

Then, $\Lambda^*$ is unique. Further, for any $\epsilon>0$ and $\delta>0$, suppose it holds that
\begin{align*}
    \mu&\leq \min\Bp{ \sqrt{\frac{3\kappa(\Sigma)\Norm{\Lambda^*}_F M^2}{8}\cdot\frac{1+\epsilon}{\epsilon}}, \frac{4}{9}\Norm{\Lambda^*}_F^2M^4, \frac{1}{2\sqrt{3}}}\\
    K&\geq\frac{288\kappa(\Sigma)^2\Norm{\Lambda^*}_F^4M^4(M^4+4\abs{\mc{Y}_\ell}C_\ell^2)\cdot\Sp{2\Norm{\Lambda^*}_FM^2+1}^4\log(6/\delta)}{\omega^2\mu^6}\cdot\Sp{\frac{1+\epsilon}{\epsilon}}^2\\
    u&\geq \frac{576\kappa(\Sigma)^2\Norm{\Lambda^*}_F^2M^8\cdot\Sp{2\Norm{\Lambda^*}_FM^2+1}^4\log(6/\delta)}{\omega^2\mu^6}\cdot\Sp{\frac{1+\epsilon}{\epsilon}}^2.
\end{align*}

Then, with probability at least $1-\delta$, Algorithm \ref{algo:solve_sgd} will output  $\widetilde{\Lambda}$ that satisfies
\begin{itemize}
    \item $y^\top \E_{X\sim\nu}\Mp{P_{\widetilde{\Lambda}}(X)XX^\top}^{-1}y\leq (1+\epsilon)c_\ell^2,\quad\forall y\in\mc{Y}_\ell$.
    
    \item $\E_{X\sim\nu}\Mp{P_{\widetilde{\Lambda}}(X)}\leq \E_{X\sim\nu}\Mp{\widetilde{P}(X)}+4\sqrt{\mu}$, where $\widetilde{P}$ is the optimal solution to problem \eqref{equ:opt_mu_bound}.
\end{itemize}
\end{theorem}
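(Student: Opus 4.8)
The plan is to follow the proof of Theorem~\ref{theo:opt_detail} almost line for line, exploiting the equivalence established in Section~\ref{sec:opt_proof}: any maximizer $\Lambda^*$ of $\overline{D}(\cdot)$ over $\mathbb{S}^d_+$ equals $\sum_{y\in\mc{Y}_\ell}\Lambda^*_y$ for a maximizer $\bm\Lambda^*$ of $D(\cdot)$, so it suffices to quantify how close the output $\widetilde\Lambda$ of Algorithm~\ref{algo:solve_sgd2} is to $\Lambda^*$ in Frobenius norm and then invoke Lemma~\ref{lmm:opt_guarantees} for the first bullet and the Lagrangian/re-scaling chain of inequalities for the second. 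First I would apply Lemma~\ref{lmm:D_strong_concave}: for $\mu\le\frac{1}{2\sqrt3}$ the map $\overline{D}$ is $G$-strongly concave on $\mc{S}=\{\Lambda\succeq\bm0:\Norm{\Lambda}_F\le 2\Norm{\Lambda^*}_F\}$ with $G$ as in~\eqref{equ:G_coefficient}, which in particular gives uniqueness of $\Lambda^*$. Unlike Algorithm~\ref{algo:solve_sgd}, Algorithm~\ref{algo:solve_sgd2} optimizes $\overline{D}$ directly and its re-scaling step (line~\ref{line:re-scaling_alt}) maximizes $\overline{D}_E$ itself, so the bookkeeping of Lemma~\ref{lmm:rescaling} is not needed and $\overline{D}_E(\hat\Lambda)\le\overline{D}_E(\widetilde\Lambda)$ holds by construction.

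The single genuinely new ingredient is the regret bound for projected stochastic \emph{subgradient} ascent on $\overline{D}$. Here I would use the same online-optimization machinery from \cite{orabona2019modern} (Theorems~3.13 and~4.14) with two differences from the proof of Theorem~\ref{theo:opt_detail}: (i) the stochastic subgradient is $g_k=\frac{\Gamma^{(k)}}{c_\ell^2}-P_{\hat\Lambda^{(k)}}(x_k)x_kx_k^\top$, where $\Gamma^{(k)}$ solves~\eqref{equ:opt_semi2} and is a valid element of $\partial f(\hat\Lambda^{(k)})$ by Lemma~\ref{lemma:partial_f}; the feasibility constraint $\bm0\preceq\Gamma^{(k)}\preceq 2\sum_{y\in\mc{Y}_\ell}yy^\top$ gives $\Norm{\Gamma^{(k)}}_F\le\operatorname{tr}(\Gamma^{(k)})\le 2\sum_y\Norm{y}_2^2$, hence $\Norm{g_k}_2^2=O(M^4+\abs{\mc{Y}_\ell}C_\ell^2)$ after the routine algebra (recall $\abs{\mc{Y}_\ell}C_\ell^2=c_\ell^{-2}\sum_y\Norm{y}_2^4$); and (ii) the comparator has Frobenius norm $\Norm{\Lambda^*}_F$ rather than $\sqrt{\abs{\mc{Y}_\ell}}\Norm{\Lambda^*}_F$, since the decision variable is a single $d\times d$ matrix rather than $\abs{\mc{Y}_\ell}$ of them. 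Together these replace $C_\mathrm{Reg}\approx\abs{\mc{Y}_\ell}^{3/2}\Norm{\Lambda^*}_F^2\sqrt{M^4+C_\ell^2}$ by $C_\mathrm{Reg}'= O\big(\Norm{\Lambda^*}_F^2\sqrt{M^4+\abs{\mc{Y}_\ell}C_\ell^2}\big)$, which is exactly why the threshold on $K$ ends up carrying one power of $\abs{\mc{Y}_\ell}$ instead of $\abs{\mc{Y}_\ell}^3$; and with probability $\ge 1-\delta/3$ one obtains $\overline{D}(\Lambda^*)-\overline{D}(\hat\Lambda)\le (C_\mathrm{Reg}'+2\sqrt{2\log(6/\delta)})/\sqrt{K}$.

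From here the argument is identical to the proof of Theorem~\ref{theo:opt_detail}. I would: (1) take $K$ large enough that the gap above is at most $\tfrac12 G\Norm{\Lambda^*}_F$, so Lemma~\ref{lmm:region_concave} forces $\hat\Lambda\in\mc{S}$ and hence $\widetilde\Lambda\in\mc{S}$; (2) since $\hat\Lambda\in\mc{S}$ and $\mu\le\frac49\Norm{\Lambda^*}_F^2M^4$, apply Lemma~\ref{lmm:opt_concentration} so that $\abs{\overline{D}_E-\overline{D}}$ at both $\hat\Lambda$ and $\widetilde\Lambda$ is at most $\tfrac{G\mu^2}{3M^2\kappa(\Sigma)}\cdot\tfrac{\epsilon}{1+\epsilon}$ once $u$ is as large as stated; (3) chain these with $\overline{D}_E(\hat\Lambda)\le\overline{D}_E(\widetilde\Lambda)$ to get $\overline{D}(\Lambda^*)-\overline{D}(\widetilde\Lambda)\le\tfrac{4G\mu^2}{3M^2\kappa(\Sigma)}\cdot\tfrac{\epsilon}{1+\epsilon}$ (using $\mu\le\sqrt{\tfrac{3\kappa(\Sigma)\Norm{\Lambda^*}_FM^2}{8}\cdot\tfrac{1+\epsilon}{\epsilon}}$ to absorb the side condition), convert via $G$-strong concavity to $\Norm{\widetilde\Lambda-\Lambda^*}_F\le\tfrac{8\mu^2\lambda_{\min}(\Sigma)}{3M^2\lambda_{\max}(\Sigma)}\cdot\tfrac{\epsilon}{1+\epsilon}$, and invoke Lemma~\ref{lmm:opt_guarantees} for the first bullet; (4) for the second bullet, use that $s=1$ maximizes $g(s)=\overline{D}_E(s\widetilde\Lambda)$, so $g'(1)\ge0$, i.e.\ $\tfrac{1}{c_\ell^2}\sum_y y^\top\widetilde\Lambda_y y\ge\E_{X\sim\nu}[P_{\widetilde\Lambda}(X)X^\top\widetilde\Lambda X]$ up to the $\sqrt\mu$ slack of Lemma~\ref{lmm:opt_concentration}, then run the same Lagrangian inequalities through the barrier-feasible $\hat P$ (using $-a\log a\le\sqrt a$ and Lemma~\ref{lmm:P_property}) down to $\widetilde P$; finally union-bound over the three $\delta/3$ events. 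The only real obstacle is clerical: re-tracking every constant through this substitution so the thresholds on $\mu$, $K$, $u$ come out exactly as stated, and wherever a step is literally unchanged I would simply cite the corresponding line of the proof of Theorem~\ref{theo:opt_detail}.
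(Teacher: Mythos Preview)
Your proposal is correct and follows essentially the same route as the paper: replace the per-$y$ regret analysis by one on the single matrix variable with $B=\Norm{\Lambda^*}_F$ and subgradients $g_k=\Gamma^{(k)}/c_\ell^2-P_{\hat\Lambda^{(k)}}(x_k)x_kx_k^\top$ bounded via the constraint $\Gamma^{(k)}\preceq 2\sum_y yy^\top$, then reuse verbatim the strong-concavity/concentration/Lagrangian chain from Theorem~\ref{theo:opt_detail}. One small correction: for the second bullet the paper does reintroduce a decomposition $\hat{\bm\Lambda}$ and invokes Lemma~\ref{lmm:rescaling} to pass between $\overline{D}_E$ and $D_E$, and your direct differentiation of $g(s)=\overline{D}_E(s\widetilde\Lambda)$ still relies on the positive homogeneity $f(s\Lambda)=s\,f(\Lambda)$ established there to evaluate $g'(1)=\tfrac{1}{c_\ell^2}f(\widetilde\Lambda)-\tfrac{1}{u}\sum_i P_{\widetilde\Lambda}(x_i)x_i^\top\widetilde\Lambda x_i$, so that lemma is not entirely bypassed.
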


\begin{proof}
\textbf{First Bullet Point.} Similar to the proof of Theorem \ref{theo:opt_detail}, let $\hat{\Lambda}$ be the parameter obtained by Algorithm \ref{algo:solve_sgd2} just before the re-scaling step (line \ref{line:re-scaling_alt}). Then, by Theorem 3.13 of \cite{orabona2019modern}, with probability at least $1-\frac{\delta}{3}$, it holds that
$$\overline{D}(\Lambda^*)-\overline{D}(\hat{\Lambda})\leq\frac{\mathrm{Reg}(K)+2\sqrt{2K\log(6/\delta)}}{K},$$
where $\mathrm{Reg}(K)$ is the regret of running projected stochastic subgradient ascent for $K$ steps with $\eta_k$ specified in Algorithm \ref{algo:solve_sgd2}. Meanwhile, by Theorem 4.14 of \cite{orabona2019modern} also, we have $\mathrm{Reg}(K)=\sqrt{2}B^2\sqrt{\sum_{k=1}^{K}\Norm{g_{k}}_2^2}$, where $B=\Norm{\Lambda^*}_F$. Since $g_{k}=\frac{\Gamma^{(k)}}{c_\ell^2}-P_{\hat{\Lambda}^{(k)}}(x_k)x_kx_k^\top$ and $\Norm{\Gamma^{(k)}}_F\leq 2\Norm{\sum_{y\in\mc{Y}_\ell}yy^\top}_F$, we can easily get $\Norm{g_{k}}_2^2\leq 2M^4+\frac{8}{c_\ell^2}\sum_{y\in\mc{Y}_\ell}\Norm{y}_2^4=2M^4+8\abs{\mc{Y}_\ell}C_\ell^2$. Thus, we have
\begin{equation}
    \label{equ:regret2}
    \mathrm{Reg}(K)\leq 2\Norm{\Lambda^*}_F^2\sqrt{M^4+4\abs{\mc{Y}_\ell}C_\ell^2}\cdot\sqrt{K}:=C_{\mathrm{Reg}}\sqrt{K}
\end{equation}
\begin{equation}
    \label{equ:D_diff2}
    \implies \overline{D}(\Lambda^*)-\overline{D}(\hat{\Lambda})\leq\frac{C_\mathrm{Reg}+2\sqrt{2\log(6/\delta)}}{\sqrt{K}},
\end{equation}

We now consider the effect of using $u$ i.i.d. samples in the re-scaling step. Since re-scaling always increases the function value, we must have $\overline{D}_E(\hat{\Lambda})\leq \overline{D}_E(\widetilde{\Lambda})$. 

Then, after \textbf{exactly the same} steps of analysis, we can get the following same lower bound for $K$,
\begin{equation}
    \label{equ:K_bound2}
    K\geq\Sp{\frac{3\kappa(\Sigma)M^2\Sp{C_\mathrm{Reg}+2\sqrt{2\log(6/\delta)}}}{2G\mu^2}\cdot\frac{1+\epsilon}{\epsilon}}^2,
\end{equation}
with a different value of $C_{\mathrm{Reg}}$.

\textbf{Second Bullet Point.} We then prove the upper bound for primal objective value $\E_{X\sim\nu}\Mp{P_{\widetilde{\Lambda}}(X)}$, which explains the reason why an extra re-scaling step is needed. Let $\hat{\bm{\Lambda}}=(\hat{\Lambda}_y)_{y\in\mc{Y}_\ell}$ be a set of PSD matrices that solves problem \eqref{equ:opt_semi} with parameter $\hat{\Lambda}$ and $\widetilde{\bm{\Lambda}}=s^*\cdot\hat{\bm{\Lambda}}$, where $s^*=\argmax_{s\in[0, 1]}\overline{D}_E(s\cdot\hat{\Lambda})$. Since the constraint in problem \eqref{equ:opt_semi} requires $\sum_{y\in\mc{Y}_\ell}\hat{\Lambda}_y=\hat{\Lambda}$, we have $\sum_{y\in\mc{Y}_\ell}\widetilde{\Lambda}_y=\widetilde{\Lambda}$, which is the output of Algorithm \ref{algo:solve_sgd2}.

Define $g(s)=D_E(s\cdot\widetilde{\bm{\Lambda}})$. By construction, we know that $g(s)$ is maximized at $s=1$ because $\overline{D}_E(s\cdot\hat{\Lambda})=D_E(s\cdot\hat{\bm{\Lambda}})$ for any $s\geq 0$ as shown in Lemma \ref{lmm:rescaling}, which means that $s^*=\argmax_{s\in[0, 1]}D_E(s\cdot\hat{\bm{\Lambda}})$. Therefore, we have $g'(1)\geq 0$, which in turn gives us
$$g'(1)=\frac{1}{c_\ell^2}\sum_{y\in\mc{Y}_\ell}y^\top\widetilde{\Lambda}_yy-\frac{1}{u}\sum_{i=1}^{u}P_{\widetilde{\Lambda}}(x_i)x_i^\top\widetilde{\Lambda} x_i\geq 0.$$
Then, after \textbf{exactly the same} steps of analysis, we can get $\E_{X\sim\nu}\Mp{P_{\widetilde{\Lambda}}(X)}\leq\E_{X\sim\nu}\Mp{\widetilde{P}(X)}+4\sqrt{\mu}$, where $\widetilde{P}$ is the optimal solution of the problem \eqref{equ:opt_mu_bound}.
\end{proof}

\subsubsection{Technical Lemmas}

\begin{lemma}
\label{lemma:partial_f}
The optimal value of the optimization problem \eqref{equ:opt_semi2} with parameter $\Lambda\succeq\bm{0}$ is equal to $f(\Lambda)$. Further, let $\Gamma^*(\Lambda)$ be an optimal solution to \eqref{equ:opt_semi2}. Then, it holds that $\Gamma^*(\Lambda)\in\partial f(\Lambda)$ and $\Norm{\Gamma^*(\Lambda)}\leq 2\Norm{\sum_{y\in\mc{Y}_\ell}yy^\top}_F$.
\end{lemma}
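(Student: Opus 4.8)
The plan is to recognize \eqref{equ:opt_semi2} as a bounded form of the semidefinite-programming dual of the program \eqref{equ:opt_semi} defining $f(\Lambda)$, and then to read off the three assertions. Throughout write $S=\sum_{y\in\mc{Y}_\ell}yy^\top$ and let $\mc{F}=\Bp{\Gamma\in\mathbb{S}^d:\Gamma\succeq yy^\top\ \forall y\in\mc{Y}_\ell}$ be the feasible set of the ``uncapped'' dual, so that \eqref{equ:opt_semi2} is $\min\{\inner{\Gamma,\Lambda}:\Gamma\in\mc{F},\ \Gamma\preceq 2S\}$. Weak duality comes for free: if $\Gamma\succeq yy^\top$ for all $y$ and $(\Lambda_y)$ is feasible for \eqref{equ:opt_semi}, then $\inner{\Gamma,\Lambda}=\sum_y\inner{\Gamma,\Lambda_y}\ge\sum_y\inner{yy^\top,\Lambda_y}$ because each $\Lambda_y\succeq\bm{0}$; taking the supremum over $(\Lambda_y)$ and the infimum over $\Gamma\in\mc{F}$ gives $f(\Lambda)\le\inf_{\Gamma\in\mc{F}}\inner{\Gamma,\Lambda}$, and since the extra constraint $\Gamma\preceq 2S$ only shrinks the feasible set it can only raise the infimum, so $f(\Lambda)\le\mathrm{val}\eqref{equ:opt_semi2}$.

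For the reverse inequality I would first establish strong duality for the pair $\bigl(\eqref{equ:opt_semi},\ \inf_{\Gamma\in\mc{F}}\inner{\Gamma,\Lambda}\bigr)$: the program \eqref{equ:opt_semi} is feasible ($\Lambda_y=\Lambda/|\mc{Y}_\ell|$) with finite value, since its feasible set $\Bp{(\Lambda_y):\Lambda_y\succeq\bm{0},\ \sum_y\Lambda_y=\Lambda}$ is compact ($\bm{0}\preceq\Lambda_y\preceq\Lambda$), while the dual has the Slater point $\Gamma=S+I$ (because $S+I-yy^\top=\sum_{y'\neq y}y'y'^\top+I\succeq I\succ\bm{0}$); hence the standard SDP strong duality theorem gives $f(\Lambda)=\inf_{\Gamma\in\mc{F}}\inner{\Gamma,\Lambda}$. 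Next I must show the cap $\Gamma\preceq 2S$ costs nothing, i.e. that this infimum is unchanged when one also requires $\Gamma\preceq 2S$ (it is then attained, because $\Bp{\Gamma:yy^\top\preceq\Gamma\preceq 2S}$ is compact and nonempty—$\Gamma=S$ lies in it—so a minimizer $\Gamma^*(\Lambda)$ of \eqref{equ:opt_semi2} exists). I expect to do this by truncation: the infimum over $\mc{F}$ is approached along \emph{minimal} elements of $\mc{F}$ (feasible $\Gamma$ from which no nonzero PSD matrix can be subtracted while staying in $\mc{F}$, whose existence follows from a Zorn/compactness argument on the compact set $\Bp{\Gamma\in\mc{F}:\Gamma\preceq\Gamma_0}$), and any minimal element of $\mc{F}$ carries no excess mass beyond what the constraints $\Gamma\succeq yy^\top$ force and is therefore dominated by $2S$. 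A fallback route is an explicit complementary-slackness construction: taking an \eqref{equ:opt_semi}-optimal $(\Lambda_y^\star)$, build a $\Gamma\in\mc{F}$ with $\Gamma\preceq 2S$ that agrees with $yy^\top$ on $\mathrm{range}(\Lambda_y^\star)$ for every $y$, so that $\inner{\Gamma,\Lambda}=\sum_y\inner{yy^\top,\Lambda_y^\star}=f(\Lambda)$. Either way, combined with weak duality this yields $\inner{\Gamma^*(\Lambda),\Lambda}=f(\Lambda)$, proving the first claim.

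The subgradient inclusion and the norm bound are then immediate. Since $\Gamma^*(\Lambda)\in\mc{F}$ and $\inner{\Gamma^*(\Lambda),\Lambda}=f(\Lambda)$, for any $\Lambda'$ (and without loss $\Lambda'\succeq\bm{0}$, otherwise \eqref{equ:opt_semi} is infeasible and $f(\Lambda')=-\infty$) we get $f(\Lambda')=\inf_{\Gamma\in\mc{F}}\inner{\Gamma,\Lambda'}\le\inner{\Gamma^*(\Lambda),\Lambda'}=f(\Lambda)+\inner{\Gamma^*(\Lambda),\Lambda'-\Lambda}$, which is exactly the subgradient inequality for the concave function $f$ (concavity being Lemma~\ref{lmm:f_lambda}), hence $\Gamma^*(\Lambda)\in\partial f(\Lambda)$. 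For the norm bound, the constraints of \eqref{equ:opt_semi2} give $\bm{0}\preceq yy^\top\preceq\Gamma^*(\Lambda)\preceq 2S$, and for $\bm{0}\preceq A\preceq B$ one has $\Norm{A}_F^2=\inner{A,A}\le\inner{A,B}\le\inner{B,B}=\Norm{B}_F^2$; therefore $\Norm{\Gamma^*(\Lambda)}\le\Norm{\Gamma^*(\Lambda)}_F\le 2\Norm{S}_F=2\Norm{\sum_{y\in\mc{Y}_\ell}yy^\top}_F$.

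The main obstacle is the truncation step inside the second paragraph: rigorously justifying that imposing $\Gamma\preceq 2S$ does not change the optimal value, and in particular handling rank-deficient $\Lambda$, for which the uncapped dual infimum need not be attained and near-optimal $\Gamma$'s can be unbounded in directions orthogonal to $\mathrm{range}(\Lambda)$. Making the minimal-element argument precise (existence and the bound $\Gamma\preceq 2S$), or carrying out the complementary-slackness construction so that the constructed $\Gamma$ is simultaneously feasible, agrees with $yy^\top$ on each $\mathrm{range}(\Lambda_y^\star)$, and lies below $2S$, is where essentially all the work lies; the remaining steps are routine SDP bookkeeping.
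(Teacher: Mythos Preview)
Your instinct that the truncation step is where everything hinges is correct, and the surrounding SDP bookkeeping (weak duality, uncapped strong duality via the Slater point $S+I$, the supergradient inequality, and the Frobenius bound from $\bm{0}\preceq\Gamma^*\preceq 2S$) is all sound. The trouble is that the truncation step genuinely fails: adding the cap $\Gamma\preceq 2S$ can strictly increase the optimal value, so the first assertion of the lemma is false as stated. Take $d=2$, $\mc{Y}_\ell=\{e_1,\,e_1+e_2\}$, so $S=\left(\begin{smallmatrix}2&1\\1&1\end{smallmatrix}\right)$, and $\Lambda=\left(\begin{smallmatrix}13&-25\\-25&50\end{smallmatrix}\right)\succ\bm{0}$. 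Problem~\eqref{equ:opt_semi} attains $f(\Lambda)=18$ at $\Lambda_{e_1}=(3,-5)(3,-5)^\top$, $\Lambda_{e_1+e_2}=(2,-5)(2,-5)^\top$, and by your own uncapped strong duality the \emph{unique} uncapped dual optimum is $\Gamma=\left(\begin{smallmatrix}6&3\\3&9/5\end{smallmatrix}\right)$, which violates $\Gamma\preceq 2S$. Solving \eqref{equ:opt_semi2} directly gives value $56/3>18$, attained at $\Gamma^*=\left(\begin{smallmatrix}4&2\\2&4/3\end{smallmatrix}\right)$ with KKT multipliers $\Lambda_1^*=\tfrac{25}{9}(2,-3)(2,-3)^\top$, $\Lambda_2^*=\tfrac{25}{9}(1,-3)(1,-3)^\top$, $\Sigma^*=\tfrac{8}{9}e_1e_1^\top$ (one checks $\Lambda_1^*+\Lambda_2^*-\Sigma^*=\Lambda$ and all complementary slackness conditions). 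Your minimal-element heuristic also breaks on this example: $\left(\begin{smallmatrix}6&3\\3&9/5\end{smallmatrix}\right)$ \emph{is} a minimal element of $\mc{F}$ (both $\Gamma-yy^\top$ are rank one with linearly independent kernels, so no nonzero PSD matrix can be subtracted), yet it is not dominated by $2S$.

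For comparison, the paper does not argue the truncation directly. It introduces an auxiliary primal \eqref{equ:opt_semi3} with a slack $\Sigma\succeq\bm{0}$ and penalty $-2\sum_y y^\top\Sigma y$, whose Lagrangian dual is exactly \eqref{equ:opt_semi2}; the cap $\Gamma\preceq 2S$ then appears automatically as the dual constraint of $\Sigma\succeq\bm{0}$ rather than being imposed after the fact. That device is elegant, but it only relocates the gap: the one-line assertion that \eqref{equ:opt_semi3} and \eqref{equ:opt_semi} share the same optimal value is precisely what fails in the example above, since strong duality forces $\mathrm{val}\eqref{equ:opt_semi3}=\mathrm{val}\eqref{equ:opt_semi2}=56/3$ while $f(\Lambda)=18$. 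So neither route closes; some additional hypothesis on $\mc{Y}_\ell$ or $\Lambda$, or a looser cap, is needed for the statement to hold.
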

\begin{proof}
Alternatively, we first consider the following optimization problem.
\begin{equation}
    \label{equ:opt_semi3}
    \begin{array}{rl}
        \max_{\Lambda_y, \Sigma} & \sum_{y\in\mc{Y}_\ell}y^\top\Sp{\Lambda_y-2\Sigma}y \\
        \text{subject to} & \Lambda=\sum_{y\in\mc{Y}_\ell}\Lambda_y-\Sigma,\\
         & \Sigma\succeq\bm{0}, \Lambda_y\succeq\bm{0},\quad\forall y\in\mc{Y}_\ell.
    \end{array}
\end{equation}
Since $y^\top\Sigma y\geq 0$ for any $y\in\mc{Y}_\ell$ and $\Sigma\succeq\bm{0}$, it is clear that problem \eqref{equ:opt_semi3} has the same optimal value as problem \eqref{equ:opt_semi}. Then, let $\Gamma\in\R^{d\times d}$ be the dual variable for the equality constraint in problem \eqref{equ:opt_semi3}. We can have its dual problem to be
\begin{align*}
    \min_\Gamma\max_{\substack{\Lambda_y\succeq\bm{0},\forall y\in\mc{Y}_\ell,\\ \Sigma\succeq\bm{0}}}&\sum_{y\in\mc{Y}_\ell}\inner{yy^\top, \Lambda_y-2\Sigma}+\inner{\Gamma, \Lambda+\Sigma-\sum_{y\in\mc{Y}_\ell}\Lambda_y}\\
    \implies\min_\Gamma\max_{\substack{\Lambda_y\succeq\bm{0},\forall y\in\mc{Y}_\ell,\\ \Sigma\succeq\bm{0}}}&\inner{\Gamma, \Lambda}+\inner{\Sigma, \Gamma-2\sum_{y\in\mc{Y}_\ell}yy^\top}+\sum_{y\in\mc{Y}_\ell}\inner{\Lambda_y, yy^\top-\Gamma}.
\end{align*}
In order for the above optimization problem to have finite value, we must have $\Gamma\preceq 2\sum_{y\in\mc{Y}_\ell}yy^\top$ and $\Gamma\succeq yy^\top$ for any $y\in\mc{Y}_\ell$. Therefore, we obtain the following dual problem.
$$\begin{array}{rl}
        \min_{\Gamma} & \inner{\Gamma, \Lambda} \\
        \text{subject to} & \Gamma\succeq yy^\top,\quad\forall y\in\mc{Y}_\ell,\\
        & \Gamma\preceq 2\sum_{y\in\mc{Y}_\ell}yy^\top.
\end{array}.$$
This is exactly the problem \eqref{equ:opt_semi2}. Then, we can notice the Slater's condition is clearly satisfied by problem \eqref{equ:opt_semi2}, which means the strong duality holds. Therefore, problem \eqref{equ:opt_semi2} has the same optimal value as \eqref{equ:opt_semi3}, which is the same as \eqref{equ:opt_semi}.

Since $f(\Lambda)$ is concave in $\Lambda$ as shown in Lemma \ref{lmm:f_lambda}, to show that $\Gamma^*(\Lambda)\in\partial f(\Lambda)$, consider arbitrary $\Lambda, \Lambda'\succeq\bm{0}$. Then, we have
$$f(\Lambda)+\inner{\Gamma^*(\Lambda), \Lambda'-\Lambda}=\inner{\Gamma^*(\Lambda), \Lambda}+\inner{\Gamma^*(\Lambda), \Lambda'-\Lambda}=\inner{\Gamma^*(\Lambda), \Lambda'}\geq f(\Lambda').$$
The first equality holds because the optimal value of problem \eqref{equ:opt_semi2} is $f(\Lambda)$ as just shown above. The last inequality holds because $\Gamma^*(\Lambda)$ is a feasible solution to the problem \eqref{equ:opt_semi2} with parameter $\Lambda'$. Therefore, we have $\Gamma^*(\Lambda)\in\partial f(\Lambda)$. 

Finally, since the constraint of problem \eqref{equ:opt_semi2} requires $\Gamma^*(\Lambda)\preceq 2\sum_{y\in\mc{Y}_\ell }yy^\top$, we can obtain $\Norm{\Gamma^*(\Lambda)}_F\leq 2\Norm{\sum_{y\in\mc{Y}_\ell}yy^\top}_F$ as a direct consequence of Lemma \ref{lemma:frobenius_order}.
\end{proof}

\begin{lemma}
\label{lemma:frobenius_order}
For $A, B\in\mathbb{S}^{d\times d}$, if $A\succeq B\succeq \bm{0}$, then $\Norm{A}_F\geq\Norm{B}_F$.
\end{lemma}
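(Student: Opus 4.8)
The plan is to pass from the Loewner-order hypothesis to an inequality between squared Frobenius norms, using the identity $\Norm{M}_F^2 = \mathrm{tr}(M^2)$ valid for any symmetric $M$. Set $C := A - B$; by hypothesis $C \succeq \bm{0}$ and $B \succeq \bm{0}$, and $A = B + C$. Expanding,
\begin{align*}
\Norm{A}_F^2 = \mathrm{tr}\Sp{(B+C)^2} = \mathrm{tr}(B^2) + 2\,\mathrm{tr}(BC) + \mathrm{tr}(C^2),
\end{align*}
where we used $\mathrm{tr}(BC) = \mathrm{tr}(CB)$. The term $\mathrm{tr}(C^2) = \Norm{C}_F^2$ is manifestly nonnegative, so it remains only to argue that $\mathrm{tr}(BC) \geq 0$.

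This last fact is the one genuinely non-trivial ingredient, and it is classical: the trace of the product of two positive semidefinite matrices is nonnegative. I would prove it by taking a spectral decomposition $B = \sum_{i=1}^d \lambda_i u_i u_i^\top$ with $\lambda_i \geq 0$ and $\{u_i\}$ orthonormal, so that
\begin{align*}
\mathrm{tr}(BC) = \sum_{i=1}^d \lambda_i\, \mathrm{tr}\Sp{u_i u_i^\top C} = \sum_{i=1}^d \lambda_i\, u_i^\top C u_i \geq 0,
\end{align*}
since each $u_i^\top C u_i \geq 0$ by $C \succeq \bm{0}$. (Equivalently, $\mathrm{tr}(BC) = \mathrm{tr}(B^{1/2} C B^{1/2}) \geq 0$ because $B^{1/2} C B^{1/2} \succeq \bm{0}$.) Combining the two observations gives $\Norm{A}_F^2 = \mathrm{tr}(B^2) + 2\,\mathrm{tr}(BC) + \mathrm{tr}(C^2) \geq \mathrm{tr}(B^2) = \Norm{B}_F^2$, and taking square roots finishes the proof.

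There is essentially no serious obstacle here; the only step requiring more than a line is the PSD trace-product inequality, and even that is a standard two-line argument via the spectral theorem. The main thing to be careful about is simply using $\Norm{M}_F^2 = \mathrm{tr}(M^2)$ correctly — this needs $M$ symmetric, which holds for $A$, $B$, and $C$ here — and keeping the cross term $\mathrm{tr}(BC)$, rather than $\mathrm{tr}(BC) + \mathrm{tr}(CB)$ written separately, since these are equal by cyclicity of the trace.
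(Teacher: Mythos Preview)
Your proof is correct. The paper takes a different route: it writes $\Norm{A}_F^2 = \sum_i \lambda_i(A)^2$ and $\Norm{B}_F^2 = \sum_i \lambda_i(B)^2$, then invokes the eigenvalue monotonicity result (Corollary~7.7.4 in Horn--Johnson) that $A \succeq B \succeq \bm{0}$ forces $\lambda_i(A) \geq \lambda_i(B) \geq 0$ for each $i$, from which the conclusion is immediate. Your argument instead expands $\Norm{A}_F^2 = \mathrm{tr}\Sp{(B+C)^2}$ with $C = A-B \succeq \bm{0}$ and reduces everything to the elementary fact $\mathrm{tr}(BC) \geq 0$ for PSD $B,C$. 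Your approach is more self-contained, since the PSD trace-product inequality is a two-line computation while the eigenvalue ordering result, though standard, requires a bit more machinery (essentially Weyl's inequalities or a min-max argument). The paper's approach, on the other hand, makes the dependence on the spectra explicit and would generalize more directly to other Schatten norms.
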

\begin{proof}
Let $\lambda_1, \dots, \lambda_d$ and $\gamma_1, \dots, \gamma_d$ be eigenvalues of $A$ and $B$, respectively. Let $v_1, \dots, v_d$ be a set of orthogonal unit eigenvectors of matrix $A$. Then, we have 
$$\Norm{A}_F=\sqrt{\mathrm{tr}\Sp{AA}}=\sqrt{\mathrm{tr}\Sp{\Sp{\sum_{i=1}^{d}\lambda_i v_iv_i^\top}\Sp{\sum_{i=1}^{d}\lambda_i v_iv_i^\top}}}=\sqrt{\sum_{i=1}^{d}\lambda_i^2}.$$
Similarly, we have $\Norm{B}_F=\sqrt{\sum_{i=1}^d\gamma_i^2}$. By Corollary 7.7.4 in \cite{horn2012matrix}, since $A\succeq B\succeq\bm{0}$, we know that $\lambda_i\geq\gamma_i\geq 0$ for each $i$. Therefore, we have $\Norm{A}_F\geq\Norm{B}_F$.
\end{proof}

\section{Selective Sampling Algorithm for Unknown Distribution $\nu$}
\subsection{Statement and proof of Theorem~\ref{thm:upper_bound_tau_input_unknown_nu}}\label{sec:unknown_nu}

Consider now the case where we do not know $\nu$ exactly, and are returned $(\widehat{P}_\ell,\widehat{\Sigma}_{\widehat{P}_\ell})$ that only approximate their ideals. Algorithm~\ref{algo:best_arm_meta} can still be employed to solve this case where $\nu$ is unknown, but at the cost of sampling some historical data.
Note that compared to the case where $\nu$ is know, it assumes the knowledge of an upper bound on $\sup_{x\in\text{support}(\nu)}\|x\|$
.
It also relies on a multiplicative factor change in the constraint of the optimization problem, in order to account for the possible constraint violation of the output of the subroutine. The last difference is the use of an approximation of the covariance matrix to compute the estimator. The covariance matrix is empirically approximated by injecting additional unlabeled samples (historical data). With that, although we do not know $\nu$ but we can approximate the relevant quantities, such as the covariance matrix $\E_{X\sim\nu}[XX^\top]$.\\\\
Let us detail the properties of the implementation of  $\widehat{P}_\ell, \widehat{\Sigma}_{\widehat{P}_\ell} \leftarrow $\textsc{OptimizeDesign}$(\mc{\mc{Z}_\ell},2^{-\ell},\tau)$ we use at each round $\ell$. 

First, $\widehat{P}_\ell$ has the properties described in Theorem~\ref{theo:opt} (by using Algorithm~\ref{algo:solve_sgd}).
More explicitly, let $\epsilon_\ell := 2^{-\ell}$, $B< \infty$ such that $\max_{x \in \mc{X}} |\langle x,\theta_* \rangle| \leq B$, and $\sigma < \infty$ such that $\E[(y_s - \langle \theta_*, x_s \rangle)^2|x_s] \leq \sigma^2$.
If
$$\beta_{\delta,\ell} := 4(1+\varepsilon)^2 \left(4\sqrt{B^2 + \sigma^2}+1\right)^2\log(4 \ell^2 |\mc{Z}|^2/\delta)$$ 
then $\widehat{P}_\ell$ is such that
\begin{itemize}
    \item $\max_{z,z' \in \mc{\mc{Z}_\ell}} \frac{\|z-z'\|^2_{\E_{X\sim\nu}[ \tau \widehat{P}_\ell(X) X X^\top]^{-1}}}{\epsilon_\ell^2} \beta_{\delta,\ell} \leq  1 +\varepsilon $.
    
    \item $\E_{X\sim\nu}\Mp{\widehat{P}_\ell(X)}\leq \E_{X\sim\nu}\Mp{\widetilde{P}_\ell(X)}+4\sqrt{\mu_b}$, where $\widetilde{P}_\ell$ is the optimal solution to problem \eqref{equ:opt_modified_bis}.
\end{itemize}
\begin{equation}
	\label{equ:opt_modified_bis}
	\begin{array}{rl}
        \min_{P} & \E_{X\sim\nu}\Mp{P(X)}  \\
        \text{subject to} & \max_{z,z' \in \mc{\mc{Z}_\ell}} \frac{\|z-z'\|^2_{\E_{X\sim\nu}[ \tau P(X) X X^\top]^{-1}}}{\epsilon_\ell^2} \beta_{\delta,\ell} \leq  1 ,\\
        & 0\leq P(x)\leq 1-\mu_b,\quad\forall x\in\mc{X}.
    \end{array}
\end{equation}

    
where $\mu_b\geq0$.
The quantity $\E_{X\sim\nu}\Mp{\widetilde{P}_\ell(X)}$ that uses $\mu_b >0$ is easily related to the value when $\mu_b=0$ through a simple scaling factor of $\frac{1}{1-\mu_b}$ (see proof below).

$\widehat{\Sigma}_{\widehat{P}_\ell}$ is the empirical covariance matrix of $\Sigma_{\widehat{P}_\ell} := \E_{X \sim \nu}[ \widehat{P}_\ell(X) X X^\top ]$ using historical data and is such that
\begin{align*}
    (1-\gamma)\Sigma_{\widehat{P}_\ell} \preceq \widehat{\Sigma}_{\widehat{P}_\ell} \preceq (1+\gamma)\Sigma_{\widehat{P}_\ell}
\end{align*}
where $\gamma\geq 0$.

Again, while we think of historical data as independent data collected offline before the start of the game, in practice this historical data could just come from previous rounds (which is not technically correct since its use may introduce some dependencies).



\begin{theorem}[Upper bound]\label{thm:upper_bound_tau_input_unknown_nu}
Fix any $\delta \in (0,1)$. Let $\Delta = \min_{z \in \mc{Z} \setminus z_*} \langle z_* - z, \theta_* \rangle$ and set $$\beta_\delta = 256 (1+\varepsilon)^2 \left(4\sqrt{B^2 + \sigma^2}+1\right)^2\log(4 \log_2^2(\tfrac{4}{\Delta})|\mc{Z}|^2/\delta)
.$$
For any $\tau \geq \rho(\nu) \beta_\delta$ there exists a $\delta$-PAC selective sampling algorithm that collects $\mc{T}$ historical data before the start of the game, observes $\mc{U}$ unlabeled examples, and requests just $\mc{L}$ labels that satisfies
\begin{itemize}
    \item $\mc{U} \leq  \log_2(\tfrac{4}{\Delta}) \tau $,
    \item $\mc{L} \leq \frac{1}{1-\mu_b}\min_{\lambda \in \triangle_{\mc{X}}}\rho(\lambda)\beta_\delta + \frac{5\tau}{1-\mu_b} \sqrt{\mu_b}\quad\text{ subject to }\quad \tau \geq \|\lambda / \nu \|_\infty \rho(\lambda) \,  \beta_\delta$, and 
    \item $\mc{T} \leq  \log_2(\tfrac{4}{\Delta})(K+u+\kappa_\delta)$
\end{itemize}
with probability at least $1-\delta$.

Here, the sample complexity for estimating the covariance matrix is bounded by $\kappa_\delta = \lceil2K_{\psi_2}^2(\sqrt{d\ln 9/c_1}+\sqrt{\frac{\log(2/\delta)}{c_1}})\max\{1, 20\|\theta_*\|_{\E_{X \sim \nu}[X X^\top]}\}\rceil$ (where the sub-gaussian norm $K_{\psi_2} = \max_{s, P}\|\sqrt{P(\widetilde{x}_s)}\Sigma_P^{-1/2}\widetilde{x}_s\|_{\psi_2}$ 
), and the contributions from the optimization problem to compute $\{\widehat{P}_\ell\}_\ell$ are
$$K=\widetilde{O}\Sp{\frac{\abs{\mc{Z}}^6\kappa(\Sigma)^2\Norm{\Lambda^*}_2^8M^{16}}{\omega^2\mu_b^6}}\cdot\Sp{\frac{1+\epsilon}{\epsilon}}^2,\quad 
u=\widetilde{O}\Sp{\frac{\kappa(\Sigma)^2\Norm{\Lambda^*}_2^6M^{16}}{\omega^2\mu_b^6}}\cdot\Sp{\frac{1+\epsilon}{\epsilon}}^2,$$
\end{theorem}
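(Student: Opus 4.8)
The plan is to run Algorithm~\ref{algo:best_arm_meta} with \textsc{OptimizeDesign} implemented by Algorithm~\ref{algo:solve_sgd} and with $\widehat{\Sigma}_{\widehat{P}_\ell}$ an empirical covariance built from historical draws, and then to retrace the known-$\nu$ analysis of Section~\ref{sec:proof_upperbound} while paying a controlled price for each of the three approximations that now appear: (i) by Theorem~\ref{theo:opt}, $\widehat{P}_\ell$ satisfies the constraint of \eqref{equ:opt_modified_bis} only up to the multiplicative factor $(1+\varepsilon)$ and its objective only up to an additive $4\sqrt{\mu_b}$; (ii) the log-barrier caps $\widehat{P}_\ell(x)\le 1-\mu_b$ instead of $\le 1$; and (iii) line~\ref{line:rips} uses $\widehat{\Sigma}_{\widehat{P}_\ell}$ in place of $\Sigma_{\widehat{P}_\ell}:=\E_{X\sim\nu}[\widehat{P}_\ell(X)XX^\top]$, with only the spectral sandwich $(1-\gamma)\Sigma_{\widehat{P}_\ell}\preceq\widehat{\Sigma}_{\widehat{P}_\ell}\preceq(1+\gamma)\Sigma_{\widehat{P}_\ell}$ guaranteed. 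I would take $\varepsilon,\mu_b,\gamma$ to be small absolute constants, so that they only inflate the absolute constant hidden in $\beta_\delta$ — this is why $\beta_\delta$ in the statement carries the $(1+\varepsilon)^2$ and $(4\sqrt{B^2+\sigma^2}+1)^2$ factors and a slightly larger log argument ($4\log_2^2(\tfrac{4}{\Delta})|\mc{Z}|^2/\delta$) to absorb the extra union bounds.

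First I would establish correctness. This requires a variant of Proposition~\ref{prop:rips_bound} in which the plug-in matrix is $\widehat{\Sigma}_{\widehat{P}_\ell}$: the spectral sandwich shows that replacing $\Sigma_{\widehat{P}_\ell}^{-1}$ by $\widehat{\Sigma}_{\widehat{P}_\ell}^{-1}$ changes both the variance of $\langle v,\widehat{\Sigma}_{\widehat{P}_\ell}^{-1}Q_s(x_s)x_sy_s\rangle$ and the norm $\|v\|_{\widehat{\Sigma}_{\widehat{P}_\ell}^{-1}}^2$ by at most a $\tfrac{1}{1-\gamma}$ factor, so the Catoni concentration is unchanged up to constants. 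The one genuinely new term is the bias $\langle z-z',(\widehat{\Sigma}_{\widehat{P}_\ell}^{-1}\Sigma_{\widehat{P}_\ell}-I)\theta_*\rangle=\langle z-z',\widehat{\Sigma}_{\widehat{P}_\ell}^{-1}(\Sigma_{\widehat{P}_\ell}-\widehat{\Sigma}_{\widehat{P}_\ell})\theta_*\rangle$, which the sandwich bounds by $O(\gamma)\,\|z-z'\|_{\widehat{\Sigma}_{\widehat{P}_\ell}^{-1}}$ times a $\theta_*$-dependent factor; with $\gamma$ a small constant this stays below a fixed fraction of the target width $\epsilon_\ell$. Union-bounding the RIPS event, the event of Theorem~\ref{theo:opt}, and the covariance-estimation event at every round exactly as in Lemma~\ref{lmm:intersection_good_event} yields the analogue of the correctness lemma of Section~\ref{sec:proof_upperbound}: on a $(1-\delta)$ event, $z_*\in\mc{Z}_\ell$ and $\max_{z\in\mc{Z}_\ell}\langle z_*-z,\theta_*\rangle\le 4\epsilon_\ell$ for every $\ell$. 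Hence $\mc{Z}_\ell=\{z_*\}$ after $L:=\lceil\log_2(\tfrac{4}{\Delta})\rceil$ rounds, giving the $\delta$-PAC property and immediately $\mc{U}\le L\tau\le\log_2(\tfrac{4}{\Delta})\tau$.

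Next I would bound $\mc{L}$ and $\mc{T}$. For $\mc{L}$: Freedman's inequality (Theorem~1 of \cite{beygelzimer2011contextual}) applied to the $\tau$ Bernoulli$(\widehat{P}_\ell(x_t))$ draws of round $\ell$ gives $\mc{L}\le 2\sum_{\ell=1}^{L}\tau\,\E_{X\sim\nu}[\widehat{P}_\ell(X)\mid\mc{Z}_\ell]+\log(1/\delta)$; the second bullet of Theorem~\ref{theo:opt} replaces each $\E[\widehat{P}_\ell(X)]$ by $\E[\widetilde{P}_\ell(X)]+4\sqrt{\mu_b}$, where $\widetilde{P}_\ell$ is the optimum of the $(1-\mu_b)$-capped program \eqref{equ:opt_modified_bis}; a scaling argument relating \eqref{equ:opt_modified_bis} to its $\mu_b=0$ counterpart produces the $\tfrac{1}{1-\mu_b}$ prefactor (this is the ``simple scaling factor'' remarked on before the theorem); and the $\mu_b=0$ value equals $\min_{\lambda\in\triangle_{\mc{X}}}\rho(\lambda)\beta_\delta$ subject to $\tau\ge\|\lambda/\nu\|_\infty\rho(\lambda)\beta_\delta$ by Lemma~\ref{lmm:signal_to_noise_bound} and Proposition~\ref{prop:reparameterization}, just as in the known-$\nu$ proof. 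Collecting the $O(\tau\sqrt{\mu_b})$ slack and the constants over the $L$ rounds yields the stated $\mc{L}$ bound. For $\mc{T}$: each round draws $K+u$ fresh samples to run Algorithm~\ref{algo:solve_sgd} (here $K,u$ are the $|\mc{Y}_\ell|=O(|\mc{Z}|^2)$ instantiation of Theorem~\ref{theo:opt}'s requirements), plus $\kappa_\delta$ further samples to form $\widehat{\Sigma}_{\widehat{P}_\ell}$; a standard covariance-estimation bound for the sub-Gaussian vectors $\sqrt{\widehat{P}_\ell(\widetilde{x}_s)}\,\Sigma_{\widehat{P}_\ell}^{-1/2}\widetilde{x}_s$ (sub-Gaussian norm $K_{\psi_2}$), together with the extra $\max\{1,20\|\theta_*\|_{\E_{X\sim\nu}[XX^\top]}\}$ margin needed to keep the bias of (iii) below $\epsilon_\ell$, shows that $\kappa_\delta$ samples suffice for relative spectral accuracy $\gamma$ with the required per-round confidence. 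Summing over rounds gives $\mc{T}\le L(K+u+\kappa_\delta)\le\log_2(\tfrac{4}{\Delta})(K+u+\kappa_\delta)$.

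The main obstacle is step (iii): propagating the covariance-estimation error through the RIPS estimator. The multiplicative width distortion is harmless, but the bias term $\langle z-z',\widehat{\Sigma}_{\widehat{P}_\ell}^{-1}(\Sigma_{\widehat{P}_\ell}-\widehat{\Sigma}_{\widehat{P}_\ell})\theta_*\rangle$ is the reason $\kappa_\delta$ must scale with $\|\theta_*\|_{\E_{X\sim\nu}[XX^\top]}$ and with $K_{\psi_2}$, and one has to verify that a \emph{round-independent} $\gamma$ — hence a round-independent $\kappa_\delta$ — keeps that bias a constant fraction of $\epsilon_\ell$ uniformly over all $L$ rounds even though $\epsilon_\ell$ shrinks geometrically; this is exactly where the bound becomes loose, as the authors flag in the main text. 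A secondary subtlety is that Theorem~\ref{theo:opt}'s guarantees are instance-dependent in $\Norm{\Lambda^*}_F$, $\omega$, $\kappa(\Sigma)$ and $M$, so the displayed $K$ and $u$ must be read as a worst case over the $L$ subproblems $(\mc{Z}_\ell,2^{-\ell},\tau)$; and, strictly, the clean statement requires the historical data to be fresh offline samples, since reusing earlier-round stream data would introduce dependencies not covered by the union bound above.
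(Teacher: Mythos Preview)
Your proposal is correct and matches the paper's proof: correctness via a RIPS-with-empirical-covariance bound (the paper's Proposition~\ref{prop:rips_bound_approx_covariance}, supported by Lemmas~\ref{lmm:bound_empiricov} and~\ref{lmm:bound_matrix_norm_approx_covariance}), then $\mc{L}$ via Freedman, Theorem~\ref{theo:opt}, the rescaling $\widetilde\tau=\tau/(1-\mu_b)$, Lemma~\ref{lmm:signal_to_noise_bound}, and Proposition~\ref{prop:reparameterization}, with $\mc{T}$ just $L(K+u+\kappa_\delta)$. One small correction to your obstacle analysis: the bias scales as $O(\sqrt{\gamma})\,\|z-z'\|_{\Sigma_{\widehat P_\ell}^{-1}}\|\theta_*\|_{\Sigma_{\widehat P_\ell}}$ (not $O(\gamma)$), and since the design constraint already forces $\|z-z'\|_{\Sigma_{\widehat P_\ell}^{-1}}\lesssim\sqrt{\tau}\,\epsilon_\ell/\sqrt{\beta_{\delta,\ell}}$, the $\epsilon_\ell$ cancels and the binding condition is $\sqrt{\tau\gamma}\,\|\theta_*\|_{\Sigma}=O(1)$---so the issue is not fixed $\gamma$ versus shrinking $\epsilon_\ell$, but $\gamma$ versus $\tau$, which is exactly what drives the $\|\theta_*\|_{\E[XX^\top]}$ factor in $\kappa_\delta$.
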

Naturally, we have a trade-off on the subroutine tolerance $\mu_b$. In order to get a better solution of the optimization over the selection rule $P$ (and thus get a smaller $\sum_{t=(\ell-1)\tau+1}^{\ell \tau} P(x_t)$ term), the subroutine needs more unlabeled samples.
However, it suffices to take $\mu_b = \frac{1}{\tau^2}$ to make $\mc{U}$, and $\mc{L}$ roughly match those of the case when $\nu$ was known.

The proof of this theorem is established through several results, which we provide in Section~\ref{sec:correctness_unknown_nu}.

\subsection{Lemmas for the correctness}\label{sec:correctness_unknown_nu}
We first state here the correctness of Algorithm~\ref{algo:best_arm_meta} in the case where $\nu$ is unknown.
\begin{lemma}\label{lmm:correctness_unknow_nu}
With probability at least $1-\delta$ we have for all stages $\ell \in \mathbb{N}$, we have that $z_* \in \mc{Z}_\ell$ and $\max_{z \in \mc{Z}_\ell} \langle z_* - z, \theta_* \rangle \leq 4\epsilon_\ell$. 
\end{lemma}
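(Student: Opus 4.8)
The plan is to follow the same two-step template used for the known-$\nu$ correctness lemma: first define a ``good event'' on which every estimator $\widehat{\theta}_\ell$ is accurate in all relevant directions, show it holds with probability at least $1-\delta$, and then run the deterministic arm-elimination argument on that event. Concretely, define
\[
\mathcal{E} := \bigcap_{\ell=1}^\infty \bigcap_{z,z' \in \mc{Z}_\ell} \left\{ |\langle z-z', \widehat{\theta}_\ell - \theta_* \rangle| \leq \epsilon_\ell \right\} .
\]
Conditioned on $\mathcal{E}$, the argument is verbatim the one in the known-$\nu$ case: for any $z' \in \mc{Z}_\ell$ we have $\langle z' - z_*, \widehat{\theta}_\ell \rangle = \langle z'-z_*, \widehat{\theta}_\ell - \theta_* \rangle \leq \epsilon_\ell$ (using $\langle z'-z_*,\theta_*\rangle \le 0$), so $z_*$ survives to $\mc{Z}_{\ell+1}$; and for any $z$ with $\langle z_*-z,\theta_*\rangle > 2\epsilon_\ell$, $\max_{z'\in\mc{Z}_\ell}\langle z'-z,\widehat{\theta}_\ell\rangle \ge \langle z_*-z,\widehat{\theta}_\ell\rangle > -\epsilon_\ell+2\epsilon_\ell = \epsilon_\ell$, so $z$ is eliminated; hence $\max_{z\in\mc{Z}_{\ell+1}}\langle z_*-z,\theta_*\rangle \le 2\epsilon_\ell = 4\epsilon_{\ell+1}$. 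Induction over $\ell$, with base case $\mc{Z}_1=\mc{Z}$, finishes this direction.

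The substantive work is showing $\P(\mathcal{E}) \ge 1-\delta$, which is where the three sources of approximation enter and where the constant in $\beta_{\delta,\ell}$ must be recalibrated. I would first prove an analogue of Proposition~\ref{prop:rips_bound} for the RIPS estimator built from $\{\widehat{\Sigma}_{\widehat{P}_\ell}^{-1} Q_s(x_s) x_s y_s\}_s$ rather than from the idealized $\Sigma_{\widehat{P}_\ell}^{-1}$. Repeating the variance computation there, the Catoni-input variance is now bounded by $(B^2+\sigma^2)\, v^\top \widehat{\Sigma}_{\widehat{P}_\ell}^{-1} \Sigma_{\widehat{P}_\ell} \widehat{\Sigma}_{\widehat{P}_\ell}^{-1} v$; using $(1-\gamma)\Sigma_{\widehat{P}_\ell}\preceq \widehat{\Sigma}_{\widehat{P}_\ell}\preceq(1+\gamma)\Sigma_{\widehat{P}_\ell}$ one controls this by $\tfrac{1+\gamma}{(1-\gamma)^2}(B^2+\sigma^2)\,\|v\|_{\Sigma_{\widehat{P}_\ell}^{-1}}^2$, and similarly the normalization $\|v\|_{\widehat{\Sigma}_{\widehat{P}_\ell}^{-1}}$ appearing in the estimator's objective is within a $(1\pm\gamma)^{1/2}$ factor of $\|v\|_{\Sigma_{\widehat{P}_\ell}^{-1}}$. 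Combining this with the fact that the subroutine's output satisfies the constraint only up to the multiplicative factor $1+\varepsilon$, i.e. $\max_{z,z'\in\mc{Z}_\ell}\|z-z'\|^2_{\E_{X\sim\nu}[\tau\widehat{P}_\ell(X)XX^\top]^{-1}}\,\epsilon_\ell^{-2}\,\beta_{\delta,\ell}\le 1+\varepsilon$, each per-$(\ell,z,z')$ deviation is bounded by $\epsilon_\ell$ provided the constant in $\beta_{\delta,\ell}$ is at least a suitable multiple of $(1+\varepsilon)^2\,\tfrac{1+\gamma}{(1-\gamma)^2}$ times the log term — which is exactly what the choice $\beta_{\delta,\ell} := 4(1+\varepsilon)^2(4\sqrt{B^2+\sigma^2}+1)^2\log(4\ell^2|\mc{Z}|^2/\delta)$ achieves once $\gamma$ is taken small enough (e.g. $\gamma\le 1/2$, controlled by the historical-data size). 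The per-round, per-pair failure probability is $\tfrac{\delta}{2\ell^2|\mc{Z}|^2}$, and a union bound over $\ell$ and over the $\binom{|\mc{Z}_\ell|}{2}$ pairs — handled exactly as in Lemma~\ref{lmm:intersection_good_event} by conditioning on $\mc{Z}_\ell=\mc{V}$ and summing $\sum_\ell \tfrac{\delta}{2\ell^2}\le\delta$ — gives $\P(\mathcal{E})\ge 1-\delta$.

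The main obstacle is the bookkeeping in the previous paragraph: one must keep straight which covariance (the true $\Sigma_{\widehat{P}_\ell}$ versus the empirical $\widehat{\Sigma}_{\widehat{P}_\ell}$) appears where — the constraint returned by \textsc{OptimizeDesign} is phrased in the true covariance, the estimator and its normalizing norm use the empirical one, and the Catoni variance bound mixes both — and then verify that all resulting $(1\pm\gamma)$ and $(1+\varepsilon)$ factors are absorbed by the stated constant in $\beta_{\delta,\ell}$. Everything else is a direct transcription of the known-$\nu$ argument; as there, feasibility of the relevant optimization problem at each round (guaranteed by $\tau\ge\rho(\nu)\beta_\delta$, established in the proof of Theorem~\ref{thm:upper_bound_tau_input_unknown_nu}) is what makes $\widehat{P}_\ell$ well-defined, so that the elimination proceeds through all $\lceil\log_2(4/\Delta)\rceil$ rounds and terminates with $\mc{Z}_\ell=\{z_*\}$.
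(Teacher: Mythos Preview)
Your overall template is right and the deterministic elimination argument on $\mathcal{E}$ is identical to the known-$\nu$ case, as you say. The gap is in the ``substantive work'' paragraph: you treat only the \emph{variance} of the Catoni input $\langle v, \widehat{\Sigma}_{\widehat{P}_\ell}^{-1} Q_s(x_s) x_s y_s\rangle$ and the normalization factor, and then assert that $\gamma\le 1/2$ is enough for all $(1\pm\gamma)$ factors to be absorbed by the constant in $\beta_{\delta,\ell}$. You have overlooked the \emph{bias}. With the empirical covariance, the mean of the Catoni input is
\[
\E\big[\langle v,\widehat{\Sigma}_{\widehat{P}_\ell}^{-1} Q_s x_s y_s\rangle\big]=v^\top \widehat{\Sigma}_{\widehat{P}_\ell}^{-1}\Sigma_{\widehat{P}_\ell}\theta_*\neq \langle v,\theta_*\rangle,
\]
so the Catoni estimator concentrates around the wrong point. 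The resulting systematic error is $|v^\top(I-\widehat{\Sigma}_{\widehat{P}_\ell}^{-1}\Sigma_{\widehat{P}_\ell})\theta_*|$; the paper bounds this (Lemma~\ref{lmm:bound_matrix_norm_approx_covariance}) by $\sqrt{10\gamma}\,\|v\|_{\Sigma_{\widehat{P}_\ell}^{-1}}\|\theta_*\|_{\Sigma_{\widehat{P}_\ell}}$, which in Proposition~\ref{prop:rips_bound_approx_covariance} produces the extra term $\sqrt{n\gamma}\,\|\theta_*\|_{\E_{X\sim\nu}[XX^\top]}$ alongside the variance term. To make this contribution $O(1)$ (the ``$+1$'' inside $(4\sqrt{B^2+\sigma^2}+1)$ in $\beta_{\delta,\ell}$), one needs $\gamma=O\big(1/(\tau\,\|\theta_*\|^2)\big)$, not merely $\gamma\le 1/2$; this is what forces the historical sample size $\kappa$ to scale with $\|\theta_*\|^2_{\E_{X\sim\nu}[XX^\top]}$ in the paper.

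So your argument as written does not close: with only $\gamma\le 1/2$, the bias term can be of order $\sqrt{\tau}\,\|v\|_{(\tau\Sigma_{\widehat{P}_\ell})^{-1}}\|\theta_*\|$, which is not controlled by the constraint from \textsc{OptimizeDesign} and certainly not absorbed by any constant in $\beta_{\delta,\ell}$. You need to (i) separate out the bias explicitly as the paper does in Proposition~\ref{prop:rips_bound_approx_covariance}, (ii) use a sandwich inequality like Lemma~\ref{lmm:bound_matrix_norm_approx_covariance} to control $\|I-\Sigma^{1/2}\widehat{\Sigma}^{-1}\Sigma^{1/2}\|$, and (iii) choose $\kappa$ so that $\sqrt{\tau\gamma}\,\|\theta_*\|\le 1/2$, and then union-bound the covariance-concentration event $\widetilde{\mc{E}}_{\text{cov}}$ together with the Catoni event, which is why the paper uses $\delta_0=\tfrac{\delta}{4\ell^2|\mc{Z}|^2}$ rather than $\tfrac{\delta}{2\ell^2|\mc{Z}|^2}$.
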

The proof of the correctness lemma is established though several lemmas. First we provide Lemma~\ref{lmm:bound_empiricov} guaranteeing concentration of empirical covariance matrices, which is obtained by sampling $\kappa$ additional measurements. Then we show in Proposition~\ref{prop:rips_bound_approx_covariance} that the RIPS estimator does not suffer from using that empirical covariance matrix.

\begin{lemma}\label{lmm:bound_empiricov}
For any $P: \mc{X} \rightarrow [0,1]$, let $\Sigma_P = \E_{X\sim\nu}[P(X)XX^\top]$, $\widehat{\Sigma}_P = \frac{1}{\kappa}\sum_{s=1}^\kappa P(\widetilde{x}_s) \widetilde{x}_s \widetilde{x}_s^\top$. Define $K_{\psi_2} = \max_s\|\sqrt{P(\widetilde{x}_i)}\Sigma_P^{-1/2}\widetilde{x}_s\|_{\psi_2}$ 
. With probability at least $1-2\exp(-c_1 t^2/K_{\psi_2}^4)$ holds 
\begin{align*}
    (1-c) x^\top \Sigma_P x \leq x^\top \widehat{\Sigma}_P x \leq (1+c)x^\top \Sigma_P x 
\end{align*}
where $c = \max\left\{\frac{C\sqrt{d}+t}{\sqrt{\kappa}}, \left(\frac{C\sqrt{d}+t}{\sqrt{\kappa}}\right)^2\right\}$, $C = K_{\psi_2}^2\sqrt{\ln 9/c_1}$ and $c_1$ is an absolute constant.
\end{lemma}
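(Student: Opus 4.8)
The plan is to reduce the claim to the standard concentration inequality for sample covariances of isotropic sub-Gaussian vectors via a whitening change of variables, and then to establish that inequality with an $\varepsilon$-net over the sphere together with Bernstein's inequality. For the reduction, note first that the statement implicitly assumes $\Sigma_P\succ\bm{0}$ (it involves $\Sigma_P^{-1/2}$); in our application $P(x)\ge\mu_b$ on $\mathrm{supp}(\nu)$ and $\Sigma=\E_{X\sim\nu}[XX^\top]\succ\bm{0}$, so $\Sigma_P\succeq\mu_b\Sigma\succ\bm{0}$. Set $w_s:=\sqrt{P(\widetilde x_s)}\,\Sigma_P^{-1/2}\widetilde x_s$ for $s=1,\dots,\kappa$. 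These are i.i.d.\ with $\E[w_sw_s^\top]=\Sigma_P^{-1/2}\E_{X\sim\nu}[P(X)XX^\top]\Sigma_P^{-1/2}=I_d$ and $\Norm{w_s}_{\psi_2}\le K_{\psi_2}$ by definition of $K_{\psi_2}$. Writing $\widehat W:=\frac1\kappa\sum_{s=1}^\kappa w_sw_s^\top$ and, for $x\neq\bm{0}$, $u:=\Sigma_P^{1/2}x/\Norm{\Sigma_P^{1/2}x}_2\in S^{d-1}$, one has $x^\top\widehat\Sigma_P x=\Norm{\Sigma_P^{1/2}x}_2^2\,u^\top\widehat W u$ and $x^\top\Sigma_P x=\Norm{\Sigma_P^{1/2}x}_2^2$, so it suffices to prove $\Norm{\widehat W-I_d}_{\mathrm{op}}\le c$ on the stated event.

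For that, fix a $\tfrac14$-net $\mathcal N$ of $S^{d-1}$ with $\abs{\mathcal N}\le 9^d$; the usual net estimate for symmetric matrices gives $\Norm{\widehat W-I_d}_{\mathrm{op}}\le 2\max_{u\in\mathcal N}\abs{u^\top(\widehat W-I_d)u}$. For a fixed $u\in S^{d-1}$ the scalars $\langle w_s,u\rangle$ are i.i.d., sub-Gaussian with $\Norm{\langle w_s,u\rangle}_{\psi_2}\le K_{\psi_2}$, and $\E\langle w_s,u\rangle^2=u^\top I_d u=1$; hence $Z_s:=\langle w_s,u\rangle^2-1$ is centered and sub-exponential with $\Norm{Z_s}_{\psi_1}\le C'K_{\psi_2}^2$ for an absolute constant $C'$. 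Bernstein's inequality then yields, for every $\epsilon\ge0$,
\begin{equation*}
  \P\!\Big(\Big|\tfrac1\kappa\textstyle\sum_{s=1}^\kappa Z_s\Big|>\epsilon\Big)\le 2\exp\!\Big(-c_1\kappa\min\big(\epsilon^2/K_{\psi_2}^4,\ \epsilon/K_{\psi_2}^2\big)\Big),
\end{equation*}
with $c_1>0$ absolute, and a union bound over $\mathcal N$ gives $\P\big(\max_{u\in\mathcal N}\abs{u^\top(\widehat W-I_d)u}>\epsilon\big)\le 2\cdot 9^d\exp\big(-c_1\kappa\min(\epsilon^2/K_{\psi_2}^4,\epsilon/K_{\psi_2}^2)\big)$.

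It remains to choose $\epsilon$. Set $\delta_\star:=\tfrac{C\sqrt d+t}{\sqrt\kappa}$ with $C=K_{\psi_2}^2\sqrt{\ln 9/c_1}$ and take $\epsilon\asymp\max(\delta_\star,\delta_\star^2)$. When $\delta_\star\le1$ the quadratic Bernstein term is active, and since $\kappa\epsilon^2\gtrsim\tfrac{K_{\psi_2}^4 d\ln 9}{c_1}+t^2$ (using $\sqrt{a+b}\le\sqrt a+\sqrt b$), the exponent $-c_1\kappa\epsilon^2/K_{\psi_2}^4$ absorbs the $d\ln 9$ term and leaves at most $-c_1 t^2/K_{\psi_2}^4$; when $\delta_\star>1$ the linear term dominates and the same conclusion follows. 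The net factor $2$ and the sub-exponential constant $C'$ are absorbed into a single choice of the absolute constant $c_1$ (the one that also appears inside $C$). Combining with the net estimate gives $\Norm{\widehat W-I_d}_{\mathrm{op}}\le\max(\delta_\star,\delta_\star^2)=c$ with probability at least $1-2\exp(-c_1 t^2/K_{\psi_2}^4)$, which together with the whitening step proves the lemma. The only substantive work is the bookkeeping in this last step: pinning down one absolute constant $c_1$ that reconciles the $9^d$ union factor, the net factor, the sub-exponential constant, and the target exponent simultaneously in both regimes $\delta_\star\le1$ and $\delta_\star>1$, so that the bound comes out exactly as $c=\max(\delta_\star,\delta_\star^2)$ with $C=K_{\psi_2}^2\sqrt{\ln 9/c_1}$; the whitening reduction and the net/Bernstein machinery are entirely standard.
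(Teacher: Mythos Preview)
Your proposal is correct and follows essentially the same approach as the paper: both perform the whitening $w_s=\sqrt{P(\widetilde x_s)}\,\Sigma_P^{-1/2}\widetilde x_s$ to reduce to isotropic sub-Gaussian covariance concentration, and both conclude via the operator-norm bound $\Norm{\widehat W-I_d}_{\mathrm{op}}\le c$. The only difference is packaging: the paper cites Vershynin's Theorem~5.39 and Lemma~5.36 as a black box for the isotropic step, whereas you unpack that result into its standard $\tfrac14$-net plus Bernstein argument; this is the very proof of the cited theorem, so the two are substantively identical.
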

Consequently for $\kappa\geq c_\delta := K_{\psi_2}^2(\sqrt{d\ln 9/c_1}+\sqrt{\frac{\log(2/\delta)}{c_1}})$, holds with probability at least $1-\delta$ 
\begin{align*}
    \left(1-\frac{c_\delta}{\sqrt{\kappa}}\right) x^\top \Sigma_P x \leq x^\top \widehat{\Sigma}_P x \leq \left(1+\frac{c_\delta}{\sqrt{\kappa}}\right)x^\top \Sigma_P x. 
\end{align*}
\begin{proof}
Let $A\in\R^{\kappa\times d}$ whose rows $A_i$ are independent sub-gaussian isotropic random vectors in $R^d$ and define $K_{\psi_2} = \max_i\|A_i\|_{\psi_2}$. We can apply Theorem 5.39 of \cite{vershynin2011introduction} on $A$ to have that with probability at least $1-2\exp(-c_1 t^2/K_{\psi_2}^4)$ holds 
\begin{align*}
    1 - \frac{C\sqrt{d}+t}{\sqrt{\kappa}}\leq \sigma_{\min}(A) \leq \sigma_{\max}(A) \leq 1 + \frac{C\sqrt{d}+t}{\sqrt{\kappa}},
\end{align*}
where $C = K_{\psi_2}^2\sqrt{\ln 9/c_1}$ and $c_1$ is an absolute constant.

With Lemma 5.36 of \cite{vershynin2011introduction}, this implies that with probability at least $1-2\exp(-c_0 t^2)$ holds 
\begin{align}\label{eq:matrix_concentration}
    \|A^\top A - I\| \leq \max\left\{\frac{C\sqrt{d}+t}{\sqrt{\kappa}}, \left(\frac{C\sqrt{d}+t}{\sqrt{\kappa}}\right)^2\right\} =: c
\end{align}
Recall $\Sigma_P = \E_{X\sim\nu}[P(X)XX^\top]$, so $Y = \sqrt{P(X)}\Sigma_P^{-1/2}X$ satisfies $\E[YY^\top] = \E[\Sigma_P^{-1/2}P(X)XX^\top\Sigma_P^{-1/2}] = \Sigma_P^{-1/2} \Sigma_P \Sigma_P^{-1/2} = I$. So we can apply \eqref{eq:matrix_concentration} to get $\|\Sigma_P^{-1/2} \widehat{\Sigma}_P \Sigma_P^{-1/2}-I\| \leq c$. Thus for any $y\in \R^d$, 
\begin{align*}
    1-c \leq \frac{y^\top}{\|y\|} \Sigma_P^{-1/2} \widehat{\Sigma}_P \Sigma_P^{-1/2}\frac{y}{\|y\|}\leq 1+c
\end{align*}
so setting $y = \Sigma_P^{1/2}x$
\begin{align*}
    (1-c) x^\top \Sigma_P x \leq x^\top \widehat{\Sigma}_P x \leq (1+c)x^\top \Sigma_P x.
\end{align*}
Also, the sub-gaussian bound becomes $K_{\psi_2} = \max_i\|\sqrt{P(\widetilde{x}_i)}\Sigma_P^{-1/2}\widetilde{x}_i\|_{\psi_2}$.
\end{proof}
\begin{proposition}[RIPS guarantees on empirical covariance matrix]\label{prop:rips_bound_approx_covariance}
Let $x_1,\dots,x_n$ and $\widetilde{x}_1,\dots,\widetilde{x}_\kappa$ be drawn IID from a distribution $\nu$. 
For $s=1,\dots,n$ , assume that $|\langle \theta,x_s \rangle| \leq B$ and $\E[ |\langle \theta,x_s \rangle-y_s|^2 ] \leq \sigma_\text{noise}^2$. For $s=1,\dots,\kappa$ , assume that $\E[ |\langle \theta,x_s \rangle-y_s|^2 ] \leq \sigma_\text{noise}^2$. Let $P \in [0,1]$ be arbitrary and let $Q_s(x_s) \sim \text{Bernoulli}(P)$ independently for all $s \in [n]$.
Let $\Sigma_P = \E_{X \sim \nu}[P(X) X X^\top]$ and $\widehat{\Sigma}_P = \frac{1}{\kappa}\sum_{s=1}^\kappa P(\widetilde{x}_s) \widetilde{x}_s \widetilde{x}_s^\top$. Assume that $\Sigma_P$ is invertible and that there exists $\gamma
\geq 0$ such that $(1-\gamma)\Sigma_P \preceq \widehat{\Sigma}_P \preceq (1+\gamma)\Sigma_P$.
For a given finite set $\mc{V} \subset \R^d$ define 
$$w_v = \mathrm{Catoni}( \{ \langle v, \widehat{\Sigma}_P^{-1} Q_s(x_s) x_s y_s \rangle \}_{s=1}^{n} ),$$
If $\widehat{\theta} = \arg\min_\theta \max_v \frac{|w_v - \langle \theta, v \rangle|}{\| v \|_{\widehat{\Sigma}_P^{-1}}}$ and $n\geq 4\log(2|\mc{V}|/\delta)$, then with probability at least $1-\delta$, it holds that
\begin{align*}
    |\langle v, \widehat{\theta}-\theta \rangle| \leq 4\left(\sqrt{\frac{B^2 + \sigma^2}{(1-\gamma)^2}}+\sqrt{n\gamma}\|\theta_*\|_{\E_{X \sim \nu}[X X^\top]}\right)\| v \|_{\E_{X \sim \nu}[n P(X) X X^\top]^{-1}}\sqrt{\log(2\abs{\mc{V}}/\delta)}
\end{align*}
\end{proposition}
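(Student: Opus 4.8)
The plan is to follow the proof of Proposition~\ref{prop:rips_bound} almost verbatim, tracking the two extra error sources created by replacing $\Sigma_P := \E_{X\sim\nu}[P(X)XX^\top]$ with $\widehat\Sigma_P$: a \emph{bias}, since the Catoni samples no longer have mean $\langle\theta_*,v\rangle$, and a mild inflation of the \emph{variance}. Everything is carried out conditionally on the historical data $\widetilde x_1,\dots,\widetilde x_\kappa$, so that $\widehat\Sigma_P$ is a fixed invertible matrix, the sandwich $(1-\gamma)\Sigma_P\preceq\widehat\Sigma_P\preceq(1+\gamma)\Sigma_P$ holds, and, in the remaining randomness, the summands $\langle v,\widehat\Sigma_P^{-1}Q_s(x_s)x_sy_s\rangle$, $s\in[n]$, are i.i.d.

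First I would repeat the reduction via the minimality of $\widehat\theta$: exactly as in Proposition~\ref{prop:rips_bound} but with $\|\cdot\|_{\widehat\Sigma_P^{-1}}$ in place of $\|\cdot\|_{\Sigma_P^{-1}}$, plugging $\theta=\theta_*$ into the $\min$ gives $|\langle v,\widehat\theta-\theta_*\rangle|\le 2\|v\|_{\widehat\Sigma_P^{-1}}\max_{v'\in\mc V}|w_{v'}-\langle\theta_*,v'\rangle|/\|v'\|_{\widehat\Sigma_P^{-1}}$, so it suffices to control $|w_v-\langle\theta_*,v\rangle|$ for one fixed $v$. Since $\E[Q_s(x_s)x_sy_s]=\Sigma_P\theta_*$, the conditional mean of the summands is $\langle v,\widehat\Sigma_P^{-1}\Sigma_P\theta_*\rangle$, and I split $|w_v-\langle\theta_*,v\rangle|\le|\langle v,\widehat\Sigma_P^{-1}(\widehat\Sigma_P-\Sigma_P)\theta_*\rangle|+|w_v-\langle v,\widehat\Sigma_P^{-1}\Sigma_P\theta_*\rangle|$, the bias plus the Catoni deviation.

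For the bias, writing $v=\widehat\Sigma_P^{1/2}u$ factors out $\widehat\Sigma_P^{\pm1/2}$ to give a bound $\|v\|_{\widehat\Sigma_P^{-1}}\cdot\|I-\widehat\Sigma_P^{-1/2}\Sigma_P\widehat\Sigma_P^{-1/2}\|\cdot\|\theta_*\|_{\widehat\Sigma_P}$; the sandwich yields $\tfrac{1}{1+\gamma}I\preceq\widehat\Sigma_P^{-1/2}\Sigma_P\widehat\Sigma_P^{-1/2}\preceq\tfrac{1}{1-\gamma}I$, so the spectral norm is at most $\gamma/(1-\gamma)$, while $\Sigma_P\preceq\E[XX^\top]$ (as $P\le1$) together with the sandwich gives $\|\theta_*\|_{\widehat\Sigma_P}\le\sqrt{1+\gamma}\,\|\theta_*\|_{\E[XX^\top]}$. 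For the Catoni deviation, the variance of $\langle v,\widehat\Sigma_P^{-1}Q_s(x_s)x_sy_s\rangle$ is bounded exactly as in Proposition~\ref{prop:rips_bound} — the cross term vanishes and $Q_s^2=Q_s$ — by $(B^2+\sigma^2)\,v^\top\widehat\Sigma_P^{-1}\Sigma_P\widehat\Sigma_P^{-1}v\le\frac{B^2+\sigma^2}{1-\gamma}\|v\|_{\widehat\Sigma_P^{-1}}^2$, after which Definition~\ref{def:robust_estimator} (with $n\ge4\log(2/\delta)$) controls the deviation by $2\sqrt{(B^2+\sigma^2)\log(2/\delta)/(n(1-\gamma))}\,\|v\|_{\widehat\Sigma_P^{-1}}$.

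Adding the two estimates, dividing by $\|v\|_{\widehat\Sigma_P^{-1}}$, union bounding over $v\in\mc V$ (so $\delta\mapsto\delta/|\mc V|$), and converting $\|v\|_{\widehat\Sigma_P^{-1}}\le(1-\gamma)^{-1/2}\|v\|_{\Sigma_P^{-1}}=\sqrt n\,(1-\gamma)^{-1/2}\|v\|_{\E[nP(X)XX^\top]^{-1}}$ yields the claim. The only real work is the bookkeeping of the $(1-\gamma)$ powers: the variance contributes $1/(1-\gamma)$ and the norm conversion another $1/\sqrt{1-\gamma}$, combining to the $(1-\gamma)^{-2}$ inside the first term, while the bias contributes $\sqrt n\,\gamma/(1-\gamma)^{3/2}$, which is absorbed into $4\sqrt{n\gamma}\sqrt{\log(2|\mc V|/\delta)}$ once $\gamma$ lies below a small absolute constant (in the application $\gamma=c_\delta/\sqrt\kappa\to0$, so this is harmless). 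I expect this constant-chasing to be the main, though entirely routine, obstacle; the rest is a direct transcription of the exact-covariance proof.
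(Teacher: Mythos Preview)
Your proposal is correct and follows essentially the same route as the paper: reduce via optimality of $\widehat\theta$, split $|w_v-\langle\theta_*,v\rangle|$ into bias plus Catoni deviation, bound the variance by $(B^2+\sigma^2)\,v^\top\widehat\Sigma_P^{-1}\Sigma_P\widehat\Sigma_P^{-1}v$, and finish by union bound and norm conversion. The only substantive difference is cosmetic: the paper factors the bias through $\Sigma_P^{1/2}$ (giving $\|v\|_{\Sigma_P^{-1}}\|\theta_*\|_{(I-\Sigma_P^{1/2}\widehat\Sigma_P^{-1}\Sigma_P^{1/2})^2}\le\sqrt{10\gamma}\,\|v\|_{\Sigma_P^{-1}}\|\theta_*\|_{\Sigma_P}$ via a dedicated matrix lemma) whereas you factor through $\widehat\Sigma_P^{1/2}$ and use the spectral-norm bound $\|I-\widehat\Sigma_P^{-1/2}\Sigma_P\widehat\Sigma_P^{-1/2}\|\le\gamma/(1-\gamma)$; your bias estimate is thus $O(\gamma)$ rather than the paper's $O(\sqrt\gamma)$, which is actually sharper and makes the final absorption into $\sqrt{n\gamma}$ easier.
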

We first state an intermediate matrix lemma before the proof of Proposition~\ref{prop:rips_bound_approx_covariance}.
\begin{lemma}\label{lmm:bound_matrix_norm_approx_covariance}
Assume that $\Sigma_P$ is invertible and that there exists $\gamma
\in [0, 1/2]$ such that $(1-\gamma)\Sigma_P \preceq \widehat{\Sigma}_P\preceq (1+\gamma)\Sigma_P$. Then for any $v\in\mc{V}$
\begin{align*}
    \|v\|^2_{\widehat{\Sigma}_P^{-1}\Sigma_P\widehat{\Sigma}_P^{-1}}\leq \frac{1}{(1-\gamma)^2}\|v\|^2_{\Sigma_P^{-1}}.
\end{align*}
and 
\begin{align*}
    \|v\|_{(I - \Sigma_P^{1/2}\widehat{\Sigma}_P^{-1}\Sigma_P^{1/2})^2} \leq \sqrt{1-\frac{2}{1+\gamma}+\frac{1}{(1-\gamma)^2}}\|v\|_2 \leq \sqrt{10\gamma}\|v\|_2.
\end{align*}
\end{lemma}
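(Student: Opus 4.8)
The plan is to reduce both inequalities to operator‑norm bounds on a single symmetric positive definite matrix. Specifically, I would introduce $M := \Sigma_P^{-1/2}\widehat{\Sigma}_P\Sigma_P^{-1/2}$ and observe that the hypothesis $(1-\gamma)\Sigma_P \preceq \widehat{\Sigma}_P \preceq (1+\gamma)\Sigma_P$ is equivalent, after conjugating by $\Sigma_P^{-1/2}$, to the clean sandwich $(1-\gamma)I \preceq M \preceq (1+\gamma)I$. This immediately yields $\tfrac{1}{1+\gamma}I \preceq M^{-1}\preceq \tfrac{1}{1-\gamma}I$ and $M^{-2} \preceq \tfrac{1}{(1-\gamma)^2}I$, which are the only facts the rest of the argument needs.

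For the first inequality, I would write $\widehat{\Sigma}_P^{-1} = \Sigma_P^{-1/2} M^{-1}\Sigma_P^{-1/2}$ and multiply out, obtaining $\widehat{\Sigma}_P^{-1}\Sigma_P\widehat{\Sigma}_P^{-1} = \Sigma_P^{-1/2}M^{-2}\Sigma_P^{-1/2}$. Applying $M^{-2}\preceq (1-\gamma)^{-2}I$ together with conjugation‑monotonicity of the Loewner order gives $\widehat{\Sigma}_P^{-1}\Sigma_P\widehat{\Sigma}_P^{-1} \preceq (1-\gamma)^{-2}\Sigma_P^{-1}$, and evaluating the quadratic form at $v$ finishes this part.

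For the second inequality, the key observation is that $\Sigma_P^{1/2}\widehat{\Sigma}_P^{-1}\Sigma_P^{1/2} = M^{-1}$, so the matrix appearing is exactly $(I - M^{-1})^2$. I would expand the quadratic form as $\|v\|_2^2 - 2\, v^\top M^{-1}v + v^\top M^{-2}v$ and then bound the (negative) cross term using $M^{-1}\succeq (1+\gamma)^{-1}I$ and the last term using $M^{-2}\preceq (1-\gamma)^{-2}I$, producing the stated coefficient $1 - \tfrac{2}{1+\gamma} + \tfrac{1}{(1-\gamma)^2}$. To get the cleaner $\sqrt{10\gamma}$ bound, I would set $\phi(\gamma) := 1 - \tfrac{2}{1+\gamma} + \tfrac{1}{(1-\gamma)^2}$, note $\phi(0)=0$ and that $\phi$ is convex on $[0,\tfrac12]$ (its second derivative $-4(1+\gamma)^{-3} + 6(1-\gamma)^{-4}$ is positive there), and then use the secant bound $\phi(\gamma)\le 2\phi(\tfrac12)\,\gamma = \tfrac{22}{3}\gamma \le 10\gamma$.

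None of this is genuinely hard — the main obstacle is only bookkeeping: arranging the conjugations by $\Sigma_P^{\pm 1/2}$ so that the Loewner‑order bounds on $M^{-1}$ and $M^{-2}$ apply cleanly, and, in particular, splitting the sign of the cross term in the second inequality rather than trying to use a single operator bound on $I - M^{-1}$. I would also note that the hypothesis $\gamma\in[0,\tfrac12]$ is invoked only in the final elementary convexity estimate, and double‑check that the square root step preserves the inequality.
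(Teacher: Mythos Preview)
Your proof is correct and follows essentially the same approach as the paper: both arguments expand the quadratic forms and bound the pieces using the Loewner sandwich on $\widehat{\Sigma}_P^{-1}$, with your whitening substitution $M=\Sigma_P^{-1/2}\widehat{\Sigma}_P\Sigma_P^{-1/2}$ being a cleaner repackaging of the same steps. The only genuine difference is the final scalar estimate---the paper bounds $1-\tfrac{2}{1+\gamma}+\tfrac{1}{(1-\gamma)^2}$ by the explicit polynomial $1-2(1-\gamma)+(1+2\gamma)^2\le 10\gamma$, whereas your convexity/secant argument with $\phi(1/2)=11/3$ gives the slightly sharper $\tfrac{22}{3}\gamma$; both are valid.
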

\begin{proof}
We know that taking the inverse of two ordered positive definite matrices will flip the order, so here
\begin{align*}
    \frac{1}{(1+\gamma)}\Sigma_P^{-1} \preceq \widehat{\Sigma}_P^{-1}\preceq  \frac{1}{(1-\gamma)}\Sigma_P^{-1}.
\end{align*}
$(1-\gamma)\Sigma_P \preceq \widehat{\Sigma}_P$ implies that for all $u\in\R^d$ holds $u^\top \Sigma_P u \leq 1/(1-\gamma)u^\top \widehat{\Sigma}_P u$. So taking $u = \widehat{\Sigma}_P^{-1}v$, we get $v^\top \widehat{\Sigma}_P^{-1}\Sigma_P\widehat{\Sigma}_P^{-1} v \leq 1/(1-\gamma)v^\top \widehat{\Sigma}_P^{-1} v$. Conclusion
\begin{align*}
    v^\top \widehat{\Sigma}_P^{-1}\Sigma_P\widehat{\Sigma}_P^{-1} v = \frac{1}{1-\gamma} v^\top \widehat{\Sigma}_P^{-1} v \leq \frac{1}{(1-\gamma)^2} v^\top \Sigma_P^{-1} v
\end{align*}
hence the first result of Lemma~\ref{lmm:bound_matrix_norm_approx_covariance}.\\
For the second one, we get
\begin{align*}
	\Norm{v}_{\Sp{I-\Sigma_P^{1/2}\hat{\Sigma}^{-1}_P\Sigma_P^{1/2}}^2}^2&=v^\top \Sp{I-\Sigma_P^{1/2}\hat{\Sigma}^{-1}_P\Sigma_P^{1/2}}^2v\\
	&=\Norm{v}_2^2-2v^\top \Sigma_P^{1/2}\hat{\Sigma}^{-1}_P\Sigma_P^{1/2}v+v^\top\Sigma_P^{1/2}\hat{\Sigma}^{-1}_P\Sigma_P\hat{\Sigma}_P^{-1}\Sigma_P^{1/2}v\\
	&\overset{\text{(i)}}{\leq} \Norm{v}_2^2-\frac{2}{1+\gamma}\Norm{v}_2^2+\frac{1}{1-\gamma}v^\top\Sigma_P^{1/2}\hat{\Sigma}_P^{-1}\Sigma_P^{1/2}v\\
	&\leq \Norm{v}_2^2-\frac{2}{1+\gamma}\Norm{v}_2^2+\frac{1}{\Sp{1-\gamma}^2}\Norm{v}_2^2\tag{Since $\hat{\Sigma}_P\preceq\frac{1}{1-\gamma}\Sigma_P$}\\
	&\leq \Sp{1-\frac{2}{1+\gamma}+\frac{1}{\Sp{1-\gamma}^2}}\Norm{v}_2^2\\
	&\overset{\text{(ii)}}{\leq}10\gamma\Norm{v}_2^2.
\end{align*}
The inequality (i) above holds because $\frac{1}{1+\gamma}\Sigma_P^{-1}\preceq\hat{\Sigma}_P^{-1}$ and $(1-\gamma)\Sigma_P\preceq\hat{\Sigma}_P\implies\Sigma_P\preceq\frac{1}{1-\gamma}\hat{\Sigma}_P$. The inequality (ii) above holds because for $\gamma\in\Mp{0, \frac{1}{2}}$, we have
\begin{align*}
    1-\frac{2}{1+\gamma}+\frac{1}{\Sp{1-\gamma}^2} \leq 1 - 2(1-\gamma) + (1+2\gamma)^2 \leq 10\gamma.
\end{align*}
Taking square root on both sides gives us the results.
\end{proof}
\begin{proof}[Proof of Proposition \ref{prop:rips_bound_approx_covariance}]
This proof is analogous to the proof of Proposition~\ref{prop:rips_bound}. We first note that 
\begin{align*}
     \max_{v \in \mc{V}} \frac{| \langle \widehat{\theta}, v \rangle - \langle \theta, v \rangle|}{\| v \|_{\widehat{\Sigma}_P^{-1}}} &= \max_{v \in \mc{V}} \frac{| \langle \widehat{\theta}, v \rangle - w_v + w_v - \langle \theta, v \rangle|}{\| v \|_{\widehat{\Sigma}_P^{-1}}} \\
     &\leq \max_{v \in \mc{V}} \frac{| \langle \widehat{\theta}, v \rangle - w_v|}{\| v \|_{\widehat{\Sigma}_P^{-1}}} + \max_{v \in \mc{V}} \frac{| w_v - \langle \theta, v \rangle|}{\| v \|_{\widehat{\Sigma}_P^{-1}}} \\
     &= \min_{\theta'} \max_{v \in \mc{V}} \frac{ | \langle \theta', v\rangle - w_v| }{\| v \|_{\widehat{\Sigma}_P^{-1}}} + \max_{v \in \mc{V}} \frac{| w_v - \langle \theta', v \rangle|}{\| v \|_{\widehat{\Sigma}_P^{-1}}} \\
     &\leq 2 \max_{v \in \mc{V}} \frac{ | \langle \theta, v \rangle - w_v| }{\| v \|_{\widehat{\Sigma}_P^{-1}}}
\end{align*}
So it suffices to show that each $| \langle \theta, v \rangle - w_v|$ is small. We begin by fixing some $v\in\mc{V}$ and bounding the variance of $v^\top \widehat{\Sigma}_P^{-1} Q_s(x_s)x_s y_s$ for any $s \leq n$ which is necessary to use the robust estimator. Note that
\begin{align*}
\mathbb{V}\text{ar}_{x_s \sim \nu, Q_s(x_s)\sim P(x_s)}( v^\top \widehat{\Sigma}_P^{-1} Q_s(x_s)x_s y_s ) = &\E_{x_s \sim \nu, Q_s(x_s)\sim P(x_s)}[ (v^\top \widehat{\Sigma}_P^{-1} Q_s(x_s)x_s y_s)^2 ]\\&\quad-\E_{x_s \sim \nu, Q_s(x_s)\sim P(x_s)}[ v^\top \widehat{\Sigma}_P^{-1} Q_s(x_s)x_s y_s ]^2 
\end{align*}
which means we can drop the second term to bound the variance by 
\begin{align*}
    &\E_{x_s \sim \nu, Q_s(x_s)\sim P(x_s)}[ \left( (v^\top \widehat{\Sigma}_P^{-1} Q_s(x_s)x_s y_s \right)^2 ]\\
    &= \E_{x_s \sim \nu, Q_s(x_s)\sim P(x_s)}[ \left( v^\top \widehat{\Sigma}_P^{-1} Q_s(x_s)x_s (x_s^\top \theta + \xi_s ) \right)^2 ] \\
    &= \E_{x_s \sim \nu}\left[\E_{Q_s(x_s)\sim P(s_s)}[ \left( v^\top \widehat{\Sigma}_P^{-1} Q_s(x_s)x_s (  x_s^\top \theta ) \right)^2 ] + \E_{Q_s(x_s)\sim P(s_s)}[ \left( v^\top \widehat{\Sigma}_P^{-1} Q_s(x_s)x_s \right)^2 \xi_t^2 ]\right] \\
    &\leq \E_{x_s \sim \nu}\left[ B^2 \E_{Q_s(x_s)\sim P(s_s)}[ \left( v^\top \widehat{\Sigma}_P^{-1} Q_s(x_s)x_s \right)^2 ] + \sigma^2 \E_{Q_s(x_s)\sim P(s_s)}[ \left( v^\top \widehat{\Sigma}_P^{-1} Q_s(x_s)x_s \right)^2 ]\right] \\
    &= \E_{x_s \sim \nu}\left[(B^2 + \sigma^2) \E_{Q_s(x_s)\sim P(s_s)}[ v^\top \widehat{\Sigma}_P^{-1} Q_s(x_s)x_s x_s^\top Q_s(x_s)\widehat{\Sigma}_P^{-1} v]\right] \\
    &= \E_{x_s \sim \nu}\left[(B^2 + \sigma^2) \E_{Q_s(x_s)\sim P(s_s)}[ v^\top \widehat{\Sigma}_P^{-1} Q_s(x_s)x_s x_s^\top \widehat{\Sigma}_P^{-1} v]\right] \\
    &\leq \E_{x_s \sim \nu}\left[(B^2 + \sigma^2) v^\top \widehat{\Sigma}_P^{-1} P(x_s) x_s x_s^\top \widehat{\Sigma}_P^{-1} v\right], 
\end{align*}
where we used that $Q_s^2(x_s) = Q_s(x_s)$. Thus, we have with Lemma~\ref{lmm:bound_matrix_norm_approx_covariance}
\begin{align*}
    \mathbb{V}\text{ar}( v^\top \widehat{\Sigma}_P^{-1} Q_s(x_s)x_s y_s )
    &\leq (B^2 + \sigma^2) v^\top \widehat{\Sigma}_P^{-1} \E_{x_s \sim \nu} [P(x_s) x_s x_s^\top] \widehat{\Sigma}_P^{-1} v \\ 
    &= (B^2 + \sigma^2)\|v\|^2_{\widehat{\Sigma}_P^{-1}\Sigma_P\widehat{\Sigma}_P^{-1}}\\
    &\leq \frac{B^2 + \sigma^2}{(1-\gamma)^2}\|v\|^2_{\Sigma_P^{-1}}.
\end{align*}
We have 
\begin{align*}
    | \langle \theta_*, v \rangle - w_v| 
    &= | \langle \theta_*, v \rangle - \E[v^\top \widehat{\Sigma}_P^{-1} P(x_1)x_1 y_1] + \E[v^\top \widehat{\Sigma}_P^{-1} P(x_1)x_1 y_1] - w_v| \\
    &\leq | \langle \theta_*, v \rangle - \E[v^\top \widehat{\Sigma}_P^{-1} P(x_1)x_1 y_1] | \\
    &\quad+ | \mathrm{Catoni}( \{ \langle v, \widehat{\Sigma}_P^{-1} Q_s(x_s) x_s y_s \rangle \}_{s=1}^{n} ) - \E_{X \sim \nu}[v^\top \widehat{\Sigma}_P^{-1} P(X)X Y]| .
\end{align*}
We now recall that we can write $y_t = x_t^\top \theta_* + \xi_t$ where $\xi_t$ is a mean-zero, independent random variable with variance at most $\sigma^2$. Thus, using Cauchy-Schwarz and applying Lemma~\ref{lmm:bound_matrix_norm_approx_covariance}, we get
\begin{align*}
    | \langle \theta_*, v \rangle - \E[v^\top \widehat{\Sigma}_P^{-1} P(x_1)x_1 y_1] | 
    &= |v^\top\theta_* - v^\top\widehat{\Sigma}_P^{-1}\Sigma_P\theta_*|\\ &= |v^\top (I - \widehat{\Sigma}_P^{-1}\Sigma_P)\theta_*|\\
    &=|v^\top\Sigma_P^{-1/2} (I - \Sigma_P^{1/2}\widehat{\Sigma}_P^{-1}\Sigma_P^{1/2})\Sigma_P^{1/2}\theta_*| \\
    &\leq \|\Sigma_P^{-1/2}v\|\;\|\Sigma_P^{1/2}\theta_*\|_{(I - \Sigma_P^{1/2}\widehat{\Sigma}_P^{-1}\Sigma_P^{1/2})^2}\\
    &\leq \sqrt{10\gamma}\|\Sigma_P^{-1/2}v\|\;\|\Sigma_P^{1/2}\theta_*\|\\
    &= \sqrt{10\gamma}\|v\|_{\Sigma_P^{-1}}\|\theta_*\|_{\Sigma_P}.
\end{align*}

By using the property of Catoni estimator stated in Definition~\ref{def:robust_estimator}, we have
\begin{align*}
&\abs{\inner{\theta_*, v}-w_v} \\
\leq&|\mathrm{Catoni}( \{ \langle v, \E_{X \sim \nu}[P(X) X X^\top]^{-1} Q_s(x_s) x_s y_s \rangle \}_{s=1}^n ) - \E[ \langle v, \E_{X \sim \nu}[P(X) X X^\top]^{-1} Q_s(x_s) x_s y_s \rangle ]| \\
&\quad\quad +\sqrt{10\gamma}\|\theta_*\|_{\E_{X \sim \nu}[X X^\top]}\|v\|_{(\E_{X \sim \nu}[P(X)X X^\top]^{-1}}\\
\leq & \sqrt{2} \sqrt{\Sp{\mathbb{V}\text{ar}(\langle v, \E_{X \sim \nu}[P(X) X X^\top]^{-1} Q_s(x_s) x_s y_s \rangle)}\frac{\log(\tfrac{2}{\delta})}{n/2}}\\
&\quad\quad + \sqrt{10\gamma}\|\theta_*\|_{\E_{X \sim \nu}[X X^\top]}\|v\|_{(\E_{X \sim \nu}[P(X)X X^\top]^{-1}}\tag{with probability at least $1-\delta$ if $n\geq 4\log(2/\delta)$}\\
\leq& \left(\sqrt{4}\sqrt{\frac{B^2 + \sigma^2}{(1-\gamma)^2}}+\sqrt{10n\gamma}\|\theta_*\|_{\E_{X \sim \nu}[X X^\top]}\right)\|v\|_{(\E_{X \sim \nu}[P(X) X X^\top]^{-1}}\sqrt{\frac{\log(\tfrac{2}{\delta})}{n}}\\
=&\left(\sqrt{4}\sqrt{\frac{B^2 + \sigma^2}{(1-\gamma)^2}}+\sqrt{10n\gamma}\|\theta_*\|_{\E_{X \sim \nu}[X X^\top]}\right)\| v \|_{\E_{X \sim \nu}[n P(X) X X^\top]^{-1}}\sqrt{\log(2/\delta)}.
\end{align*}
Finally, the proof is complete by taking union bounding over all $v\in \mc{V}$.
\end{proof}

\begin{proof}[Proof of Lemma~\ref{lmm:correctness_unknow_nu}]
Most of this proof is exactly the one of Section~\ref{sec:proof_upperbound} and Section~\ref{sec:appendix_correctness} so we only state the concentration bound. For any $\mc{V} \subseteq \mc{Z}$ and $z, z' \in \mc{V}$ define
\begin{align*}
\mc{E}_{z,z',\ell}( \mc{V} ) = \{ |\langle z-z', \widehat{\theta}_\ell(\mc{V}) - \theta_* \rangle|  \leq \epsilon_\ell \}
\end{align*}
where $\widehat{\theta}_\ell( \mc{V} )$ is the estimator that would be constructed by the algorithm at stage $\ell$ with $\mc{Z}_\ell = \mc{V}$. Naturally we want to apply Proposition~\ref{prop:rips_bound_approx_covariance} with $\tau$ labeled samples to obtain that $\mc{E}_{z,z',\ell}( \mc{V} )$ holds with probability at least $1-\frac{\delta}{ 2\ell^2 |\mc{Z}|^2}$. Note that as Lemma~\ref{lmm:P_property} gives $P(x)\geq \mu/3$ so $$\Sigma_P = \E_{X\sim\nu}[P(X)XX^\top] \geq \frac{\mu}{3}\E_{X\sim\nu}[XX^\top]
$$
$\Sigma_P$ is invertible.

Defining $\delta_0 := \frac{\delta}{ 4\ell^2 |\mc{Z}|^2}$ and setting $\kappa \geq 2c_{\delta_0}\max\{1, 20\|\theta_*\|^2_{\E_{X \sim \nu}[X X^\top]}\}$ where we recall that was defined $c_\delta = K_{\psi_2}^2(\sqrt{d\ln 9/c_1}+\sqrt{\frac{\log(2/\delta)}{c_1}})$, 
Lemma~\ref{lmm:bound_empiricov} leads to $$
\frac{c_{\delta_0}}{\kappa} \leq \frac{1}{2}\min\left\{1, \frac{1}{20\|\theta_*\|^2_{\E_{X \sim \nu}[X X^\top]}}\right\}$$
so that we can set $\gamma = c_{\delta_0}/(\tau\kappa)$ in the bound of Proposition~\ref{prop:rips_bound_approx_covariance} to get
$$
\sqrt{10\tau\gamma}\|\theta_*\|_{\E_{X \sim \nu}[X X^\top]} \leq \frac{1}{2}
$$
and 
$$
\sqrt{\frac{B^2 + \sigma^2}{(1-\gamma)^2}}\leq 2\sqrt{B^2 + \sigma^2}
$$
So for $\delta_0 = \frac{\delta}{ 4\ell^2 |\mc{Z}|^2}$ the event $\widetilde{\mc{E}}_{\text{cov}}$ defined as
\begin{align*}
    \widetilde{\mc{E}}_{\text{cov}} := \left\{\left(1-\frac{c_{\delta_0}}{\sqrt{\kappa}}\right) x^\top \Sigma_P x \leq x^\top \widehat{\Sigma}_P x \leq \left(1+\frac{c_{\delta_0}}{\sqrt{\kappa}}\right)x^\top \Sigma_P x \right\}. 
\end{align*} 
happen with probability at least $1-\delta_0$.

Now, let us for now condition on $\widetilde{\mc{E}}_{\text{cov}}$. For fixed $\mc{V} \subset \mc{Z}$ and $\ell \in \mathbb{N}$ we apply Proposition~\ref{prop:rips_bound_approx_covariance}, instantiating the arbitrary $P$ to $\widehat{P}_\ell$ (obtained with \textsc{OptimizeDesign}, recall Section~\ref{sec:unknown_nu}) so that with probability at least $1-\frac{\delta}{ 4\ell^2 |\mc{Z}|^2}$ we have that for any $z, z' \in \mc{V}$ holds that the event $\widetilde{\mc{E}}_{\text{RIPS}, z, z'}$ defined as
\begin{align*}
    \widetilde{\mc{E}}_{\text{RIPS}, z, z'} &:= \bigg\{|\langle z-z', \widehat{\theta}_\ell(\mc{V}) - \theta_* \rangle| \\
    &\qquad\leq 2\|z-z'\|_{\E_{X\sim\nu}[\tau \widehat{P}_\ell(X)XX^\top]^{-1}} \left(4\sqrt{B^2 + \sigma^2}+1\right)\sqrt{\log(4 \ell^2 |\mc{Z}|^2/\delta)} \bigg\}
\end{align*}
happen with probability at least $1-\delta_0$.


So with probability at least $1-\P(\widetilde{\mc{E}}_{\text{RIPS}, z, z'}^c) -\P(\widetilde{\mc{E}}_{\text{cov}}^c) \geq 1-\frac{\delta}{ 4\ell^2 |\mc{Z}|^2} -\frac{\delta}{ 4\ell^2 |\mc{Z}|^2} = 1-\frac{\delta}{ 2\ell^2 |\mc{Z}|^2}$, both events hold and we have that for any $z, z' \in \mc{V}$ holds
\begin{align*}
    |\langle z-z', \widehat{\theta}_\ell(\mc{V}) - \theta_* \rangle| 
    &\leq 2\|z-z'\|_{\E_{X\sim\nu}[\tau \widehat{P}_\ell(X)XX^\top]^{-1}} \left(4\sqrt{B^2 + \sigma^2}+1\right)\sqrt{\log(4 \ell^2 |\mc{Z}|^2/\delta)}  \\
    &\leq 2(1+\varepsilon) \left(4\sqrt{B^2 + \sigma^2}+1\right)\|z-z'\|_{\E_{X\sim\nu}[\tau \widehat{P}_\ell(X)XX^\top]^{-1}} \sqrt{\log(4 \ell^2 |\mc{Z}|^2/\delta)}\\
    &\leq \epsilon_\ell.
\end{align*}
where we used the property of $\widehat{P}_\ell$ as detailed in Section~\ref{sec:unknown_nu} to conclude.
\end{proof}
\begin{proof}[Proof of Theorem~\ref{thm:upper_bound_tau_input_unknown_nu}]
The total number of labels requested after $L$ rounds is equal to $\sum_{\ell=1}^L \sum_{t=(\ell-1)\tau+1}^{\ell \tau} \widehat{P}_\ell(x_t)$. Again by Freedman's inequality we have that
\begin{align*}
    \sum_{\ell=1}^L \sum_{t=(\ell-1)\tau+1}^{\ell \tau} \widehat{P}_\ell(x_t) \leq 2 \sum_{\ell=1}^L \tau \E_{X \sim \nu}[ \widehat{P}_\ell(X) | \mc{Z}_{\ell} ] + \log(1/\delta)
\end{align*}
From Theorem~\ref{theo:opt}, it holds for any $\ell$ that $\E_{X\sim\nu}[\widehat{P}_\ell(X)] \leq \E_{X\sim\nu}[\widetilde{P}_\ell(X)]+4\sqrt{\mu}$ where $\widetilde{P}_\ell$ is the optimal solution to problem~\eqref{equ:opt_mu_bound}. So now, for some $\widetilde{\tau}$, we want to relate $\E_{X\sim\nu}[\widetilde{\tau}\widetilde{P}_\ell(X)]$ to $\E_{X\sim\nu}[\tau P_\ell(X)]$ where $P_\ell$ is the solution of problem~\eqref{equ:opt_original}. To do so, we rewrite problem~\eqref{equ:opt_original} and problem~\eqref{equ:opt_mu_bound} as
\begin{equation}
    \label{equ:opt_original_rescaled}
    \begin{array}{rl}
        \min_{P} & \E_{X\sim\nu}\Mp{\tau P(X)} \\
        \text{subject to} & y^\top \E_{X\sim\nu}\Mp{\tau P(X)XX^\top}^{-1}y \leq c_\ell^2, \quad\forall y\in\mc{Y}_\ell,\\
        & 0 \leq \tau P(x) \leq \tau,\quad\forall x\in\mc{X}.
    \end{array}
\end{equation}
and
\begin{equation}
	\label{equ:opt_modified_rescaled}
	\begin{array}{rl}
        \min_{P} & \E_{X\sim\nu}\Mp{\widetilde{\tau} P(X)}  \\
        \text{subject to} & y^\top\E_{X\sim\nu}[\widetilde{\tau} P(X) X X^\top]^{-1}y \leq c_\ell^2,\quad\forall y\in\mc{Y}_\ell,\\
        & 0\leq \widetilde{\tau} P(x) \leq \widetilde{\tau}(1-\mu_b),\quad\forall x\in\mc{X}.
    \end{array}
\end{equation}

where problem~\eqref{equ:opt_original_rescaled} is equivalent to problem~\eqref{equ:opt_original} and problem~\eqref{equ:opt_modified_rescaled} is equivalent to problem~\eqref{equ:opt_mu_bound}. Thus taking $\widetilde{\tau} = \frac{\tau}{1-\mu_b}$, problem~\eqref{equ:opt_modified_rescaled} becomes
\begin{equation*}
	\begin{array}{rl}
        \min_{P} & \E_{X\sim\nu}\Mp{\frac{\tau}{1-\mu_b} P(X)}  \\
        \text{subject to} & y^\top\E_{X\sim\nu}[ \frac{\tau}{1-\mu_b} P(X) X X^\top]^{-1}y \leq c_\ell^2,\quad\forall y\in\mc{Y}_\ell,\\
        & 0\leq \frac{\tau}{1-\mu_b} P(x) \leq \tau,\quad\forall x\in\mc{X}.
    \end{array}
\end{equation*}
which, using $Q = \frac{P}{1-\mu_b}$ is equivalent to
\begin{equation}\label{equ:opt_modified_final}
	\begin{array}{rl}
        \min_{Q} & \E_{X\sim\nu}\Mp{\tau Q(X)}  \\
        \text{subject to} & y^\top\E_{X\sim\nu}[ \tau Q(X) X X^\top]^{-1}y \leq c_\ell^2,\quad\forall y\in\mc{Y}_\ell,\\
        & 0\leq \tau Q(x) \leq \tau,\quad\forall x\in\mc{X}.
    \end{array}
\end{equation}
And we can now see that \eqref{equ:opt_modified_final} and \eqref{equ:opt_original_rescaled} are the same optimization problem. And $Q_\ell^*$ the solution of \eqref{equ:opt_modified_final} is equal to $\frac{\widetilde{P}_\ell}{1-\mu_b}$. Thus the result $\E_{X\sim\nu}\Mp{\tilde{\tau} \widetilde{P}_\ell(X)} = \E_{X\sim\nu}\Mp{\tau P_\ell(X)}$.

Remains to bound $\sum_{\ell=1}^{L} \tau \E_{X \sim \nu}[ P_\ell(X)]$ where
\begin{align*}
    &\sum_{\ell=1}^{L} \tau \E_{X \sim \nu}[ P_\ell(X) | \mc{Z}_{\ell} ]  \\
    &= \sum_{\ell=1}^{L} \left[  \min_{P: \mc{X} \rightarrow [0,1]} \tau \E_{X \sim \nu}[P(X)]\quad\text{ subject to }\quad\max_{z,z' \in \mc{\mc{Z}_\ell}} \frac{\|z-z'\|_{\E_{X\sim\nu}[ \tau P(X) X X^\top]^{-1}}^2  }{\epsilon_\ell^2}\beta_{\delta,\ell} \leq  1 \right] ,
\end{align*}
where $\beta_{\delta,\ell}$ is defined in Section~\ref{sec:unknown_nu} as
$$\beta_{\delta,\ell} := 4(1+\varepsilon)^2 \left(4\sqrt{B^2 + \sigma^2}+1\right)^2\log(4 \ell^2 |\mc{Z}|^2/\delta).$$
As in the case where the distribution $\nu$ is known (Section~\ref{sec:proof_upperbound}), we use Lemma~\ref{lmm:signal_to_noise_bound} to bound $\max_{z,z' \in \mc{\mc{Z}_\ell}} \frac{\|z-z'\|_{\E_{X\sim\nu}[ \tau P(X) X X^\top]^{-1}}^2  }{\epsilon_\ell^2}\beta_{\delta,\ell}$ by $\max_{z \in \mc{Z} \setminus z_*} \frac{\|z-z_*\|_{\E_{X\sim\nu}[ \tau P(X) X X^\top]^{-1}}^2  }{\langle z - z_*,\theta_* \rangle^2 } 64\beta_{\delta, L}$.
Last, the reparameterization of Proposition~\ref{prop:reparameterization} also applies here.

In the unlabeled sample complexity, we get an additional $L\kappa = L\lceil 2 K_{\psi_2}^2(\sqrt{d\ln 9/c_1}+\sqrt{\frac{\log(2/\delta)}{c_1}})\max\{1, 20\|\theta_*\|_{\E_{X \sim \nu}[X X^\top]}\}\rceil$ term  from the estimation of the covariance matrix. 
Last, we get an additional $L(K+u)$, where $K$ and $u$ are such that
$$K\geq\widetilde{O}\Sp{\frac{\abs{\mc{Z}}^3\kappa(\Sigma)^2\Norm{\Lambda^*}_2^8M^{16}}{\beta^2\mu_b^6}}\cdot\Sp{\frac{1+\epsilon}{\epsilon}}^2,\quad 
u\geq\widetilde{O}\Sp{\frac{\kappa(\Sigma)^2\Norm{\Lambda^*}_2^6M^{16}}{\beta^2\mu_b^6}}\cdot\Sp{\frac{1+\epsilon}{\epsilon}}^2,$$
from the sample complexity of the subroutine.
\end{proof}

\section{Classification}\label{sec:classification}
In this section we adopt the implementation described in Section~\ref{sec:proof_upperbound}. As described in the text, given a distribution $\pi\in \Delta_{\mc{X}}$, and a class of hypothesis $\mc{H}$, we can reduce classification to linear bandits by setting $\theta^{\ast} = [\theta^{\ast}_x]_{x\in \Delta_{\mc{X}}}$ where $\theta^{\ast}_x = 2\eta(x)-1$, and $\mc{Z} := \{z^{(h)}\}_{h\in \mc{H}}\subset [0,1]^{|\mc{X}|}$ where $z^{(h)}_x = \pi(x)\1\{h(x) = 1\}$. With the quantities computed in Section~\ref{sec:active_classification}, we now prove Theorem~\ref{thm:upper_bound_tau_input_classification}.


\begin{proof}[Proof of Theorem~\ref{thm:upper_bound_tau_input_classification}]
We consider a slightly modified version of Algorithm~\ref{algo:best_arm_meta} where we stop at round $L$ where $L_{\epsilon} = \lceil\log_2(4/\epsilon)\rceil$ and return $\arg\max_{z^{(h)}\in \mc{Z}_{\ell}} \langle z^{(h)}, \widehat{\theta}_\ell \rangle$. By an identical analysis to that in the proof of Theorem 2, we are guaranteed that $h\in \mc{S}_{\ell}$, i.e. $R_{\nu}(h) - R_{\nu}(z^{\ast}) = \langle z^* - z,\theta_* \rangle \leq 4\epsilon_{\ell}$. In addition the analysis of the sample complexity given there immediately gives the first part of the theorem.

It remains to bound the sample complexity in terms of the disagreement coefficient. The total sample complexity is given by,
\begin{align*}
\sum_{\ell=1}^{L}  \left[ \min_{P: \mc{X} \rightarrow [0,1]} \tau \E_{X \sim \nu}[P(X)]\quad\text{ subject to }\quad \max_{z \in \mc{S_\ell}}\frac{ \|z-z_*\|_{\E_{X\sim\nu}[ \tau P(X) X X^\top]^{-1}}^2  }{\epsilon_\ell^2}\beta_{\delta}  \leq  1 \right]
\end{align*}
where we recall $\beta_{\delta} =  2048\log(2L^2|\mc{H}|/\delta)$ since we can take $B=1$ and $\sigma = 1$.

We recall the proof of Theorem 2. From the proof, we see that with probability greater than $1-\delta$, our sample complexity is obtained by summing up to round $L$ 
\begin{align*}
\sum_{\ell=1}^{L}  \left[ \min_{P: \mc{X} \rightarrow [0,1]} \tau \E_{X \sim \nu}[P(X)]\quad\text{ subject to }\quad \max_{z \in \mc{S_\ell}}\frac{ \|z-z_*\|_{\E_{X\sim\nu}[ \tau P(X) X X^\top]^{-1}}^2  }{\epsilon_\ell^2}\beta_{\delta}  \leq  1 \right]
\end{align*}
By proposition 2 this is equivalent to
\begin{align*}
\sum_{\ell=1}^{L}  \left[ \min_{\lambda\in \Delta_X} \rho_\ell(\lambda)\beta_{\delta}\quad\text{ subject to }\quad \left\|\frac{\lambda}{\nu}\right\|_{\infty}\rho_{\ell}(\lambda)\beta_{\delta} \leq  \tau \right],\text{ where } \rho_{\ell}(\lambda) := \max_{z \in \mc{S_\ell}}\frac{ \|z-z_*\|_{\E_{X\sim\lambda}[ X X^\top]^{-1}}^2  }{\epsilon_\ell^2}.
\end{align*}
Define 
\[A_\ell  = \{x \in \mc{X}:\exists h, h(x)\neq h^{\ast}(x), R_{\nu}(h) - R_{\nu}(h^{\ast})\leq 4\epsilon_{\ell}\}, \ell\leq L\]
and let $\displaystyle \lambda_{\ell} = \frac{\1\{x\in A_\ell\}\nu(x)}{\E[\1\{x\in A_{\ell}\}]}$, so $ \displaystyle\left\|\frac{\lambda}{\nu}\right\|_{\infty} = \frac{1}{\E[\1\{x\in A_i\}]}$. 

We first argue that $\lambda_{\ell}$ is feasible for the previous program. Note,
\begin{align*}
   \rho_{\ell}(\lambda_{\ell}) 
    &= \max_{h: R_{\nu}(h) - R_{\nu}(h^{\ast})\leq 4\epsilon_{\ell}}\frac{\E_{X\sim \nu}[\frac{\1\{h(x)\neq h^{\ast}(x)}{\lambda_{\ell}(x)/\nu(x)}\}]}{\epsilon_{\ell}^2}\\
    &\overset{\text{(i)}}{=} \E[\1\{x\in A_{\ell}\}] \max_{h:R_{\nu}(h) - R_{\nu}(h^{\ast})\leq 4\epsilon_{\ell}} \frac{\E_{X\sim \nu }[\1\{h(x) \neq h^{\ast}(x)\}]}{\epsilon_{\ell}^2}\\
    &\leq \E[\1\{x\in A_{\ell}\}] \max_{h:R_{\nu}(h) - R_{\nu}(h^{\ast})\leq 4\epsilon_{\ell}} \frac{16\E_{X\sim \nu }[\1\{h(x) \neq h^{\ast}(x)\}]}{\max\{\epsilon_\ell^2, (R_{\nu}(h) - R_{\nu}(h^{\ast}))^2\}}  \\
    &\leq \E[\1\{x\in A_{\ell}\}] \max_{h:R_{\nu}(h) - R_{\nu}(h^{\ast})\leq 4\epsilon_{\ell}} \frac{16\E_{X\sim \nu }[\1\{h(x) \neq h^{\ast}(x)\}]}{\max\{(4\epsilon_\ell)^2, (R_{\nu}(h) - R_{\nu}(h^{\ast}))^2\}}\\
    &\overset{\text{(ii)}}{\leq} \E[\1\{x\in A_{\ell}\}] \max_{h:R_{\nu}(h) - R_{\nu}(h^{\ast})\leq 4\epsilon_{\ell}} \frac{16\E_{X\sim \nu }[\1\{h(x) \neq h^{\ast}(x)\}]}{\max\{\epsilon^2, (R_{\nu}(h) - R_{\nu}(h^{\ast}))^2\}}\\
    &\leq \E[\1\{x\in A_{\ell}\}] \max_{h\in H} \frac{16\E_{X\sim \nu }[\1\{h(x) \neq h^{\ast}(x)\}]}{\max\{\epsilon^2, (R_{\nu}(h) - R_{\nu}(h^{\ast}))^2\}}\\
    &\leq 16\E[\1\{x\in A_{\ell}\}]\rho(\nu, \epsilon)
\end{align*}
where the equality (i) holds because the following is true when we only consider $h$ such that $R_{\nu}(h) - R_{\nu}(h^{\ast})\leq 4\epsilon_{\ell}$
\[\frac{\1\{h(x)\neq h^{\ast}(x)\}}{\1\{x:\exists h, h(x)\neq h^{\ast}(x), (R_{\nu}(h) - R_{\nu}(h^{\ast}))\leq 4\epsilon_{\ell}\}} = \1\{h(x)\neq h^{\ast}(x)\}.\]
The inequality (ii) above is true because $4\epsilon_{\ell} \geq \epsilon$. Thus we see that $\rho_{\ell}(\lambda_{\ell})\|\lambda/\nu\|_{\infty} \beta_{\delta}\leq 16\rho(\nu, \epsilon)\beta_{\delta} \leq \tau$. It remains to argue about the disagreement coefficient. Firstly note that for any $h$ such that $R_{\nu}(h) - R_{\nu}(h^{\ast}) \leq 4\epsilon_{\ell}$.
\begin{align}\label{eq:01loss}
    d_{\nu}(h, h^{\ast}) = \E_{X\sim \nu}[\1\{h(X)\neq h^{\ast}(X)\}]
    &\leq \E_{X\sim \nu}[\1\{h(X)\neq Y\}] + \E_{X\sim \nu}[\1\{h^{\ast}(X)\neq Y\}]\\
    &\leq R_{\nu}(h) + R_{\nu}(h^{\ast})\\
    &\leq 2R_{\nu}(h^{\ast}) + 4\epsilon_{\ell}
\end{align}

Using this we see that,  
\begin{align*}
    \min_{\lambda\in \Delta} \rho_{\ell}(\lambda) \text{  subject to  }&\rho_{\ell}(\lambda)\|\lambda/\nu\|_{\infty}\beta_{\delta}\leq \tau \\
    &\leq \rho_{\ell}(\lambda_{\ell})\beta_{\delta} \tag{since $\lambda_{\ell}$ is feasible.}\\
    &\leq  \E[\1\{x\in A_{\ell}\}]\max_{h:R_{\nu}(h) - R_{\nu}(h^{\ast})\leq 4\epsilon_{\ell}} \frac{\E_{X\sim \nu }[\1\{h(x) \neq h^{\ast}(x)\}]}{\epsilon_{\ell}^2}\beta_{\delta}\tag{imitating the above computation}\\
    &\leq \frac{(2R(h^{\ast})+4\epsilon_{\ell})\E_{X\sim \nu}[\1\{\exists h: h(X)\neq h^{\ast}(X), d_\nu(h, h^{\ast}) \leq 2R(h^{\ast})+4\epsilon_{\ell}\}]}{\epsilon_{\ell}^2}\beta_{\delta}\tag{Equation~\eqref{eq:01loss}}\\
    &\leq \beta_{\delta}\begin{cases} \frac{9R(h^{\ast})^2}{\epsilon_\ell^2} \frac{\E_{X\sim \nu}[\1\{\exists h: h(X)\neq h^{\ast}(X), d_{\nu}(h, h^{\ast}) \leq 2R(h^{\ast})+4\epsilon_{\ell}\}]}{2R(h^{\ast})+4\epsilon_{\ell}} & 4\epsilon_{\ell}\leq R(h^{\ast})\\ \frac{144\E_{X\sim \nu}[\1\{\exists h: h(X)\neq h^{\ast}(X), d_{\nu}(h, h^{\ast}) \leq 2R(h^{\ast})+4\epsilon_{\ell}\}]}{2R(h^{\ast})+4\epsilon_{\ell}}& 4\epsilon_{\ell} > R(h^{\ast})\end{cases}\\
    &\leq \left(\frac{9R(h^{\ast})^2}{\epsilon_{\ell}^2}+144\right)\frac{\E_{X\sim \nu}[\1\{\exists h: h(X)\neq h^{\ast}(X), d_{\nu}(h, h^{\ast}) \leq 2R(h^{\ast})+4\epsilon_{\ell}\}]}{2R(h^{\ast})+4\epsilon_{\ell}}\beta_{\delta}
\end{align*}
Thus, 
\begin{align*}
    &\sum_{\ell=1}^{L}  \left[ \min_{\lambda\in \Delta_X} \rho_\ell(\lambda)\beta_{\delta}\quad\text{ subject to }\quad \left\|\frac{\lambda}{\nu}\right\|_{\infty}\rho_{\ell}(\lambda) \beta_{\delta}\leq  \tau \right]\\
    &\leq \sum_{\ell=1}^L \rho_{\ell}(\lambda_{\ell})\beta_{\delta}\\
    &\leq \sum_{\ell=1}^L \left(\frac{9R(h^{\ast})^2}{\epsilon_{\ell}^2}+144\right)\frac{\E_{X\sim \nu}[\1\{\exists h: h(X)\neq h^{\ast}(X), d_{\nu}(h, h^{\ast}) \leq 2R(h^{\ast})+4\epsilon_{\ell}\}]}{2R(h^{\ast})+4\epsilon_{\ell}}\beta_{\delta}\\
    &\leq \log_2\left(\frac{4}{\epsilon}\right)\sup_{\ell\leq L} \left(\frac{9R(h^{\ast})^2}{\epsilon_{\ell}^2}+144\right)\frac{\E_{X\sim \nu}[\1\{\exists h: h(X)\neq h^{\ast}(X), d_{\nu}(h, h^{\ast}) \leq 2R(h^{\ast})+\epsilon_{\ell}\}]}{2R(h^{\ast})+4\epsilon_{\ell}}\beta_{\delta}\\
    &\leq \log_2\left(\frac{4}{\epsilon}\right) \left(\frac{36R(h^{\ast})^2}{\epsilon^2}+144\right)\sup_{\ell\leq L}\frac{\E_{X\sim \nu}[\1\{\exists h: h(X)\neq h^{\ast}(X), d_{\nu}(h, h^{\ast}) \leq 2R(h^{\ast})+4\epsilon_{\ell}\}]}{2R(h^{\ast})+4\epsilon_{\ell}}\beta_{\delta}\\
    &\leq 36\log_2\left(\frac{4}{\epsilon}\right) \left(\frac{R(h^{\ast})^2}{\epsilon^2}+4\right)\sup_{\xi\geq \epsilon }\theta^{\ast}(2R(h^{\ast})+\xi, \nu)\beta_{\delta}
\end{align*}
from which the result follows.

\end{proof}


\end{document}